\newcommand{\krik}[1]{\noindent{\bf{\color{red}KRIK: #1}}}
\newcommand{\moto}[1]{\noindent{\bf{\color{blue}MOTO:  #1}}}
\newcommand{\puene}[1]{\noindent{\bf{\color{magenta}PUENE:  #1}}}
\newtheorem{assumption}{Assumption} 
\newcommand{\ie}{\emph{i.e.}} 
\newcommand{\eg}{\emph{e.g.}}
\newcommand{\inspace}{\ensuremath{\mathcal{X}}}   
\newcommand{\outspace}{\ensuremath{\mathcal{Y}}}  
\newcommand{\pp}[1]{\ensuremath{\mathbb{#1}}}     
\newcommand{\pspace}{\ensuremath{\mathscr{P}}}    
\newcommand{\hbspace}{\ensuremath{\mathscr{H}}}   
\newcommand{\hbspf}{\ensuremath{\mathscr{F}}}
\newcommand{\muv}{\ensuremath{\mu}}
\newcommand{\muh}{\ensuremath{\hat{\muv}}}
\newcommand{\rr}{\mathbb{R}} 		         
\newcommand{\ep}{\mathbb{E}}                     
\newcommand{\kmat}{\mathbf{K}}                   
\newcommand{\lmat}{\mathbf{L}}                   
\newcommand{\bvec}{\bm{\beta}}                   
\newcommand{\id}{\mathbf{I}}
\newcommand{\covx}{\ensuremath{\mathcal{C}_{\mathit{XX}}}}
\newcommand{\covxy}{\ensuremath{\mathcal{C}_{\mathit{XY}}}}
\newcommand{\covyx}{\ensuremath{\mathcal{C}_{\mathit{YX}}}}
\newcommand{\ecovx}{\ensuremath{\widehat{\mathcal{C}}_{\mathit{XX}}}}
\newcommand{\ecovy}{\ensuremath{\widehat{\mathcal{C}}_{\mathit{YY}}}}
\newcommand{\ecovxy}{\ensuremath{\widehat{\mathcal{C}}_{\mathit{XY}}}}
\newcommand{\ecovyx}{\ensuremath{\widehat{\mathcal{C}}_{\mathit{YX}}}}
\newcommand{\dd}{\, \mathrm{d}}
\newcommand{\ecoryx}{\ensuremath{\widehat{\mathcal{W}}_{\mathit{YX}}}}
\newcommand{\x}{\ensuremath{\mathbf{x}}}
\newcommand{\s}{\ensuremath{\mathbf{s}}}
\newcommand{\y}{\ensuremath{\mathbf{y}}}
\newcommand{\ve}{\varepsilon}
\newcommand{\bfa}{\ensuremath{\mathbf{a}}}
\newcommand{\Pz}{\pp{P}_{X_0}}
\newcommand{\Po}{\pp{P}_{X_1}}
\def\ci{\perp\!\!\!\perp}
\begin{document}

\title{Counterfactual Mean Embeddings}

\author{\name Krikamol Muandet\thanks{A part of this work was done when KM was affiliated with the Department of Mathematics, Mahidol University, Thailand.} \email krikamol@tuebingen.mpg.de \\ 
\addr Max Planck Institute for Intelligent Systems \\
T\"ubingen, Germany
\AND 
\name Motonobu Kanagawa\thanks{A part of this work was done when MK was affiliated with the Institute of Statistical Mathematics, Japan, and with the University of T\"ubingen and  Max Planck Institute for Intelligent Systems, Germany.} \email motonobu.kanagawa@eurecom.fr \\ 
\addr Data Science Department, EURECOM \\
Sophia Antipolis, France
\AND 
\name Sorawit Saengkyongam \email ss@math.ku.dk \\ 
\addr {University of Copenhagen \\
Copenhagen, Denmark}
\AND
\name Sanparith Marukatat \email sanparith.marukatat@nectec.or.th \\
\addr National Electronics and Computer Technology Center \\ 
National Science and Technology Development Agency \\
Pathumthani, Thailand
}

\editor{}

\maketitle

\begin{abstract}%
   Counterfactual inference has become a ubiquitous tool in online advertisement, recommendation systems, medical diagnosis, and econometrics. Accurate modelling of outcome distributions associated with different interventions---known as  counterfactual distributions---is crucial for the success of these applications. In this work, we propose to model counterfactual distributions using a novel Hilbert space representation called counterfactual mean embedding (CME). The CME embeds the associated counterfactual distribution into a reproducing kernel Hilbert space (RKHS) endowed with a positive definite kernel, which allows us to perform causal inference over the entire landscape of the counterfactual distribution. Based on this representation, we propose a  distributional treatment effect (DTE) which can quantify the causal effect over entire outcome distributions. Our approach is nonparametric as the CME can be estimated under the unconfoundedness assumption from observational data without requiring any parametric assumption about the underlying distributions. We also establish a rate of convergence of the proposed estimator which depends on the smoothness of the conditional mean and the Radon-Nikodym derivative of the underlying marginal distributions. Furthermore, our framework allows for more complex outcomes such as images, sequences, and graphs. Our experimental results on synthetic data and off-policy evaluation tasks demonstrate the advantages of the proposed estimator.
\end{abstract}

\begin{keywords}
  counterfactual inference, kernel mean embedding, potential outcome framework, reproducing kernel Hilbert space, causality
\end{keywords}

\section{Introduction}

To make a rational decision, a decision maker must be able to anticipate the effects of a decision to the outcomes of interest, before committing to that decision.
For instance, before building a certain facility in a city, \eg, a dam, policymakers and citizens must seek to understand its environmental effects.
In medicine, a doctor has some prior knowledge about the effects a certain drug will have on a patient's health, before actually prescribing it.  
In business, a company needs to understand the effects of a certain strategy of advertisement to its revenue.
One approach to addressing these questions is {\em counterfactual inference}.

Counterfactual inference we consider in this work consists of the following three main ingredients. 
Suppose that there exists a hypothetical subject (\eg, a patient in medical treatment), and let $X$ be {\em covariates} representing the features of the subject (\eg, age, weight, medical record, etc.), $T$ be a {\em treatment indicator} representing the treatment assigned to the subject (a drug of interest or a placebo), and $Y$ be the {\em observed outcome} representing the post-treatment quantity of interest (\eg, whether the patient is recovered or not). 
Given certain realizations of these variables $\{(\x_i,t_i,\y_i)\}_{i=1}^n$, in which each index $i$ represents the identity of a subject,  an analyst wishes to know how the treatment affects the outcome.


This problem is called counterfactual since for each subject $i$, we only observe the outcome $\y_i$ resulting from the assigned treatment $t_i$ and can never observe the outcome (say $\y'_i$) that would have been realized under an alternative treatment $t_i' \neq t_i$. For example, if a patient receives an active treatment (\eg, a drug of interest), we can never observe the outcome from the same patient under a control treatment (\eg, a placebo).  This is known as the fundamental problem of causal inference \citep{Holland86:FPCI} and also as {\em bandit feedback} in the bandit literature \citep{Dudik11:DoublyRobust}.
One way to partially address this issue is a randomized experiment \citep{Fisher35:Rand}, in which treatments are randomly assigned to subjects. 
Although considered a gold standard,  in practice randomization  can be too expensive, time-consuming, or unethical. In most cases, therefore, analysis about treatment effects needs to be done on the basis of observational data $\{(\x_i,t_i,\y_i)\}_{i=1}^n$ in which the treatment assignment $t_i$ may depend on covariates $\x_i$ and possibly on some hidden confounders; this setting is commonly known as {\em observational studies} \citep{Rosenbaum02:OS,Rubin05:PO}.


A fundamental framework for observational studies is the {\em potential outcome framework} \citep{Neyman1923:Causal,Rubin74:Causal}. 
It provides a clear notation for {\em potential outcomes}, \ie, the outcomes that would have been observed under different treatments, and elucidates the conditions required for making a valid inference about treatment effects; see Section \ref{sec:potential-outcome}.
The framework has been studied extensively in statistics, and has a wide range of applications in biomedical and social sciences; see, \eg, \citet{Imbens15:Imbens}. 
Moreover, important applications of machine learning such as off-policy evaluation for online advertisement and recommendation systems can be reformulated under this framework \citep{Schnabel16:RecomTreat,KallusZ18:Continuous}. 
We argue, however, that there exist the following challenges:

\paragraph{Average treatment effects.}
Many of existing works focus on estimating the {\em average treatment effect} (ATE), which is the difference between the means of the outcome distributions; see Section \ref{sec:potential-outcome} for details.
However, the ATE does not inform changes in higher-order moments, even when they exist.
For instance, if a treatment of interest has an effect only in the {\em variance} of the distribution of outcomes, then the analysis of average treatment effects cannot capture such effects. 
Suppose that the treatment is whether to provide a certain drug, and the outcome is the blood pressure of a patient; just analyzing the average treatment effects may lead to an incorrect conclusion, if the drug increases/decreases the blood pressure of a patient whose blood pressure was already high/low.
This highlights the importance of analyzing the outcome distribution as a whole.
 
In this work, we focus on the \emph{distributional treatment effect} (DTE), which involves the entire outcome distributions.
This scenario often arises in several real-world socioeconomic applications; see, \eg, \citet{Rot10,Chernozhukov13:Counterfactual}.

\paragraph{Parametric models.}
Many of the classical approaches in causal inference make parametric assumptions about relationships between covariates $X$, treatment assignment $T$, and observed (or potential) outcomes.
However, if the imposed parametric assumption is incorrect, \ie, model misspecification, then the conclusion about treatment effects can be wrong or misleading.
To overcome this limitation, there is a recent surge in applying nonparametric machine learning models to causal inference problems, e.g., \citet{Shalit16:CounterfactualBound} and \citet{Alaa17:MTGP} among others.
This paper also contributes to this endeavour.

\paragraph{Overparameterized models.}
Deep learning has become the first choice in many applied fields due to its excellent empirical performance, and thus has also been applied to counterfactual inference, \eg, \citet{Johansson16:CounterfactualRep,Hartford17a:DeepIV}.
Unfortunately, such approaches based on deep learning lack theoretical guarantees, because arguably deep learning itself lacks an established theory as a learning method (at least until now).
This is problematic when consequential decisions are based on the analysis of treatment effects (\eg, political decisions and medical treatments).
Having better theoretical grounding, kernel methods have recently become popular tools for causal inference \citep{Alaa17:MTGP,Singh19:KIV,Muandet20:KCM,Muandet20:DualIV}.

\paragraph{Multivariate and structured outputs.}
Existing works often deal with outcomes that are discrete or real-valued.
However, depending on the application, outcome variables may be multivariate (possibly high-dimensional) or structured, such as images and graphs.
For example, in medical data analysis, outcomes may be fMRI data taken from a subject after receiving a certain treatment.
Thus, it is not straightforward to apply  existing approaches.

\paragraph{}
In this work, we propose a novel approach to counterfactual inference that addresses the above challenges, which we term {\em counterfactual mean embedding} (CME). 
Our approach is built on kernel mean embedding \citep{Berlinet04:RKHS,Smola07Hilbert,Muandet17:KME}, a framework for representing probability distributions as elements in a reproducing kernel Hilbert space (RKHS), so that each element representing a distribution maintains all of its information (cf. Section \ref{sec:background} and \ref{kme-cme}).
We define an element representing a counterfactual distribution, for which we propose a nonparametric estimator.
Notable advantages of the proposed approach are summarized as follows:
\begin{enumerate}
    \item 
    The proposed estimator can be computed based only on linear algebraic operations involving kernel matrices.
    Being a kernel method, it can be applied to not only standard domains (such as the Euclidean space), but also more complex and structured covariates and/or outcomes such as images, sequences, and graphs, by using  off-the-shelf kernels designed for such data \citep{Gartner03:Structured}; this widens possible applications of counterfactual inference in general (cf. Section \ref{sec:CME-estimators}).
    Thus our work offers more flexibility than the existing approaches by \citet{Rot10} and \citet{Chernozhukov13:Counterfactual}, who focused on estimating the cumulative distribution functions of counterfactual distributions by assuming real-valued outcomes. 
    \item 
    The proposed estimator can be used for computing a distance between the counterfactual and controlled distributions, thereby providing a way of quantifying the effect of a treatment to the distribution of outcomes; we define this distance as the maximum mean discrepancy (MMD) \citep{Borgwardt06:MMD,Gretton12:KTT} between the  counterfactual and controlled distributions.
    It also provides a way to sample points from a counterfactual distribution based on kernel herding \citep{CheWelSmo10}, a kernel-based deterministic sampling method (cf. Section \ref{sec:kte}).

    \item 
    The proposed estimator is nonparametric, and has theoretical guarantees.
    Specifically, we prove the consistency of the proposed estimator under a very mild condition (cf. Theorem \ref{theo:uinf_conv}), and derive its convergence rates under certain regularity assumptions involving kernels and underlying distributions (cf. Theorem \ref{theo:convergence-rate}).
    Both results hold without assuming any parametric assumption. 
   
\end{enumerate}


The rest of the paper is organized as follows.
After summarizing related work in Section \ref{sec:related_work},
we review in Section \ref{sec:preliminaries} the potential outcome framework as well as kernel mean embedding of distributions. 
Section \ref{sec:cme} introduces counterfactual learning and then provides a generalization of Hilbert space embedding to counterfactual distributions. 
This section also presents how we can quantify and estimate distributional treatment effects (DTEs) with our approach.
We subsequently provide the detailed convergence analysis in Section \ref{sec:theory-main}, followed by examples of the important applications in Section \ref{sec:applications} (sampling and testing) and Section \ref{sec:policy-evaluation} (off-policy evaluation).
Finally, we demonstrate the effectiveness of the proposed estimator on simulated data as well as real-world policy evaluation tasks in Section \ref{sec:experiments}.

\subsection{Related Work} 
\label{sec:related_work}

We summarize below related works on counterfactual inference.


\paragraph{Treatment effect estimation.}

Estimating treatment effects is one of the most fundamental tasks in counterfactual inference \citep{Rubin74:Causal,Shalit17:ITE}.
This task is hindered by the fact that one cannot observe all potential outcomes at the same time for each subject.
Moreover, the data is usually biased by a non-randomized treatment assignment. 
Modern approaches attempt to resolve these problems by usingo state-of-the-art ML algorithms. For example, \citet{Hill11:Nonparam} develops a nonparametric method for estimating the ITE based on Bayesian additive regression tree (BART). \citet{Athey16:Recursive} and \citet{Wager17:RandomForests} adapt tree-based methods to treatment effect estimation.
\citet{Shalit16:CounterfactualBound} and \citet{Johansson16:CounterfactualRep} formulate the problem as a domain adaptation problem and propose to balance the covatiates using representation learning. \citet{Hartford17a:DeepIV} develop a two-step regression method based on deep neural networks for instrumental variable regression. 
Adversarial training of neural networks for causal inference have also been considered in \citet{Yoon18:ganite}, for example.




\paragraph{Off-policy evaluation and learning from observational data.}

In many circumstances, evaluating and learning a policy by interacting directly with an environment may not be possible due to practical constraints (\eg, monetary costs, safety and ethics). 
As a result, several works have attempted to leverage historical data collected using a logging policy in off-policy evaluation and learning, \eg, \citet{Langford08:ES,Atan18;DeepTreat}. 
Most methods rely on importance weighting \citep{Langford08:ES,Bottou13:Counterfactual,Swaminathan15:CRM}.
\citet{Dudik11:DoublyRobust} uses a doubly robust estimator to reduce the variance of off-policy evaluation. 
\citet{Swaminathan15:CRM} presents a framework for policy learning called counterfactual risk minimization (CRM) based on empirical variance regularization. 
In this work, we also demonstrate the application of our estimator in off-policy evaluation. 

\paragraph{Causal inference with kernel mean embeddings.}

Hilbert space embedding of distributions has been applied extensively in causal inference. 
For instance, in causal discovery, \citet{Fukumizu2008,Zhang2011,Doran14:Perm} develop powerful kernel-based tests of conditional independence which allow for the recovery of causal graphs up to the Markov equivalence class. 
See \citet[Section 4.8]{Muandet17:KME} for a review of many other applications.
In treatment effect estimation, kernel methods have become a popular approach to covariate balancing between treatment and control groups \citep{Shalit16:CounterfactualBound,Johansson16:CounterfactualRep,Wong17:CovBal,Kallus17:Matching}.
Our work, on the contrary, focuses on characterizing the representation of counterfactual distribution of outcomes using the kernel mean embedding and provides nonparametric inference tools.

\section{Preliminaries}
\label{sec:preliminaries}

The counterfactual mean embedding relies on the potential outcome framework as well as the concepts of kernels, reproducing kernel Hilbert spaces (RKHSs), and kernel mean embedding of distributions.
We review these concepts in this section.


\subsection{Kernels and Reproducing Kernel Hilbert Spaces (RKHSs)}

We first review kernels and RKHSs, details of which can be found in, \eg, \citet{Scholkopf01:LKS}, \citet{Berlinet04:RKHS}, and \citet{Smola07Hilbert}.




Let $\inspace$ be a nonempty set. 
Let $\hbspace$ be a Hilbert space consisting of functions on $\inspace$ with $\left< \cdot,\cdot\right>_\hbspace$ and $\| \cdot \|_\hbspace$ being its inner-product and norm, respectively.
The Hilbert space $\hbspace$ is called a \emph{reproducing kernel Hilbert space} (RKHS), if there exists a symmetric function $k: \inspace \times \inspace \to \mathbb{R}$, called the {\em reproducing kernel} of $\hbspace$, satisfying the following properties: 
\begin{enumerate}
    \item For all $\x\in\inspace$, we have $k(\cdot, \x) \in \hbspace$. Here $k(\cdot, \x)$ is the function of the first argument with $\x$ being fixed, such that $\x' \mapsto k(\x',\x)$.
    \item For all $f\in\hbspace$ and $\x\in\inspace$, we have $f(\x) = \langle k(\cdot, \x), f\rangle_{\hbspace}$. This is called the {\em reproducing property} of $\hbspace$ (or of $k$).
\end{enumerate}
It is known that the linear span of functions $k(\cdot,\x)$, denoted by ${\rm span}(k(\cdot, \x) \mid \x \in \inspace)$, is dense in $\hbspace$, \ie,
$$
\hbspace = \overline{{\rm span}(k(\cdot, \x) \mid \x \in \inspace)},
$$
where the closure on the right hand side is taken with respect to the norm of $\hbspace$.
In other words, any $f \in \hbspace$ can be written as $f = \sum_{i=1}^\infty \alpha_i k(\cdot,\x_i)$ for some $(\alpha_i)_{i=1}^\infty \subset \mathbb{R}$ and $(\x_i)_{i=1}^\infty \subset \inspace$ such that $\| \sum_{i=1}^\infty \alpha_i k(\cdot,\x_i) \|_\hbspace^2 = \sum_{i,j=1}^\infty \alpha_i \alpha_j k(\x_i,\x_j) < \infty$.

Any RKHS is uniquely associated with its reproducing kernel $k$, which is {\em positive definite}: a symmetric function $k: \inspace \times \inspace \to \mathbb{R}$ is called positive definite, if for all $n\in\mathbb{N}$, $\alpha_1,\ldots,\alpha_n \in\mathbb{R}$, and all $\x_1,\ldots,\x_n\in\inspace$, we have $\sum_{i=1}^n\sum_{j=1}^n\alpha_i\alpha_j k(\x_i,\x_j) \geq 0$.
On the other hand, for {\em any} positive definite kernel $k: \inspace \times \inspace \to \mathbb{R}$, there exists an RKHS $\hbspace$ for which $k$ is the reproducing kernel \citep{aronszajn50reproducing}.
Therefore, by defining a positive definite kernel, one always implicitly defines its RKHS.

As indicated from the definition of positive definiteness, kernels can be defined on {\em any} nonempty set $\inspace$. 
Therefore, they have been defined not only for the real vector space $\mathbb{R}^d$, but also for non-standard domains such as those of images and graphs.
Popular kernels on $\inspace \subset \mathbb{R}^d$ include linear kernels $k(\x,\x')=\x^\top\x'$, polynomial kernels $k(\x,\x') = (\x^\top\x' + c)^p, c > 0, p\in\mathbb{N}_{+}$, Gaussian kernels $k(\x,\x')=\exp(-\|\x-\x'\|_2^2/2\sigma^2), \sigma >0$, and Laplace (or more generally Mat\'ern) kernels $k(\x,\x')=\exp(-\|\x-\x'\|_2/2\sigma^2), \sigma > 0$.  
More examples of positive definite kernels can be found in \citet{Genton02:Kernel} and \citet{HofSchSmo08}.

\subsection{Kernel Mean Embedding of Distributions}
\label{sec:background}
In this work, we use kernels and RKHSs to represent, compare, and estimate {\em probability distributions}.
This is enabled by the approach known as \emph{kernel mean embedding} of distributions \citep{Berlinet04:RKHS,Smola07Hilbert,Muandet17:KME}, which we review here.
In what follows, we assume that $\inspace$ is a measurable space with some sigma algebra $\mathcal{B}_\inspace$.

\begin{definition}[Kernel mean embedding (KME)]
     Let $\pspace$ be the set of all probability measures on a measurable space $(\inspace, \mathcal{B}_\inspace)$ and $k:\inspace\times\inspace\to\mathbb{R}$ be a measurable positive definite kernel with associated RKHS  $\hbspace$, such that $\sup_{\x\in\inspace} k(\x,\x) < \infty$. 
     Then, the kernel mean embedding (KME) of $\pp{P}\in\pspace$ is defined as the Bochner integral\footnote{See, \eg, \citet[Chapter 2]{Diestel-77} and \citet[Chapter 1]{Dinculeanu:2000} for the definition of Bochner integral.} of $k(\cdot, \x)$ with respect to $\pp{P}$:
    \begin{equation}\label{eq:kme}
        \mu\,:\, \pspace\rightarrow\hbspace, \quad \pp{P}\mapsto \muv_{\pp{P}}:= \int k(\cdot, \x)\dd\pp{P}(\x).
    \end{equation}
     The element $\muv_{\pp{P}}$ may be alternatively called the kernel mean of $\pp{P}$.
     For a random variable $X \sim \pp{P}$, the kernel mean may also be written as $\muv_{X}$.
\end{definition}

The kernel mean $\mu_{\pp{P}}$ serves as a representation of $\pp{P} \in \pspace$ in the RKHS $\hbspace$.
This is justified if $\hbspace$ is \emph{characteristic} \citep{Fukumizu04:DRS}: the RKHS $\hbspace$ (and the associated kernel $k$) is defined to be characteristic, if the mapping $\mu\,:\, \pspace\rightarrow\hbspace$ in \eqref{eq:kme} is injective.
In other words, $\hbspace$ is characteristic, if for any $\pp{P}, \pp{Q} \in \pspace$, we have $\muv_{\pp{P}} = \muv_{\pp{Q}}$ {\em if and only if}  $\pp{P} = \pp{Q}$.
That is, $\mu_{\pp{P}}$ is uniquely associated with $\pp{P} \in \pspace$, and thus $\mu_{\pp{P}}$ becomes a unique representation of $\pp{P}$ in $\hbspace$, maintaining all information about $\pp{P}$. 
Examples of characteristic kernels on $\inspace = \mathbb{R}^d$ include Gaussian, Mat\'ern and Laplace kernels \citep{Sriperumbudur10:Metrics}. 
On the other hand, linear and polynomial kernels are not characteristic, since their RKHSs are finite dimensional and only provide unique representations of distributions up to certain moments.

The kernel mean embedding \eqref{eq:kme} is the key ingredient of a well-known metric on probability measures called maximum mean discrepancy (MMD) \citep{Borgwardt06:MMD,Gretton12:KTT}. 
For two distributions $\pp{P}, \pp{Q} \in \pspace$, their MMD is given as the RKHS distance between the corresponding kernel means $\muv_{\pp{P}}$, $\muv_{\pp{Q}}$:
\begin{equation}\label{eq:mmd}
    \text{MMD}[\hbspace,\pp{P},\pp{Q}] := \|\muv_{\pp{P}} - \muv_{\pp{Q}}\|_{\hbspace} 
    = \sup_{f\in\hbspace,\|f\|_\hbspace \leq 1}\left| \int f(\x) \dd\pp{P}(\x) - \int f(\x) \dd\pp{Q}(\x) \right|,
\end{equation}
where the second identity follows from the reproducing property and $\hbspace$ being a vector space \cite[Lemma 4]{Gretton12:KTT}.
The right expression is the maximum discrepancy between the means of functions from the unit ball of the RKHS $\hbspace$, and is the original definition of MMD.
Being defined via the RKHS distance, MMD is a pseudo-metric on $\pspace$.
Moreover, if $\hbspace$ is characteristic,  $\text{MMD}[\hbspace,\pp{P},\pp{Q}] = 0$ holds if and only if $\pp{P} = \pp{Q}$, and thus MMD becomes a proper metric on probability measures.
See \citet{Sriperumbudur10:Metrics,SimSch18} for details and relationships to other popular metrics on probability measures.

Given an i.i.d.~(identically and independently distributed) sample $\x_1,\ldots,\x_n$ from $\pp{P}$, the kernel mean $\muv_{\pp{P}}$ can be estimated simply by the empirical average
\begin{equation}\label{eq:emp-kme}
    \muh_{\pp{P}} := \frac{1}{n}\sum_{i=1}^n k(\cdot, \x_i).
\end{equation}
The $\sqrt{n}$-consistency of \eqref{eq:emp-kme}, that is $\| \muv_{\pp{P}} - \muh_{\pp{P}} \|_\hbspace = O_p(n^{-1/2})$ as $n \to \infty$, has been established in \citet[Theorem 27]{Song08:Thesis} and also in \citet{Gretton12:KTT,Lopez-Paz15:Towards,TolstikhinSM17:Minimax}.
Importantly, this holds without any parametric assumption about the underlying distribution $\pp{P}$. 

Given another i.i.d.~sample $\x_1',\dots,\x_m'$ from $\pp{Q}$, and defining  $\muh_{\pp{Q}} := \frac{1}{m}\sum_{j=1}^m k(\cdot, \x_j')$ as an estimate of the kernel mean $\muv_{\pp{Q}}$, the (squared) MMD \eqref{eq:mmd} can be estimated as
\begin{align*}
\widehat{\text{MMD}}^2[\hbspace,\pp{P},\pp{Q}] &= \| \muh_{\pp{P}} - \muh_{\pp{Q}} \|^2_{\hbspace} \nonumber \\
&= \frac{1}{n^2} \sum_{i=1}^n \sum_{j=1}^n k(\x_i,\x_j) - \frac{2}{nm} \sum_{i=1}^n \sum_{j=1}^m k(\x_i,\x_j') + \frac{1}{m^2} \sum_{i=1}^m \sum_{j=1}^m k(\x_i',\x_j'), 
\end{align*}
where the right expression follows from the reproducing property \citep[Eq.~5]{Gretton12:KTT}.
Applying the triangle inequality, it follows that 
$
\left| \|\muv_{\pp{P}} - \muv_{\pp{Q}}\|_{\hbspace} - \| \muh_{\pp{P}} - \muh_{\pp{Q}} \|_\hbspace \right| \leq \| \muv_{\pp{P}} - \muh_{\pp{P}} \|_\hbspace + \|  \muv_{\pp{Q}} - \muh_{\pp{Q}} \|_\hbspace  = O_p(n^{-1/2}) + O_p(m^{-1/2})
$
as $n, m \to \infty$, implying the consistency of the above estimator of MMD with a parametric convergence rate.
This estimator only requires evaluations of the kernel, and therefore is easy to implement in practice.
We note that the above MMD estimator is biased, while being consistent; an unbiased estimator is also available for MMD \citep[Eq.~3]{Gretton12:KTT}.




\subsection{Kernel Mean Embedding of Conditional Distributions}
\label{kme-cme}

Finally, the notion of KME can be extended to conditional distributions \citep{Song10:KCOND,Grunewalder12:LGBPP,Song2013,Fukumizu13:KBR}. 
To describe this, let $(X,Y)$ be a random variable taking values in the product space $\inspace\times\outspace$, where $\inspace$ and $\outspace$ are measurable spaces. 
We define a measurable kernel $k$ on $\inspace$ and let $\hbspace$ be the associated RKHS.
Similarly, we define a measurable kernel $\ell$ on $\outspace$ and let $\hbspf$ be the associated RKHS.
Let $\pp{P}_{XY}$ be the joint distribution of $(X,Y)$, and  $\pp{P}_{Y|X=\x}$ be the conditional distribution of $Y$ given $X = \x$.

The KME of the conditional distribution $\pp{P}_{Y|X=\x}$ is then defined as the conditional expectation of $\ell(\cdot,\y)$ with respect to $\pp{P}_{Y|X=\x}$: 
\begin{equation} \label{eq:cond-kme}
    \muv_{Y|X=\x} := \int \ell (\cdot,\y) \dd\pp{P}_{Y|X = \x}  (\y) \in \hbspf \quad (\x \in \inspace).
\end{equation}
Again, if $\hbspf$ is characteristic, this kernel mean maintains all information about $\pp{P}_{Y|X=\x}$, thus being qualified as its representation.
It is instructive to note that $\muv_{Y|X=\x}$ is defined for each $\x \in \inspace$ individually.

Given an i.i.d.~sample $(\x_1,\y_1),\dots,(\x_n,\y_n)$ from the joint distribution $\pp{P}_{XY}$, the conditional mean embedding \eqref{eq:cond-kme} can be estimated as 
\begin{equation}
\hat{\muv}_{Y|X=\x} := \sum_{i=1}^n w_i(\x) \ell(\cdot,\y_i), \label{eq:cond-kme-est} \\
\end{equation}
\noindent where
\begin{eqnarray*}
 (w_1(\x),\dots,w_n(\x))^\top &:=& (\kmat + n \varepsilon \id)^{-1} {\bf k}(\x) \in \mathbb{R}^n, \\
 {\bf k}(\x) &:=& (k(\x,\x_1),\dots,k(\x,\x_n))^\top \in \mathbb{R}^n.
\end{eqnarray*}
Here, $\kmat \in \mathbb{R}^{n \times n}$ is the kernel matrix such that $\kmat_{i,j} = k(\x_i,\x_j)$, and $\varepsilon > 0$ is a regularization constant.
As pointed out by \citet{Grunewalder12:LGBPP}, this estimator can be interpreted as that of {\em function-valued kernel ridge regression}, where the task is to estimate the mapping $\x \mapsto \int \ell (\cdot,\y) \dd\pp{P}_{Y|X = \x}  (\y)$ from training data $(\x_1,\ell(\cdot,\y_1)), \dots, (\x_n,\ell(\cdot,\y_n)) \in \inspace \times \hbspf$.
In fact, the weights $w_1(\x),\dots,w_n(\x)$ in \eqref{eq:cond-kme-est} are identical to those of kernel ridge regression (or Gaussian process regression).
As such, the regularization constant $\varepsilon$ should decay to $0$ at an appropriate speed as $n \to \infty$, in order to ensure a good convergence rate of the estimator \eqref{eq:cond-kme-est}, see, \eg, \citet{CapDev07}. 

\section{Counterfactual Mean Embeddings}
\label{sec:cme}

In this section, we formulate our problem of estimating distributional treatment effects and describe our approach.
In Section \ref{sec:potential-outcome}, we review the potential outcome framework and, based on it, we define distributional treatment effects.
The key concepts here are counterfactual distributions on outcomes.
In Section \ref{sec:kmean-counterfactual-dist}, we describe our approach, \emph{counterfactual mean embeddings}, as the kernel mean embeddings of counterfactual distributions.
Section \ref{sec:more-on-assignment} provides details of the distributional effects of covariate distributions, which are essential for applications in off-policy evaluation. 
We then define their empirical estimators in Section \ref{sec:CME-estimators}.
Finally, we introduce the kernel treatment effect (KTE) as a way to evaluate the distributional treatment effect in Section \ref{sec:kte}.

\subsection{Potential Outcome Framework and Distributional Causal Effects}
\label{sec:potential-outcome}

We pose our problem based on the potential outcome framework, also known as the Neyman-Rubin causal model, which is a classic and widely used approach to estimating causal effects of treatments from observational data \citep{Neyman1923:Causal,Rubin74:Causal,Rubin05:PO}.     

We consider a hypothetical subject (\eg, a patient) in a population.
Let $X \in \inspace$ be a {\em covariate} random variable representing the subject's features (\eg, age, weight, blood pressure, etc.), where $\inspace$ is a measurable space.
Let $T \in \mathcal{T}$ a random variable that indicates the {\em treatment} assigned to the subject,  where $\mathcal{T}$ denotes the set of treatments of interest. We call $T$  {\em treatment indicator} or {\em treatment assignment}.  
In this work, we focus on binary treatments $\mathcal{T} := \{0,1\}$ for simplicity, but an extension to multiple treatments is straightforward. 
For instance, $T=1$ may represent that the subject is assigned an active treatment (e.g., a drug of interest), and $T = 0$ a control treatment (e.g., placebo). 

Let $Y_0^*, Y_1^* \in \mathcal{Y}$ be random variables representing {\em potential} outcomes, where $\mathcal{Y}$ is a measurable space.
That is, $Y_1^*$ represents the outcome of interest after the subject is exposed to treatment $1$, and $Y_0^*$ the outcome after the subject is exposed to treatment $0$.
For instance, $Y_1^*$ may be the blood pressure of the patient measured after the patient had the drug, and $Y^*_0$ be that after having nothing. 
The problem here, known as the fundamental problem of causal inference, is that one can only observe either $Y_1^*$ or $Y_0^*$, but not both.
For instance, if one gave the drug to the patient and measured the resulting blood pressure, it is no longer possible to measure the blood pressure of the same patient without the drug.
Thus, the {\em observed} outcome $Y \in \outspace$ can be defined as
$$
Y :=  \mathbbm{1}(T=0) Y^*_0 + \mathbbm1(T=1) Y^*_1,  
$$
where $\mathbbm{1}(T=j) := 1$ if $T=j$, and zero otherwise.
Note that in observational studies, the treatment assignment may not be completely random, \ie, $T$ depends on $Y_0^*$, $Y_1^*$ and $X$.

Assume that there are $N$ subjects, and that each subject $i = 1,\dots,N$ is associated with random variables $(\x_i, t_i, \y_{0i}^*, \y_{1i}^*)$ that are distributed as $(X, T, Y_0^*, Y_1^*)$ independently to the other subjects,\footnote{This independence assumption may be seen as a version of the Stable Unit Treatment Value Assumption (SUTVA), which requires that the potential outcomes of any subject $i$ are independent of the treatments $t_j$ assigned to the other subjects $j \neq i$.} \ie,
\begin{equation} \label{eq:sample-potential-notation}
(\x_i, t_i, \y_{0i}^*, \y_{1i}^*)_{i=1}^N  \sim (X, T, Y_0^*, Y_1^*),\quad {\rm i.i.d.} 
\end{equation}
Note that for each subject $i$, only one of $\y_{0i}^*$ or $\y_{1i}^*$ can be observed.
Thus, observational data given to the analyst are 
\begin{equation} \label{eq:observational-data}
(\x_i,t_i, \y_i )_{i=1}^N, \quad \y_i := \mathbbm{1}(t_i=0) \y^*_{0i} + \mathbbm1(t_i=1) \y^*_{1i},
\end{equation}
which are i.i.d.~with $(X,T,Y)$.
We write $n := \sum_{i=1}^N \mathbbm{1}(t_i = 0)$ the number of subjects receiving treatment $T=0$, and $m := \sum_{i=1}^N \mathbbm{1}(t_i = 1)$ that of treatment $T=1$.

We consider three kinds of distributional causal effect, as described below.
For ease of understanding, we also present the corresponding expressions based on the sample \eqref{eq:sample-potential-notation}.
Nevertheless, these sample expressions are also counterfactual quantities due to the fundamental problem of causal inference.

\subsubsection{Distributional Treatment Effect (DTE)}
\label{sec:DATE}
Let $\pp{P}_{Y_0^*}$ and $\pp{P}_{Y_1^*}$ be the distributions of the potential outcomes $Y_0^*$ and $Y_1^*$, respectively. 
Then we define the distributional treatment effect (DTE) as the difference between these two distributions:
    \begin{equation} \label{eq:dist-treatment-effect}
     \pp{P}_{Y_0^*} (\cdot) - \pp{P}_{Y_1^*} (\cdot).
    \end{equation}
    The corresponding sample expression is given by
    $$
      \frac{1}{N} \sum_{i=1}^N  \delta(\cdot - \y_{0i}^*) - \frac{1}{N} \sum_{i=1}^N  \delta(\cdot - \y_{1i}^*), 
    $$
    where $\delta$ is the Dirac distribution.
    As mentioned, this sample expression cannot be obtained from observational data \eqref{eq:observational-data}, since for each subject $i$, we only have either $\y_{0i}^*$ or $\y_{1i}^*$.
    
    The DTE \eqref{eq:dist-treatment-effect} can capture the treatment effects on the potential outcomes that may not be identified only by the average treatment effect (ATE) \citep{Imbens04:ATE}, the difference between the expectations of $Y^*_0$ and $Y^*_1$:
\begin{equation}\label{eq:ate}
    \text{ATE} := \ep[Y^*_0] - \ep[Y^*_1] 
\end{equation} 
    or its corresponding sample version
    $$
    \text{ATE}_N := \frac{1}{N} \sum_{i=1}^N \y_{0i}^* - \frac{1}{N} \sum_{i=1}^N \y_{1i}^*. 
    $$
    For instance, even when the ATE is $0$, the higher order moments of $\pp{P}_{Y_0^*}$ and  $\pp{P}_{Y_1^*}$, such as their variances, may differ.   
    The DTE can capture such a difference, while the ATE cannot.
    
\subsubsection{Distributional Treatment Effects on the Treated}
\label{sec:effects_by_treatment}
This is defined as the difference in two conditional distributions as
\begin{equation} \label{eq:cond_effect_potential_outcomes}
    \pp{P}_{Y_1^* | T } (\cdot\,|\,t) - \pp{P}_{Y_0^* | T } (\cdot\,|\,t), \quad t \in \{0,1\}.
\end{equation}
For $t=1$, this can be understood as the distributional treatment effect for the treated, and the corresponding sample expression is given by
$$
\frac{1}{m} \sum_{i=1}^N \mathbbm{1}(t_i = 1) \delta(\cdot - \y_{1i}^*) - \frac{1}{m} \sum_{i=1}^N \mathbbm{1}(t_i = 1) \delta(\cdot - \y_{0i}^*),
$$
where the second term is counterfactual. 
The details of the conditional treatment effect \eqref{eq:cond_effect_potential_outcomes} can be found, for example, in \citet[p.2214]{Chernozhukov13:Counterfactual}.

\subsubsection{Distributional Effects of the Covariate Distributions}
\label{sec:effects_by_treatment_assignment}

This is defined as the difference between the conditional distribution of $Y_0^*$ given $T = 0$ and that of $Y_0^*$ given $T = 1$:
\begin{equation} \label{eq:effect_potential_outcomes}
    \pp{P}_{Y_0^* | T } (\cdot \,|\, 0) - \pp{P}_{Y_0^* | T } (\cdot \,|\, 1)
\end{equation}
where $\pp{P}_{Y_0^* | T }$ is the conditional distribution of $Y_0^*$ given $T$.
A similar definition can be given for $Y_1^*$.
The corresponding sample expression is given by
\begin{equation}\label{eq:selection-bias-emp}
\frac{1}{n} \sum_{i=1}^N \mathbbm{1}(t_i = 0) \delta(\cdot - \y_{0i}^*) - \frac{1}{m} \sum_{i=1}^N \mathbbm{1}(t_i = 1) \delta(\cdot - \y_{0i}^*). 
\end{equation}
Note that the second term in \eqref{eq:effect_potential_outcomes} and \eqref{eq:selection-bias-emp} are counterfactual in the sense that the potential outcome  $\y_{0i}^*$ of subject $i$ with $t_i = 1$ is not observable.












The above distributional differences capture the effects caused by the difference in the characteristics (\ie, {\em covariates}) of subjects exposed to different treatments, e.g., selection bias, rather than the effects caused by the treatment itself. 
For instance, let us assume that a drug was assigned to subjects whose blood pressures were already high ($t_i = 1$) and not assigned to subjects with low blood pressures ($t_i = 0$).
Then the counterfactual blood pressures $\y_{0i}^*$ of the subjects with $t_i = 1$, which would have been observed if they had not taken the drug, would be higher than those $\y_{0i}^*$ with $t_i = 0$.

This kind of ``selection bias'' can be captured in the above distributional difference, and this helps understand how the difference in {\em observed} outcome distributions arises. 
To explain this more precisely, however, we need the notation, definitions and assumptions introduced in the next subsection. 
Thus, we defer further explanations to Section \ref{sec:more-on-assignment}.
There, we also explain that this distributional difference is useful in studying counterfactual effects of a policy defined as a specification of a covariate distribution. In fact, this is how we formulate the problem of off-policy evaluation in Section  \ref{sec:policy-evaluation}.


\subsection{Counterfactual Distributions}
\label{sec:kmean-counterfactual-dist}

To deal with distributional treatment effects discussed in the previous subsection, we need to introduce the notion of counterfactual distributions \citep{Chernozhukov13:Counterfactual}.
We first summarize the notation defined above and introduce new ones, which we follow \citet[Appendix C]{Chernozhukov13:Counterfactual}.
\begin{definition} \label{def:random_variables}
Let $Y^*_0$ and $Y^*_1$ be random variables taking values in $\mathcal{Y}$, 
and $X$ and $T$ be random variables taking values in $\mathcal{X}$ and $\mathcal{T} = \{0,1\}$, respectively.
The random variables $Y$, $Y_t$ and $X_t$ ($t = 0,1$) are defined as
\begin{eqnarray*}
Y &:=&  \mathbbm{1}(T=0) Y^*_0 + \mathbbm1(T=1) Y^*_1, \\ 
Y_t &:=& Y \mid T = t \quad (t = 0,1), \\
X_t &:=& X \mid T = t \quad (t = 0,1). \label{eq:def_cov}
\end{eqnarray*}  
\end{definition}

In Definition \ref{def:random_variables}, $Y$ is the observed outcome variable.
Thus, $Y_t$ is $Y$ given that the treatment assignment is $T = t$ ($t=0,1$). 
By the definition of $Y$, this implies that $Y_t = Y_t^* \,|\, (T =t)$, that is, $Y_t$ is the potential outcome conditional on $T = t$.
Note that, since $Y_t^*$ and $T$ may be dependent, $Y_t^* \,|\, (T=t)$ may differ from $Y_t^*$ as a random variable.  
The variable $X_t$ is the covariate variable $X$ conditional on $T= t$.
The pair of variables $(X_t,Y_t)$ can thus be seen as observed random variables conditional on the treatment assignment $T = t$.

The following is a key assumption, which is needed in general for counterfactual inference with observational data.
\begin{assumption} 
  \label{asp:main-asp}
  \begin{enumerate}
  \item[(\textbf{A1})\label{asmp:cond-exogen}] {\bf Conditional exogeneity}: $Y^*_0,Y^*_1 \ci
    T \,|\, X$ almost surely for $X$.
  \item[(\textbf{A2})\label{asmp:support}] {\bf Support condition}: 
    $\inspace_0 = \inspace_1$, where $\inspace_j$ is the support of the distribution $\pp{P}_{X_j}$ of $X_j$ for $j = 0,1$.
  \end{enumerate}
\end{assumption}


\pgfdeclarelayer{background}
\pgfdeclarelayer{foreground}
\pgfsetlayers{background,main,foreground}
\begin{figure}[t!]
    \centering
    \begin{tikzpicture}[scale=0.45]
    \node[draw, circle, very thick] at (11,0) (tt) {$T$};
    \node[draw, circle, very thick] at (4.5,0) (y1) {$Y^*_1$};
    \node[draw, circle, very thick] at (0,0) (y0) {$Y^*_0$};
    \node[draw, circle, very thick] at (5.5,-5) (xx) {$X$};
    \draw[-, very thick] (y0) to (y1);
    \draw[-, very thick] (tt) to (xx);
    \draw[-, very thick] (xx) to (y0);
    \draw[-, very thick] (xx) to (y1);

    \node[] at (2.2,1.8) {\texttt{Potential Outcomes}};
    \node[] at (5.5,-6.5) {\texttt{Covariates}};
    \node[] at (13,1.8) {\texttt{Treatment Indicator}};
    
    \begin{pgfonlayer}{background}
        \path (y0.north west)+(-1.5,2) node (a) {};
        \path (y1.south east)+(1.5,-1) node (b) {};
        \path[very thick, rounded corners, draw=black, dashed]
            (a) rectangle (b);
    \end{pgfonlayer}
    
  \end{tikzpicture} 
  \caption{A graphical representation of the conditional exogeneity assumption. An edge between two random variables indicates that they are dependent. The conditional exogeneity assumption states that given the covariates $X$, the potential outcomes $Y_0^*, Y_1^*$ and the treatment assignment $T$ are conditionally independent. The assumption does not hold if there exists an edge between $Y_0^*, Y_1^*$ and $T$, which is the case, for instance, when there exists a hidden confounder $Z$ that is dependent to both $Y_0^*, Y_1^*$ and $T$.}
    \label{fig:cond_exogeneity}
\end{figure}

The conditional exogeneity (A1), also known as the {\em unconfoundedness} or {\em ignorability}, is a common assumption in observational studies to guarantee the identifiability of causal effects from observational data \citep{Rosenbaum83:Propensity,Imbens04:ATE,Rubin05:PO}.
It requires that there is no hidden confounder, say $Z$, that affects both the treatment assignment $T$ and potential outcomes $Y_0^*, Y_1^*$. 
In other words, the covariates $X$ include all important characteristics regarding the potential outcomes.
This assumption is described further in Figure \ref{fig:cond_exogeneity}, where the graphical model represents the conditional independence structure between the random variables. 
The support condition (A2) is needed to make the counterfactual distribution (introduced in  \eqref{eq:counterfactual-dist} below) well-defined, and is also made in \citet[Eq.~2.3]{Chernozhukov13:Counterfactual}.
It is analogous to the overlap assumption required for propensity score methods \cite[\eg][Assumption 2.2]{Imbens04:ATE}.

We now define counterfactual distributions.
Let $\pp{P}_{X_0}$ and $\pp{P}_{X_1}$ be the probability distributions of $X_0$ and $X_1$, respectively.
Denote by $\pp{P}_{Y\langle 0|0 \rangle}$ and $\pp{P}_{Y\langle 1|1 \rangle}$ the corresponding marginal distributions of outcomes defined by
\begin{eqnarray*}
\pp{P}_{Y\langle 0|0 \rangle}(\y) &:=&  \int \pp{P}_{Y_0|X_0}(\y|\x) \dd\pp{P}_{X_0}(\x) = \pp{P}_{Y_0}(\y) \\
\pp{P}_{Y\langle 1|1 \rangle} (\y) &:=&  \int \pp{P}_{Y_1|X_1}(\y|\x) \dd\pp{P}_{X_1}(\x) = \pp{P}_{Y_1}(\y)
\end{eqnarray*}
where $\pp{P}_{Y_0|X_0}(\y|\x)$ is the conditional distribution of $Y_0$ given $X_0$, and $\pp{P}_{Y_1|X_1}(\y|\x)$ is that of $Y_1$ given $X_1$.
Following \citet{Chernozhukov13:Counterfactual},  \emph{counterfactual
  distributions} are then defined as
\begin{eqnarray}
  \label{eq:counterfactual-dist}
  \pp{P}_{Y\langle 0|1 \rangle}(\y) &:=& \int \pp{P}_{Y_0|X_0}(\y|\x)\dd\pp{P}_{X_1}(\x), \\
    \pp{P}_{Y\langle 1|0 \rangle}(\y) &:=& \int \pp{P}_{Y_1|X_1}(\y|\x)\dd\pp{P}_{X_0}(\x), \label{eq:counterfactual-dis-10}
\end{eqnarray} 
which are well-defined as long as the support condition in Assumption \ref{asp:main-asp} is satisfied. 

The distributions introduced above are defined in terms of the observed random variables $(X_t,Y_t)_{t = 0,1}$. 
We now see how these distributions are related to the distributions on potential outcomes  that appear in distributional causal effects \eqref{eq:cond_effect_potential_outcomes} and \eqref{eq:effect_potential_outcomes}. 
First, as summarized in the following lemma,  $\pp{P}_{Y\left<0|0\right>}$ and $\pp{P}_{Y\left<1|1\right>}$ are nothing but $\pp{P}_{Y_0^* | T } (\y|0)$ and $\pp{P}_{Y_1^* | T } (\y|1)$, respectively. 
For completeness, we include the proof in Appendix \ref{sec:proof-lemma-observable-outcome-dist}.
\begin{lemma} \label{lemma:observable_outcome_dist}
We have $\pp{P}_{Y\left<0|0\right>}(\y)  = \pp{P}_{Y_0^* | T} (\y|0)$ and  $\pp{P}_{Y\left<1|1\right>}(\y)  = \pp{P}_{Y_1^* | T} (\y|1)$. 
\end{lemma}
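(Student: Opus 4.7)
The plan is to unpack both sides via the definitions in Definition \ref{def:random_variables} and the structural equation $Y = \mathbbm{1}(T=0)Y_0^* + \mathbbm{1}(T=1)Y_1^*$, and observe that the equality is essentially tautological once the conditioning is made explicit. I will do the first identity, since the second is symmetric.

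First, I would recall from the excerpt that the iterated-integral definition collapses to a marginal: by the tower property (integrating $\pp{P}_{Y_0|X_0}(\y|\x)$ against the marginal $\pp{P}_{X_0}$ of the joint $(X_0,Y_0)$), we have
\begin{equation*}
\pp{P}_{Y\langle 0|0\rangle}(\y) = \int \pp{P}_{Y_0|X_0}(\y|\x)\dd\pp{P}_{X_0}(\x) = \pp{P}_{Y_0}(\y),
\end{equation*}
as already noted in the definition preceding the lemma. So it suffices to identify $\pp{P}_{Y_0}$ with $\pp{P}_{Y_0^*|T}(\cdot\,|\,0)$.

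Next, I would invoke the definition $Y_0 := Y\mid T=0$, which gives $\pp{P}_{Y_0}(\y) = \pp{P}_{Y|T}(\y\,|\,0)$. On the event $\{T=0\}$ the indicator structure of $Y$ forces $Y = Y_0^*$ pointwise, so the conditional laws of $Y$ given $T=0$ and of $Y_0^*$ given $T=0$ coincide:
\begin{equation*}
\pp{P}_{Y|T}(\y\,|\,0) = \pp{P}_{Y_0^*|T}(\y\,|\,0).
\end{equation*}
Chaining the two displayed equalities yields the first claim, and applying the same argument with the roles of $0$ and $1$ interchanged (using $Y = Y_1^*$ on $\{T=1\}$) gives the second.

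The main ``obstacle'' is really just notational care rather than mathematical content: one has to keep track of the three different kinds of conditioning being implicitly used (the conditional distribution $\pp{P}_{Y_0|X_0}$ defined from the joint law of the observable pair $(X_0,Y_0)$, the definitional conditioning $\cdot\,|\,T=t$ baked into $Y_t$ and $X_t$, and the conditioning on $T$ applied directly to the potential outcome $Y_t^*$) and verify that they are compatible. No use of Assumption \ref{asp:main-asp} is needed here, since we are only relating observable conditionals to the potential outcome conditional on its own treatment arm, not transporting across treatment arms as in \eqref{eq:counterfactual-dist}--\eqref{eq:counterfactual-dis-10}.
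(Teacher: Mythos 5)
Your proof is correct and follows essentially the same route as the paper's: both rest on the identity $Y_0 = Y\mid(T=0) = Y_0^*\mid(T=0)$ forced by the indicator structure of $Y$, combined with the tower property. The only cosmetic difference is that you collapse the integral to the marginal $\pp{P}_{Y_0}$ first and then identify it with $\pp{P}_{Y_0^*|T}(\cdot\,|\,0)$, whereas the paper performs the identification inside the integrand before re-marginalizing; your observation that Assumption \ref{asp:main-asp} is not needed also matches the paper.
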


On the other hand, the counterfactual distributions $\pp{P}_{Y\langle 0|1 \rangle}$ and $\pp{P}_{Y\langle 1|0 \rangle}$ are respectively equal to distributions $\pp{P}_{Y_0^* | T } (\y|1)$ and $\pp{P}_{Y_1^* | T } (\y|0)$ appearing in \eqref{eq:effect_potential_outcomes} and \eqref{eq:cond_effect_potential_outcomes}, provided that Assumption \ref{asp:main-asp} holds \cite[Lemma 2.1]{Chernozhukov13:Counterfactual}; we provide a proof for completeness in Appendix \ref{sec:proof-lemma-causal-interpret}.
\begin{lemma}[Causal interpretation]
  \label{lem:causal-interpret} 
 Suppose that Assumption \ref{asp:main-asp} is satisfied. 
 Then we have $\pp{P}_{Y\langle 0|1 \rangle}=\pp{P}_{Y_0^* | T = 1}$ and $\pp{P}_{Y\langle 1|0 \rangle}=\pp{P}_{Y_1^* | T = 0}$.
\end{lemma}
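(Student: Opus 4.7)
The plan is to prove the two identities by a direct application of the conditional exogeneity (A1), using the support condition (A2) to ensure the relevant conditional distributions are well-defined over the integration domain. I will only carry out the argument for $\pp{P}_{Y\langle 0|1\rangle} = \pp{P}_{Y_0^* | T=1}$; the other identity follows by symmetry (swapping the roles of $0$ and $1$).

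First I would unpack the definitions. By construction, $Y_0 = Y | (T{=}0) = Y_0^* | (T{=}0)$ and $X_0 = X | (T{=}0)$, so the conditional distribution $\pp{P}_{Y_0 | X_0}(\y | \x)$ coincides with the conditional $\pp{P}_{Y_0^* | X, T}(\y | \x, 0)$. Similarly $\pp{P}_{X_1}$ is the conditional $\pp{P}_{X|T}(\cdot|1)$. Hence
\begin{equation*}
\pp{P}_{Y\langle 0|1\rangle}(\y) = \int \pp{P}_{Y_0^* | X,T}(\y|\x,0)\, \dd\pp{P}_{X|T}(\x|1).
\end{equation*}
By the conditional exogeneity assumption $Y_0^* \ci T \mid X$, the conditional $\pp{P}_{Y_0^* | X, T}(\y|\x, 0)$ is equal to $\pp{P}_{Y_0^* | X}(\y|\x)$ for $\pp{P}_X$-almost every $\x$ in the support $\inspace_0$ of $\pp{P}_{X_0}$.

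Next I would rewrite the target distribution $\pp{P}_{Y_0^* | T=1}$ using the tower property (i.e., conditioning further on $X$):
\begin{equation*}
\pp{P}_{Y_0^* | T}(\y | 1) = \int \pp{P}_{Y_0^* | X, T}(\y | \x, 1)\, \dd\pp{P}_{X | T}(\x | 1).
\end{equation*}
Applying the conditional exogeneity assumption once more yields $\pp{P}_{Y_0^* | X, T}(\y | \x, 1) = \pp{P}_{Y_0^* | X}(\y | \x)$ for $\pp{P}_X$-almost every $\x \in \inspace_1$. Comparing the two integrals and invoking the support condition $\inspace_0 = \inspace_1$ (which ensures that $\pp{P}_{Y_0^* | X}(\y|\x)$ is defined on the whole support of $\pp{P}_{X_1}$, so that both displays integrate the same function against the same measure), we conclude
\begin{equation*}
\pp{P}_{Y\langle 0|1\rangle}(\y) = \pp{P}_{Y_0^* | T}(\y | 1),
\end{equation*}
which is the first identity. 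The second identity is proved by the analogous chain of equalities with the roles of $0$ and $1$ interchanged.

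There is no real mathematical obstacle here; the proof is essentially bookkeeping with conditional distributions. The subtlest point worth highlighting is the role of (A2): without $\inspace_0 = \inspace_1$, the integrand $\pp{P}_{Y_0 | X_0}(\y|\x)$ in the definition of $\pp{P}_{Y\langle 0|1\rangle}$ might not be defined on a positive-measure subset of the support of $\pp{P}_{X_1}$, so the counterfactual distribution itself would be ill-posed. Care must also be taken so that the ``almost everywhere'' qualifiers arising from conditional exogeneity are consistent with the dominating measure used when integrating against $\pp{P}_{X_1}$, which again relies on (A2).
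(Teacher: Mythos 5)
Your proof is correct and follows essentially the same route as the paper's own argument: both reduce the claim to the chain $\pp{P}_{Y_0|X_0}(\y|\x) = \pp{P}_{Y|T,X}(\y|0,\x) = \pp{P}_{Y_0^*|T,X}(\y|0,\x) = \pp{P}_{Y_0^*|T,X}(\y|1,\x)$ via the definition of $Y$, Definition \ref{def:random_variables}, and conditional exogeneity, then integrate against $\pp{P}_{X|T}(\cdot|1)$ and apply the tower property. Your added remarks on the role of the support condition and on the compatibility of the almost-everywhere qualifiers with the dominating measure are sound and, if anything, slightly more careful than the paper's presentation.
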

Lemma \ref{lem:causal-interpret} shows that the distributions $\pp{P}_{Y_0^* | T = 1}$ and $\pp{P}_{Y_1^* | T = 0}$, which play the key role in analyzing distributional treatment effects  \eqref{eq:cond_effect_potential_outcomes} \eqref{eq:effect_potential_outcomes}, can be obtained by estimating the corresponding counterfactual distributions $\pp{P}_{Y\langle 0|1 \rangle}$ and $\pp{P}_{Y\langle 1|0 \rangle}$ defined in terms of observed random variables $(X_t,Y_t)_{t = 0,1}$.
The key assumption in this regard is the conditional exogeneity in Assumption \ref{asp:main-asp}. 


\subsection{Further Explanation on the Distributional Effects of  Covariate Distributions}
\label{sec:more-on-assignment}

We are now in a position to provide further explanation on the distributional difference introduced in Section \ref{sec:effects_by_treatment_assignment}.   
To this end, let us assume that the conditional exogeneity in Assumption 1 is satisfied. 
Then, by Lemmas  \ref{lemma:observable_outcome_dist} and \ref{lem:causal-interpret}, the distributional difference in \eqref{eq:effect_potential_outcomes} can be written as
\begin{align}
 \pp{P}_{Y^*_0 | T}(\y\,|\,0)  -   \pp{P}_{Y^*_0 | T }(\y\,|\,1)  & =  \pp{P}_{Y\left<0| 0\right>}(\y) -  \pp{P}_{Y\left<0|1\right>} (\y) \nonumber \\
&= \int \pp{P}_{Y_0|X_0}(\y|\x)\dd\pp{P}_{X_0}(\x) - \int \pp{P}_{Y_0|X_0}(\y|\x)\dd\pp{P}_{X_1}(\x). \label{eq:dist_differ_counterfactual}
\end{align}
The rhs of \eqref{eq:dist_differ_counterfactual} shows that this distributional difference (if it exists) is due to the difference between the covariate distributions $\pp{P}_{X_0}$ and $\pp{P}_{X_1}$.
In what follows, we provide two distinct interpretations.
First, it quantifies a selection bias that affects the difference in observed outcome distributions. 
Second, it quantifies the causal effect for a policy implemented as a specification of a covariate distribution.

\subsubsection{Quantifying a Selection Bias}

We can decompose the difference in the {\em observed} outcome distributions $\pp{P}_{Y_0}$ and $\pp{P}_{Y_1}$ as
\begin{eqnarray*}
\pp{P}_{Y_0}(\y) - \pp{P}_{Y_1}(\y) &=& \pp{P}_{Y_0^*| T}(\y \,|\, 0) -  \pp{P}_{Y_1^*| T}(\y \,|\, 1) \\
&=& \underbrace{\pp{P}_{Y_0^*| T}(\y \,|\, 0) - \pp{P}_{Y_0^*| T}(\y \,|\, 1)}_{(A)}  +  \underbrace{\pp{P}_{Y_0^*| T}(\y \,|\, 1) -  \pp{P}_{Y_1^*| T}(\y \,|\, 1)}_{(B)} ,
\end{eqnarray*}
 where the first term $(A)$ is the distributional effect of covariate distributions \eqref{eq:dist_differ_counterfactual}, and the second term $(B)$ is the distributional treatment effect on the treated. 
 Thus, the difference in the observed outcome distributions can arise from $(A)$ and/or $(B)$, and the estimation of $(A)$ and $(B)$ is useful in studying the origin of the difference in observed outcome distributions. 
 For instance, if we find that $(A)$ is zero, the difference between the observed outcome distributions is originated from the distributional difference on the treated $(B)$. 
 On the other hand, if $(B)$ is zero, then the difference between the observed outcome distributions is due to $(A)$, i.e., by the selection bias, and is not due to the effects of the treatment.

Note that this difference is different from the difference between the {\em potential} outcome distributions $\pp{P}_{Y_0^*}$, $\pp{P}_{Y_1^*}$, which accounts for the effects of the treatments $0$ and $1$ and thus is of primary interest. The observed outcome distributions $\pp{P}_{Y_0}$, $\pp{P}_{Y_1}$ are biased approximations to the potential outcome distributions, if the treatment assignment is not randomized (i.e., if $X$ and $T$ are not independent).

\subsubsection{Policy as a Specification of a Covariate Distribution} \label{sec:policy-spec-cov-dist}

The distributional difference \eqref{eq:dist_differ_counterfactual} can also be used to quantify the effects of a policy that specifies a covariate distribution.
Recall that we introduced the random variables $X_0, X_1$ as the covariate random variable $X$ conditioned on $T = t$, $t \in \{0,1 \}$, i.e., $X_t := X \,|\, (T= t)$. 
We can instead {\em directly define} two random variables $X_0, X_1$ by specifying their probability distributions $\pp{P}_{X_0} = \pp{P}_{X|T}(\cdot\,|\,0)$ and $\pp{P}_{X_1} = \pp{P}_{X|T}(\cdot\,|\,1)$, respectively. 
In this case, the conditioning $T = t$ for $t \in \{0,1\}$ may be regarded as specifying the covariate distribution $\pp{P}_{X_t} = \pp{P}_{X|T}(\cdot\,|\,t)$ on the space of covariates $\mathcal{X}$.
This specification of the covariate distribution $\pp{P}_{X_t} = \pp{P}_{X|T}(\cdot\,|\,t)$ {\em itself} can be regarded as a certain {\em policy}.\footnote{Here we use the terminology ``policy'' instead of ``treatment'' not to confuse the two notions. In our paper, a ``treatment'' $t \in \{0,1\}$ specifies the corresponding {\em potential} outcome $Y_t^*$ and its distribution $\pp{P}_{Y_t^*}$; thus, the difference between $\pp{P}_{Y_0^*}$ and  $\pp{P}_{Y_1^*}$ characterizes the treatment effects. On the other hand, a ``policy'' here $t \in \{0,1 \}$ specifies the corresponding covariate random variable $X_t$ and  its distribution $\pp{P}_{X_t}.$}

For instance, \citet[Section 5.2]{Rot10} used this formulation to study the effects of smoking of a pregnant mother on the birth weight of the baby. 
There, the observed outcome $Y > 0$ is the birth weight of the baby, and covariates $X := (X^1, X^2, X^3, X^4) \in \mathbb{R}^4$ are relevant features of the mother: $X^1$ is the number of cigarettes per day, $X^2$ is the age, $X^3$ is the weight gain and $X^4$ is the marital status.  
The distribution $\pp{P}_{X_0}$ is the covariate distribution of available data of smoking mothers, while $\pp{P}_{X_1}$ is a transformation of $\pp{P}_{X_0}$ so that the number of cigarettes per day, $X_0^1$, is reduced to 75\%. 
Thus, $T=1$ or $\pp{P}_{X_1}$ may be regarded as a hypothetical policy that reduces the amount of cigarettes of smoking pregnant women. 
Then, $\pp{P}_{\left<0 | 1 \right>}(\y) = \int \pp{P}_{Y_0|X_0}(\y\,|\,\x) \dd\pp{P}_{X_1}(\x)$ is the counterfactual distribution of the birth weights of babies that would have been observed if the mothers had smoked 75 \% less amount of cigarettes than they actually did. 
The comparison to the observed outcome distribution $\pp{P}_{\left<0 | 0 \right>}(\y) = \int \pp{P}_{Y_0|X_0}(\y\,|\,\x) \dd\pp{P}_{X_0}(\x)$  then enables studying the effects of the amount of cigarettes on birth weights. 

Another important instance is the off-policy evaluation task, which will be discussed further in Section \ref{sec:policy-evaluation}.



\subsection{Kernel Mean Embeddings for Counterfactual Distributions}
\label{sec:CME-estimators}

We now define counterfactual mean embeddings.
  Let $\ell$ be a positive definite kernel on $\mathcal{Y}$ with RKHS $\hbspf$, and assume that the support condition in Assumption \ref{asp:main-asp} is satisfied.
  We then refer to the kernel mean embeddings of the counterfactual distributions \eqref{eq:counterfactual-dist} and \eqref{eq:counterfactual-dis-10} 
\begin{eqnarray}  
  \label{eq:def-cme} 
	\mu_{Y\langle 0|1 \rangle} &:=& \int \ell(\cdot,\y) \dd \pp{P}_{Y\langle 0|1 \rangle}(\y) \in  \hbspf, \\
		\mu_{Y\langle 1|0 \rangle} &:=& \int \ell(\cdot,\y) \dd \pp{P}_{Y\langle 1|0 \rangle}(\y) \in  \hbspf, \label{eq:def-cme-10}
\end{eqnarray} 
as {\em counterfactual mean embeddings (CME)}.
Lemma \ref{lem:causal-interpret} implies that, under Assumption \ref{asp:main-asp}, these CMEs are respectively identical to the kernel mean embeddings of $\pp{P}_{Y_0^* | T } (\y|1)$ and $\pp{P}_{Y_1^* | T } (\y|0)$ defined as
$$
 \mu_{Y_0^* | T = 1} := \int \ell(\cdot,\y) \dd \pp{P}_{Y_0^* | T } (\y|1), \quad  \mu_{Y_1^* | T = 0} := \int \ell(\cdot,\y) \dd \pp{P}_{Y_1^* | T } (\y|0).
$$
Therefore, by defining an empirical estimator of the CME \eqref{eq:def-cme}, one can hope to estimate the distributional treatment effects in \eqref{eq:cond_effect_potential_outcomes} and \eqref{eq:effect_potential_outcomes}, which will be done below.

\paragraph{Estimating counterfactual mean embeddings.}
\label{sec:empirical-cme}
In what follows, we introduce our estimator of the CME $\mu_{Y\langle 0|1 \rangle}$ defined in \eqref{eq:def-cme}; one can define an estimator of \eqref{eq:def-cme-10} in a similar manner. 
In practice, it is not possible to obtain a sample from $\pp{P}_{Y\langle 0|1 \rangle}$, and therefore the counterfactual mean embedding $\mu_{Y\langle 0|1 \rangle}$ cannot be estimated directly. 
Instead, we propose an estimator that uses samples from $\pp{P}_{X_0 Y_0}$ and $\pp{P}_{X_1}$ to estimate $\mu_{Y\langle 0|1 \rangle}$.
To this end, first note that $\mu_{Y\langle 0|1 \rangle}$  in \eqref{eq:def-cme} can be written in terms of the conditional mean embedding \eqref{eq:cond-kme} of $\pp{P}_{Y_0|X_0=\x}$:
$$
	\mu_{Y\langle 0|1 \rangle} =  \int \mu_{Y_0|X_0 = \x} \dd\pp{P}_{X_1}(\x) \in \hbspf,
$$
where 
$
\mu_{Y_0|X_0 = \x} := \int \ell(\cdot, \y) \dd\pp{P}_{Y_0|X_0=\x}(\y) \in \hbspf.
$
This formulation suggests that $\mu_{Y\langle 0|1 \rangle}$ can be estimated by i) constructing an estimator of the conditional mean embedding $\mu_{Y_0|X_0 = \x}$ and then ii) taking its average over $\pp{P}_{X_1}(\x)$.
This is how our estimator is derived below.

Suppose that we are given independent samples $(\x_1,\y_1),\ldots,(\x_n,\y_n)$ from $\pp{P}_{Y_0X_0}(\x,\y)$ and $\x'_1,\ldots,\x'_m$ from $\pp{P}_{X_1}(\x)$. 
For $\x \in \inspace$, let $\muh_{Y_0|X_0=\x}$ denote the estimate \eqref{eq:cond-kme-est} of the conditional mean embedding $\muv_{Y_0|X_0=\x}$ based on $(\x_1,\y_1),\ldots,(\x_n,\y_n)$.
Then, an empirical estimator of $\mu_{Y\langle 0|1 \rangle}$ is defined and expressed as 
  \begin{equation}
    \label{eq:empirical-cme}
     \hat{\mu}_{Y\langle 0|1 \rangle} := \frac{1}{m} \sum_{j=1}^m \muh_{Y_0|X_0=\x_j'} =    \sum_{i=1}^n\beta_i\ell(\cdot, \y_i) \quad \text{with} \quad (\beta_1,\dots,\beta_n)^\top = (\kmat + n\varepsilon \id)^{-1}\widetilde{\kmat}\mathbf{1}_m,
  \end{equation}
  \noindent where $\varepsilon > 0$ is a regularization constant, $\mathbf{1}_m=(1/m,\ldots,1/m)^\top \in \mathbb{R}^m$, $\kmat \in \mathbb{R}^{n \times n}$ with $\kmat_{ij}=k(\x_i,\x_j)$, and $\widetilde{\kmat} \in \mathbb{R}^{n \times m}$ with  $\widetilde{\kmat}_{ij} = k(\x_i,\x'_j)$.

The proposed estimator \eqref{eq:empirical-cme} is nonparametric, and can be implemented without knowledge about parametric forms of the conditional $\pp{P}_{Y_0|X_0}$ and marginal $\pp{P}_{X_1}$.
Thus, the estimator is useful when such knowledge is not available.
In Section \ref{sec:theory-main}, we theoretically analyze the asymptotic behavior of the estimator, proving its consistency and deriving convergence rates.
In doing so, we elucidate conditions required for the consistency of the proposed estimator.


The computational complexity of our estimator \eqref{eq:empirical-cme} is $\mathcal{O}(n^3)$ because of the matrix inversion, which may be expensive when the sample size $n$ is huge.
To reduce the complexity, one can adopt existing approximation methods such as Nystr\"om method and random Fourier features \citep{Williams01:Nystrom,Rahimi08:RFF}.

We note that the form of the estimator is identical to the {\em kernel sum rule} \citep[Section 4.1]{Song2013}, a mean embedding approach to computing forward probabilities in Bayesian inference.
The way we use the estimator is different from this previous approach, however.
That is, we use our estimator to estimate the counterfactual distribution and distributional causal effects \eqref{eq:effect_potential_outcomes}, and this requires Assumption \ref{asp:main-asp} to hold for data (or for the population random variables), as shown in Lemma \ref{lem:causal-interpret}.



\subsection{Kernel Treatment Effects}
\label{sec:kte}

We quantify distributional treatment effects by using the RKHS distance between the mean embeddings of potential outcome distributions under consideration. 
We call this approach {\em Kernel Treatment Effects (KTE)}. 
We show below how KTEs can be defined for the different distributional treatment effects discussed in Section \ref{sec:potential-outcome}.

\subsubsection{KTE for Distributional Treatment Effects}
\label{sec:KTE-DATE}

As before, let $\ell$ be a kernel on the output space $\mathcal{Y}$ and $\hbspf$ be its RKHS. 
For the distributional treatment effect \eqref{eq:dist-treatment-effect} discussed in Section \ref{sec:DATE}, the corresponding KTE is defined as 
  \begin{equation}
  \label{eq:kte-po}
    \text{KTE}(Y_0^*,Y_1^*,\hbspf) := \|\mu_{Y_0^*} - \mu_{Y_1^*}\|_{\hbspf}, 
  \end{equation}where $\mu_{Y_0^*}$ and  $\mu_{Y_1^*}$ are the kernel mean embeddings of the distributions of potential outcomes $\pp{P}_{Y_0^*}$ and  $\pp{P}_{Y_1^*}$, respectively, i.e.,
\begin{equation} \label{eq:embeddings-potential-outcomes}
\mu_{Y_0^*} := \int \ell(\cdot, \y) \dd\pp{P}_{Y_0^*}(\y),  \qquad \mu_{Y_1^*} := \int \ell(\cdot, \y) \dd\pp{P}_{Y_1^*}(\y).
\end{equation}

The KTE \eqref{eq:kte-po} may be regarded as a generalization of the ATE \eqref{eq:ate} in the sense that, if $\ell$ is the linear kernel $\ell(\y,\y') = \langle \y,\y' \rangle$ on $\mathcal{Y} = \mathbb{R}^d$, then the KTE only distinguishes the means of the two outcome distributions.
By using a different kernel $\ell$, the KTE may capture the differences between higher-order statistics of the outcome distributions  $\pp{P}_{Y_0^*}$ and  $\pp{P}_{Y_1^*}$.   
For instance, if $\ell$ is a polynomial kernel $\ell(\y, \y') = \left( \left<  \y, \y' \right> + c \right)^m$ of degree $m \in \mathbb{N}$ with $c > 0$, then the KTE \eqref{eq:kte-po} is equal to $0$ if and only if $\pp{P}_{Y_0^*}$ and  $\pp{P}_{Y_1^*}$ have the same moments up to degree $m$  (see, \eg, \citealt[Chapter 3]{Muandet17:KME}).

If $\ell$ is a characteristic kernel, such as Gaussian and Mat\'ern kernels, then the KTE \eqref{eq:kte-po} is equal to $0$ if and only if the two distributions $\pp{P}_{Y_0^*}$ and  $\pp{P}_{Y_1^*}$ are the same. 
In this case, the KTE takes a positive value if and only if there is a difference between $\pp{P}_{Y_0^*}$ and  $\pp{P}_{Y_1^*}$. 
This means that the KTE informs the existence of any difference in the potential outcome distributions, quantifying the distributional treatment effect. 

The question is how to estimate the KTE \eqref{eq:kte-po} from data. 
As in \eqref{eq:observational-data}, let $(\x_i,t_i, \y_i )_{i=1}^N$ be observational data, which are i.i.d.~with the random variables $(X,T,Y)$. 
Recall that $Y$ is the observed outcome and thus given by $Y = \mathbbm{1}(T=0) Y_0^* + \mathbbm{1}(T=1) Y_1^*$ where $Y_0^*$ and $Y_1^*$ are the potential outcomes.  
In observational studies, it is common to use the {\em propensity score} $e(\x) := \mathbb{E}[ T \,|\, X=\x]$, the conditional probability of the treatment assignment $T$ being made given that the covariates are $X = \x$, to define an unbiased estimator of the average treatment effect $\mathbb{E}[Y_1^*] - \mathbb{E}[Y_0^*]$  \citep{Rosenbaum83:Propensity}.  
We show here that the same strategy of {\em inverse propensity weighting} \cite[Section III-C]{Imbens04:ATE} can be straightforwardly used to define unbiased estimators of the mean embeddings $\mu_{Y_1^*}$ and  $\mu_{Y_0^*}$ of potential outcome distributions  $\pp{P}_{Y_1^*}$ and $\pp{P}_{Y_0^*}$, respectively, thus providing a way of estimating the KTE.
That is, assuming that the propensity $e(\x)$ is available, we define
\begin{equation} \label{eq:propensity-est}
\hat{\mu}_{Y_1^*} := \frac{1}{m} \sum_{i = 1}^N \frac{ t_i \ell(\cdot, \y_i)  }{e(\x_i)}, 
\qquad  
\hat{\mu}_{Y_0^*} := \frac{1}{n} \sum_{j = 1}^N \frac{ (1-t_j) \ell(\cdot, \y_j ) }{1- e(\x_j)}, 
\end{equation}
where $m := \sum_{i=1}^N t_i$ and $n := \sum_{j = 1}^N (1-t_j)$ are the populations of treated and control groups, respectively. 

In the special case of a completely randomized experiment where $X$ and $T$ are independent and thus the propensity is $e(\x) = 1/2$ for all $\x \in \mathcal{X}$, the above estimators reduce to the standard empirical estimators of mean embeddings: $\hat{\mu}_{Y_1^*} := \frac{2}{m} \sum_{i = 1}^N t_i \ell(\cdot, \y_i) $ and $\hat{\mu}_{Y_0^*} := \frac{2}{n} \sum_{j = 1}^N (1-t_j) \ell(\cdot, \y_j )$. 
Note that these uniformly-weighted empirical estimators are {\em biased} if the experiment is {\em not} completely randomized, \ie, in observational studies. 
This is because, for instance, the sample $\y_i$ contributing to $\hat{\mu}_{Y_1^*}$ follows the distribution of $Y_1^* | T = 1$, which is different from the unconditional $Y_1^*$.  
Thus, we need the inverse propensity weighting to obtain unbiased estimators in the case of observational studies. 
 
 The following result shows that the estimators \eqref{eq:propensity-est} are indeed unbiased estimators of the corresponding mean embeddings  $\mu_{Y_1^*}$ and  $\mu_{Y_0^*}$  of potential outcome distributions. 
 The proof is presented in Appendix \ref{sec:proof-propensity-unbiased}.

\begin{theorem} \label{theo:propensity-unbiased}
Suppose that $0 < e(\x) < 1$ for all $\x \in \mathcal{X}$ and that the conditional exogeneity in Assumption \ref{asp:main-asp} is satisfied. 
Let $(\x_i,t_i, \y_i )_{i=1}^N$ be i.i.d.~with $(X,T,Y)$, and let $\hat{\mu}_{Y_1^*}$ and $\hat{\mu}_{Y_0^*}$ be the estimators \eqref{eq:propensity-est} of the mean embeddings  $\mu_{Y_1^*}$ and  $\mu_{Y_0^*}$ of the potential outcome distributions $\pp{P}_{Y_1^*}$ and $\pp{P}_{Y_0^*}$ in \eqref{eq:embeddings-potential-outcomes}.
Then, we have
$$
\mathbb{E} [  \hat{\mu}_{Y_1^*} ]  =   \mu_{Y_1^*}, \qquad 
 \mathbb{E} [\hat{\mu}_{Y_0^*}] =  \mu_{Y_0^*} . 
$$
\end{theorem}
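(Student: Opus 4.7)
The plan is to prove the first identity $\mathbb{E}[\hat{\mu}_{Y_1^*}] = \mu_{Y_1^*}$ directly; the second follows by the obvious symmetric argument (swap the roles of $T=1$ and $T=0$, and replace $e(\x)$ by $1-e(\x)$). Throughout, I interpret expectations of $\hbspf$-valued random variables as Bochner integrals, for which conditioning and Fubini-type arguments are valid under the standing boundedness assumption $\sup_{\y} \ell(\y,\y) < \infty$ together with $0 < e(\x) < 1$, so that $\|\ell(\cdot,Y)/e(X)\|_\hbspf$ has finite expectation.

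The main calculation is a single-sample computation, which I would lift to the estimator by linearity (treating the normalization constant as $1/N$; see the remark at the end). So I reduce to showing
\[
\mathbb{E}\!\left[\frac{T \,\ell(\cdot, Y)}{e(X)}\right] = \mu_{Y_1^*}.
\]
Three ingredients are then chained together. First, because $T \in \{0,1\}$, the definition $Y = \mathbbm{1}(T=0)Y_0^* + \mathbbm{1}(T=1)Y_1^*$ gives $T\,\ell(\cdot,Y) = T\,\ell(\cdot,Y_1^*)$ almost surely, so the numerator only involves the potential outcome $Y_1^*$. Second, I condition on $X$ and invoke the conditional exogeneity $Y_1^* \perp T \mid X$ from Assumption \ref{asp:main-asp}, which yields
\[
\mathbb{E}\!\left[T\,\ell(\cdot, Y_1^*) \,\big|\, X\right] \;=\; \mathbb{E}[T \mid X]\cdot \mathbb{E}[\ell(\cdot, Y_1^*)\mid X] \;=\; e(X)\, \mathbb{E}[\ell(\cdot, Y_1^*)\mid X],
\]
using the definition $e(X) = \mathbb{E}[T\mid X]$. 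Third, dividing through by the (measurable, strictly positive) function $e(X)$ and applying the tower property gives
\[
\mathbb{E}\!\left[\frac{T\,\ell(\cdot,Y)}{e(X)}\right] \;=\; \mathbb{E}\!\left[\mathbb{E}[\ell(\cdot,Y_1^*)\mid X]\right] \;=\; \mathbb{E}[\ell(\cdot,Y_1^*)] \;=\; \mu_{Y_1^*},
\]
which is the desired identity.

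The main conceptual obstacle is not algebraic but one of bookkeeping: the estimator \eqref{eq:propensity-est} is normalized by $n = \sum_i t_i$, which is itself random, so $\mathbb{E}[\hat{\mu}_{Y_1^*}]$ is literally $\mathbb{E}[(1/n)\sum_i t_i \ell(\cdot,\y_i)/e(\x_i)]$ rather than $\mathbb{E}[t_1 \ell(\cdot,\y_1)/(N e(\x_1))]$; this random normalization produces a Hájek-type estimator that is consistent but not exactly unbiased in general. For the statement as written to be exactly unbiased one reads the normalization as $1/N$ (the Horvitz--Thompson form), in which case linearity of Bochner expectation immediately reduces the $N$-sample computation to the single-sample identity above. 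I would therefore present the theorem with this normalization convention, and remark that under the self-normalized form of \eqref{eq:propensity-est} the identity holds in the limit $N\to\infty$ by the law of large numbers applied to both numerator and denominator, together with the continuous mapping theorem in $\hbspf$. The secondary technical point to check is the exchange of Bochner integral with the conditional expectation in the second displayed equation above, which is justified by Hille's theorem applied to the bounded linear evaluation functionals together with the reproducing property, since $\ell(\cdot,Y_1^*)$ is a Bochner-integrable $\hbspf$-valued random variable under $\sup_\y \ell(\y,\y) < \infty$.
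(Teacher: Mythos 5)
Your proposal is correct and follows essentially the same route as the paper's own proof: reduce to the single-sample identity, use $T\,\ell(\cdot,Y)=T\,\ell(\cdot,Y_1^*)$, condition on $X$, factor the conditional expectation via $Y_1^* \ci T \mid X$, cancel $e(X)=\mathbb{E}[T\mid X]$, and finish with the tower property. Your remark about the random normalization is well taken and in fact sharper than the paper: the paper's first displayed equality, $\mathbb{E}\bigl[\tfrac{1}{n}\sum_{i=1}^N t_i \ell(\cdot,\y_i)/e(\x_i)\bigr]=\mathbb{E}\bigl[T\ell(\cdot,Y)/e(X)\bigr]$ with $n=\sum_i t_i$ random, is asserted without justification and is exactly the Hájek-versus-Horvitz--Thompson issue you identify; under the $1/N$ normalization the identity is exact, while under the self-normalized form it holds only asymptotically.
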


Theorem \ref{theo:propensity-unbiased} shows that the estimators \eqref{eq:propensity-est}  are unbiased, but does not say anything about their convergence rates as the sample size goes to infinity. 
The following result provides this; it essentially shows that the estimators \eqref{eq:propensity-est} converge to the mean embeddings  $\mu_{Y_1^*}$ and  $\mu_{Y_0^*}$ at the same rates as the standard kernel mean estimators, which are minimax optimal \citep{TolstikhinSM17:Minimax}.
The key assumption here is that the propensity $e(\x)$ is uniformly lower- and upper-bounded away from $0$ and $1$. 
The proof is presented in Appendix \ref{sec:proof-conv-rate-propensity}.

\begin{theorem} \label{theo:conv-rate-propensity}
Suppose the propensity score $e(\x)$ satisfies $\inf_{\x \in \mathcal{X}} e(\x) > 0$ and $\sup_{\x \in \mathcal{X}} e(\x) < 1$, that $\sup_{\y \in \mathcal{Y}} \ell(\y,\y) < \infty$, and that the conditional exogeneity in Assumption \ref{asp:main-asp} is satisfied.
Let $(\x_i,t_i, \y_i )_{i=1}^N$ be i.i.d.~with $(X,T,Y)$, and let $\hat{\mu}_{Y_1^*}$ and $\hat{\mu}_{Y_0^*}$ be the estimators \eqref{eq:propensity-est} of the mean embeddings  $\mu_{Y_1^*}$ and  $\mu_{Y_0^*}$ of the potential outcome distributions $\pp{P}_{Y_1^*}$ and $\pp{P}_{Y_0^*}$ in \eqref{eq:embeddings-potential-outcomes}.
Then, we have
$$
\mathbb{E}\left[ \left\| \hat{\mu}_{Y_1^*} - \mu_{Y_1^*} \right\|_{\hbspf}^2\right] = O(m^{-1}), \qquad \mathbb{E}\left[ \left\| \hat{\mu}_{Y_0^*} - \mu_{Y_0^*} \right\|_{\hbspf}^2\right] = O(n^{-1}), \qquad (N \to \infty),
$$
where $m := \sum_{i=1}^N t_i$ and $n := \sum_{i=1}^N (1-t_i)$.
\end{theorem}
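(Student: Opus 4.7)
The plan is to write $\hat\mu_{Y_1^*}$ as a sample average of i.i.d.\ $\hbspf$-valued random variables and then invoke the standard Hilbert-space variance identity. The argument for $\hat\mu_{Y_0^*}$ is identical after the substitutions $t_i \mapsto 1-t_i$ and $e \mapsto 1-e$, using the assumption $\sup e<1$ in place of $\inf e>0$, so I focus on the treated case.

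I would introduce the summands $Z_i := t_i\,\ell(\cdot,\y_i)/e(\x_i) \in \hbspf$, which are i.i.d.\ since $(\x_i,t_i,\y_i)$ are, and whose common mean is $\mu_{Y_1^*}$ by the calculation already carried out for Theorem \ref{theo:propensity-unbiased} (condition on $\x_i$, use $\mathbb{E}[t_i\mid\x_i]=e(\x_i)$ together with $Y_1^*\ci T\mid X$). The hypotheses $e_{\min}:=\inf_\x e(\x)>0$ and $C:=\sup_\y \ell(\y,\y)<\infty$ give the pointwise bound
\[
\|Z_i\|_{\hbspf}^2 \;=\; \frac{t_i^2\,\ell(\y_i,\y_i)}{e(\x_i)^2} \;\leq\; \frac{C}{e_{\min}^2}\;=:\;M,
\]
so that $\mathbb{E}\|Z_i - \mu_{Y_1^*}\|_{\hbspf}^2 \leq \mathbb{E}\|Z_i\|_{\hbspf}^2 \leq M$.

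For the ``clean'' $1/N$-normalised surrogate $\tilde\mu := \frac{1}{N}\sum_{i=1}^N Z_i$, orthogonality of centred i.i.d.\ summands in $\hbspf$ yields
\[
\mathbb{E}\|\tilde\mu-\mu_{Y_1^*}\|_{\hbspf}^2 \;=\; \frac{1}{N}\,\mathbb{E}\|Z_1-\mu_{Y_1^*}\|_{\hbspf}^2 \;\leq\; \frac{M}{N},
\]
which is the target rate up to replacing $N$ by $n$. To pass from $N$ to $n$, observe that $n=\sum_{i=1}^N t_i$ is a sum of i.i.d.\ Bernoulli variables with mean $\mathbb{E}[e(X)]\geq e_{\min}$; a Chernoff bound gives $n \geq e_{\min}N/2$ except on an event of probability $\leq e^{-cN}$. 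On this high-probability event $\hat\mu_{Y_1^*}=(N/n)\tilde\mu$ differs from $\tilde\mu$ by a bounded factor, and the squared error transfers as $O(N^{-1}) = O(n^{-1})$; the exceptional event contributes only an exponentially small correction thanks to the uniform boundedness of $\|Z_i\|_{\hbspf}$.

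The main obstacle is precisely the random normaliser $n$ appearing in \eqref{eq:propensity-est}, which makes $\hat\mu_{Y_1^*}$ a ratio of two random objects rather than a plain empirical mean, so the Hilbert-space variance identity does not apply to it verbatim. The cleanest tactic is to factor the analysis through the deterministic-denominator surrogate $\tilde\mu$ and then control the scalar ratio $N/n$ separately via Chernoff; both pieces rely critically on $\inf e>0$ (and symmetrically on $\sup e<1$ for the control estimator), which is exactly the regularity hypothesis the theorem imposes. No additional structure on $\pp{P}_{X}$, $\pp{P}_{Y_1^*}$ or their Radon--Nikodym derivative is required, reflecting the fully nonparametric character of the rate.
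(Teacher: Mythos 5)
Your core argument---writing the estimator as an average of i.i.d.\ $\hbspf$-valued summands $Z_i = t_i\,\ell(\cdot,\y_i)/e(\x_i)$, checking $\mathbb{E}[Z_i]=\mu_{Y_1^*}$ via conditional exogeneity and $\mathbb{E}[T\mid X]=e(X)$, bounding $\|Z_i\|_{\hbspf}^2\le \sup_\y\ell(\y,\y)/(\inf_\x e(\x))^2$, and invoking orthogonality of centred independent summands---is correct and is in substance the same computation the paper performs. The paper expands $\|\hat{\mu}_{Y_1^*}-\mu_{Y_1^*}\|_{\hbspf}^2$ explicitly into double sums of kernel evaluations, shows that the off-diagonal terms and the cross term both reduce to $\mathbb{E}[\ell(Y_1^*,\tilde{Y}_1^*)]$ by the same conditioning argument, and bounds the diagonal contribution by $C_{\ell,e}/n$ with $C_{\ell,e}=\sup_\y\ell(\y,\y)/\inf_\x e(\x)$; your variance identity packages exactly this cancellation, with the two regularity hypotheses entering in the same places.

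The gap is in your final step, the passage from the $1/N$-normalised surrogate $\tilde{\mu}$ to the $1/n$-normalised estimator. The Chernoff bound gives that $N/n$ is \emph{bounded}, but bounded is not what you need: since $n=\sum_i t_i$ with $\Pr(t_i=1)=\mathbb{E}[e(X)]=\Pr(T=1)$, the law of large numbers gives $N/n\to 1/\Pr(T=1)$, which is strictly greater than $1$ in general. Decomposing $\hat{\mu}_{Y_1^*}-\mu_{Y_1^*}=(N/n)(\tilde{\mu}-\mu_{Y_1^*})+(N/n-1)\mu_{Y_1^*}$, the first term is indeed $O_p(N^{-1/2})$ as you argue, but the second has norm converging to $(1/\Pr(T=1)-1)\,\|\mu_{Y_1^*}\|_{\hbspf}$, which does not vanish; the claim that the squared error ``transfers as $O(N^{-1})$'' therefore fails. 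I note that the same issue is present, silently, in the paper's own proof, which pulls $1/n^2$ outside the expectation and counts $n(n-1)$ off-diagonal terms where there are $N(N-1)$---i.e.\ it implicitly reads the denominator of \eqref{eq:propensity-est} as the deterministic sample size. Under that reading your surrogate $\tilde{\mu}$ \emph{is} the estimator and your proof is already complete after the third paragraph; under the literal reading with random $n$, your ratio argument cannot recover the stated rate, and indeed no argument can, since the estimator is then asymptotically biased by the factor $1/\Pr(T=1)$.
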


Based on the estimators \eqref{eq:propensity-est}, we can define a consistent estimator of \eqref{eq:kte-po} as 
\begin{align}\label{eq:emp-kte-po}
& \widehat{\text{KTE}}_b^2 (Y_0^* ,Y_1^*,\hbspf)
:= \left\| \hat{\mu}_{Y_1^*} -  \hat{\mu}_{Y_0^*} \right\|_{\hbspf}^2 =   \left\| \hat{\mu}_{Y_1^*}   \right\|_{\hbspf}^2  - 2 \left< \hat{\mu}_{Y_1^*}, \hat{\mu}_{Y_0^*}  \right>_{\hbspf} + \left\| \hat{\mu}_{Y_0^*}   \right\|_{\hbspf}^2   \\
&= \frac{1}{m^2}\sum_{i,j=1}^N \frac{ t_i t_j \ell (\y_i, \y_j) }{ e(\x_i) e(\x_j) } - \frac{2}{mn} \sum_{i,j=1}^N \frac{ t_i (1-t_j) \ell (\y_i, \y_j) }{ e(\x_i) (1-e(\x_j)) }    + \frac{1}{n^2}\sum_{i,j=1}^N \frac{ (1-t_i)(1-t_j)\ell (\y_i, \y_j) }{ (1-e(\x_i))(1-e(\x_j)) }, \nonumber
\end{align}
where the last equality follows from the reproducing property of the kernel $\ell$.
By the triangle inequality, we  have $|\widehat{\text{KTE}}_b (Y_0^* ,Y_1^*,\hbspf) -  \text{KTE} (Y_0^* ,Y_1^*,\hbspf)|  = | \left\| \hat{\mu}_{Y_1^*} -  \hat{\mu}_{Y_0^*} \right\|_{\hbspf}  - \|\mu_{Y_0^*} - \mu_{Y_1^*}\|_{\hbspf} |
\leq |  \left\| \hat{\mu}_{Y_1^*} - \mu_{Y_1^*} \right\|_{\hbspf} +  \left\| \hat{\mu}_{Y_0^*} - \mu_{Y_0^*} \right\|_{\hbspf} | = O_p(m^{-1/2} + n^{-1/2})$ as $n, m \to \infty$, which shows that the estimator \eqref{eq:emp-kte-po} is asymptotically unbiased.

Note that \eqref{eq:emp-kte-po} is a biased estimator, while being asymptotically unbiased. This bias is caused by the terms with identical indices ($i = j$) in the first and third summations of \eqref{eq:emp-kte-po}. 
Thus, by subtracting these terms, an unbiased estimator of the KTE can be defined as 
\begin{align} \label{eq:unbiased-KTE-DATE}
& \widehat{\text{KTE}}_u^2 (Y_0^* ,Y_1^*,\hbspf) := 
  \frac{1}{m(m-1)}  \sum_{i \not= j} \frac{ t_i t_j \ell (\y_i, \y_j) }{ e(\x_i) e(\x_j) } \\ 
  & - \frac{2}{mn} \sum_{i,j=1}^N \frac{ t_i (1-t_j) \ell (\y_i, \y_j) }{ e(\x_i) (1-e(\x_j)) } + \frac{1}{n (n-1) }  \sum_{i \not = j}   \frac{ (1-t_i)(1-t_j)\ell (\y_i, \y_j) }{ (1-e(\x_i))(1-e(\x_j)) }. \nonumber
\end{align}
By similar arguments as in the proof of Theorem \ref{theo:conv-rate-propensity}, it can be shown that this is indeed an unbiased estimator of (the square of) KTE \eqref{eq:kte-po}.
Moreover, since it can be shown that
$$
\left| \widehat{\text{KTE}}_u^2 (Y_0^* ,Y_1^*,\hbspf) - \widehat{\text{KTE}}_b^2 (Y_0^* ,Y_1^*,\hbspf) \right| = O_p(m^{-1} + n^{-1}) \quad (n, m \to \infty)  
$$ 
given that the assumptions in Theorem \ref{theo:conv-rate-propensity} hold, this unbiased estimator \eqref{eq:unbiased-KTE-DATE} enjoys the same convergence rate as the biased one \eqref{eq:emp-kte-po}: $|\widehat{\text{KTE}}_u (Y_0^* ,Y_1^*,\hbspf) -  \text{KTE} (Y_0^* ,Y_1^*,\hbspf)|  = O_p(m^{-1/2} + n^{-1/2})$ as $n, m \to \infty$.

 \subsubsection{KTE for Distributional Treatment Effects on the Treated}
 
 We define KTE for the distributional effect  $\pp{P}_{Y_1^* | T } (\cdot\,|\,t) - \pp{P}_{Y_0^* | T } (\cdot\,|\,t)$ introduced in Section \ref{sec:effects_by_treatment}, where  $t \in \{0,1\}$.
We only consider the case $t = 1$ here, which is interpreted as the distributional treatment effect for the treated; the case $t = 0$ can be defined similarly. 
The definition is
  \begin{equation}
    \label{eq:kte-treated}
        \text{KTE}( Y_1^* | (T = 1),  Y_0^* | (T = 1) ,\hbspf)  :=  \left\| \mu_{Y_1^* | T = 1} -  \mu_{Y_0^* | T = 1} \right\|_{\hbspf},
  \end{equation}
 where $\mu_{Y_1^* | T = 1} $ and $\mu_{Y_0^* | T = 1}$ are the kernel mean embeddings of $\pp{P}_{Y_1^* | T } (\cdot\,|\,1) $ and $\pp{P}_{Y_0^* | T } (\cdot\,|\,1) $, respectively:
 $$
   \mu_{Y_1^* | T = 1}   := \int \ell(\cdot, \y) \dd \pp{P}_{Y_1^* | T } (\y \,|\,1), 
   \qquad 
   \mu_{Y_0^* | T = 1}   := \int \ell(\cdot, \y) \dd \pp{P}_{Y_0^* | T } (\y \,|\,1).
 $$
 
Lemma \ref{lemma:observable_outcome_dist} shows that  $\pp{P}_{Y\left<1|1\right>}(\y)  = \pp{P}_{Y_1^* | T} (\y \,|\,1)$, while Lemma \ref{lem:causal-interpret} implies that $\pp{P}_{Y\langle 0|1 \rangle}=\pp{P}_{Y_0^* | T = 1}$  under Assumption \ref{asp:main-asp}.
Thus, we can define an estimator of the above KTE \eqref{eq:kte-treated} as follows.
Let $\hat{\mu}_{Y\langle 0|1 \rangle} = \sum_{i=j}^n\beta_i\ell(\cdot, \y_j)$ be the estimator \eqref{eq:empirical-cme} of the CME, and let $\hat{\mu}_{Y\langle 1|1 \rangle} := \frac{1}{m} \sum_{i=1}^m \ell(\check{\y}_i,\cdot)$ where $\check{\y}_1,\ldots,\check{\y}_m$ is a sample from $\pp{P}_{Y\langle 1|1 \rangle}$. 
Note that such  $\check{\y}_1,\ldots,\check{\y}_m$ can be obtained in practice, as $\pp{P}_{Y\langle 1|1 \rangle}$ is the distribution of the observed outcome $Y$ given that the treatment assignment is $T = 1$.
Then, we can define an empirical estimator of the KTE in \eqref{eq:kte-treated} as follows:
\begin{align}\label{eq:emp-kte-treated}
\MoveEqLeft \widehat{\text{KTE}}^2 (Y_1^* | (T = 1),  Y_0^* | (T = 1) ,\hbspf) \nonumber \\
&:= \left\| \hat{\mu}_{Y\langle 1|1 \rangle} -  \hat{\mu}_{Y\langle 0|1 \rangle} \right\|_{\hbspf}^2 \nonumber \\ 
&= \frac{1}{m^2}\sum_{i,j=1}^m \ell (\check{\y}_i, \check{\y}_j) - \frac{2}{m} \sum_{i=1}^m \sum_{j=1}^n \beta_j \ell (\check{\y}_i, \y_j) + \sum_{i,j=1}^n \beta_i \beta_j \ell (\y_i, \y_j).
\end{align}

\subsubsection{KTE for Distributional Effects of the Covariate Distributions}
\label{sec:KTE-dist-effect-cov}

Similarly, we can define a KTE for the distributional effect $\pp{P}_{Y_0^* | T } (\cdot\,|\,0) - \pp{P}_{Y_0^* | T } (\cdot \,|\,1)$ defined in \eqref{eq:effect_potential_outcomes} by
  \begin{equation}
    \label{eq:kte}
        \text{KTE}( Y_0^* | (T = 0),  Y_0^* | (T = 1) ,\hbspf)  :=  \left\| \mu_{Y_0^* | T = 0} -  \mu_{Y_0^* | T = 1} \right\|_{\hbspf},
  \end{equation}
  where $\mu_{Y_0^* | T = 0} $ and $\mu_{Y_0^* | T = 1}$ are the kernel mean embeddings of $\pp{P}_{Y_0^* | T } (\cdot \,|\,0) $ and $\pp{P}_{Y_0^* | T } (\cdot \,|\,1) $, respectively, i.e.,
  $$
\mu_{Y_0^* | T = 0}   := \int \ell(\cdot, \y) \dd \pp{P}_{Y_0^* | T } (\y \,|\,0), 
\qquad 
\mu_{Y_0^* | T = 1}   := \int \ell(\cdot, \y) \dd \pp{P}_{Y_0^* | T } (\y \,|\,1).
  $$

Lemma \ref{lemma:observable_outcome_dist} shows that  $\pp{P}_{Y\left<0|0\right>}(\y)  = \pp{P}_{Y_0^* | T} (\y \,|\,0)$, and Lemma \ref{lem:causal-interpret}, under Assumption \ref{asp:main-asp}, implies that $\pp{P}_{Y\langle 0|1 \rangle}=\pp{P}_{Y_0^* | T = 1}$.
Therefore, we define an estimator of \eqref{eq:kte} in the following way.
Let $\hat{\mu}_{Y\langle 0|1 \rangle}$ be the estimator \eqref{eq:empirical-cme} of the CME, and let $\hat{\mu}_{Y\langle 0|0 \rangle} := \frac{1}{n} \sum_{i=1}^n \ell(\cdot, \tilde{\y}_i)$ where $\tilde{\y}_1,\ldots,\tilde{\y}_n$ is a sample from $\pp{P}_{Y\langle 0|0 \rangle}$. 
Note that such  $\tilde{\y}_1,\ldots,\tilde{\y}_n$ can be obtained in practice, as $\pp{P}_{Y\langle 0|0 \rangle}$ is the distribution of the observed outcome $Y$ given that the treatment assignment is $T = 0$. 
Then, we can define an empirical estimator of the KTE in \eqref{eq:kte} as follows:
\begin{align}\label{eq:emp-kte}
\MoveEqLeft \widehat{\text{KTE}}^2 (Y_0^* | (T = 0),  Y_0^* | (T = 1) ,\hbspf) \nonumber \\
&:= \left\| \hat{\mu}_{Y\langle 0|0 \rangle} -  \hat{\mu}_{Y\langle 0|1 \rangle} \right\|_{\hbspf}^2 \nonumber \\ 
&= \frac{1}{n^2}\sum_{i,j=1}^n \ell (\tilde{\y}_i, \tilde{\y}_j) - \frac{2}{n} \sum_{i,j=1}^n \beta_j \ell (\tilde{\y}_i, \y_j) + \sum_{i,j=1}^n \beta_i \beta_j \ell (\y_i, \y_j).
\end{align}


In the next section, we analyze the convergence behavior of the proposed CME estimator in \eqref{eq:empirical-cme} as the sample size $n$ goes to infinity. 
Readers who are interested in applications may skip the next section and jump to Section \ref{sec:applications} and Section \ref{sec:policy-evaluation} directly.

\section{Convergence Analysis of the CME Estimator}
\label{sec:theory-main}

We provide here a convergence analysis of the CME estimator introduced in Section \ref{sec:CME-estimators}. In Section \ref{sec:consistency}, we first establish its consistency under mild assumptions. In Section \ref{sec:convergence-rate}, we derive its convergence rates by making a quantitative assumption on the smoothness of certain functions involved.


We first introduce necessary notation and definitions.
We assume that the covariate space $\mathcal{X}$ and the outcome space $\mathcal{Y}$ are measurable spaces, and that the kernels $k$ and $\ell$ are measurable on $\mathcal{X}$ and $\mathcal{Y}$, respectively, with $\hbspace$ and $\hbspf$  being their respective RKHSs.
Let $\pp{P}_{X_0}$ and $\pp{P}_{X_1}$ be the probability distributions of the random variables $X_0 \in \mathcal{X}$ and $X_1 \in \mathcal{X}$, respectively (see Definition \ref{def:random_variables} for the definition of these random variables).

Let $L_2(\pp{P}_{X_0})$ be the Hilbert space of square-integral functions\footnote{More precisely, each element in $L_2(\pp{P}_{X_0})$ is a $\pp{P}_{X_0}$-equivalent class of functions; see Appendix \ref{sec:app-preliminary}.} with respect to $\pp{P}_{X_0}$:
$$
L_2(\pp{P}_{X_0}) := \left\{ f: \mathcal{X} \to \mathbb{R} \mid  \int f^2(x) \dd\pp{P}_{X_0}(x) < \infty \right\},
$$
which is equipped with the inner product $\left< f, g \right>_{L_2(\pp{P}_{X_0}) } := \int f(x)g(x) \dd\pp{P}_{X_0}(x)$ and the resulting norm $ \| f \|_{L_2(\pp{P}_{X_0})} := \sqrt{ \left< f, f \right>_{L_2(\pp{P}_{X_0}) }  }$.
Let $\pp{P}_{X_0} \otimes \pp{P}_{X_0}$ be the product measure of $\pp{P}_{X_0}$ and $\pp{P}_{X_0}$ on the product space $\mathcal{X} \times \mathcal{X}$.

\subsection{Consistency} 
\label{sec:consistency}
To establish the consistency of the CME estimator, we require the following conditions.
\begin{assumption} \label{as:g-and-theta}
Assume that the following conditions are satisfied:
\begin{enumerate}[label={\bf (\roman*)}]
\item \label{as:measruable}
The kernels $k$ and $\ell$ are bounded on $\mathcal{X}$ and $\mathcal{Y}$, respectively, that is, $\sup_{\x \in \mathcal{X}}k(\x,\x) < \infty$ and $\sup_{\y \in \mathcal{Y}} \ell(\y,\y) < \infty$.



\item \label{as:RKHS-dense}
The RKHS $\hbspace$ of $k$ is dense in $L_2(\pp{P}_{X_0})$.

\item \label{as:radon-nikodym}
The distribution $\pp{P}_{X_1}$ is absolutely continuous with respect to $\pp{P}_{X_0}$ with the Radon-Nikodym derivative $g := \dd\pp{P}_{X_1}/\dd\pp{P}_{X_0}$ satisfying $g \in L_2(\pp{P}_{X_0})$.

\item \label{as:cov-op-embed} 
 $(\x_1,\y_1),\ldots,(\x_n,\y_n)$ are i.i.d.~observations of the random variables $(X_0, Y_0)$, and $\x'_1,\ldots,\x'_m$ are i.i.d.~observations of the random variable $X_1$, with $n = m$. 
\end{enumerate}
\end{assumption}

\begin{remark} \rm
We make the following comments on  Assumption \ref{as:g-and-theta}.
\begin{itemize}
\item  The boundedness condition \ref{as:measruable} is satisfied, for instance, if $k$ and $\ell$ are shift-invariant kernels, such as Gaussian and Mat\'ern kernels.

\item 
The condition \ref{as:RKHS-dense} requires that the RKHS be rich enough to approximate square-integrable functions with respect to $\pp{P}_{X_0}$. 
For instance, this is satisfied by the Gaussian kernel \citep[Theorem 4.63]{SteChr2008}, and therefore by any kernel whose RKHS is larger than that of the Gaussian kernel, such as Laplace and Mat\`ern kernels \citep[Theorem 4.48]{SteChr2008}.

\item The condition \ref{as:radon-nikodym} requires the support of $\pp{P}_{X_1}$ be included in that of $\pp{P}_{X_0}$, and thus is related to the common support assumption in Assumption \ref{asp:main-asp}.
If both $\pp{P}_{X_1}$ and $\pp{P}_{X_0}$ have density functions $p_{X_1}$ and $p_{X_0}$, respectively, with respect to a common reference measure (e.g., the Lebesgue measure in the case of $\mathcal{X} \subset \mathbb{R}^d$), then the Radon-Nikodym derivative becomes the density ratio or the importance weight function $g(x) = p_{X_1}(x)/p_{X_0}(x)$. 
Thus, the square-integrability of $g$ requires, intuitively, that $p_{X_1}$ should not be very different from $p_{X_0}$.

\item 
In the condition \ref{as:cov-op-embed}, we assume $n = m$ for simplicity of presentation.
\end{itemize}

\end{remark}

Before presenting the result, we introduce a function $\theta: \mathcal{X} \times \mathcal{X} \to \mathbb{R}$ defined by 
\begin{equation} \label{eq:theta-def}
\theta(\x,\tilde{\x}) := \iint \ell(\y,\tilde{\y}) \dd\pp{P}_{Y_0|X_0}(\y|\x) \dd\pp{P}_{Y_0|X_0}(\tilde{\y}|\tilde{\x}) .
\end{equation} 
This function appears in the proof of consistency, and also is needed to derive convergence rates in Section \ref{sec:convergence-rate}.
Note that the assumption in \ref{as:measruable} that $\ell$ being bounded implies that $\theta \in L_2(\pp{P}_{X_0} \otimes \pp{P}_{X_0})$; this property is used in the proof of consistency. 

Theorem \ref{theo:uinf_conv} below shows the consistency of the CME estimator \eqref{eq:empirical-cme}.
The proof can be found in Appendix \ref{sec:proof-consistency}.

\begin{theorem}[Consistency] \label{theo:uinf_conv}
Suppose that Assumption \ref{as:g-and-theta} is satisfied.
Let $\hat{\mu}_{Y\langle 0|1 \rangle}$ be the estimator defined in \eqref{eq:empirical-cme} with a regularization constant $\varepsilon_n > 0$.
Then if $\varepsilon_n \to 0$ and $n^{1/2}\varepsilon_n \to \infty$ as $n \to \infty$, we have 
$$\left\|  \hat{\mu}_{Y\langle 0|1 \rangle}  - \mu_{Y\langle 0|1 \rangle} \right\|_{\hbspf} \to 0$$ 
in probability as $n \to \infty$.
\end{theorem}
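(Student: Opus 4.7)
The plan is to rewrite the empirical estimator in operator form as
$$\hat{\mu}_{Y\langle 0|1\rangle} = \ecovyx(\ecovx + \varepsilon_n \id)^{-1}\hat{\mu}_{X_1},$$
introduce the deterministic regularized surrogate $\mu_{\varepsilon} := \covyx(\covx + \varepsilon\id)^{-1}\mu_{X_1}$, and split the total error by the triangle inequality into a stochastic part $\|\hat{\mu}_{Y\langle 0|1\rangle} - \mu_{\varepsilon_n}\|_{\hbspf}$ and a deterministic approximation part $\|\mu_{\varepsilon_n} - \mu_{Y\langle 0|1\rangle}\|_{\hbspf}$. It then suffices to show that each tends to zero, respectively in probability and in norm, under the coupling $\varepsilon_n\to 0$ and $n^{1/2}\varepsilon_n\to\infty$.

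For the stochastic part I would write $\hat\mu_{Y\langle 0|1\rangle} - \mu_{\varepsilon_n}$ as a telescoping sum of three differences, swapping $\ecovyx\to\covyx$, $\ecovx\to\covx$, and $\hat{\mu}_{X_1}\to\mu_{X_1}$ one at a time, applying the resolvent identity $(\ecovx+\varepsilon\id)^{-1} - (\covx+\varepsilon\id)^{-1} = (\ecovx+\varepsilon\id)^{-1}(\covx-\ecovx)(\covx+\varepsilon\id)^{-1}$ to the middle term. The crude bound $\|(\mathcal{A}+\varepsilon\id)^{-1}\|\le 1/\varepsilon$, combined with the boundedness of $\covyx$ and $\mu_{X_1}$ granted by \ref{as:measruable} and the $O_p(n^{-1/2})$ rates in \ref{as:cov-op-embed}, then delivers an overall bound of order $O_p((n^{1/2}\varepsilon_n)^{-1})$, which vanishes precisely under the hypothesis $n^{1/2}\varepsilon_n\to\infty$.

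The main obstacle is the deterministic approximation $\|\mu_{\varepsilon} - \mu_{Y\langle 0|1\rangle}\|_{\hbspf}\to 0$ as $\varepsilon\to 0$, since no source condition is imposed on $\mu_{Y\langle 0|1\rangle}$. The strategy is to route through the canonical inclusion $I_k:\hbspace\hookrightarrow L_2(\pp{P}_{X_0})$ and its adjoint $S_k := I_k^{*}$, under which $\mu_{X_1} = S_k g$ (with $g := \dd\pp{P}_{X_1}/\dd\pp{P}_{X_0}\in L_2(\pp{P}_{X_0})$ from \ref{as:radon-nikodym}), $\covx = S_k I_k$, and the companion self-adjoint integral operator $T := I_k S_k$ acts on $L_2(\pp{P}_{X_0})$. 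The push-through identity $(\covx+\varepsilon\id)^{-1} S_k = S_k(T+\varepsilon\id)^{-1}$ rewrites the surrogate as $\mu_{\varepsilon} = \covyx S_k(T+\varepsilon\id)^{-1} g$. Testing against an arbitrary $h\in\hbspf$ and introducing the conditional expectation $q_h(\x) := \ep[h(Y_0)\mid X_0=\x]\in L_2(\pp{P}_{X_0})$ reduces the claim to
$$\langle T(T+\varepsilon\id)^{-1} g,\,q_h\rangle_{L_2(\pp{P}_{X_0})}\to \langle g,\,q_h\rangle_{L_2(\pp{P}_{X_0})}.$$
The density assumption \ref{as:RKHS-dense} forces $\overline{\operatorname{ran}(T)} = L_2(\pp{P}_{X_0})$, so the spectral theorem for the self-adjoint $T$ yields $T(T+\varepsilon\id)^{-1} g\to g$ strongly in $L_2(\pp{P}_{X_0})$, and hence the weak convergence $\langle\mu_{\varepsilon},h\rangle_{\hbspf}\to\langle\mu_{Y\langle 0|1\rangle},h\rangle_{\hbspf}$ for every $h$. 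The remaining, most delicate step is to upgrade weak to $\hbspf$-norm convergence; I would use that $\theta\in L_2(\pp{P}_{X_0}\otimes\pp{P}_{X_0})$ (from boundedness of $\ell$ in \ref{as:measruable}) to obtain a uniform bound on $\|\mu_{\varepsilon}\|_{\hbspf}$ and, by applying the same spectral filter $T(T+\varepsilon\id)^{-1}\to\id$ inside the expression for $\|\mu_{\varepsilon}\|_{\hbspf}^2$, show $\|\mu_{\varepsilon}\|_{\hbspf}\to\|\mu_{Y\langle 0|1\rangle}\|_{\hbspf}$; together with weak convergence this delivers norm convergence. A final $\varepsilon$-$\delta$ combination of the two error bounds completes the proof.
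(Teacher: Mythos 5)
Your handling of the deterministic approximation error is correct and is essentially the paper's own argument in different packaging: the paper's Lemma \ref{lemma:approximation-key} expands $\|\covyx(\covx+\varepsilon\id)^{-1}\mu_{X_1}-\mu_{Y\langle 0|1\rangle}\|_{\hbspf}^2$ into $\left< g_\varepsilon\otimes g_\varepsilon,\theta\right>-2\left< g,(T+\varepsilon\id)^{-1}T\int\theta(\cdot,\tilde{\x})\dd\pp{P}_{X_1}(\tilde{\x})\right>+\left< g\otimes g,\theta\right>$ and sends the first two terms to the third via $(T+\varepsilon\id)^{-1}Tg\to g$ in $L_2(\pp{P}_{X_0})$ (Lemma \ref{lemma:reg_conv}, which is exactly where the denseness assumption enters in your argument too). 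Your ``weak convergence plus convergence of norms'' is the same computation: the cross term is your weak limit tested at $h=\mu_{Y\langle 0|1\rangle}$, and $\|\mu_\varepsilon\|_{\hbspf}^2=\left< g_\varepsilon\otimes g_\varepsilon,\theta\right>_{L_2(\pp{P}_{X_0}\otimes\pp{P}_{X_0})}$.

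The gap is in the stochastic part. With only the crude bound $\|(\mathcal{A}+\varepsilon\id)^{-1}\|\le\varepsilon^{-1}$, the resolvent-difference term
$$\ecovyx(\ecovx+\varepsilon_n\id)^{-1}(\covx-\ecovx)(\covx+\varepsilon_n\id)^{-1}\mu_{X_1}$$
is only $O_p(n^{-1/2}\varepsilon_n^{-2})$, not $O_p(n^{-1/2}\varepsilon_n^{-1})$, because it contains two resolvents each costing a factor $\varepsilon_n^{-1}$. Under the stated hypotheses this need not vanish: take $\varepsilon_n=n^{-1/3}$, which satisfies $\varepsilon_n\to 0$ and $n^{1/2}\varepsilon_n\to\infty$ yet gives $n^{-1/2}\varepsilon_n^{-2}=n^{1/6}\to\infty$. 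So your claimed overall rate $O_p((n^{1/2}\varepsilon_n)^{-1})$ does not follow from the crude bound, and the argument breaks precisely where the theorem's condition is supposed to bite. The paper repairs this with two half-power refinements: by Baker's factorization $\ecovyx=\ecovy^{1/2}\,\ecoryx\,\ecovx^{1/2}$ with $\|\ecoryx\|\le 1$, one gets $\|\ecovyx(\ecovx+\varepsilon_n\id)^{-1}\|\le\|\ecovy^{1/2}\|\,\varepsilon_n^{-1/2}$; and since $\mu_{X_1}=Sg=\covx^{1/2}S^{1/2}g$ with $g\in L_2(\pp{P}_{X_0})$, one gets $\|(\covx+\varepsilon_n\id)^{-1}\mu_{X_1}\|_{\hbspace}\le\varepsilon_n^{-1/2}\|g\|_{L_2(\pp{P}_{X_0})}$ (Lemma \ref{lemma:cov-inv-kmean} with $\alpha=0$). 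Each resolvent then costs only $\varepsilon_n^{-1/2}$, the offending term becomes $O_p(n^{-1/2}\varepsilon_n^{-1})$, and $n^{1/2}\varepsilon_n\to\infty$ suffices. You need to add these refinements (or an equivalent) to close the stochastic part; everything else in your outline goes through.
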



\begin{remark} \rm
As discussed, the form of the CME estimator \eqref{eq:empirical-cme} is the same as that of the kernel sum rule, and \citet[Theorem 8]{Fukumizu13:KBR} proves its consistency. 
Unlike ours, however, \citet{Fukumizu13:KBR} assume that the function $\theta$ in \eqref{eq:theta-def} belongs to the tensor-product RKHS $\hbspace \otimes \hbspace$, which is a rather strong assumption for proving just the consistency.
For instance, if $\hbspace$ is the RKHS of the Gaussian kernel, then this assumption requires that $\theta$ be infinitely differentiable. 
The theoretical contribution of our analysis is in removing this condition.
\end{remark}

Recall that by Lemma \ref{lem:causal-interpret} we have $\mu_{Y\langle 0|1 \rangle} = \mu_{Y_0^* | T = 1}$ under the conditional exogeneity condition (Assumption \ref{asp:main-asp}). 
Thus, Theorem \ref{theo:uinf_conv} implies that the CME estimator is consistent in estimating $\mu_{Y_0^* | T = 1}$, as summarized in the following corollary. 
This justifies the use of the CME estimator in dealing with counterfactual questions, as will be described in Section \ref{sec:applications}.

\begin{corollary}
\label{coro:consistency}
Suppose that Assumptions  \ref{asp:main-asp} and \ref{as:g-and-theta} are satisfied.
Let $\hat{\mu}_{Y\langle 0|1 \rangle}$ be the estimator defined in \eqref{eq:empirical-cme} with a regularization constant $\varepsilon_n > 0$.
Then, if $\varepsilon_n \to 0$ and $n^{1/2}\varepsilon_n \to \infty$ as $n \to \infty$, we have 
$$\left\|  \hat{\mu}_{Y\langle 0|1 \rangle}  - \mu_{Y_0^* | T = 1} \right\|_{\hbspf} \to 0$$ 
in probability as $n \to \infty$.
\end{corollary}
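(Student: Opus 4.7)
The corollary is essentially a one-line consequence of Theorem \ref{theo:uinf_conv} together with the causal interpretation provided by Lemma \ref{lem:causal-interpret}, so the proof plan is short and essentially bookkeeping.

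The plan is to first invoke Lemma \ref{lem:causal-interpret}, which under Assumption \ref{asp:main-asp} (conditional exogeneity and common support) gives the identification $\pp{P}_{Y\langle 0|1 \rangle} = \pp{P}_{Y_0^* \mid T = 1}$ as probability measures on $\mathcal{Y}$. Since the kernel mean embedding $\pp{Q} \mapsto \int \ell(\cdot,\y)\dd\pp{Q}(\y)$ is a deterministic map of the underlying distribution, equality of the two distributions forces equality of the two mean embeddings in $\hbspf$:
\begin{equation*}
\mu_{Y\langle 0|1 \rangle} \;=\; \int \ell(\cdot,\y)\dd\pp{P}_{Y\langle 0|1 \rangle}(\y) \;=\; \int \ell(\cdot,\y)\dd\pp{P}_{Y_0^* \mid T = 1}(\y) \;=\; \mu_{Y_0^* \mid T = 1}.
\end{equation*}
Here Assumption \ref{as:g-and-theta}\ref{as:measruable} (boundedness of $\ell$) guarantees that both Bochner integrals are well-defined.

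Next, I would apply Theorem \ref{theo:uinf_conv} directly. Assumption \ref{as:g-and-theta} is postulated in the corollary, and the regularization schedule $\varepsilon_n \to 0$ with $n^{1/2}\varepsilon_n \to \infty$ is the same as in Theorem \ref{theo:uinf_conv}. Thus the theorem yields
\begin{equation*}
\bigl\| \hat{\mu}_{Y\langle 0|1 \rangle} - \mu_{Y\langle 0|1 \rangle} \bigr\|_{\hbspf} \;\xrightarrow{\;p\;}\; 0 \qquad (n \to \infty).
\end{equation*}
Substituting the identity $\mu_{Y\langle 0|1 \rangle} = \mu_{Y_0^* \mid T = 1}$ from the previous step produces the claimed convergence in probability.

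There is no real obstacle here; the only subtlety worth making explicit is that Assumption \ref{asp:main-asp} is used solely to transfer the conclusion of Theorem \ref{theo:uinf_conv} from the observable-side target $\mu_{Y\langle 0|1 \rangle}$ to the causally meaningful target $\mu_{Y_0^* \mid T = 1}$, whereas Assumption \ref{as:g-and-theta} (boundedness, denseness of $\hbspace$ in $L_2(\pp{P}_{X_0})$, square-integrability of the Radon--Nikodym derivative $g = \dd\pp{P}_{X_1}/\dd\pp{P}_{X_0}$, and $\sqrt{n}$-consistency of the sample-based operators) is what drives the consistency of the estimator itself.
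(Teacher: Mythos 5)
Your proposal is correct and matches the paper's own argument exactly: the paper derives the corollary by noting that Lemma \ref{lem:causal-interpret} gives $\mu_{Y\langle 0|1 \rangle} = \mu_{Y_0^* \mid T = 1}$ under Assumption \ref{asp:main-asp} and then invoking Theorem \ref{theo:uinf_conv}. Your explicit remark that equality of the distributions transfers to equality of the Bochner integrals (justified by boundedness of $\ell$) is a harmless elaboration of the same step.
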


\subsection{Convergence Rates} 
\label{sec:convergence-rate}

Next, we present a result on the convergence rate of the CME estimator \eqref{eq:empirical-cme}.
This result is obtained based on certain smoothness assumptions on the Radon-Nikodym derivative $g = \dd\pp{P}_{X_1}/\dd\pp{P}_{X_0}$ and the function $\theta$ defined in \eqref{eq:theta-def}.
To state these assumptions,  we need to introduce the following concepts, details of which can be found in Appendix \ref{sec:app-preliminary}.

In the sequel, $I \subset \mathbb{N}$ denotes a set of indices, which is a finite set or an infinite set depending on whether the RKHS $\hbspace$ is finite dimensional (e.g., if $k$ is a linear or polynomial kernel) or infinite dimensional (e.g., if $k$ is a Gaussian or Mat\'ern kernel). 
We define an integral operator $T : L_2(\pp{P}_{X_0}) \to L_2(\pp{P}_{X_0})$ by 
\begin{equation*} \label{eq:kernel-integral-op}
    T f := \int k(\cdot, \x)f(\x) \dd\pp{P}_{X_0}(\x), \quad f \in L_2(\pp{P}_{X_0}).
\end{equation*}
Intuitively, the output function $Tf$ is a smoother version of the input function $f$, as $Tf$ can be seen as a convolution between $f$ and the kernel $k$.

Under Assumption \ref{as:g-and-theta} \ref{as:measruable} and \ref{as:RKHS-dense}, there exist at most countable families of functions $(e_i)_{i \in I} \subset \hbspace$ and the associated positive constants $(\mu_i)_{i \in I} \subset (0,\infty)$ such that i) $\mu_1 \geq \mu_2 \geq \cdots > 0$, that ii) $(\mu_i^{1/2} e_i)_{i \in I}$ is an orthonormal basis (ONB) in $\hbspace$, that iii) $( e_i )_{i \in I}$ is an ONB in $L_2(\pp{P}_{X_0})$, and that iv) the integral operator can be written as
$$
T f = \sum_{i \in I} \mu_i \left< f, e_i \right>_{L_2(\pp{P}_{X_0})} e_i,
$$ 
with convergence in $L_2(\pp{P}_{X_0})$; see Lemmas \ref{lemma:op-eigen-decomp} and \ref{lemma:ONB-mercer-iso} in Appendix \ref{sec:app-preliminary}.
In other words, the pairs $(\mu_i, e_i)_{i \in I}$ are eigenvalues and eigenfunctions of the integral operator: $T e_i = \mu_i e_i$ for $i \in I$.
Based on this eigendecomposition, one can define a {\em power} of the integral operator $T$: for a constant $\alpha \geq 0$, the $\alpha$-th power of $T$ is defined as 
$$
T^\alpha f := \sum_{i \in I} \mu_i^\alpha \left< f, e_i \right>_{L_2(\pp{P}_{X_0})} e_i, \quad f \in L_2(\pp{P}_{X_0}).
$$ 

We now make the following assumption about the smoothness of the Radon-Nikodym derivative $g = \dd\pp{P}_{X_1}/\dd\pp{P}_{X_0}$, where ${\rm Range}(T^\alpha)$ denotes the range or image of $T^\alpha$.
This way of stating a smoothness condition  is common in learning theory for kernel methods, e.g., \citet{CapDev07,SmaZho07,Fukumizu13:KBR}.
\begin{assumption} \label{as:range-assumption-g}
There exists a constant $0 \leq \alpha \leq 1$ such that the Radon-Nikodym derivative $g = \dd\pp{P}_{X_1}/\dd\pp{P}_{X_0}$ satisfies $g \in {\rm Range}(T^\alpha)$.
\end{assumption}

\begin{remark} \rm
\begin{itemize}
\item
Assumption \ref{as:range-assumption-g} quantifies the smoothness of the Radon-Nikodym derivative $g = \dd\pp{P}_{X_1}/\dd\pp{P}_{X_0}$ by the constant $0 \leq \alpha \leq 1$.
That is, $g$ is smoother if $\alpha$ is close to $1$ and less smooth if $\alpha$ is close to $0$.
Since we have $\dd\pp{P}_{X_1}(x) = g(x) \dd\pp{P}_{X_0}(x)$,  a larger $\alpha$ may therefore be understood as that $\pp{P}_{X_0}$ and $\pp{P}_{X_1}$ are more similar.   
This interpretation can be obtained as follows.
 
\item
The assumption implies that there exists a square-integrable function  $f \in L_2(\pp{P}_{X_0})$ that $g = T^\alpha f$. 
As mentioned, $T$ acts as a smoother, outputting a smoothed version $Tf$ of an input function $f$.
Similarly, its power $T^\alpha$ acts as a smoother, but now $\alpha$ determines the degree of smoothness of the output function. 
For $\alpha = 0$, $T^\alpha$ is just the identity map, and there is no effect of smoothing.
As $\alpha$ increases, the degree of smoothness increases. 
In fact, \citet[Theorem 4.6]{SteSco12} shows that $\mathrm{Range}(T^\alpha)$  for $0 < \alpha \leq 1/2$ is equal to an interpolation space between $L_2(\pp{P}_{X_0})$ and the RKHS $\hbspace$ as a set of functions.
In particular, we have ${\rm Range}(T^{\alpha}) = \hbspace$ for $\alpha = 1/2$, and thus the assumption implies $g \in \hbspace$.
Thus, the case $\alpha > 1/2$ is that $g$ is smoother than the least smooth functions in $\hbspace$.
\end{itemize}
\end{remark}

We next state a smoothness assumption about the function $\theta(\x,\x')$ defined in \eqref{eq:theta-def}.
To simplify the presentation, let $\mathcal{X}^2 := \mathcal{X}\times\mathcal{X}$.
Define a kernel on $\mathcal{X}^2$ as the product kernel $k_{\rm prod}: \mathcal{X}^2 \times \mathcal{X}^2 \to \mathbb{R}$ such that
\begin{equation*} \label{eq:prod-kernel}
k_{\rm prod} \left( (\x_1,\x_2), (\tilde{\x}_1, \tilde{\x}_2) \right) := k(\x_1,\tilde{\x}_1)k(\x_2,\tilde{\x}_2), \quad (\x_1,\x_2), (\tilde{\x}_1,\tilde{\x}_2) \in \mathcal{X}^2.
\end{equation*}
We then define an integral operator $T_{\rm prod} : L_2(\pp{P}_{X_0} \otimes \pp{P}_{X_0}) \to  L_2(\pp{P}_{X_0} \otimes \pp{P}_{X_0})$ by \begin{equation*} \label{eq:int-op-joint}
T_{\rm prod} \eta := \int k_{\rm prod} \left(\cdot, (\tilde{\x}_1, \tilde{\x}_2) \right) \eta\left((\tilde{\x}_1, \tilde{\x}_2)\right) \dd \left(\pp{P}_{X_0} \otimes \pp{P}_{X_0} \right)\left((\tilde{\x}_1, \tilde{\x}_2) \right), \quad \eta \in L_2(\pp{P}_{X_0} \otimes \pp{P}_{X_0}).
\end{equation*}
By Assumption  \ref{as:g-and-theta} \ref{as:measruable} and \ref{as:RKHS-dense}, this can be written in terms of the eigensystem $(\mu_i, e_i)_{i \in I}$ as 
$$
T_{\rm prod} \eta = \sum_{i,j \in I} \mu_i \mu_j \left< \eta, e_i \otimes e_j  \right>_{L_2(\pp{P}_{X_0} \otimes \pp{P}_{X_0})} e_i  \otimes e_j,
$$
where $e_i \otimes e_j: \mathcal{X}^2 \to \mathbb{R}$ denotes the tensor product of $e_i$ and $e_j$, and the convergence is in $L_2(\Pz \otimes \Pz)$; see Lemma \ref{lemma:eig-decomp-joint-int-op} in Appendix \ref{sec:theory-preliminary}.
That is, each $e_i \otimes e_j$ is an eigenfunction of $T_{\rm prod}$ with the corresponding eigenvalue $\mu_i \mu_j$. 
The $\beta$-th power of $T_{\rm prod}$ for $0 \leq \beta \leq 1$ is then defined as
\begin{equation} \label{eq:power-joint-int-op}
T_{\rm prod}^\beta \eta = \sum_{i,j \in I} (\mu_i \mu_j)^\beta \left< \eta, e_i \otimes e_j \right>_{L_2(\pp{P}_{X_0} \otimes \pp{P}_{X_0})} e_i \otimes e_j.    
\end{equation}
Similar to Assumption \ref{as:range-assumption-g}, we make the following smoothness assumption for the function $\theta: \mathcal{X} \times \mathcal{X} \to \mathbb{R}$ defined in \eqref{eq:theta-def}, based on the range of the power $T_{\rm prod}^\beta$.
\begin{assumption} \label{as:range-assumption-theta}
There exists a constant $0 \leq \beta \leq 1$ such that the function $\theta$ defined in \eqref{eq:theta-def} satisfies $\theta \in {\rm Range}(T_{\rm prod}^\beta)$.
\end{assumption}

\begin{remark} \rm
As for Assumption \ref{as:range-assumption-g}, we can interpret Assumption \ref{as:range-assumption-theta} as quantifying the smoothness of $\theta$ by the constant $\beta$.
That is, larger $\beta$ implies that $\theta$ is smoother.
Note that $\theta$ can be written as $\theta(\x, \x') = \left<\mu_{Y_0|X_0 = \x}, \mu_{Y_0|X_0 = \x'} \right>_{\hbspf}$, where $\mu_{Y_0|X_0 = \x} :=  \int \ell(\cdot,\y) \dd\pp{P}_{Y_0|X_0}(\y|\x)$ is the kernel mean of $\pp{P}_{Y_0|X_0}(\cdot|\x)$. 
Therefore, $\theta$ is smooth if the mapping $\x \to  \mu_{Y_0|X_0 = \x}$ is smooth.
Thus, $\beta$ may be interpreted as quantifying the smoothness of this mapping.
\end{remark}

We are now ready to state Theorem \ref{theo:convergence-rate} below, which establishes the convergence rate of the CME estimator. 
The rate is given in terms of the constants $\alpha$ and $\beta$ introduced in the above assumptions. 
The proof is given in Appendix \ref{sec:proof-convergence-rate}.

\begin{theorem}[Convergence rates] \label{theo:convergence-rate}
Suppose that Assumptions \ref{as:g-and-theta}, \ref{as:range-assumption-g} and \ref{as:range-assumption-theta} hold with $ \alpha + \beta \leq 1$.
Let $\hat{\mu}_{Y\langle 0|1 \rangle}$ be the estimator defined in \eqref{eq:empirical-cme} with a regularization constant $\varepsilon_n > 0$.
Let $c > 0$ be an arbitrary constant, and set $\varepsilon_n = c n^{- 1 / \left(1 + \beta + \max (1 - \alpha, \alpha) \right)  }$.
Then we have
\begin{equation*}
\left\| \hat{\mu}_{Y\langle 0|1 \rangle} - \mu_{Y\langle 0|1 \rangle} \right\|_{\hbspf} = 
O_p\left( n^{ - (\alpha + \beta) /  2 (1 + \beta + \max(1-\alpha, \alpha))   } \right) \quad (n \to \infty).
\end{equation*}
\end{theorem}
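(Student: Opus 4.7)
The plan is to carry out a bias–variance decomposition in an operator-theoretic formulation of the estimator, with the bias controlled by combining the source conditions in Assumptions \ref{as:range-assumption-g} and \ref{as:range-assumption-theta}, and the variance controlled by perturbation bounds on regularized resolvents together with the $\sqrt{n}$-consistency in Assumption \ref{as:g-and-theta}(iv).

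First I would express target and estimator using covariance-type operators. Let $I_k : \hbspace \to L_2(\pp{P}_{X_0})$ be the canonical inclusion, so that $\mathcal{C}_{X_0 X_0} = I_k^* I_k$ and $T = I_k I_k^*$, and let $C : L_2(\pp{P}_{X_0}) \to \hbspf$ be the extended cross-covariance $Cf := \int \muv_{Y_0|X_0=\x} f(\x)\dd\pp{P}_{X_0}(\x)$, so that $\mathcal{C}_{Y_0 X_0} = C I_k$. By Assumption \ref{as:g-and-theta}(iii), $\mu_{X_1} = I_k^* g$, whence $\mu_{Y\langle 0|1\rangle} = Cg$; the estimator \eqref{eq:empirical-cme} can in turn be written as $\hat{\mu}_{Y\langle 0|1\rangle} = \hat{\mathcal{C}}_{Y_0 X_0}(\hat{\mathcal{C}}_{X_0 X_0}+\varepsilon I)^{-1}\hat{\mu}_{X_1}$. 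A key consequence of $\theta = T_{\rm prod}^\beta \eta$ in Assumption \ref{as:range-assumption-theta} is the smoothing estimate $\|C\psi\|_{\hbspf}^2 \leq \|\eta\|_{L_2\otimes L_2}\|T^\beta\psi\|_{L_2}^2$ for every $\psi \in L_2(\pp{P}_{X_0})$, which transfers the smoothness of $\theta$ into an $\varepsilon^\beta$-type gain.

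Introducing the population-regularized version $\mu_{Y\langle 0|1\rangle}^\varepsilon := \mathcal{C}_{Y_0 X_0}(\mathcal{C}_{X_0 X_0}+\varepsilon I)^{-1}\mu_{X_1}$, I would split the error into bias $\mu_{Y\langle 0|1\rangle}^\varepsilon - \mu_{Y\langle 0|1\rangle}$ and variance $\hat{\mu}_{Y\langle 0|1\rangle} - \mu_{Y\langle 0|1\rangle}^\varepsilon$. Using the push-through identity $(\mathcal{C}_{X_0 X_0}+\varepsilon I)^{-1} I_k^* = I_k^*(T+\varepsilon I)^{-1}$ together with $\mathcal{C}_{Y_0 X_0} I_k^* = CT$, the bias collapses to $-\varepsilon C(T+\varepsilon I)^{-1} g$. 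Substituting $g = T^\alpha h$ from Assumption \ref{as:range-assumption-g} and applying the smoothing estimate yields $\|\text{bias}\|_{\hbspf} \lesssim \varepsilon^{\alpha+\beta}$ via the spectral bound $\sup_i \mu_i^{\alpha+\beta}/(\mu_i+\varepsilon) \lesssim \varepsilon^{\alpha+\beta-1}$ on the eigenvalues of $T$. For the variance I would use the three-way splitting
\begin{align*}
\hat{\mu}_{Y\langle 0|1\rangle} - \mu_{Y\langle 0|1\rangle}^\varepsilon
&= (\hat{\mathcal{C}}_{Y_0 X_0} - \mathcal{C}_{Y_0 X_0})(\hat{\mathcal{C}}_{X_0 X_0}+\varepsilon I)^{-1}\hat{\mu}_{X_1} \\
&\quad + \mathcal{C}_{Y_0 X_0}\bigl[(\hat{\mathcal{C}}_{X_0 X_0}+\varepsilon I)^{-1} - (\mathcal{C}_{X_0 X_0}+\varepsilon I)^{-1}\bigr]\hat{\mu}_{X_1} \\
&\quad + \mathcal{C}_{Y_0 X_0}(\mathcal{C}_{X_0 X_0}+\varepsilon I)^{-1}(\hat{\mu}_{X_1} - \mu_{X_1}),
\end{align*}
apply the resolvent identity $A^{-1} - B^{-1} = A^{-1}(B-A)B^{-1}$ to the middle term, and bound each piece using the $O_p(n^{-1/2})$ fluctuations from Assumption \ref{as:g-and-theta}(iv), the crude $\|(\mathcal{C}_{X_0 X_0}+\varepsilon I)^{-1}\| \leq \varepsilon^{-1}$, and the refined estimate $\|(\mathcal{C}_{X_0 X_0}+\varepsilon I)^{-1}\mu_{X_1}\|_{\hbspace} = \|I_k^*(T+\varepsilon I)^{-1}T^\alpha h\|_{\hbspace} \lesssim \varepsilon^{\min(\alpha-1/2,0)}$ coming from Assumption \ref{as:range-assumption-g}. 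Summing the leading contributions yields a variance of order $n^{-1/2}\varepsilon^{-r(\alpha,\beta)}$; balancing this against the bias $\varepsilon^{\alpha+\beta}$ selects the optimal $\varepsilon_n = c n^{-1/(1+\beta+\max(1-\alpha,\alpha))}$ and gives the claimed rate.

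The main obstacle lies in the variance bookkeeping, particularly in the middle term: because $\hat{\mu}_{X_1}$ is itself not smooth, the $\alpha$-smoothness of $g$ can be exploited only after swapping $\hat{\mu}_{X_1}$ for $\mu_{X_1}$ and showing that the swap produces a strictly higher-order remainder. The distinctive $\max(1-\alpha,\alpha)$ factor arises because $\|(\mathcal{C}_{X_0 X_0}+\varepsilon I)^{-1}\mu_{X_1}\|_{\hbspace}$ behaves qualitatively differently according to whether $\alpha \geq 1/2$ (so $\mu_{X_1} \in \hbspace$ and the bound is $O(1)$) or $\alpha < 1/2$ (where $\mu_{X_1}$ only lies in a rougher interpolation space, contributing an additional $\varepsilon^{\alpha-1/2}$ that compounds with the resolvent perturbation).
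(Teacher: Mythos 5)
Your proposal is correct in substance and reaches the stated rate, but it handles the approximation error by a genuinely different (and in fact sharper) argument than the paper. The paper also splits into the estimation error $\| \ecovyx (\ecovx + \varepsilon_n I)^{-1} \hat{\mu}_{X_1} - \covyx (\covx + \varepsilon_n I)^{-1} \mu_{X_1} \|_{\hbspf}$ and the approximation error, and its Theorem \ref{theo:estimation-error-rate} uses exactly your three-way splitting with $\|(\covx+\varepsilon I)^{-1}\mu_{X_1}\|_\hbspace \lesssim \varepsilon^{\min(\alpha-1/2,0)}$ (Lemma \ref{lemma:cov-inv-kmean}), yielding $O_p(n^{-1/2}\varepsilon_n^{\min(-1+\alpha,-1/2)})$ — identical to your variance bound. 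For the bias, however, the paper does not collapse it to $-\varepsilon C(T+\varepsilon I)^{-1}g$; instead it expands the \emph{squared} bias into three inner products against $\theta$ (Lemma \ref{lemma:approximation-key}), bounds the cross-differences term by term using the eigenbasis expansion of $\theta$ and Lemma \ref{lemma:range_bound}, and obtains $\|\mathrm{bias}\|_{\hbspf}^2 = O(\varepsilon^{\alpha+\beta})$, i.e.\ $\|\mathrm{bias}\|_{\hbspf} = O(\varepsilon^{(\alpha+\beta)/2})$. Your smoothing estimate $\|C\psi\|_{\hbspf}^2 = \langle \psi\otimes\psi,\theta\rangle_{L_2\otimes L_2} \le \|\eta\|\,\|T^\beta\psi\|_{L_2}^2$ is valid (it is just Parseval plus Cauchy--Schwarz in the eigenbasis), and applied to $\psi = \varepsilon(T+\varepsilon I)^{-1}T^\alpha h$ it exploits the quadratic structure that the paper's term-by-term expansion discards, giving the tighter $\|\mathrm{bias}\|_{\hbspf} = O(\varepsilon^{\min(\alpha+\beta,1)})$. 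What your approach buys is a cleaner identity for the bias and a strictly better bias exponent; what the paper's buys is nothing extra here — your route subsumes it.

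One point needs correcting. You claim that balancing your bias $\varepsilon^{\alpha+\beta}$ against the variance $n^{-1/2}\varepsilon^{\min(\alpha-1,-1/2)}$ \emph{selects} $\varepsilon_n = c n^{-1/(1+\beta+\max(1-\alpha,\alpha))}$. It does not: that balance gives $\varepsilon_n = n^{-1/(2(1+\beta))}$ for $\alpha\le 1/2$ and $\varepsilon_n = n^{-1/(1+2\alpha+2\beta)}$ for $\alpha\ge 1/2$, and would yield a \emph{faster} rate than the theorem states. The prescribed $\varepsilon_n$ is the one that balances the paper's weaker bias bound $\varepsilon^{(\alpha+\beta)/2}$ against the variance. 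This does not sink your proof of the stated theorem, because the theorem fixes $\varepsilon_n$ in advance: plugging the prescribed $\varepsilon_n$ into your bounds, the variance term alone equals the claimed rate $n^{-(\alpha+\beta)/(2(1+\beta+\max(1-\alpha,\alpha)))}$ and your bias is of strictly higher order, so the conclusion follows. You should simply replace the balancing claim by this verification (and note the saturation of your bias bound at $\varepsilon$ when $\alpha+\beta>1$, which is still higher order for the prescribed $\varepsilon_n$).
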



\begin{remark} \rm
Let us interpret the rate of Theorem \ref{theo:convergence-rate}.
\begin{itemize}
    \item 
The exponent $ (\alpha + \beta) /  2 (1 + \beta + \max(1-\alpha, \alpha))$ in the rate is smaller than $1/2$ for any $\alpha$ and $\beta$, and thus the rate is always slower than the parametric rate $n^{-1/2}$.
For instance, the rate becomes $n^{-1/4}$ if $\alpha = \beta = 1/2$.
This is due to the CME estimator being nonparametric, as for other nonparametric statistical estimators in general \citep{Tsybakov08:INE}. 
We are not aware of, however, whether the obtained rate is minimax optimal. 
We leave this question for future research.

\item  An important interpretation of the rate is as follows:
if {\em either} the Radon-Nikodym  derivative $g = \dd\pp{P}_{X_1}/\dd\pp{P}_{X_0}$ {\em or} the function $\theta$ is smooth, then the CME estimator converges reasonably fast.
For instance, the rate becomes $n^{-1/6}$ if $\alpha = 0$ and $\beta = 1$, and $n^{-1/4}$ if $\alpha=1$ and $\beta = 0$ (recall that $\alpha$ and $\beta$ quantify the smoothness of $g$ and $\theta$, respectively).
Therefore, even in the situation where the change from $\pp{P}_{X_1}$ to $\pp{P}_{X_0}$ is large, we may still expect a good performance for the CME estimator if the relationship between $X_0$ and $Y_0$ is smooth (and vice versa).   
\end{itemize}

\end{remark}

As for Corollary \ref{coro:consistency}, we obtain the following corollary from Theorem \ref{theo:convergence-rate}.
\begin{corollary}
 \label{coro:convergence-rate}
Suppose that Assumptions \ref{asp:main-asp}, \ref{as:g-and-theta}, \ref{as:range-assumption-g} and \ref{as:range-assumption-theta} hold with $ \alpha + \beta \leq 1$. 
Let $\hat{\mu}_{Y\langle 0|1 \rangle}$ be the estimator defined in \eqref{eq:empirical-cme} with a regularization constant $\varepsilon_n > 0$.
Let $c > 0$ be an arbitrary constant, and set $\varepsilon_n = c n^{- 1 / \left(1 + \beta + \max (1 - \alpha, \alpha) \right)  }$.
Then we have
\begin{equation*}
\left\| \hat{\mu}_{Y\langle 0|1 \rangle} -\mu_{Y_0^* | T = 1}  \right\|_{\hbspf} = 
O_p\left( n^{ - (\alpha + \beta) /  2 (1 + \beta + \max(1-\alpha, \alpha))   } \right) \quad (n \to \infty).
\end{equation*}
\end{corollary}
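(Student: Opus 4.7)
The plan is to obtain the corollary as a direct consequence of Theorem \ref{theo:convergence-rate} by identifying the two target embeddings $\mu_{Y\langle 0|1\rangle}$ and $\mu_{Y_0^*|T=1}$. The crucial bridge is Lemma \ref{lem:causal-interpret}, which, under Assumption \ref{asp:main-asp} (in particular the conditional exogeneity $Y_0^*, Y_1^* \ci T \mid X$ and the support condition), gives the distributional identity $\pp{P}_{Y\langle 0|1\rangle} = \pp{P}_{Y_0^* | T = 1}$. Everything in the corollary beyond Theorem \ref{theo:convergence-rate} is caused by the additional Assumption \ref{asp:main-asp}, and its sole role is to activate this identification.

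First, I would invoke Lemma \ref{lem:causal-interpret} under the hypotheses of the corollary to obtain $\pp{P}_{Y\langle 0|1\rangle} = \pp{P}_{Y_0^* | T = 1}$ as probability measures on $\mathcal{Y}$. Since the kernel mean embedding $\mu_{\pp{P}} = \int \ell(\cdot,\y)\, \dd \pp{P}(\y)$ is a deterministic (Bochner) integral of the same integrand against the same probability measure, equality of the measures forces equality of their embeddings in $\hbspf$:
\begin{equation*}
\mu_{Y\langle 0|1\rangle} \;=\; \int \ell(\cdot,\y)\, \dd \pp{P}_{Y\langle 0|1\rangle}(\y) \;=\; \int \ell(\cdot,\y)\, \dd \pp{P}_{Y_0^*|T=1}(\y) \;=\; \mu_{Y_0^*|T=1}.
\end{equation*}

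Second, since Assumptions \ref{as:g-and-theta}, \ref{as:range-assumption-g}, and \ref{as:range-assumption-theta} are assumed, Theorem \ref{theo:convergence-rate} applies verbatim to the same estimator $\hat{\mu}_{Y\langle 0|1\rangle}$ with the same regularization choice $\varepsilon_n = c n^{-1/(1+\beta+\max(1-\alpha,\alpha))}$, giving
\begin{equation*}
\left\| \hat{\mu}_{Y\langle 0|1\rangle} - \mu_{Y\langle 0|1\rangle} \right\|_{\hbspf} = O_p\!\left( n^{-(\alpha+\beta)/ 2(1+\beta+\max(1-\alpha,\alpha))} \right).
\end{equation*}
Substituting the identity $\mu_{Y\langle 0|1\rangle} = \mu_{Y_0^*|T=1}$ on the left-hand side yields exactly the claimed rate, finishing the proof.

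There is no real obstacle here: the work has already been done in Theorem \ref{theo:convergence-rate} and Lemma \ref{lem:causal-interpret}. The only thing worth double-checking is that the assumptions of Lemma \ref{lem:causal-interpret} are indeed covered by Assumption \ref{asp:main-asp} as stated in the corollary (they are: conditional exogeneity and common support are exactly what is needed for the counterfactual distribution to coincide with the interventional one), and that the boundedness condition in Assumption \ref{as:g-and-theta}\ref{as:measruable} ensures that both Bochner integrals above are well-defined so that equality of measures transfers to equality of embeddings without subtleties.
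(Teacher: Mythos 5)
Your proposal is correct and follows exactly the route the paper takes: Lemma \ref{lem:causal-interpret} under Assumption \ref{asp:main-asp} gives $\pp{P}_{Y\langle 0|1\rangle}=\pp{P}_{Y_0^*|T=1}$, hence $\mu_{Y\langle 0|1\rangle}=\mu_{Y_0^*|T=1}$, and the rate is then inherited verbatim from Theorem \ref{theo:convergence-rate}. Nothing is missing.
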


\section{Applications to Sampling and Testing}
\label{sec:applications}

In this section, we discuss important applications of the proposed framework.

\subsection{Sampling from Counterfactual Distributions}

\begin{algorithm}[t]
\caption{Sampling from a counterfactual mean embedding estimate}
\label{alg:sampling-CME}
\begin{algorithmic}[1]
\STATE \textbf{Input}: A CME estimate $\hat{\mu}_{Y\langle 0|1 \rangle} = \sum_{i=1}^n\beta_i\ell(\y_i,\cdot)$ with $(\beta_i,\y_i)_{i=1}^n \subset \mathbb{R} \times \mathcal{Y}$ and kernel $\ell: \mathcal{Y} \times \mathcal{Y} \to \mathbb{R}$; the number $m \in \mathbb{N}$ of sample points to generate.
\STATE Compute $\tilde{\y}_{1} := \arg\max_{\y \in \mathcal{Y}}
\sum_{i=1}^{n} \beta_i \ell(\y_i,\y)$.
\FOR{$t = 2$ to $m$}
\STATE Compute $\tilde{\y}_{t} := \arg\max_{\y \in \mathcal{Y}} \sum_{i=1}^{n} \beta_i \ell(\y_i,\y)   -  \dfrac {1}{t}\sum^{t-1}_{i=1}\ell(\tilde{\y}_{i}, \y)$. 
\ENDFOR
\STATE \textbf{Output}: $\tilde{\y}_1,\dots,\tilde{\y}_m$.
\end{algorithmic}
\end{algorithm}
   
While a CME estimate can be seen as a weighted sample $(\beta_i,\y_i)_{i=1}^n$, the coefficients $\beta_1,\dots,\beta_n$ may in general include negative values; thus it is not straightforward to interpret them as importance weights.
If one can generate sample points $\tilde{\y}_1,\dots,\tilde{\y}_m$ from the CME estimate, then these unweighted points may be more useful for an analyst. 
For instance, we might use them for the purpose of visualization (\eg, scatter plot and histogram).
Moreover, as will be described below, such unweighted points can be straightforwardly used for testing hypotheses regarding distributional treatment effects.

We propose a method for sampling from the counterfactual distribution based on the CME estimator and the kernel herding algorithm \citep{CheWelSmo10,KanNisGreFuk16,Kajihara_ICML2018}.
The method is summarized in Algorithm \ref{alg:sampling-CME}, which generates sample points $\tilde{\y}_1,\dots,\tilde{\y}_m$ from $\hat{\mu}_{Y\langle 0|1 \rangle}$ in \eqref{eq:empirical-cme}.
When the kernel $\ell$ is shift-invariant  (\eg, Gaussian), the procedure in Algorithm \ref{alg:sampling-CME} to generate $\tilde{\y}_1,\dots,\tilde{\y}_t$ for $t = 1,\dots,m \in \mathbb{N}$ is equivalent to the greedy minimization of the RKHS distance between the CME estimate $\hat{\mu}_{Y\langle 0|1 \rangle}$ and the empirical kernel mean $ \frac{1}{t} \sum_{i=1}^t \ell(\tilde{\y}_i, \cdot)$:
\begin{equation}\label{eq:kherding}
 \left\| \hat{\mu}_{Y\langle 0|1 \rangle}  - \frac{1}{t} \sum_{i=1}^t \ell(\tilde{\y}_i, \cdot)  \right\|_\hbspf = \sup_{ \|f\|_\hbspf \leq 1 } \left| \sum_{i=1}^n \beta_i f(\y_i) - \frac{1}{t} \sum_{j=1}^t f(\tilde{\y}_j) \right|.
\end{equation}
See \citet{CheWelSmo10} for details.
In other words, the points $\tilde{\y}_1,\dots,\tilde{\y}_m$ are those greedily minimizing the worst case error to the weighted points $(\beta_i, \y_i)_{i=1}^n$ in the unit ball of the RKHS $\hbspf$; thus, this algorithm is a greedy variant of Quasi Monte Carlo methods  \citep{DicKuoSlo13}.
Notice that therefore these points are, of course, not independent to each other.
The convergence rate $O(n^{-1/2})$ is guaranteed for $\frac{1}{t} \sum_{i=1}^t \ell(\cdot,\tilde{\y}_i)$ \citep{BacJulObo12}, which may hold even when the optimization problem \eqref{eq:kherding} is solved approximately \citep{LacLinBac15,KanNisGreFuk16}.

Lastly, to obtain high dimensional samples, \eg, images, from the counterfactual distribution, one can train deep generative models using MMD-GAN \citep{li2015generative,dziugaite2015training,sutherland2017generative,li2017mmd} based the CME estimate. We defer this promising application to future work.


\subsection{Counterfactual Inference as Two-sample Testing}
\label{sec:causal-two-sample}
One can also identify distributional treatment effects by formulating the problem as that of hypothesis testing, or more specifically, two-sample testing. 
To describe this, we assume that we are given data $(\x_i,t_i,\y_i)_{i=1}^N$, which are i.i.d.~with random variables $(X,T,Y)$ with $Y = Y_0^* \mathbbm{1}(T=0) + Y_1^* \mathbbm{1}(T=1)$ being the observed outcome and $Y_0^*, Y_1^*$ being the potential outcomes.  Let $n := \sum_{i=1}^N (1-t_i)$ and $m := \sum_{i=1}^N t_i$.
See Section \ref{sec:kte} for the notation.

\paragraph{Distributional treatment effects.}
Here we are interested in testing whether $ \pp{P}_{Y_0^*}$ and $\pp{P}_{Y_1^*}$ are equal or not (see Section \ref{sec:DATE}).
The null hypothesis $H_0$ and the alternative hypothesis $H_1$ are thus defined as
\begin{equation*}
H_0: \pp{P}_{Y_0^*}  = \pp{P}_{Y_1^*}, \quad
H_1: \pp{P}_{Y_0^*}  \neq \pp{P}_{Y_1^* },
\end{equation*}
As a test statistic, we propose to use an estimate $\widehat{\text{KTE}} (Y_0^* ,Y_1^*,\hbspf)$ of the kernel treatment effect,  $\text{KTE}(Y_0^*,Y_1^*,\hbspf) = \| \mu_{Y_0^*} - \mu_{Y_1^*} \|^2_\hbspf$, introduced in Section \ref{sec:KTE-DATE}. 
This estimate $\widehat{\text{KTE}} (Y_0^* ,Y_1^*,\hbspf)$, computed from the data $(\x_i,t_i,\y_i)_{i=1}^N$,  can either be the biased one, $\widehat{\text{KTE}}_b (Y_0^* ,Y_1^*,\hbspf)$, defined in \eqref{eq:emp-kte-po}, or the unbiased one, $\widehat{\text{KTE}}_u (Y_0^* ,Y_1^*,\hbspf)$, in \eqref{eq:unbiased-KTE-DATE}.



To decide a critical region, we need the distribution of the test statistic under the null hypothesis $H_0$. One way to approximate this distribution is to use a bootstrap procedure \citep{EfroTibs93:BS}, as follows. 
Let $B \in \mathbb{N}$ be the number of bootstrap samples.
For each $b = 1,\dots, B$, we randomly permute the indices $1,\dots, N$ to, say, $\pi_b(1), \dots, \pi_b(N) \subset \{1,\dots,N\}$. 
Then we compute the test statistic $\eta_b :=  \widehat{\text{KTE}} (Y_0^* ,Y_1^*,\hbspf)$ based on the permuted data $(\x_i, t_{ \pi_b(i) }, \y_i)_{i=1}^N$.
We then approximate the null distribution by the histogram of $\eta_1,\dots,\eta_B$, and determine a critical tail region for rejecting the null hypothesis (\eg, with significance level $\alpha = 0.05$).


\paragraph{Distributional effects of the covariate distributions.}  
Here, we are interested in whether the two distributions $\pp{P}_{Y_0^* | T } (\cdot\,|\,0)$ and $\pp{P}_{Y_0^* | T } (\cdot\,|\,1)$ are equal or not (see Section \ref{sec:effects_by_treatment_assignment}).
If they are different, there is a distributional effect on the outcomes arising from the difference in covariate distributions.
The identification of such an effect can be phrased as a hypothesis test with the null and alternative hypotheses being
\begin{equation} \label{eq:hypothesis-effects-treatment-assignment}
H_0: \pp{P}_{Y_0^* | T } (\cdot \,|\, 0) = \pp{P}_{Y_0^* | T } (\cdot \,|\, 1), \quad
H_1: \pp{P}_{Y_0^* | T } (\cdot \,|\, 0) \neq \pp{P}_{Y_0^* | T } (\cdot \,|\, 1), 
\end{equation}
To describe the approach, we rearrange the data $(\x_i, t_i, \y_i)_{i=1}^N$ so that $t_i = 0$ for $i = 1,\dots,n$ and $t_i = 1$ for $i = n + 1, \dots, n + m = N$.   
Note that $\y_1,\dots,\y_n$ are a sample from $\pp{P}_{Y_0^* | T } (\cdot \,|\,0)$, while the distribution $\pp{P}_{Y_0^* | T } (\cdot \,|\,1)$ is counterfactual and we do not have a sample from it.
 However, we can estimate the kernel mean of  $\pp{P}_{Y_0^* | T } (\cdot \,|\, 1)$ by the CME estimator \eqref{eq:empirical-cme} using the data $(\x_i,\y_i)_{i=1}^n$ and  $(\tilde{\x}_j )_{j=1}^m := (\x_{ n+j })_{j= 1}^m$ under the conditional exogeneity assumption, and let $\hat{\mu}_{\left< 0 | 1\right>}$ be  the resulting estimate. 
 We then apply kernel herding to $\hat{\mu}_{\left< 0 | 1\right>}$ for obtaining sample points $\tilde{\y}_1,\dots,\tilde{\y}_m$ approximating $\pp{P}_{Y_0^* | T } (\cdot \,|\,1)$,  as described in Algorithm \ref{alg:sampling-CME}.
 

We can then apply any method for two-sample test, \eg, those in \citet{Gretton12:KTT}, to the two samples $\y_1,\dots,\y_n$ and $\tilde{\y}_1,\dots,\tilde{\y}_m$ to test the hypotheses \eqref{eq:hypothesis-effects-treatment-assignment}.
We note that this is a rather heuristic approach, since $\tilde{\y}_1,\dots,\tilde{\y}_m$ are not drawn from $\pp{P}_{Y_0^* | T } (\cdot \,|\, 1)$, but generated deterministically so as to approximate $\pp{P}_{Y_0^* | T } (\cdot \,|\, 1)$.
We leave a further theoretical study regarding the validity of this approach for future research. 

Finally, one can develop a testing procedure for identifying distributional treatment effects on the treated (see Section \ref{sec:effects_by_treatment}) in the same way as described here, and thus we omit the explanation.

\subsection{Discussion}

Here we discuss how the above sampling and testing procedures may be used in practice.
Suppose that an analyst is interested in the distributional effects of the difference in covariate distributions, \ie, the hypotheses in \eqref{eq:hypothesis-effects-treatment-assignment}, and that the null hypothesis has been rejected as a result of applying the above testing procedure. This suggests the existence of a difference in the two distributions, $\pp{P}_{Y_0^* | T } (\cdot\,|\,0)$ and $\pp{P}_{Y_0^* | T } (\cdot\,|\,1)$. 
This is usually {\em not} the end of the analysis, but is the {\em starting point} of a further exploratory analysis. There are several ways to proceed, to understand how the two outcome distributions differ.

One way is to use the samples $\y_1,\dots,\y_n$ and $\tilde{\y}_1,\dots,\tilde{\y}_m$ (the latter being a counterfactual sample generated from Algorithm \ref{alg:sampling-CME}) approximating the distributions $\pp{P}_{Y_0^* | T } (\cdot\,|\,0)$ and $\pp{P}_{Y_0^* | T } (\cdot\,|\,1)$, respectively. The analyst can use any available statistical method for finding the source of the difference in the two distributions. 
For instance, she may compute summary statistics of both samples (e.g., mean, variance, etc.) and compare them. It is also possible to just plot both samples, or to estimate the densities, to visualize the difference (as we demonstrate in Section \ref{sec:simulations}). 
Another useful method in this context is the approach of \citet{Jitkrittum_NIPS2016} and their follow-up works, which returns interpretable features for explaining the difference in the two samples, such as the sample locations on which (smoothed version) of the two density values differ substantially.

Another important point for discussion is the use of non-characteristic kernels.
If the kernel $\ell$ is not characteristic, such as polynomial kernels, rejecting the null hypothesis implies that there exist a {\em certain kind} of difference in the two distributions. For instance, if the kernel $\ell$ is a polynomial kernel of order $2$, then rejecting the null hypothesis implies that there exists a difference in the mean or in the variance of the two outcome distributions. In this sense, if one is interested in the existence of a specific difference in the outcome distributions (such as the mean and variance), non-characteristic kernels may be more useful.

\section{Application to Off-Policy Evaluation (OPE)}
\label{sec:policy-evaluation}

We describe here how our approach can be applied to the  \emph{off-policy evaluation} task (OPE), \eg, \citet{Dudik11:DoublyRobust}, which aims at evaluating the performance of a given target policy of deciding a certain action given a context. 
The performance is measured in terms of the resulting rewards. 
For instance, consider a recommendation system, where an action is a list of items to be recommended to a user, and a policy determines which action to take, given the features of the user. There will be a positive reward if the user clicks or buys one of the recommended items, and no reward otherwise. 
The goal of OPE is to estimate how a given policy would work, without actually implementing the policy. 
Instead, the evaluation is to be done relying only on logged (or historical) data obtained from a possibly unknown initial policy, which is different from the target policy. 
This task is important when actually implementing a new policy is expensive or difficult, with wide applications including ad placement, recommendation systems, and health care.

We first describe the OPE problem more formally in Section \ref{sec:OPE-problem-description}. We then interpret it with the potential outcome framework and formulate the OPE problem as an estimation of a counterfactual distribution in Section \ref{sec:OPE-counterfactual}.
Finally, we present a concrete algorithm in Section \ref{sec:OPE-algorithm}.

\subsection{Problem Description} \label{sec:OPE-problem-description}
Formally, the OPE task may be defined as follows.
Let $\mathcal{U}$ be a space of context features, $\mathcal{A}$ be a space of actions and $\mathcal{R}$ be a space of rewards. 
For instance, in the case of recommendation systems, each $\mathbf{u} \in\mathcal{U}$ represents a user's features,  $\mathbf{a} \in \mathcal{A}$ a recommendation (\eg, a list of items), and $r \in \mathcal{R}$ the number of clicks on the recommendation. 
A policy $\pi(\mathbf{a} | \mathbf{u})$ is a conditional distribution on the action space $\mathcal{A}$ given context features $\mathbf{u} \in\mathcal{U}$. 
In a recommendation system, $\pi(\mathbf{a} | \mathbf{u})$ determines the probability of providing a recommendation $\mathbf{a}$ to a user having features $\mathbf{u}$.

Assume that tuples $\{(\mathbf{u}_i,\bfa_i,r_i)\}_{i=1}^n \subset \mathcal{U} \times \mathcal{A} \times \mathcal{R}$ of context features ${\bf u}_i \in \mathcal{U}$, action ${\bf a}_i \in \mathcal{A}$ and reward $r_i \in \mathcal{R}$ are available as {\em logged (or historical) data}. We assume that they were independently generated from a joint distribution $\pp{P}_0(\mathbf{u}, \mathbf{a}, r) := q_0(\mathbf{u}) \pi_0(\mathbf{a}|\mathbf{u}) \pp{P}_{0}(r|\mathbf{u},\mathbf{a})$ in the data collection phase, where $q_0({\bf u})$ is a marginal distribution on $\mathcal{U}$, $\pi_0(\mathbf{a}|\mathbf{u})$ is an {\em initial (or logging/behavior) policy}, and  $\pp{P}_0(r\,|\,\mathbf{u},\mathbf{a})$ is a conditional distribution of a reward $r \in \mathcal{R}$ given $(\mathbf{u}, \mathbf{a}) \in \mathcal{U} \times \mathcal{A}$.  
In a recommendation system, for instance, $\pp{P}_0(r\,|\,\mathbf{u},\mathbf{a})$ describes whether a user with features $\mathbf{u}$ who has been recommended a list of items $\mathbf{a}$ would choose one of the items. 
As such, it is typically unknown a priori. Similarly, $q_0(\mathbf{u})$ and $\pi_0(\mathbf{a}|\mathbf{u})$ may be unknown in practice, if $\{(\mathbf{u}_i,\bfa_i,r_i)\}_{i=1}^n$ are given as historical data. 

Let $\pi_*(\mathbf{a}|\mathbf{u})$ be another conditional distribution of actions ${\bf a} \in \mathcal{A}$ given context features ${\bf u} \in \mathcal{U}$, which represents the {\em target policy} that one wants to evaluate. By design, the target policy is known and sampling from it is possible. Let $q_*({\bf u})$ be a probability distribution on $\mathcal{U}$, which represents the distribution of context features under the target environment (\eg, the distribution of user features when a recommendation system is deployed). In the standard OPE setting, it is typically assumed that $q_0({\bf u}) = q_*({\bf u})$, i.e., the historical and target environments are the same; but in general these can be different, $q_0({\bf u}) \not= q_*({\bf u})$. The latter situation is not uncommon in practice and has been recently studied by \citet{uehara2020off}.
Finally, let $\pp{P}_*(r\,|\,\mathbf{u},\mathbf{a})$ be the conditional distribution of a reward $r$ given context features ${\bf u}$ and action ${\bf a}$ under the target environment. We assume that this remains the same as in the data collection phase, \ie, 
\begin{equation*}  
\pp{P}_*(r\,|\,\mathbf{u},\mathbf{a}) =  \pp{P}_0 (r\,|\,\mathbf{u},\mathbf{a}).    
\end{equation*}
This assumption may be understood as the {\em policy invariance} assumption commonly made in the econometric policy evaluation literature, \eg, \citet{heckman2007econometric}.\footnote{More precisely, this assumption may be identified with the policy invariance assumptions PI-1 and PI-2 in Section 2.2 of \citet{heckman2007econometric}, where $s$ and $\omega$ there correspond to ${\bf a}$ and ${\bf u}$ in our setting, respectively, and tuple $(a, b, \tau)$ there essentially corresponds to a policy in our setting.}

The task of off-policy evaluation is then to estimate the expected reward under the target environment: 
\begin{align}
R_* :=\int_{\mathcal{U} \times \mathcal{A}}  \int_{\mathcal{R}} r \dd \pp{P}_*(r|{\bf u}, {\bf a}) \dd\pi_*( {\bf u}, {\bf a} ) = \int_{\mathcal{U} \times \mathcal{A}}  \int_{\mathcal{R}} r \dd \pp{P}_0(r|{\bf u}, {\bf a}) \dd\pi_*( {\bf u}, {\bf a} ), \label{eq:OPE-expected-reward}
\end{align}
where the identity follows from the assumption $ \pp{P}_*(r\,|\,\mathbf{u},\mathbf{a}) =  \pp{P}_0 (r\,|\,\mathbf{u},\mathbf{a})$, and $\pi_*({\bf u}, {\bf a}) := \pi_*({\bf a}| {\bf u}) q_*({\bf u})$ is the joint distribution on $\mathcal{U} \times \mathcal{A}$ given by $ \pi_*({\bf a}| {\bf u})$ and $q_*({\bf u})$.
This estimation is to be done using logged data $\{(\mathbf{u}_i,\bfa_i,r_i)\}_{i=1}^n$ and  the target policy $\pi_*({\bf u}|{\bf a})$.

\subsection{OPE as Counterfactual Inference} 
\label{sec:OPE-counterfactual}

We explain below how our CME estimator \eqref{eq:empirical-cme} can be applied to the OPE task. To this end, we consider the marginal distribution of a reward under the target environment:
\begin{equation} \label{eq:marginal-rewards}
\pp{P}_*(r) := \int_{\mathcal{U} \times \mathcal{A}} \pp{P}_*(r\,|\,\mathbf{u},\mathbf{a})  \dd\pi_*(\mathbf{u},\mathbf{a}) = \int_{\mathcal{U} \times \mathcal{A}} \pp{P}_0(r\,|\, \mathbf{u},\mathbf{a}) \dd\pi_*(\mathbf{u},\mathbf{a}).
\end{equation} 
Note that the expected reward \eqref{eq:OPE-expected-reward} is the mean of this distribution.
We first show that the distribution $\pp{P}_*(r)$ can be interpreted as a counterfactual distribution, by formulating the OPE task using the potential outcome framework\footnote{Our formulation of the OPE task using the potential outcome framework is different from the existing formulation, \eg, \citet{KallusZ18:Continuous}, where each action ${\bf a} \in \mathcal{A}$ is defined as a treatment,  and for each action ${\bf a}$ there is a corresponding potential outcome $Y_{\bf a}^*$. In our formulation, on the other hand, an action ${\bf a}$ taken for the subject is defined as a part of covariates $\x = ({\bf u}, {\bf a})$, and binary treatments, $0$ and $1$, are considered, each of which is defined as exposing the subject to a certain environment. Our formulation enables us to interpret the OPE task as counterfactual inference of changing the covariate distribution, so that our CME estimator can be naturally applied. Thus, our motivation of introducing this formulation is rather pragmatic, and we do not argue whether it is more reasonable than the existing one.  One benefit may exist, however: In our formulation, we explicitly model the assumption on the conditional distributions of rewards being the same for the data collection and evaluation phases via the potential outcome notation \eqref{eq:OPE-conditional-assumption}, while this assumption is implicitly made in the existing formulation. This explicit statement of the assumption helps a researcher to understand when the OPE may be justified. } in Section \ref{sec:cme}.

\paragraph{Random variables.}
Consider a hypothetical subject in population. This subject is associated with covariates $X := (U, A)$, where $U \in \mathcal{U}$ is context features and $A \in \mathcal{A}$ is an action taken. As such, we define the covariate space as $\mathcal{X} := \mathcal{U} \times \mathcal{A}$, the product of the context feature space $\mathcal{U}$ and action space $\mathcal{A}$. In a recommendation system, for instance, $U$ is the user features and $A$ is a recommended list of items. 
We define two treatments $0$ and $1$ as exposing the subject to {\em the environment during the data collection phase} and {\em that during the evaluation phase}, respectively; and the associated potential outcomes $Y_0^*$ and $Y_1^*$ as the {\em rewards} under the respective treatments $0$ and $1$. 
Let $T \in \{0, 1\}$ be a treatment indicator. 

In a recommendation system, for example, an environment may refer to the situation where a user is about to choose an item, such as the calendar year when this takes place. For instance, treatment $0$ may refer to the environment in the year 2000, and treatment 1 the environment in year 2020. Consider a user with the features $U$ and the recommended items $A$, and suppose that $A$ consists of items which were popular in 2000 but are outdated in 2020. Then this user may have chosen an item from $A$ if it were in 2000, but may not choose any item in 2020, \ie, we may have $Y_0^* \not= Y_1^*$.




For ease of understanding, consider a finite population of $N$ subjects with 
\begin{equation} \label{eq:OPE-finite-sample}
( \y_{i0}^*,  \y_{i1}^*, \x_i, t_i  )_{i=1}^N
\end{equation}
being i.i.d.~realizations of the random variables $(Y_0^*, Y_1^*, X, T)$.
That is, the $i$-th subject is associated with covariates $\x_i  := ({\bf u}_i, {\bf a}_i)$ consisting of context features ${\bf u}_i$ and action ${\bf a}_i$. The treatment assignment $t_i \in \{0,1\}$ indicates which environment the $i$-th subject is exposed to. 
Thus, the potential outcomes $\y_{i0}^*$ and $\y_{i1}^*$ are the rewards from the $i$-th subject (associated with $\x_i  := ({\bf u}_i, {\bf a}_i)$) that would have been observed if she was exposed to the environment during the data collection phase (treatment $0$) and that during the evaluation phase (treatment $1$), respectively.

\paragraph{Distributions of the potential outcomes.}
The distributions of the potential outcomes $Y_0^*, Y_1^*$ are defined via their conditional distributions given $X = (U,A)$, \ie, $\pp{P}_{Y_0^* | X}(\y|\x)$ and $\pp{P}_{Y_1^* | X}(\y|\x)$ where $\y = r$ and $\x = ({\bf u}, {\bf a})$.
These are the conditional distributions of rewards $r \in \mathcal{R}$ given context features ${\bf u} \in \mathcal{U}$ and action ${\bf a} \in \mathcal{A}$, under the environment during the data collection phase (treatment $0$) and that during the evaluation phase (treatment $1$), respectively:
$$
\pp{P}_{Y_0^* | X} (\y | \x) = \pp{P}_{0}(r | {\bf u}, {\bf a} ), \quad \pp{P}_{Y_1^* | X} (\y | \x) = \pp{P}_{*}(r | {\bf u}, {\bf a} ) \quad (\y = r,\ \ \x = ({\bf u}, {\bf a})).
$$
Note that our assumption $\pp{P}_{0}(r | {\bf u}, {\bf a} ) = \pp{P}_{*}(r | {\bf u}, {\bf a} )$ implies that 
\begin{equation} \label{eq:OPE-conditional-assumption}
    \pp{P}_{Y_0^* | X} (\y | \x) = \pp{P}_{Y_1^* | X} (\y | \x),
\end{equation}
\ie,  the conditional distributions of the potential outcomes (rewards) $Y_0^*$ and $Y_1^*$ are the same, given the covariates $X = \x := ({\bf u}, {\bf a})$.

For instance, consider a recommendation system with a finite population \eqref{eq:OPE-finite-sample} with the $i$-th user equipped with covariates $\x_i = ({\bf u}_i, {\bf a}_i)$ consisting of features ${\bf u}_i$ and recommended items ${\bf a}_i$.
The identity \eqref{eq:OPE-conditional-assumption} then implies that, for the $i$-th user, the distributions of the potential outcomes (rewards) $\y_{i0}^*$ and $\y_{i1}^*$ are the same. This means that this user should have the same stochastic behavior in choosing (or not choosing) an item from the recommended ones during the data collection (treatment $0$) and evaluation (treatment $1$) phases. In other words, the environmental factors that affect the user behavior should be the same for the data collection and evaluation phases. 
This excludes, for instance, the above example where the data collection phase is in year 2000 and the evaluation phase is in 2020, in which case user preferences are different.

\paragraph{Distributions of covariates.}
As in Section \ref{sec:policy-spec-cov-dist}, we interpret here a policy as specifying a distribution on the covariate space $\mathcal{X} = \mathcal{U} \times \mathcal{R}$. More specifically, consider the conditioning $T = 0$ or $T = 1$, which imply that the hypothetical subject is exposed to the environment of the data collection phase ($T = 0$) or to that of the evaluation phase ($T=1$). Then the corresponding distributions of the covariates $X_0 = X|(T=0)$ and $X_1 = X|T=1$ are given by the logging policy $\pi_0({\bf u}| {\bf a})$ and the target policy $\pi_*({\bf u} | {\bf a})$, respectively:
\begin{equation} \label{eq:OPE-identities-marginal}
\pp{P}_{X_0}(\x) = \pi_{0} (\mathbf{a} | \mathbf{u} ) q_0 (\mathbf{u}), \quad \pp{P}_{X_1}(\x) = \pi_* (\mathbf{a} | \mathbf{u} ) q_* (\mathbf{u}) \quad (\x = ({\bf u}, {\bf a}) \in \mathcal{U} \times \mathcal{A}).
\end{equation}

To describe this, consider the finite population \eqref{eq:OPE-finite-sample}, and assume that for the $i$-th user the treatment indicator is $t_i = 0$, which implies that her data are given in the data collection phase. In this case, her covariates $\x_i = ({\bf u}_i, {\bf a}_i)$ are generated according to the joint distribution $\pi_0({\bf a} | {\bf u}) q_0({\bf u})$ involving the logging policy $\pi_0({\bf a} | {\bf u})$. On the other hand, if $t_i = 1$, her covariates $\x_i = ({\bf u}_i, {\bf a}_i)$ are generated from the joint distribution $\pi_*({\bf a} | {\bf u}) q_*({\bf u})$ given by the target policy  $\pi_*({\bf a} | {\bf u})$. 
Note that in the OPE setting, if $t_i = 1$ we have access to neither $\y_{i0}^*$ nor $\y_{1i}^*$; note also that in this case this ``user'' may be imaginary, with covariates $\x_i = ({\bf u}_i, {\bf a}_i)$ generated artificially.

\paragraph{Distributions of observed outcomes.}

The observed outcome (reward) from the hypothetical subject is defined as $Y = \mathbbm{1}(T=0)Y_0^* + \mathbbm{1}(T=1) Y_1^*$.
Let $Y_0 = Y|(T=0) = Y_0^*| (T=0)$ and $Y_1 = Y|(T=1) = Y_1^*| (T=1)$ be the observed outcome $Y$ conditioned on $T=0$ or $T=1$, respectively. Then $Y_0$ conditioned on $X_0 = X|(T=0)$ and $Y_1$ conditioned on $X_1 = X|(T=1)$ can be written in terms of the potential outcomes $Y_0^*$ and $Y_1^*$ as
\begin{equation*}
    Y_0|X_0 = Y_0^* | X, (T =0), \quad  Y_1|X_1 = Y_1^* | X, (T =1). 
\end{equation*}
Note that the potential outcomes $Y_0^*$ and $Y_1^*$ are independent to the treatment indicator $T$ given the covariates $X$  under the conditional exogeneity in Assumption \ref{asp:main-asp}. Therefore, $ Y_0^* | X, (T =0) = Y_0^* | X $ and $Y_1^* | X, (T=1) = Y_1^* | X$.

Thus, we can identify the conditional distributions $\pp{P}_{Y_0|X_0}$ and  $\pp{P}_{Y_1|X_1}$ as  $\pp{P}_0 (r | {\bf u}, {\bf a}) $ and $\pp{P}_* (r | {\bf u}, {\bf a}) $, respectively:
\begin{equation} \label{eq:OPE-identities-conditional}
    \pp{P}_{Y_0|X_0}(\y | \x) = \pp{P}_{Y_0^* | X} (\y | \x) = \pp{P}_0 (r | {\bf u}, {\bf a}) , \quad \pp{P}_{Y_1|X_1}(\y | \x) = \pp{P}_{Y_1^* | X} (\y | \x) = \pp{P}_* (r | {\bf u}, {\bf a}),
\end{equation}
where $\y = r$ and $\x = ({\bf u}, {\bf a})$.
Therefore, the assumption $\pp{P}_0 (r | {\bf u}, {\bf a}) = \pp{P}_* (r | {\bf u}, {\bf a}) $ (or \eqref{eq:OPE-conditional-assumption}) implies that 
$$
\pp{P}_{Y_0|X_0}(\y | \x)  = \pp{P}_{Y_1|X_1}(\y | \x) .
$$

\paragraph{Reward distribution as a counterfactual distribution.} 
Finally, the reward distribution  \eqref{eq:marginal-rewards} under the target environment can be written as a counterfactual distribution using the identities \eqref{eq:OPE-identities-marginal} and \eqref{eq:OPE-identities-conditional} (assuming the conditional exogeneity) as
$$
\pp{P}_*(r) = \int  \pp{P}_{Y_0 |X_0}(\y|\x) \dd \pp{P}_{X_1}(\x) = \pp{P}_{Y\left< 0 | 1 \right>} (\y).
$$
with $\y = r$ and $\x = ({\bf u}, {\bf a})$.
Thus, we can use the CME estimator \eqref{eq:empirical-cme} to estimate the kernel mean of this reward distribution, which is described below.

\subsection{Off-Policy Evaluation by the CME Estimator}
\label{sec:OPE-algorithm}

Let $\ell$ be a kernel on the outcome (reward) space $\mathcal{Y} = \mathcal{R}$ with $\hbspf$ its RKHS.
Then the mean embedding of the reward distribution under the target environment is defined as 
\begin{equation} \label{eq:OPE-embedding}
    \mu_{\pp{P}_*} := \int \ell(\cdot, r) \dd\pp{P}_*(r) \in \hbspf.
\end{equation}
Note that this becomes the expected reward \eqref{eq:OPE-expected-reward} if we define $\ell$ as a linear kernel: $\ell(r, r') := r r'$, which we use in our experiments in Section \ref{sec:OPE-experiments}. In principle, however, the use of other nonlinear kernels (in particular characteristic kernels) makes the mean embedding more informative, and this may be beneficial in assessing the effectiveness of the target policy. 
\paragraph{Kernel on covariates.}
To use the CME estimator, we also need to define a kernel $k$ on the covariate space $\mathcal{X} = \mathcal{U}\times \mathcal{A}$.  To this end, we first define kernels
$k_{\mathcal{U}}$ and $k_{\mathcal{A}}$ on the context feature space $\mathcal{U}$ and the action space $\mathcal{A}$, respectively. 
Then we can define $k$ as the {\em product kernel} of $k_{\mathcal{U}}$ and $k_{\mathcal{A}}$: $k( ( {\bf u}, {\bf a} ), ({\bf u}', {\bf a}') ) := k_{\mathcal{U}} ({\bf u}, {\bf u}') k_{\mathcal{A}} ({\bf a}, {\bf a}')$ for $( {\bf u}, {\bf a} ), ({\bf u}', {\bf a}') \in \mathcal{U} \times \mathcal{A}$.

\paragraph{Joint sample.}
Recall that logged data $\{(\mathbf{u}_i,\bfa_i,r_i)\}_{i=1}^n$ are i.i.d.~with the joint distribution $\pp{P}_0(\mathbf{u}, \mathbf{a}, r) = \pp{P}_{0}(r|\mathbf{u},\mathbf{a}) \pi_0(\mathbf{a}|\mathbf{u}) q_0(\mathbf{u}) $, which is identified as $\pp{P}_{X_0 Y_0} (\x, \y) = \pp{P}_{Y_0|X_0}(\y|\x) \pp{P}_{X_0}(\x)$ for $\x = ({\bf u}, {\bf a})$ and $\y = r$  because of \eqref{eq:OPE-identities-marginal} and  \eqref{eq:OPE-identities-conditional}.
Thus, by defining  $\x_i := ( \mathbf{u}_i, \mathbf{a}_i )$ and  $\y_i := r_i$, we have an i.i.d.~sample $(\x_i, \y_i)_{i=1}^n$ from the joint distribution $\pp{P}_{X_0 Y_0} (\x, \y)$.

\paragraph{Covariate sample.}
We also need to express $\pp{P}_{X_1}$ in terms of a sample in the form $(\x'_j)_{j=1}^m$ . As in \eqref{eq:OPE-identities-marginal}, the covariate distribution $\pp{P}_{X_1}$ is the joint distribution given by the target policy $\pi_*({\bf a}| {\bf  u})$ and the marginal distribution $q_*({\bf u} )$ of context features. Thus, if we can sample from both $\pi_*({\bf a}| {\bf  u})$ and $q_*({\bf u} )$ (the former is typically possible because it is defined by the designer of the target policy, while the latter depends on the problem), then $(\x'_j)_{j=1}^m$ may be given by 
\begin{equation} \label{eq:OPE-sampling}
    \x'_j := ({\bf u}_j^*, {\bf a}_j^*), \quad \text{where}\quad {\bf u}_j^* \sim q_*({\bf u}), \quad {\bf a}_j^* \sim \pi_*( {\bf a} | {\bf u}_j^* ), \quad j = 1,\dots,m.
\end{equation}
In the particular case where $q_*({\bf u} ) = q_0({\bf u} )$, we can use the sample $({\bf u}_i)_{i=1}^n$ in the logged data $\{(\mathbf{u}_i,\bfa_i,r_i)\}_{i=1}^n$, which are from $q_0({\bf u})$, as a sample from $q_*({\bf u})$: $ {\bf u}_j^*  := {\bf u}_j$ for $j = 1,\dots, m := n$. Note that even if $q_*({\bf u}) \not= q_0({\bf u})$, we can use the CME estimator as long as we have a sample of context features $( {\bf u}_j^* )_{j=1}^m$ from the target environment, \ie, the covariate shift setting of \citet{uehara2020off}.

\paragraph{Algorithm.}
The resulting algorithm is described in Algorithm  \ref{alg:kpe}, which only requires matrix operations and thus is simple to implement. 
We note that the expected reward \eqref{eq:OPE-expected-reward} under the target environment can be estimated as $\muh_{\pp{P}_*(r)} = \sum_{i=1}^n \beta_i r_i$; this is obtained by setting $\ell$ as a linear kernel on $\mathbb{R}$. 

\paragraph{Extensions.}
Note that the above method of approximating the covariate distribution $\pp{P}_{X_1}$ via the sampling procedure in \eqref{eq:OPE-sampling} does not fully exploit the information of the target policy $\pi_*({\bf a} | {\bf u})$, since for each ${\bf u}_j^*$ we only sample one action ${\bf a}_j^*\sim \pi_*({\bf a} | {\bf u}_j^*)$. 
In Appendix \ref{sec:OPE-extension}, we discuss extensions of Algorithm \ref{alg:kpe} to make use of more information from the target policy.


\begin{algorithm}[t]
\caption{Off-Policy Evaluation using the CME estimator \eqref{eq:empirical-cme}}
\label{alg:kpe}
\begin{algorithmic}[1]
\STATE \textbf{Requirement}: A kernel $k_{\mathcal{U}}$ on the context  space $\mathcal{U}$, a kernel $k_{\mathcal{A}}$ on the action space $\mathcal{A}$, a kernel $\ell$ on the reward space $\mathcal{R}$, and a regularization constant $\varepsilon > 0$. 
\STATE \textbf{Input:} Logged data $({\bf u}_i, {\bf a}_i, r_i)_{i=1}^n$, a target policy $\pi_*({\bf u} | {\bf a})$ and a sample of context features $({\bf u}_j^*)_{j=1}^m$. (If $q_*({\bf u}) = q_0({\bf u})$, set ${\bf u}_j^* := {\bf u}_j,  j =1,\dots,m := n$.) 

\FOR{ $j=1$ \TO $n$ } 
\STATE  $\mathbf{a}_j^* \sim \pi_*(\mathbf{a}\,|\,\mathbf{u}_j^*)$ 
\ENDFOR
\STATE Compute ${\bf K} \in \mathbb{R}^{n \times n}$ with
 $\mathbf{K}_{ij}:= k_{\mathcal{U}}({\bf u}_i, {\bf u}_j)  k_{\mathcal{A}}({\bf a}_i, {\bf a}_j) $, $i,j = 1,\dots,n$.
\STATE Compute $\tilde{\bf K} \in \mathbb{R}^{n \times m}$ with $\widetilde{\mathbf{K}}_{ij} := k_{\mathcal{U}}({\bf u}_i, {\bf u}_j^*)   k_{\mathcal{A}}({\bf a}_i, {\bf a}_j^*)$, $i = 1,\dots,n$, $j = 1,\dots,m$.
 \STATE Compute $\bm{\beta} := (\beta_1,\dots,\beta_n)^\top = (\mathbf{K} + n\epsilon {\bf I})^{-1}\widetilde{\mathbf{K}}\mathbf{1}_m \in \mathbb{R}^n$, where  ${\bf 1}_m := \frac{1}{m}(1,\dots,1)^\top \in \mathbb{R}^m$.
\STATE \textbf{Output}: An estimate   $\hat{\mu}_{\pp{P}_*}  = \sum_{i=1}^n \beta_i \ell(\cdot, r_i)$ of the mean embedding \eqref{eq:OPE-embedding} or an estimate $\hat{R}_* := \sum_{i=1}^n \beta_i r_i$ of the expected reward \eqref{eq:OPE-expected-reward}.
\end{algorithmic}
\end{algorithm}

\section{Experiments}
\label{sec:experiments}
This section provides empirical results that demonstrate the advantages of the proposed framework.
The codes to reproduce the experiments are available at \url{https://github.com/sorawitj/counterfactual-mean-embedding}.

\subsection{Simulations: Distributional Treatment Effects}
\label{sec:simulations}

We first conduct simulation experiments on distributional treatment effects in Section \ref{sec:date-exp} and on distributional effects of covariate distributions in Section \ref{sec:exp-dist-eff-cov}. 

\subsubsection{Distributional Treatment Effects (DTE)}
\label{sec:date-exp}

We first deal with the identification of DTE (Sections \ref{sec:DATE}), defined as the difference between the distributions $\pp{P}_{Y_0^*}$ and $\pp{P}_{Y_1^*}$ of two potential outcomes $Y_0^*, Y_1^* \in \mathbb{R}$. 
As discussed in Section \ref{sec:causal-two-sample}, this identification problem can be formulated as hypothesis testing of the null hypothesis $H_0: \pp{P}_{Y_{1}^*}=\pp{P}_{Y_{0}^*}$ against the alternative $H_1: \pp{P}_{Y_{1}^*}\neq\pp{P}_{Y_{0}^*}$. For this purpose, we assume that i.i.d.~observations $\{ (\x_i, t_i, \y_i) \}_{i=1}^N$ of random variables $(X,T,Y)$ are available, where $X \in \mathbb{R}^5$ is covariates, $T \in \{ 0,1 \}$ is a treatment indicator, and $Y = \mathbbm{1}(T=0) Y_0^* + \mathbbm{1}(T=1) Y_1^* \in \mathbb{R}$ is the observed outcome.


The purpose here is to demonstrate the validity of our approach to identifying DTE, described in Sections \ref{sec:KTE-DATE} and Section \ref{sec:causal-two-sample}. To this end, we compare it with a baseline approach that uses an estimate of ATE \eqref{eq:ate} as a test statistic. 
For simplicity, we call here our approach ``DTE'' and the baseline ``ATE''.
We consider the following three scenarios:


\vspace{-3pt}
\paragraph{Scenario I.}
There exists no treatment effect so that the distributions of the potential outcomes $Y_0^*, Y_1^*$ are the same: $\pp{P}_{Y_0^*} = \pp{P}_{Y_1^*}$. Hence, we expect that both ATE and DTE do not detect any treatment effect.


\vspace{-3pt}
\paragraph{Scenario II.} 
There exists a treatment effect that only makes the means of $\pp{P}_{Y_0^*}$ and $\pp{P}_{Y_1^*}$ different: \ie, the mean-shift scenario. Hence, we expect that both ATE and DTE can detect the treatment effect.


\vspace{-3pt}
\paragraph{Scenario III.} 
There exists a treatment effect that does not change the means of $\pp{P}_{Y_0^*}$ and $\pp{P}_{Y_1^*}$,  but changes their higher order moments.
Hence, we expect that ATE fails to detect any treatment effect, whereas DTE with non-linear kernels can detect the difference.


\begin{figure}[t!]   
  \centering 
  \includegraphics[width=0.32\textwidth]{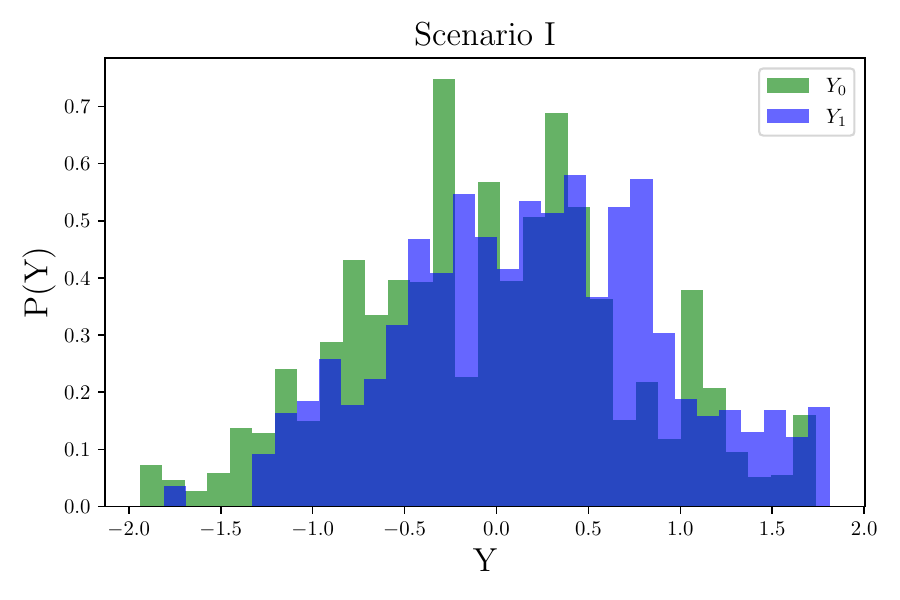}
  \includegraphics[width=0.32\textwidth]{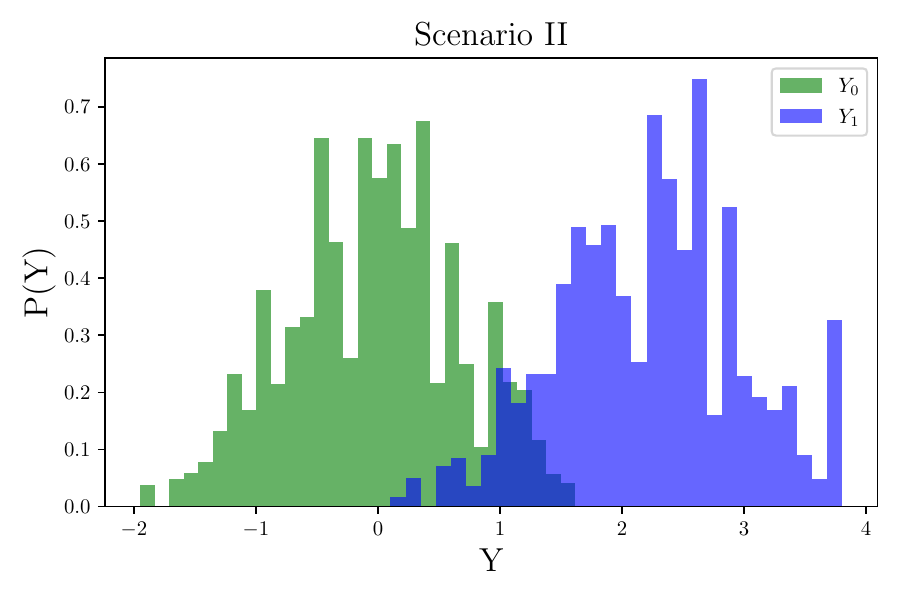}
  \includegraphics[width=0.32\textwidth]{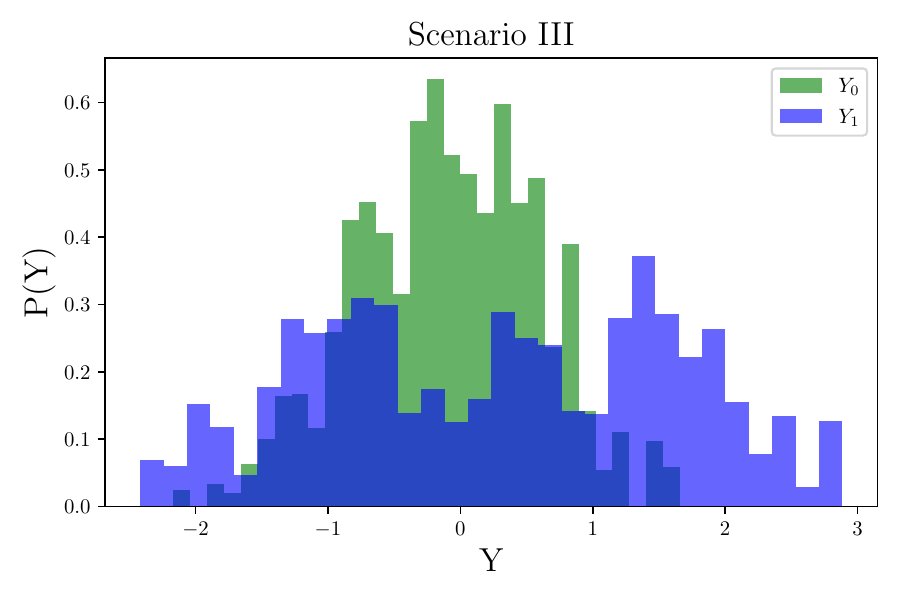}
  \caption{
  Histograms of observed outcomes $(\y_i)_{i=1}^N$ from the data $\{(\x_i, t_i, \y_i) \}_{i=1}^N$ generated under the three scenarios in Section \ref{sec:date-exp}, with $N=500$.
  For each scenario, the green histogram consists of outcomes $\y_i$ with $t_i = 0$, which are i.i.d.~with $Y_0 = Y|(T=0)$), and the blue histogram consists of $\y_i$ with $t_i = 1$, which are i.i.d.~with $Y_1 = Y|(T=1)$.
  Note that $Y_0 =  Y_0^* |(T=0)$ and $Y_1 = Y_1^* | (T=1)$, so the distributions of $Y_0$ and $Y_1$ (described here) are slightly different from those of the potential outcomes $Y_0^*$ and $Y_1^*$.
  }
  \label{fig:synthetic-data}
\end{figure}  

To realize these scenarios, we define the random variables $X$, $T$, $Y_0^*$ and $Y_1^*$ as  
\begin{align*}
&X \sim \mathcal{N}(\mathbf{0},\sigma_{\x}\id_{5}), \quad  T \sim \text{Bernoulli}\left(\frac{1}{1+\exp\left(-\bm{\alpha}^\top X - \alpha_0\right)}\right) \\ 
& Y_0^*  = \beta^\top X  + \varepsilon_0, \quad Y_1^*= \beta^\top X + b + \varepsilon_1, \nonumber 
\end{align*}
where  $\varepsilon_0, \varepsilon_1 \sim \mathcal{N}(0,\sigma^2_{\varepsilon})$ are independent noises.
%
Throughout the experiment, we set $\bm{\beta} =[0.1,0.2,0.3,0.4,0.5]^\top$, $\bm{\alpha} =[0.05,0.04,0.03,0.02,0.01]^\top$, $\alpha_0=0.05$, and $\sigma^2_{\varepsilon}=\sigma^2_{\x}=0.1$. 
We set $b=0$ for the Scenario I and $b=2$ for the Scenario II.
For Scenario III, we set $b = 2z - 1$, where $z \in \{0, 1\}$ is an independent Bernoulli random variable $z\sim\mathrm{Bernoulli}(0.5)$ generated for every observation.
By construction, the conditional exogeneity $Y_0^*, Y_1^*  \ci T | X$ in Assumption \ref{asmp:cond-exogen} is satisfied. 
For each scenario, we generate data $\{(\x_i, t_i, \y_i) \}_{i=1}^N$ as i.i.d.~observations of $X$, $T$ and $Y = \mathbbm{1}(T=0) Y_0^* + \mathbbm{1}(T=1) Y_1^*$, with $N\in\{50,100\}$.
Figure \ref{fig:synthetic-data} describes the empirical distributions of observed outcomes for the three scenarios, where $Y_0 = Y|(T=0)$ and $Y_1 = Y | (T=1)$.


For DTE and ATE, we perform the following tests using data 
$\{(\x_i,t_i,\y_i,e_i)\}_{i=1}^n$ augmented with propensity scores $e_i := e(\x_i) := \mathbb{E}[T \,|\,X=\x_i]$.
For DTE, we use the unbiased KTE estimate in \eqref{eq:unbiased-KTE-DATE} as a test statistic, with the Gaussian kernel $\ell(\y, \y') = \exp(-\|\x-\y\|_2^2/2\sigma^2)$ whose bandwidth parameter $\sigma$ is chosen using the median heuristic \citep{garreau2017large}. 
For ATE, we also use \eqref{eq:unbiased-KTE-DATE} as a test statistic, but with the linear kernel $\ell(\y, \y') = \y^\top\y'$, resulting in a test that distinguishes only the means of two distributions.
We use the bootstrap procedure described in Section  \ref{sec:causal-two-sample} to construct the distribution of the test statistic under the null $H_0: \pp{P}_{Y_0^*} = \pp{P}_{Y_1^*}$, with $B=10,000$ bootstrap samples. The significance level $\alpha$ is set to $0.01$ in all experiments.

Table \ref{tab:dist-results} reports the frequencies of rejecting the null hypothesis $H_0: \pp{P}_{Y_0^*} = \pp{P}_{Y_1^*}$ over 1000 repetitions, for each of the three scenarios.  When the null hypothesis $H_0$ is true (Scenario I), these are the frequencies of Type-I errors, which are well calibrated approximately at the designed level $\alpha = 0.01$ for both ATE and DTE. 
When the alternative hypothesis $H_1: \pp{P}_{Y_0^*} \not= \pp{P}_{Y_1^*}$ is true (Scenarios II and III), these represent test powers (\ie, one minus the probability of Type II error).  
In Scenario II, both ATE and DTE successfully reject the null hypothesis, capable of detecting the mean shift effect in the potential outcome distributions. 
In Scenario III, where the treatment effects do not appear in the mean but in the higher order moments, DTE has significantly higher power than ATE, demonstrating that DTE can identify higher order distributional effects.


\begin{table}[t!]
    \centering
    \begin{tabular}{llcccccc}
        \toprule
         &&& \multicolumn{2}{c}{$N=50$} && \multicolumn{2}{c}{$N=100$} \\
         &&& \textbf{ATE} & \textbf{DTE} && \textbf{ATE} & \textbf{DTE} \\
         \midrule
        \textbf{Scenario I}: & No Treatment Effect && 0.013 & 0.012 && 0.013 & 0.012 \\
        \textbf{Scenario II}: & Mean Shift Effect && 1.000 & 1.000 && 1.000 & 1.000 \\
        \textbf{Scenario III}: & High-order Treatment Effect && \textbf{0.012} & \textbf{0.224} && \textbf{0.012} & \textbf{0.639} \\
        \bottomrule
    \end{tabular}
    \caption{
    The frequencies of rejecting the null hypothesis $H_0: \pp{P}_{Y^*_{1}}=\pp{P}_{Y^*_{0}}$ when the null hypothesis is true (\ie, the probability of the Type-I error in Scenario I) and when the alternative hypothesis $H_1: \pp{P}_{Y^*_{1}}\neq\pp{P}_{Y^*_{0}}$ is true (\ie, the power of the test in Scenario II \& III), computed from 1000 repetitions.
    The significance level $\alpha$ is 0.01.}
    \label{tab:dist-results}
\end{table}


\subsubsection{Distributional Effects of Covariate Distributions} \label{sec:exp-dist-eff-cov}

We next consider the identification of distributional effects of covariate distributions (see Sections \ref{sec:effects_by_treatment_assignment},  \ref{sec:more-on-assignment} and \ref{sec:KTE-dist-effect-cov}).
As before, let $Y_0^*, Y_1^* \in \mathbb{R}$ be potential outcomes,  $T \in \{0,1\}$ be a treatment indicator, $Y = \mathbbm{1}(T=0)Y_0^* +  \mathbbm{1}(T=1)Y_1^*$ be the observed outcome, and $X \in \mathbb{R}^5$ be covariates. Let $X_0 = X | (T=0)$, $X_1 = X | (T=1)$ and $Y_0 = Y|(T=0) = Y_0^* | (T = 0)$.
Here we are interested in the distributional effects defined as
\begin{align*}
\pp{P}_{Y_0^* \,|\, T } (\cdot \,|\, 0) - \pp{P}_{Y_0^* \,|\, T } (\cdot \,|\, 1) &= \pp{P}_{Y\left< 0|0\right>} - \pp{P}_{Y\left< 0|1\right>} \\
&= \int \pp{P}_{Y_0|X_0}(\cdot|\x) \pp{P}_{X_0}(\x) - \int \pp{P}_{Y_0|X_0}(\cdot|\x) \pp{P}_{X_1}(\x)
\end{align*}
where the first identity holds under the conditional exogeneity.
As discussed in Section \ref{sec:causal-two-sample}, the identification of this distributional effect can be cast as testing the null hypothesis  $H_0:  \pp{P}_{Y_0^* | T } (\cdot \,|\, 0) = \pp{P}_{Y_0^* | T } (\cdot \,|\, 1)$ against the alternative $H_1: \pp{P}_{Y_0^* | T } (\cdot \,|\, 0) \not= \pp{P}_{Y_0^* | T } (\cdot \,|\, 1)$. 
 

For this experiment, we define the joint distribution of $T$, $X$ and $Y_0^*$ by first specifying the distribution of $T$, and then specifying the conditional distributions of $X$, $Y_0^*$ and $Y_1^*$ given $T$ (note that $Y_1^*$ is not relevant in this experiment). 
To this end, we define the distribution $\pp{P}_T$ of $T$ as $\pp{P}_T(0) = \pp{P}_T(1) = 1/2$. 
Then, we define the conditional distributions of $X$ and  $Y_0^*$ given $T$ as
\begin{align*}\label{eq:synthetic-data-ta}
    &  Y_0^* \,|\, (T=0) = \bm{\beta}^\top X_0 + \varepsilon_0, \quad 
    X_0 = X|(T=0) \sim \mathcal{N}(\mathbf{0},\sigma_x\id_{5}), \\
    & Y_0^* \,|\, (T=1) = \bm{\beta}^\top X_1 + \varepsilon_1, \quad
    X_1 = X|(T=1) \sim \sum_{j=1}^3\gamma_j\mathcal{N}(\bm{\nu}_j,\sigma_x\id_{5}),
\end{align*}
\noindent where $\varepsilon_0, \varepsilon_1 \sim \mathcal{N}(0,\sigma^2_{\varepsilon})$ are independent, $\bm{\beta} =[0.1,0.2,0.3,0.4,0.5]^\top$, $\sigma_\varepsilon = \sigma_x = 0.1$, and  $\gamma_1 = \gamma_2 = \gamma_3 = 1/3$. 
We set $\bm{\nu}_1 = [-5, 2.5, 0, 0, 2.5]$, $\bm{\nu}_2 = [2.5, 2.5, 0, 0, -5]$, and $\bm{\nu}_3 = [2.5, -5, 0, 0, 2.5]$, so that $X_0$ and $X_1$ have the same zero mean.
By construction, $Y_0^*\,|\,T=0$ and $Y_0^*\,|\,T=1$ have the same mean, which is zero, while their higher-order moments differ. 
In other words, the distributional effects of the covariate distributions appear only in the higher-order moments.
We generate data $(\x_i, \y_i)_{i=1}^n$ as i.i.d.~observations of $(X_0, Y_0)$ (recall that $Y_0 = Y_0^*|(T=0)$) and $(\x'_j)_{j=1}^m$ as i.i.d.~observations of $X_1$, where $n = m$ (which amounts to $\pp{P}_T(0) = \pp{P}_T(1) = 1/2$).


We estimate the embedding $\mu_{Y_0|T=1} = \int \ell(\cdot, \y) \dd \pp{P}_{Y_0|T}(\y|1)$ of the counterfactual distribution $\pp{P}_{Y_0|T}(\cdot|1) = \pp{P}_{Y\left< 0|1\right>}$ with the CME estimator \eqref{eq:empirical-cme} based on $(\x_i, \y_i)_{i=1}^n$ and $(\x'_j)_{j=1}^m$.
We set the kernel $\ell$ on the outcome space as the Gaussian kernel $\ell(\y,\y') = \exp(-\|\y-\y'\|_2^2/2\sigma_Y^2)$ whose bandwidth parameter $\sigma_Y$ is chosen by the median heuristic using $(\y_i)_{i=1}^n$.
We also set the kernel $k$ on the covariate space as the Gaussian kernel  $k(\x,\x') = \exp(-\|\x-\x'\|_2^2/2\sigma_X^2)$, whose parameter $\sigma_X$ as well as the regularization constant $\varepsilon$ in the CME estimator are chosen by 5-fold cross validation from  $\sigma_X\in\{0.01,0.1,1,10\}$ and $\varepsilon\in\{0.01,0.1,1,10\}$. 
This cross validation is done by regarding the joint sample $(\x_i, \y_i)_{i=1}^n$ as training data for regression from $\x_i$ to $\y_i$, and by performing kernel ridge regression with kernel $k$ and regularization parameter $\varepsilon$, motivated by the interpretation of conditional mean embedding as kernel ridge regression \citep{Grunewalder12:LGBPP}.
 

We apply Algorithm \ref{alg:sampling-CME} to the resulting CME estimate $\hat{\mu}_{Y\left<0|1\right>} = \sum_{i=1}^n \beta_i \ell(\cdot, \y_i)$ to generate counterfactual samples $(\y'_j)_{j=1}^n$.
We now have $(\y_i)_{i=1}^n$ as an i.i.d.~sample from $\pp{P}_{Y_0^* | T } (\cdot \,|\, 0)$ and $(\y'_j)_{j=1}^n$ as an approximate sample of the counterfactual distribution $\pp{P}_{Y_0^* | T } (\cdot \,|\, 1)$.
As discussed in Section \ref{sec:causal-two-sample}, we can test the null hypothesis  $H_0:  \pp{P}_{Y_0^* | T } (\cdot \,|\, 0) = \pp{P}_{Y_0^* | T } (\cdot \,|\, 1)$ against the alternative $H_1: \pp{P}_{Y_0^* | T } (\cdot \,|\, 0) \not= \pp{P}_{Y_0^* | T } (\cdot \,|\, 1)$ by performing a two sample test using the samples $(\y_i)_{i=1}^n$ and $(\y'_j)_{j=1}^n$.
For this purpose, we perform the kernel two-sample test with the unbiased MMD statistic \cite[Eq.~3]{Gretton12:KTT}, with permutation-based bootstrapping using $B=10,000$ bootstrap samples and with significance level $\alpha = 0.01$.
For comparison, we also perform the same kernel two-sample test, but with linear kernel $\ell(\y, \y') = \y^\top \y'$, resulting in a test that only uses the means of $(\y_i)_{i=1}^n$ and $(\y'_j)_{j=1}^n$.



\begin{figure}[t!]
    \centering
        \subfigure[]{
        \centering
        \includegraphics[width=0.45\textwidth]{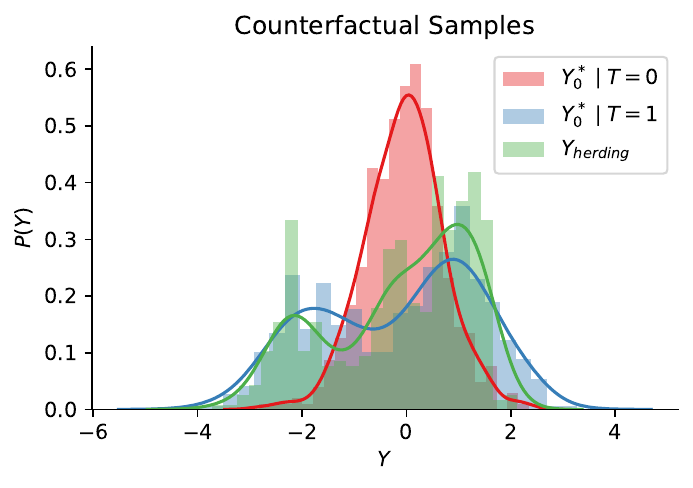}
        \label{subfig:hearding-sample}
        }%
        \subfigure[]{
        \centering
        \includegraphics[width=0.45\textwidth]{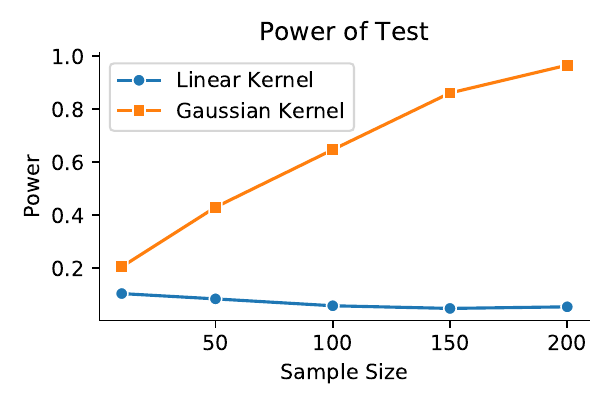}
        \label{subfig:hearding-power}
        }%
        
    \caption{
    Results of the experiments in Section \ref{sec:exp-dist-eff-cov}. 
    \subref{subfig:hearding-sample}
     Histograms of observed outcomes $(\y_i)_{i=1}^n$ of $Y_0^* | (T=0)$ (red), a counterfactual sample of $Y_0^* | (T=1)$ (blue) and the approximate counterfactual sample $(\y'_j)_{j=1}^n$ generated with Algorithm \ref{alg:sampling-CME} applied to our CME estimate (green), obtained from data with size $n = 500$.
     For illustration, we also show density curves obtained from the corresponding samples of the same colors, estimated with kernel density estimation.
    \subref{subfig:hearding-power}
    Powers of the two sample tests based on the generated counterfactual sample $(\y'_j)_{j=1}^n$ and observed outcomes $(\y_i)_{i=1}^n$, using the unbiased MMD statistic with the Gaussian or the linear kernel with significance level $\alpha = 0.01$. The powers are obtained from 1,000 repetitions, for each of different sample sizes.}
    \label{fig:effects-ta}
\end{figure}

Figure \ref{fig:effects-ta} describes the experimental results. 
Figure \ref{subfig:hearding-sample} illustrates the observed outcomes $(\y_i)_{i=1}^n$ of $Y_0^* | (T=0)$ (red), a counterfactual sample of $Y_0^* | (T=1)$ (blue) and the approximate counterfactual sample $(\y'_j)_{j=1}^n$ generated with Algorithm \ref{alg:sampling-CME} applied to our CME estimate (green). Note that the sample of  $Y_0^* | (T=1)$ is shown here for an illustration purpose; in practice we never have access to such a sample, but we can generate it here as we know the ground-truth model.  For illustration, we also show the corresponding density curves obtained from the respective samples using kernel density estimation. 
The approximate counterfactual sample  $(\y'_j)_{j=1}^n$ resembles that from the ground-truth model, supporting the validity of our CME estimator \eqref{eq:empirical-cme} and the sampling method (Algorithm \ref{alg:sampling-CME}). 

Figure \ref{subfig:hearding-power} describes the test powers (i.e., the frequencies of rejecting the null hypothesis $H_0:  \pp{P}_{Y_0^* | T } (\cdot \,|\, 0) = \pp{P}_{Y_0^* | T } (\cdot \,|\, 1)$) over 1,000 repetitions of the above testing procedure for each case of using the Gaussian or the linear kernel for computing the test statistic, for different sample sizes. 
The test with the linear kernel has very low power. 
This implies that the mean of the generated counterfactual sample $(\y'_j)_{j=1}^n$ is close to the mean of the observed sample $(\y_i)_{i=1}^n$ from $Y_0^*| (T=0)$ since the kernel two-sample test with the linear kernel only uses the information of the sample means.  
On the other hand,  the power of the test with the Gaussian kernel increases as the size $n$ of observed data increases, suggesting that the higher-order moments of the generated counterfactual sample $(\y'_j)_{j=1}^n$ differ substantially from those of the observed sample $(\y_i)_{i=1}^n$. These observations suggest that the approximate counterfactual sample $(\y'_j)_{j=1}^n$ has properties consistent with the ground-truth counterfactual distribution $Y_0^*|T=1$. Thus our CME estimator \eqref{eq:empirical-cme} and Algorithm \ref{alg:sampling-CME} are capable of producing an approximate counterfactual sample based on which a test for distributional effects can be constructed.

\subsection{Off-Policy Evaluation} \label{sec:OPE-experiments}

We conduct experiments on the off-policy evaluation (OPE) task for a recommendation system, described in Section \ref{sec:policy-evaluation}. 
Let $\eta: \mathcal{U} \times \mathcal{A} \to \mathbb{R}$ be the regression function that takes a pair $({\bf u}, {\bf a})$ of user features ${\bf u} \in \mathcal{U}$ and recommendation ${\bf a} \in \mathcal{A}$ as an input and outputs the conditional expectation of the reward $r$:
$$
\eta({\bf u}, {\bf a}) := \mathbb{E}[ r \,|\, {\bf u}, {\bf a} ] := \int_{\mathbb{R}} r \dd \pp{P}_*( r \,|\,  {\bf u}, {\bf a} ) =   \int_{\mathbb{R}} r \dd\pp{P}_0( r \,|\,  {\bf u}, {\bf a} ), 
$$
where $\pp{P}_*( r \,|\,  {\bf u}, {\bf a} ) = \pp{P}_0( r \,|\,  {\bf u}, {\bf a} )$ is the conditional distribution of the reward $r$ given the pair $({\bf u}, {\bf a})$, which is assumed to be invariant under the target and logging environments.

For a given target policy $\pi_*({\bf a} | {\bf u})$, the OPE task is to estimate the expected reward under the target environment defined by
$$
R_* := \int_{\mathcal{U} \times \mathcal{A} }  \int_{\mathbb{R}} r \dd\pp{P}_*( r \,|\,  {\bf u}, {\bf a} ) \dd \pi_*(   {\bf u}, {\bf a} ) =   \int_{\mathcal{U} \times \mathcal{A} } \eta({\bf u}, {\bf a}) \dd \pi_*(   {\bf u}, {\bf a} ),
$$
where $\pi_*(  {\bf u}, {\bf a} ) = \pi_*( {\bf a} \,|\, {\bf u} ) q_*({\bf u}) = \pi_*( {\bf a} \,|\, {\bf u} ) q_0({\bf u})$ is the joint distribution of context features ${\bf u} \in \mathcal{U}$ and action ${\bf a} \in \mathcal{A}$.
Here we consider the standard setting where the marginal distributions of the user features ${\bf u}$ are the same under the target and logging environments: $q_*( {\bf u}) = q_0 ( {\bf u})$.
The above estimation is to be done based on the logged data $\mathcal{D}_{\mathrm{init}} := \{(\mathbf{u}_i,\bfa_i,r_i)\}_{i=1}^n$ obtained from the joint distribution $\pp{P}_0({\bf u}, {\bf a}, r) = \pp{P}_0( r \,|\,  {\bf u}, {\bf a} ) \pi_0({\bf a}, {\bf u})$ during the data collection phase, where $\pi_0({\bf u}, {\bf a}) = \pi_0({\bf u} | {\bf a}) q_0({\bf u})$.

We compare our approach in Algorithm \ref{alg:kpe}, which we call \texttt{CME} below, to the following benchmark estimators using both simulated and real-world data.

\begin{paragraph}{Direct method with a parametric regressor (DM).}
The direct method \citep{Dudik11:DoublyRobust} first learns the regression function $\eta$ based on the logged data $\mathcal{D}_{\rm init}$ with a regression model of one's choice.  
Let $\hat{\eta}: \mathcal{U} \times \mathcal{A} \to \mathbb{R}$ be the learned regressor. 
Then the expected reward $R_*$ is estimated as 
$$\widehat{R}_{\text{DM}} = \frac{1}{n}\sum_{i=1}^{n} \mathbb{E}_{\mathbf{a} \sim \pi_*(\mathbf{a}\,|\,\mathbf{u}_i)}[\hat{\eta}(\mathbf{u}_i,\mathbf{a})].$$


The direct method obtains the approximation $\hat{\eta}$ based on the logged data  $\mathcal{D}_{\rm init} = \{(\mathbf{u}_i,\bfa_i,r_i)\}_{i=1}^n$, in which input pairs $({\bf u}_i, {\bf a}_i)$ are generated from the covariate distribution $\pi_0({\bf u}, {\bf a}) = \pi_0({\bf u} | {\bf a}) q_0({\bf u})$ that is different from the target covariate distribution $\pi_*({\bf u}, {\bf a}) = \pi_*({\bf u} | {\bf a})q_0({\bf u})$. Recall that we interpret the paired variables $({\bf u}, {\bf a})$ as ``covariates'' in our discussion.
This situation is known as {\em covariate shift} in the literature. It is well known that under the covariate shift, a {\em parametric} regression model may produce a significant bias  \citep{Shimodaira00:CovShift}. That is, the approximation quality of the learned model $\hat{\eta}$ obtained with the logged data $\mathcal{D}_{\rm init}$ may be good with respect to the covariate distribution  $\pi_0({\bf u}, {\bf a})$ under the data collection environment, but can be poor with respect to the target covariate distribution $\pi_*({\bf u}, {\bf a})$, e.g.,  $\| \eta - \hat{\eta} \|_{L_2(\pi_*) }^2 := \int (\eta( {\bf u}, {\bf a} ) - \hat{\eta} ( {\bf u}, {\bf a} ) )^2 \dd\pi_*({\bf u}, {\bf a})$ may be large. 
This in turn may induce a large bias in the estimation of the expected reward $R_* = \int \eta( {\bf u}, {\bf a}) \dd \pi_*({\bf u}, {\bf a})$.
To demonstrate this, we use a 3-layer feedforward neural network, which is an (overparametrized) parametric model, as a regressor for the direct method.

\end{paragraph}

\begin{paragraph}{Weighted inverse propensity score (wIPS).}
The \texttt{wIPS} estimator obtains an unbiased estimate of the target reward by re-weighting each observation in the logged dataset by the ratio of the {\em propensity scores} under the target and initial policies \citep{Horvitz52:Sampling, Precup00:wIPS}. 
The \texttt{wIPS} estimator is defined by 
$$\widehat{R}_{\text{wIPS}} = \left(\sum_{i=1}^{n} w_i r_i\right)\bigg/\left(\sum_{i=1}^n w_i\right),$$ 
where $w_i :=  \pi_*(\mathbf{a}_i, \mathbf{u}_i) /  \pi_0(\mathbf{a}_i, \mathbf{u}_i) =  \pi_*(\mathbf{a}_i|\mathbf{u}_i) /  \pi_0(\mathbf{a}_i|\mathbf{u}_i)$ are the propensity weights.
\end{paragraph}

\begin{paragraph}{Doubly robust (DR).}
The \texttt{DR} estimator combines the two aforementioned estimators by exploiting both the regression model $\hat{\eta}{(\mathbf{u},\mathbf{a})}$ and the propensity scores \citep{Cassel76:DR,Dudik11:DoublyRobust}. The estimator is given by 
$$
\widehat{R}_{\text{DR}} = \frac{1}{n}\sum_{i=1}^{n} \left( \mathbb{E}_{\mathbf{a} \sim \pi_*(\mathbf{a}\,|\,\mathbf{u}_i)}[\hat{\eta}{(\mathbf{u}_i,\mathbf{a})]} + 
w_i (r_i - \hat{\eta}{(\mathbf{u}_i,\mathbf{a}_i)})\right).
$$ 
It has been proved to be unbiased if at least one of the estimators, $\hat{\eta}$ and $\pi_*/\pi_0$ is correctly specified; see, \eg, \citet{Cassel76:DR,Dudik11:DoublyRobust}.
\end{paragraph} 

\begin{paragraph}{Slate estimator.}
The \texttt{Slate} estimator, proposed for recommendation systems, makes use of the structure within a recommendation (= action) by assuming a certain linearity assumption on the regression function with respect to the recommendation \citep{Swaminathan17:Slate}.
More precisely, \citet{Swaminathan17:Slate} consider a recommendation system in which an action ${\bf a} \in \mathcal{A}$ is an ordered list (called {\em slate}) of $K \in \mathbb{N}$ items chosen from $M \in \mathbb{N}$ possible items. 
Let $\mathbf{1}_{\mathbf{a}} \in \mathbb{R}^{\mathit{KM}}$ be the indicator vector whose $(k, m)$-th element is $1$ if ${\bf a}$ contains the item $m \in \{1,\dots,M\}$ in the slot $k \in \{1,\dots,K\}$,  and 0 otherwise.  
\citet[Assumption 1]{Swaminathan17:Slate} then model the regression function $\eta({\bf u}, {\bf a})$ as a linear function of this indicator vector: $\eta({\bf u}, {\bf a}) = {\bf w}_{\bf u}^\top {\bf 1}_{\bf a}$, where ${\bf w}_{\bf u}$ is an unknown feature vector of the context ${\bf u}$ (Note that ${\bf w}_u$ can be a nonlinear function of ${\bf u} \in \mathcal{U}$).
Under this assumption, the authors derive the slate estimator as
$$
\widehat{R}_{\text{slate}} = \frac{1}{n}\sum_{i=1}^{n} r_i \cdot \mathbf{q}_{\mathbf{u}_i}^\top \Gamma_{ \mathbf{u}_i }^\dagger\mathbf{1}_{\mathbf{a}_i} ,
$$ 
where $\Gamma_{ \mathbf{u}_i }^\dagger$ is the Moore-Penrose pseudoinverse of the matrix $\Gamma_{ \mathbf{u}_i } := \mathbb{E}_{\mathbf{a} \sim \pi_*(\mathbf{a}\,|\,\mathbf{u}_i)}[\mathbf{1}_{\mathbf{a}} \mathbf{1}_{ \mathbf{a} }^\top ] \in \mathbb{R}^{KM \times KM}$, and $\mathbf{q}_{\mathbf{u}_i} := \mathbb{E}_{\mathbf{a} \sim \pi_*(\mathbf{a}\,|\,\mathbf{u}_i)} [\mathbf{1}_{ \mathbf{a} }] \in \mathbb{R}^{KM}$. 
Thanks to the linearity assumption, the slate estimator may enjoy a lower variance than the \texttt{wIPS} estimator, while the assumption may also lead to a non-vanishing bias if it does not hold.
\end{paragraph} \\

For the \texttt{CME}, we use a kernel defined as $k((\mathbf{u},\mathbf{a}),(\mathbf{u}',\mathbf{a}')) := k_{\mathcal{U}}(\mathbf{u},\mathbf{u}') k_{\mathcal{A}}(\mathbf{a},\mathbf{a}')$ where $k_{\mathcal{U}}(\mathbf{u},\mathbf{u}') := \exp\left( -\|\mathbf{u}-\mathbf{u}'\|_2^2/2\sigma_{u}^2\right)$ and $k_{\mathcal{A}}(\mathbf{a},\mathbf{a}') := \exp\left( -\|\mathbf{a} - \mathbf{a}' \|_2^2/2\sigma_{a}^2\right)$.
For this experiment, the linear kernel  $\ell(r,r') := r r'$ is used as a reward kernel since we only compare the estimation of the expected reward. 
The regularization parameter $\varepsilon$ is selected by the cross validation procedure in Appendix \ref{sec:model-selection}, while we determined $\sigma_u$ and $\sigma_a$ by the median heuristic, \ie, $\sigma_{u}^2 = \mathrm{median}\{\|\mathbf{u}_i- \mathbf{u}_j\|_2^2\}_{1 \leq i < j \leq n}$ and $\sigma_{a}^2 = \mathrm{median}\{\|\mathbf{a}_i- \mathbf{a}_j\|_2^2\}_{1 \leq i < j \leq n}$.  

Before proceeding, we point out here a connection between our approach and the direct method. Assume that we use kernel ridge regression to obtain the approximation $\hat{\eta}$ of the regression function $\eta$ using the logged data: $\hat{\eta} ({\bf u}, {\bf a}) =  \mathbf{r}^\top (\mathbf{K} + n\epsilon I)^{-1}\widetilde{\mathbf{k}}(\mathbf{u}, \mathbf{a})$, where $\mathbf{r} := (r_1,\ldots,r_n)^\top \in \mathbb{R}^n$ and $\widetilde{\mathbf{k}}(\mathbf{u}, \mathbf{a}) := ( k((\mathbf{u}_j, \mathbf{a}_j), (\mathbf{u}, \mathbf{a})))_{j=1}^n \in \mathbb{R}^n$
Then, the estimate of the direct method can be related to the CME estimate as
    \begin{eqnarray}
    \widehat{R}_{\text{DM}} &=& \frac{1}{n}\sum_{i=1}^{n} \mathbb{E}_{\mathbf{a} \sim \pi_*(\mathbf{a}\,|\,\mathbf{u}_i)}[\hat{\eta}(\mathbf{u}_i,\mathbf{a})]  \approx \frac{1}{n}\sum_{i=1}^{n}\hat{\eta}(\mathbf{u}_i,\mathbf{a}^{*}_i) \nonumber \\
    &=& \frac{1}{n}\sum_{i=1}^{n} \mathbf{r}^\top (\mathbf{K} + n\epsilon I)^{-1}\widetilde{\mathbf{k}}(\mathbf{u}_i, \mathbf{a}^{*}_i) = \mathbf{r}^\top (\mathbf{K} + n\epsilon I)^{-1} \frac{1}{n}\sum_{i=1}^{n} \widetilde{\mathbf{k}}(\mathbf{u}_i, \mathbf{a}^{*}_i) \label{eq:krr},
    \end{eqnarray}
        where the approximation in the first line is a Monte Carlo approximation based on a single draw $\mathbf{a}_i^*$ from the target policy $\pi_*(\mathbf{a}\,|\,\mathbf{u}_i)$ for each $i$.
    As we can see from \eqref{eq:krr}, the estimate has the same form as the CME estimate given in Algorithm 2  when the output kernel $\ell$ is a linear kernel, i.e., when we are only interested in the expected reward. 

In this sense, the CME estimate can be interpreted as the direct method with kernel ridge regressor. 
Note that the kernel ridge regression is a \emph{nonparametric} method (as long as the RKHS of the kernel on covariates is infinite dimensional such as the RKHS of the Gaussian kernel), and thus less prone to the effects of covariate shift. 
In fact, our convergence results in Section \ref{sec:theory-main} show that the CME is consistent and thus asymptotically unbiased. 
This explains why our method, even if it can be related to the direct method, works well in the off-policy evaluation task compared to the direct method using a parametric model (as we will show shortly).



\subsubsection{Simulated Data}
\label{sec:simulated-data}


We consider the following setting for our simulation experiment. When a user visits a website, the system provides a recommendation as an ordered list of $K \in \mathbb{N}$ items out of $M \in \mathbb{N}$ available items to that user. 
Each item $m \in \{1,\dots,M\}$ is represented by a feature vector ${\mathbf{v}_m} \in \mathbb{R}^d$ generated randomly as  ${\mathbf{v}_m} \sim \mathcal{N}(0, \id_{d})$, where $d \in \mathbb{N}$.
Hence, a recommendation is an ordered list $\mathbf{a} = (\mathbf{v}_{m_1}, \mathbf{v}_{m_2}, ..., \mathbf{v}_{m_K}) \in \mathbb{R}^{d \times K}$, where $m_1, m_2, \dots, m_K \subset \{1,\dots,M\}$.
Likewise, each user $j \in \{1,\ldots,N\}$ has a feature vector ${\mathbf{u}_j} \in \mathbb{R}^d$ generated as ${\mathbf{u}_j} \sim \mathcal{N}(0, \id_{d})$, where $N \in \mathbb{N}$ is the number of users. 
The reward from a user is 1 if the user clicks any of the recommended items and 0 otherwise. Specifically, for each $(\mathbf{a}_i,\mathbf{u}_j)$ pair, let $\theta_{ij} = \mathbb{P}(\mathrm{click}\,|\, \mathbf{a}_i,\mathbf{u}_j) = 1/(1 + \exp(-\overline{\mathbf{a}}_i^{\top}\mathbf{u}_j + \epsilon_{ij}))$ be the probability of a click, where $\overline{\mathbf{a}}_i$ is the mean vector of the item vectors listed in $\mathbf{a}_i$, and $\epsilon_{ij} \sim \mathcal{N}(0,1)$ is an independent noise.  
The reward from user $j$ receiving recommendation $\mathbf{a}_i$ is defined as $r_{ij} \sim \mathrm{Bernoulli}(\theta_{ij})$.

\begin{figure*}[t!]
\vspace{-1.5em}
\centering
\subfigure[Policy shift]{
\centering
\includegraphics[width=0.33\textwidth]{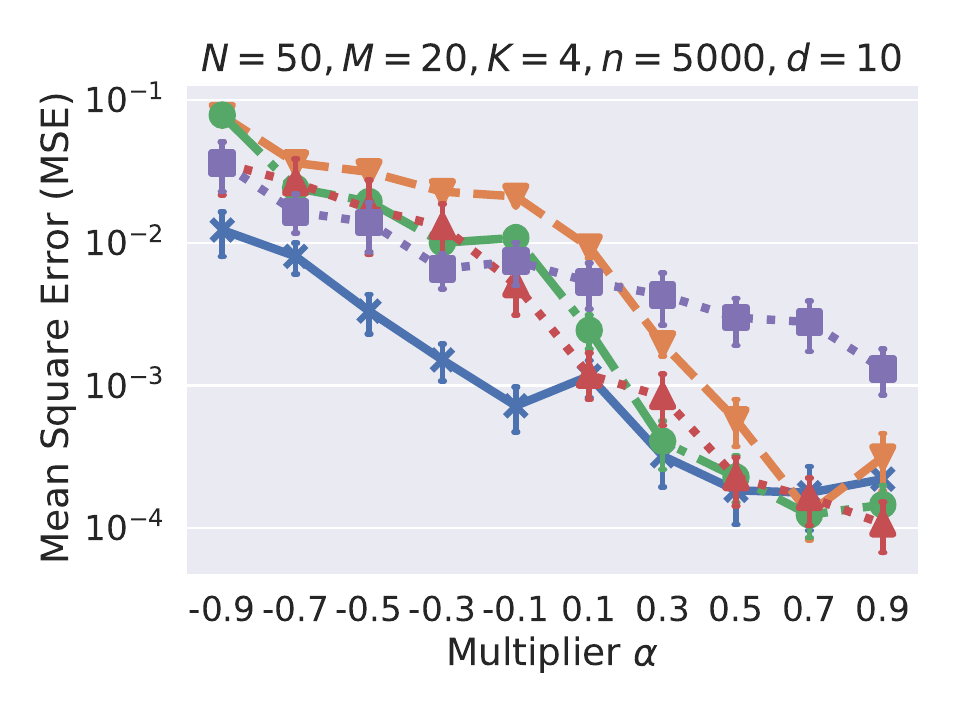}
\label{subfig:policy-shift}
}%
\subfigure[Context dimension]{
\centering
\includegraphics[width=0.33\textwidth]{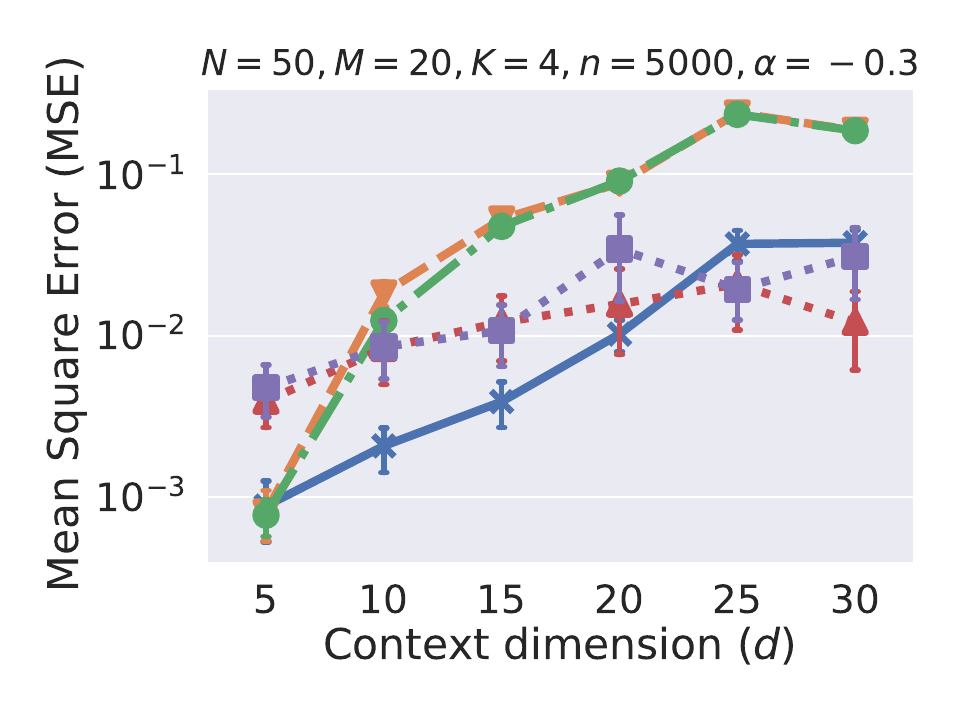}
\label{subfig:context-dim}
}%
\subfigure[Item size]{
\centering
\includegraphics[width=0.33\textwidth]{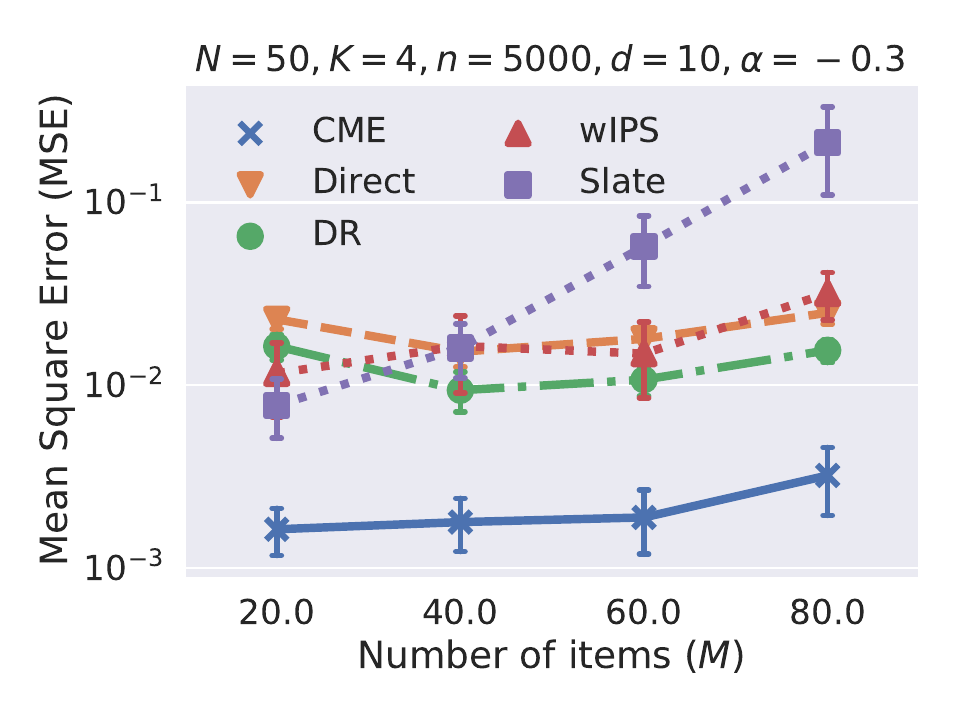}
\label{subfig:reco-size}
}%
\\
\subfigure[Users]{
  \includegraphics[width=0.33\textwidth]{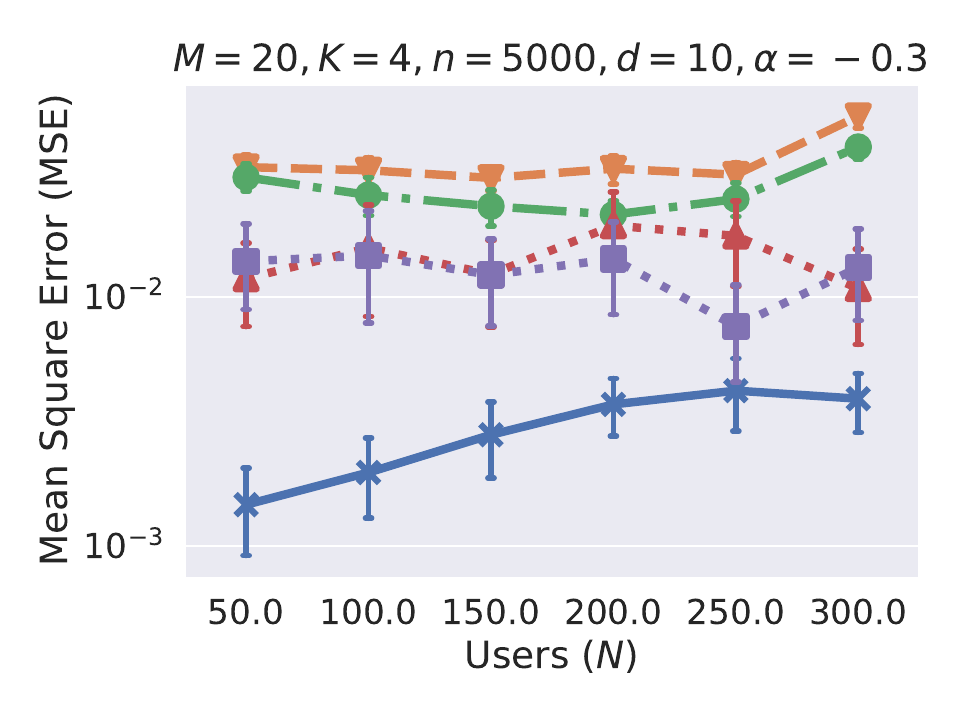}
  \label{subfig:user-size}
}%
\subfigure[Recommended items]{
  \includegraphics[width=0.33\textwidth]{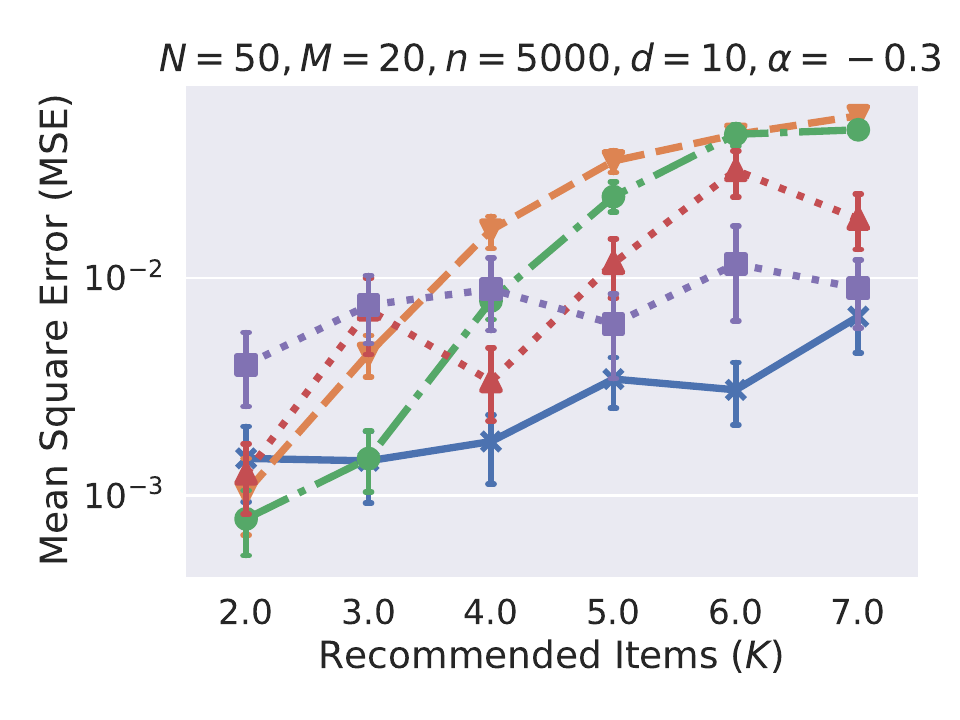}
  \label{subfig:recom-size}
}%
\subfigure[Sample size]{
  \includegraphics[width=0.33\textwidth]{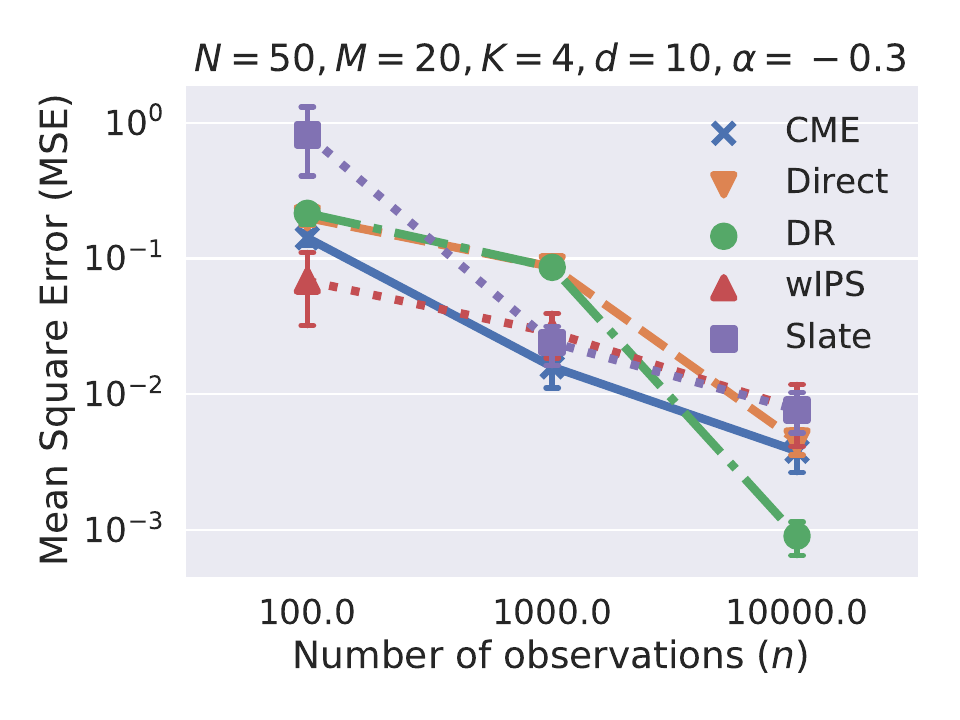}
  \label{subfig:sample-size}
}%
\vspace{-1.0em}
\caption{Mean square error (MSE) of the expected reward estimated by different estimators as we vary the value of \subref{subfig:policy-shift} the multiplier $\alpha$, \subref{subfig:context-dim} the context dimension $d$, \subref{subfig:reco-size} the number of available items $M$, \subref{subfig:user-size} the number of users $N$, \subref{subfig:recom-size} the number of recommended items $K$, and \subref{subfig:sample-size} the number of observations $n$. Each error bar represents a 95\% confidence interval.}
\label{fig:simulated-results}
\end{figure*}


In this experiment, we consider the following policy setup. For each user $j$, a policy $\pi ({\bf a} | {\bf u})$ generates a list ${\bf a} = (\mathbf{v}_{m_1}, \mathbf{v}_{m_2}, ..., \mathbf{v}_{m_K})$ of $K$ recommended items by sampling without replacement with respect to a multinomial distribution over all items. The probability of item $l \in \{1,\dots,M\}$ being selected for user $j$ is $\exp(\mathbf{b}_j^\top\mathbf{v}_l)/\sum_{k=1}^{M}\exp(\mathbf{b}_j^\top\mathbf{v}_k)$, where $\{\mathbf{b}_j\}_{j=1}^N$ are parameter vectors of the policy $\pi ({\bf a} | {\bf u})$.
Note that we obtain an optimal policy if $\mathbf{b}_j = \mathbf{u}_j$ for all $j \in \{1, \dots, N\}$. To construct initial policy $\pi_0  ({\bf a} | {\bf u})$ and target policy $\pi_*  ({\bf a} | {\bf u})$, we first randomly generate user feature vectors $\mathbf{u}_1,\ldots,\mathbf{u}_N$. Then, for the target policy $\pi_*$, we set $\mathbf{b}^*_j = \mathbf{p}_j^\top \mathbf{u}_j$ for $j=1,\ldots,N$ where $\mathbf{p}_j := (p_{jk})_{k=1}^d$ with $p_{jk} \sim \mathrm{Bernoulli}(0.5)$. That is, the parameter vector $\mathbf{b}^*_j$ is equal to the user feature vector with about half of its entries randomly set to zero. 
For the initial policy $\pi_0$, we set $\mathbf{b}_j = \alpha\mathbf{b}_j^*$ where $\alpha \in [-1,1]$.
The parameter $\alpha$ controls how similar the policies are. If $\alpha=1$, we obtain $\pi_0 = \pi_*$, whereas $\pi_0$ and $\pi_*$ differ the most when $\alpha=-1$. 

We generate two datasets $\mathcal{D}_{\mathrm{init}} = \{(\mathbf{u}_i,\mathbf{a}_i,r_i)\}_{i=1}^n$ and $\mathcal{D}_{\mathrm{target}} = \{(\mathbf{u}^*_i,\mathbf{a}^*_i, r^*_i)\}_{i=1}^n$ using $\pi_0({\bf a}|{\bf u})$ and $\pi_*({\bf a}|{\bf u})$, respectively, where $\mathbf{u}_i = \mathbf{u}_i^*$ for $	i=1,\ldots,n$. 
Note that the target rewards $r^*_1,r^*_2,\ldots,r^*_n$ are only used for evaluation. Our task is to estimate the expected reward of the target policy from the remaining information. 
We perform 5-fold CV over parameter grids, \ie, the number of hidden units $n_h \in \{50,100,150,200\}$ for the \texttt{Direct} and \texttt{DR} estimators, and the regularization parameter $\varepsilon\in\{10^{-8},\ldots,10^0\}$ for our \texttt{CME}.
We repeat the experiments 30 times independently to obtain the mean square errors (MSE) and their 95\% confidence intervals in the estimation of the expected reward for each estimator.


We investigate the behavior of different estimators as we vary different
experimental conditions including the degree of difference between initial and target policies ($\alpha$), the context dimensionality ($d$), the number of items ($M$), the number of users ($N$), the number of recommended items ($K$), and the number of observations ($n$). Figure \ref{fig:simulated-results} depicts the experimental results (note that vertical axis is in log scale). 
In brief, we find that 
\begin{enumerate*}[label=\itshape\alph*\upshape)]
\item the performance of all estimators degrade as the difference between $\pi_0$ and $\pi_*$ increases (\ie, as $\alpha$ tends to $-1$), but the \texttt{CME} is least susceptible to this difference,
\item the \texttt{Slate} estimator does not perform well in this setting because its linearity assumption does not hold,
\item all estimators deteriorate as the context dimension increases, but the effect appears to be more pronounced for the \texttt{Direct}, \texttt{DR}, and \texttt{CME} estimators than for the \texttt{wIPS} and \texttt{Slate} estimators as they do not rely directly on the covariates, \item the opposite effect is observed if we increase the number of available items $M$, as illustrated in Figure \ref{subfig:reco-size}, and
\item the \texttt{CME} estimator achieves better performance than other estimators in most experiments.
\end{enumerate*}


\subsubsection{Real Data}

For our real data experiment, we use the data from the Microsoft Learning to Rank Challenge
dataset (MSLR-WEB30K) \citep{Qin13:MSLR} and treat them as an off-policy evaluation problem. We follow the same experiment setting as described in \citet[Section 4.1]{Swaminathan17:Slate}. The data contains a set of queries and the corresponding URLs. 
Each pair of query $q$ and URL $u$ is represented by a feature vector $f_{q,u}$ and accompanied by a relevance judgment $\rho(q, u) \in \{0, ..., 4\}$. 
We consider the expected reciprocal rank (ERR) \citep{Chapelle09:ERR} as our reward function, which is defined as 
${\rm ERR}(q, u) := \sum_{k=1}^{K}\frac{1}{k}\prod_{j=1}^{k-1} (1 - R(q, u_j))R(q,u_k)$,
where $R(q,u) := \frac{2^{{\rho(q,u)}}-1}{2^{\mathrm{maxrel}}}$ with $\mathrm{maxrel} := 4$.
In order to obtain distinct initial and target policies $\pi_0({\bf a} | {\bf u})$ and $\pi_*({\bf a} | {\bf u})$, the feature vector $f_{q,u}$ is split into URL features $f_{q,u}^{\text{url}}$ and body features $f_{q,u}^{\text{body}}$, which are used to train two regression models to predict the relevance score $\rho(q, u)$: a Lasso regression model is trained to predict $\rho(q, u)$ from $f_{q,u}^{\text{url}}$ (denoted by $\mathrm{lasso}_{\text{url}}$), and a regression tree model is trained  to predict $\rho(q, u)$ from $f_{q,u}^{\text{body}}$ (denoted by $\mathrm{tree}_{\text{body}}$). These two regression models are then used in the initial and target policies as will be described below.


\begin{figure}[t!]
\vspace{-1.5em}
\centering
\includegraphics[width=0.95\textwidth]{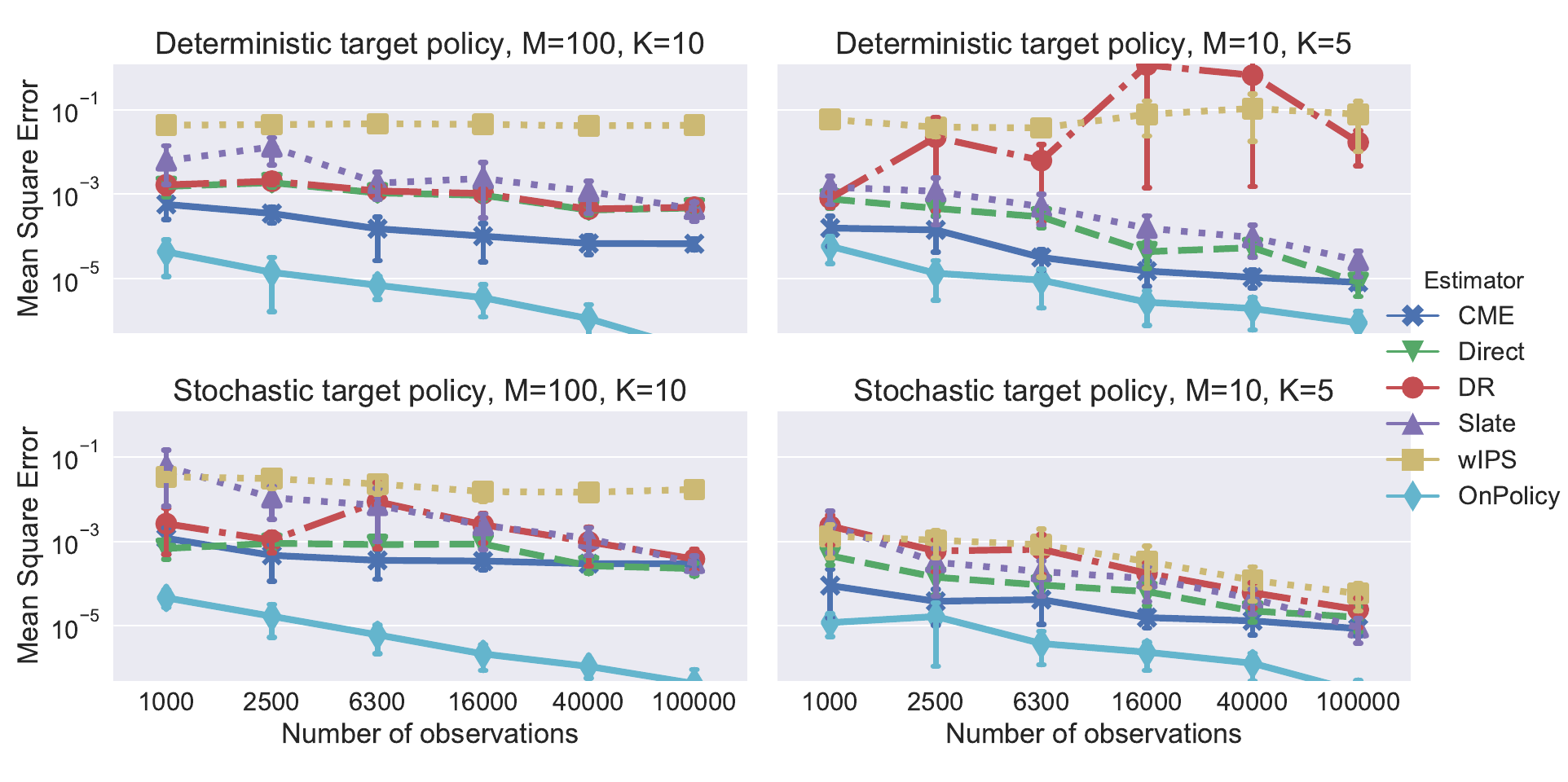}
\vspace{-1.0em}
\caption{The performance of different estimators on the MSLR-WEB30K dataset.}
\label{fig:real-results}
\end{figure}

In order to generate logged data $\mathcal{D}_{\text{init}}$, we first sample a query $q$ uniformly from the dataset, and select top $M$ candidate URLs based on the relevance scores predicted by the $\mathrm{tree}_{\text{body}}$ model. We then generate $K$ recommended URLs out of these top $M$ candidates using an initial policy $\pi_0$ and compute its corresponding reward. The initial policy is a {\em{stochastic}} policy which recommends $K$ URLs by sampling without replacement according to a multinomial distribution parameterized by $p_{\alpha}(u|q) \propto 2^{-\alpha[\log_2 \mathrm{rank}(u, q)]}$, where $\mathrm{rank}(u, q)$ is the ERR of the relevance score $\rho(q,u)$ predicted by the $\mathrm{tree}_{\text{body}}$ model and $\alpha \geq 0$ is an exploration rate parameter. For target data $\mathcal{D}_{\text{target}}$, we consider both {\em stochastic} and {\em deterministic} target policies. 
The {\em stochastic} target policy $\pi_*({\bf a} | {\bf u})$ is similar to the initial policy described earlier except that it employs $\mathrm{lasso}_{\text{url}}$ model for the predicted relevance scores, but this makes the two policies distinct; their top-10 rankings are only overlapping by 2.5 URLs, on average. On the other hand, the {\em deterministic} target policy directly selects top-K URLs ranked by the predicted relevance scores obtained from the $\mathrm{lasso}_{\text{url}}$ model. In this experiment, we set the exploration rate parameter $\alpha = 1$ for the stochastic initial policy, and set $\alpha = 2$ for the stochastic target policy.

We compare our estimator (\texttt{CME}) with the benchmark estimators \texttt{Direct}, \texttt{wIPS}, \texttt{DR} and \texttt{Slate}. In addition, we include the \texttt{OnPolicy} method as a baseline, which estimates rewards directly from the {\em target} policies (and thus, this baseline should always perform the best). 
To accelerate the computation of the \texttt{CME}, we make use of the Nystr\"om approximation method \citep{Williams01:Nystrom}.
We repeat the experiments 10 times independently to obtain the mean square errors (MSEs) and their 95\% confidence intervals in the estimation of the expected reward for each estimator. 


Figure \ref{fig:real-results} depicts the results. 
In short, our \texttt{CME} dominates other estimators in most of experimental conditions (note that the vertical axis is in log scale, so the margins are significantly large). The \texttt{wIPS} clearly suffers from high variance, especially in the deterministic target policy. The reason is that, in the deterministic setting, the propensity score adjustment requires that the treatments picked by logged and target policies match exactly. Otherwise, the propensity ratio $\pi_*({\bf a}|{\bf u})/\pi_0({\bf a}|{\bf u})$ will be zero. The exact match is likely not to happen in practice and leads to higher variance in estimation of rewards. 
The \texttt{Slate}, \texttt{Direct} and \texttt{CME} are relatively robust across different conditions. The \texttt{Direct} method and \texttt{CME} perform particularly well when sample size is small, regardless of the action space, while the \texttt{Slate} estimator is less sample-efficient, especially in the large action space.

\section{Discussion}
\label{sec:discussion}

This paper presents a general-purpose kernel mean representation of counterfactual distributions called the counterfactual mean embedding (CME). 
It draws insights and tools from kernel methods in machine learning and the potential outcome framework in causal inference. 
We show that our estimator of counterfactual distributions exhibits appealing theoretical properties, and also serves as a practical tool for causal inference. 
Ultimately, we hope that our work will be useful not only for researchers in disciplines that rely on the potential outcome framework, such as social and biomedical sciences, but also for researchers in machine learning and statistics to develop novel methodology for counterfactual inference, since several important open questions still remain: \eg, the use of high-order moments of counterfactual distributions, and how to handle a hidden confounder and an instrumental variable.

One promising application of our framework is in generating a sample from the counterfactual  distribution.
For instance, neuroscientists can visualize the fMRI images of subjects under alternative setups without explicitly conducting invasive experiments. In this case, the outcome variable corresponds to an fMRI image. Let $G_{\bm{\theta}}$ be a generative model over the outcome parametrized by a parameter vector $\bm{\theta}$. The choice of $G_{\bm{\theta}}$ can range from a mixture of Gaussians to deep generative models, \eg, generative adversarial networks (GAN). An estimate of the counterfactual distribution, denoted by $G^*_{\bm{\theta}}$, can be obtained via an optimization problem: $G_{\bm{\theta}}^* = \arg\min_{\bm{\theta}} \left\| \muh_{Y\langle 0|1 \rangle} - \muh_{G_{\bm{\theta}}}\right\|_{\hbspf}^2$, where $\muh_{Y\langle 0|1 \rangle}$ is our CME estimate and  $\muh_{G_{\bm{\theta}}}$ denotes the mean embedding of $G_{\bm{\theta}}$ in the RKHS $\hbspf$.  Counterfactual sample generation is ubiquitous for qualitative analysis in many application domains. We leave it as an open problem to future research.

\acks{
We express our sincere gratitude to the Action Editor and the anonymous reviewers, whose comments greatly helped improve the quality of the paper.
We also thank Ricardo Silva, Joris Mooij, Adith Swaminathan, Evan Robin, David Lopez-Paz, Wittawat Jitkrittum, Kenji Fukumizu, and Bernhard Sch\"olkopf for fruitful discussions. Krikamol Muandet would like to acknowledge fundings from the Thailand Research Fund Grant No. MRG6080206 and additional funding from the Faculty of
Science, Mahidol University.
Motonobu Kanagawa has been partially supported by the European Research Council (StG project PANAMA). 
Sorawit Saengkyongam is supported by a research grant (18968) from VILLUM FONDEN.
}

\newpage
\appendix



\section{Possible Extensions for Off-Policy Evaluation} \label{sec:OPE-extension}

Here we describe possible extensions of the proposed approach to the off-policy evaluation task described in Section \ref{sec:OPE-algorithm} (Algorithm \ref{alg:kpe}). 
As mentioned there, Algorithm \ref{alg:kpe} only generates one action ${\bf a}_j^*\sim \pi_*({\bf a} | {\bf u}_j^*)$ for each ${\bf u}_j^*$, which does not fully exploit the information of the target policy $\pi_*({\bf a} | {\bf u}_j^*)$.
We show a possible approach to using more information from the target policy, thereby improving the quality of the algorithm.

First, notice that the weight vector ${\bm \beta} \in \mathbb{R}^n$ in Algorithm \ref{alg:kpe} depends on the sample $({\bf x}_j' )_{j = 1}^m = ( {\bf u}_j^*, {\bf a}_j^* )_{j=1}^m$ only through the vector $\tilde{\bf K} {\bf 1}_m \in \mathbb{R}^m$, where $\tilde{\bf K} = (k(\x_i, \x'_j)) \in \mathbb{R}^{n \times m}$ and ${\bf 1}_m = \frac{1}{m}(1,\dots,1)^\top \in \mathbb{R}^{m}$.
This vector can be written as  
\begin{equation} \label{eq:OPE-vector-expansion}
\tilde{\bf K} {\bf 1}_m = ( \frac{1}{m} \sum_{j = 1}^m k(\x_i, \x'_j) )_{i=1}^n = ( \hat{\mu}_{X_1} (\x_i) )_{i=1}^n \in \mathbb{R}^n,
\end{equation}
where $(\x_i)_{i=1}^n = ({\bf u}_i, {\bf a}_i)_{i=1}^n$ are from the logged data, and $\hat{\mu}_{\pp{P}_{X_1}}$ is an empirical approximation of the mean embedding $\mu_{\pp{P}_{X_1}}$ of the covariate distribution $\pp{P}_{X_1}$, given by
\begin{equation} \label{eq:OPE-emp-kmean}
    \hat{\mu}_{X_1} = \frac{1}{m} \sum_{j=1}^m k(\cdot, {\bf x}_j'), \quad \mu_{X_1} =  \int k(\cdot, \x)d\pp{P}_{X_1}(\x).
\end{equation}
This implies that the role of the sample $({\bf x}_j' )_{j = 1}^m$ is essentially to approximate the kernel mean $\mu_{\pp{P}_{X_1}}$.  
In fact, the quality of the CME estimator (based on which Algorithm \ref{alg:kpe} is constructed) depends on the sample  $({\bf x}_j')_{j = 1}^m$ only through the the approximation error  $\| \hat{\mu}_{ X_1 }  - \mu_{ X_1 }  \|_\hbspf$ (see \eg, the proof of Theorem \ref{theo:estimation-error-rate}).

Thus, Algorithm \ref{alg:kpe} may be improved by constructing a better approximation, say $\check{\mu}_{ X_1 }$, of the kernel mean $\hat{\mu}_{ X_1 }$, and replace \eqref{eq:OPE-vector-expansion} in the computation of the weight vector ${\bm \beta}$ by the evaluations of this new approximation:
$$
{\bm \beta} := ({\bf K} + n \varepsilon {\bf I})^{-1}  {\bm v} \in \mathbb{R}^n, \quad {\bm v} := ( \check{\mu}_{X_1} (\x_i) ) \in \mathbb{R}^n,
$$
where ${\bf K} := (k(\x_i, \x_j))_{i,j = 1}^n \in \mathbb{R}^{n \times n}$.

To construct $\check{\mu}_{X_1}$, recall that the kernel $k$ is given as a product kernel $k(\x, \x') = k_{\mathcal{A}}({\bf a}, {\bf a}')   k_{\mathcal{U}}({\bf u}, {\bf u}')$ for $\x = ({\bf u}, {\bf a})$, $\x = ({\bf u}', {\bf a}')$, and rewrite the kernel mean $\mu_{X_1}$ as
\begin{align*}
 \mu_{X_1} (\x) = \mu_{\pi_*} ({\bf u}, {\bf a}) 
 & = \int_{\mathcal{U}}  \int_{\mathcal{A}}  k_{\mathcal{A}}({\bf a}, {\bf a}')   k_{\mathcal{U}}({\bf u}, {\bf u}')  \dd \pi_*({\bf a}' | {\bf u}')  \dd q_*({\bf u}' ) \\
& = \int_{\mathcal{U}}  \left( \int_{\mathcal{A}} k_{\mathcal{A}}({\bf a}, {\bf a}')  \dd\pi_*({\bf a}' | {\bf u}' ) \right)  k_{\mathcal{U}}({\bf u}, {\bf u}') \dd q_*({\bf u}')   \\
& \approx   \frac{1}{m} \sum_{j = 1}^m \left( \int_{\mathcal{A}} k_{\mathcal{A}}({\bf a}, {\bf a}')  \dd\pi_*({\bf a}' | {\bf u}^*_j ) \right)  k_{\mathcal{U}}({\bf u}, {\bf u}^*_j).
\end{align*}
Thus, if we can approximate the integral in the last expression accurately, we can obtain a good approximation for the kernel mean. 

One approach is to generate $M > 1$ actions  ${\bf a}_{j 1}, \dots, {\bf a}_{jM} \sim \pi_*({\bf a} | {\bf u}_j^*)$ from the target policy for each ${\bf u}_j^*$, and the approximate the integral as 
$$
 \frac{1}{M} \sum_{\nu = 1}^M k_{\mathcal{A}} (  {\bf a},  {\bf a}^*_{j\nu} ) \approx \int_{\mathcal{A}} k_{\mathcal{A}}({\bf a}, {\bf a}')  \dd\pi_*({\bf a}' | {\bf u}^*_j )   
$$
Thus, a new approximation of  $\mu_{X_1}$ may be defined as 
$$
\check{\mu}_{X_1}(\x) :=  \check{\mu}_{\pi_*} ({\bf u}, {\bf a}):=  \frac{1}{m} \sum_{j = 1}^m \left(  \frac{1}{M} \sum_{\nu = 1}^M k_{\mathcal{A}} (  {\bf a},  {\bf a}^*_{j\nu} )   \right)  k_{\mathcal{U}}({\bf u}, {\bf u}^*_j),
$$
where $\x := ({\bf u}, {\bf a})$.
Note that $M = 1$ recovers Algorithm \ref{alg:kpe}.

Another approach is to {\em exactly} compute the integral $\int_{\mathcal{A}} k_{\mathcal{A}}({\bf a}, {\bf a}')  \dd\pi_*({\bf a}' | {\bf u}^*_j )$ when it is possible, and define  a new approximation of  $\mu_{X_1}$ as
$$
\check{\mu}_{X_1}(\x) :=  \check{\mu}_{\pi_*} ({\bf u}, {\bf a}):= \frac{1}{m} \sum_{j = 1}^m \left( \int_{\mathcal{A}} k_{\mathcal{A}}({\bf a}, {\bf a}')  \dd\pi_*({\bf a}' | {\bf u}^*_j ) \right)  k_{\mathcal{U}}({\bf u}, {\bf u}^*_j)  
$$
This essentially is the case of $M = \infty$. For instance, the integral can be computed analytically when the kernel $k_{\mathcal{A}}$ is Gaussian and the target policy $\pi_*({\bf a} | {\bf u})$ is an additive Gaussian noise model of the form $\pi_*({\bf a} | {\bf u}) = \mathcal{N}( {\bf a} | F({\bf u}), \sigma^2 )$ for some function $F: \mathcal{U} \to \mathcal{A}$ and $\sigma^2 > 0$. This way of using an analytic integral for approximating the kernel mean is studied in \cite{nishiyama2020model}; we refer to this paper for details and other examples.

\section{Cross Validation Procedure for Counterfactual Prediction}
\label{sec:model-selection}

Here we describe an approach to cross validation for model selection in counterfactual prediction, focusing on the problem of off-policy evaluation (OPE).  We use below the notation defined in Section \ref{sec:policy-evaluation}. 
Unlike the standard situation in machine learning, performing cross validation directly on the logged data $\mathcal{D} := \{ ({\bf u}_i, {\bf a}_i, r_i) \}_{i=1}^n$ may lead to a biased estimate on the performance measure for counterfactual prediction, resulting in sub-optimal model selection.
This is due to the covariate shift -- the change of the covariate distribution from $\pi_0({\bf u}, {\bf a})$ in the data collection environment to $\pi_*({\bf u}, {\bf a})$ in the target environment. To correct for the bias due to this distributional shift, we propose the following procedure for cross validation. 
A similar cross validation approach has been proposed by \citet{Sugiyama07:CSA}.

Let $\mathcal{M} := \{ 1, \dots, M \}$ be a set of indicators for candidate models. (\eg, each $m \in \mathcal{M}$ may represent a specific choice of the kernel $k$ and regularization constant $\varepsilon$ in our CME estimator.)  
\begin{enumerate}
\item Split $\mathcal{D}$ into $K$ folds: $\mathcal{D}_k = \{({\bf u}_j,{\bf a}_j,r_j)\}_{j=q(k-1)+1}^{qk}$ for $k=1,\ldots,K$ and $q=\lfloor n/K \rfloor$.
\item For each model $m  \in \mathcal{M}$:
\begin{enumerate}
\item For each fold $k=1, 2, \ldots,K$:
\begin{enumerate}
\item Calculate $w_1,\dots,w_q \geq 0$ using propensity scores or covariate matching. 
\item Re-weight the validation reward $\hat{r}_k^* = \sum_{j=1}^{q} w_j r_{q(k-1)+j}$ (\textbf{bias correction}).
\item Use the remaining logged data $\mathcal{D}_{\neg k}$ and validation data $\{(\x^*_j,\s^*_j)\}_{j=q(k-1)+1}^{qk}$ to compute the estimated reward $\hat{r}_k$ and corresponding error $e_k = (\hat{r}_k - \hat{r}_k^*)^2$.
\end{enumerate}
\item Calculate the mean CV error $\bar{e}_p = \frac{1}{K}\sum_{k=1}^K e_k$ (\textbf{variance reduction}).
\end{enumerate}
\item Pick the $m$-th parameter setting whose $\bar{e}_m$ is the smallest.
\end{enumerate}

The algorithm above follows the standard cross validation procedure, except the bias correction step on validation sets. In the bias correction step, we re-weight the sample in the validation set so that the performance estimate computed from this set is unbiased. Nevertheless, the estimate may have high variance, \eg, when the propensity weights are used. This pitfall is alleviated by the variance reduction step.

\section{Proofs for Section \ref{sec:cme}}

\subsection{Proof of Lemma \ref{lemma:observable_outcome_dist}}
\label{sec:proof-lemma-observable-outcome-dist}

\begin{proof}
We only show $\pp{P}_{Y\left<0|0\right>}(\y)  = \pp{P}_{Y_0^* | T} (\y|0)$; the other identity $\pp{P}_{Y\left<1|1\right>}(\y)  = \pp{P}_{Y_1^* | T} (\y|1)$ can be shown similarly. 
First notice that $Y_0 := Y \,|\, (T = 0) = \sum_{t=0}^1 \mathbbm{1}(T=t) Y_t^* \,|\, (T=0) = Y_0^* \,|\, (T = 0)$.
From this and Definition \ref{def:random_variables}, it follows that 
$\pp{P}_{Y\left<0|0\right>}(\y)
= \int  \pp{P}_{Y_0 | X_0} (\y|\x) \dd\pp{P}_{X_0}(\x)
= \int \pp{P}_{Y | T, X} (\y|0,\x) \dd\pp{P}_{X|T}(\x|0) 
= \int \pp{P}_{Y_0^* | T, X} (\y|0,\x) \dd\pp{P}_{X|T}(\x|0) 
= \pp{P}_{Y_0^* | T} (\y|0)$.
\end{proof}

\subsection{Proof of Lemma \ref{lem:causal-interpret}}
\label{sec:proof-lemma-causal-interpret}

\begin{proof}
We only show here $\pp{P}_{Y\langle 0|1 \rangle}=\pp{P}_{Y_0^* | T = 1}$; the other identity $\pp{P}_{Y\langle 1|0 \rangle}=\pp{P}_{Y_1^* | T = 0}$ can be shown similarly. 
We first derive basic identities:
$(a)$ The conditional exogeneity implies that
$\pp{P}_{Y_0^* | T,X} (\y| 1,\x) = \pp{P}_{Y^*_0 | X}(\y|\x) = \pp{P}_{Y_0^* | T,X} (\y| 0,\x)$ for almost every $\x$ with respect to the distribution of $X$;
$(b)$ Recalling that $Y = \mathbbm{1}(T=0) Y^*_0 + \mathbbm{1}(T=1) Y^*_1$, we have $Y\,|\,(T=0) = Y^*_0 \,|\, (T=0)$;
$(c)$ By Definition \ref{def:random_variables}, we have
$ \pp{P}_{Y_0 | X_0}(\y | \x) 
= \pp{P}_{ (Y|T=0) | (X|T=0) }(\y | \x) 
= \pp{P}_{  Y | T,X }(\y |0, \x)$.
Using these, we have
\begin{eqnarray*}
\pp{P}_{Y^*_0 | T}(\y|1) 
&=& \int \pp{P}_{Y^*_0 | T, X}(\y | 1, \x) \dd \pp{P}_{X|T}(\x|1) \stackrel{(a)}{=} \int \pp{P}_{Y^*_0 | T, X}(\y | 0, \x) \dd \pp{P}_{X|T}(\x|1) \\
&\stackrel{(b)}{=}&  \int \pp{P}_{Y | T, X}(\y | 0, \x) \dd \pp{P}_{X|T}(\x|1)  \stackrel{(c)}{=} \int \pp{P}_{Y_0 | X_0}(\y | \x) \dd \pp{P}_{X_1}(\x) = \pp{P}_{Y\langle 0|1 \rangle}(\y),
\end{eqnarray*}
as required.
\end{proof}

\subsection{Proof of Theorem \ref{theo:propensity-unbiased}}

\label{sec:proof-propensity-unbiased}

\begin{proof}
We prove here $\mathbb{E} [  \hat{\mu}_{Y_1^*} ]  =   \mu_{Y_1^*}$; the proof of $\mathbb{E} [\hat{\mu}_{Y_0^*}] =  \mu_{Y_0^*}$ is similar and thus omitted. 
First note that, since $(\x_i,t_i, \y_i )_{i=1}^N$ are i.i.d.~with $(X,T,Y)$ and $m := \sum_{i=1}^N t_i$, we have 
$$
\mathbb{E} [  \hat{\mu}_{Y_1^*} ] =  \mathbb{E} \left[\frac{1}{m} \sum_{i = 1}^N \frac{ t_i \ell(\cdot, \y_i)  }{e(\x_i)} \right] = \mathbb{E} \left[\frac{ T \ell(\cdot, Y)  }{e(X)} \right].
$$
By the definition of $Y$, the conditional exogeneity and the definition of the propensity $e(\x)$, we then have
\begin{eqnarray*}
&&\mathbb{E} \left[\frac{ T \ell(\cdot, Y)  }{e(X)} \right] = \mathbb{E} \left[\frac{ T \ell(\cdot, Y^*_1)  }{e(X)} \right] = \mathbb{E}_X \left[ \mathbb{E} \left[\frac{ T \ell(\cdot, Y^*_1)  }{e(X)} \Big| X \right] \right] \\
&=& \mathbb{E}_X \left[  \frac{ \mathbb{E}[ T | X] \mathbb{E}[ \ell(\cdot, Y^*_1) |X]  }{e(X)}  \right] = \mathbb{E}_X \left[  \frac{ e(X) \mathbb{E}[ \ell(\cdot, Y^*_1) | X]  }{e(X)}  \right] \\
&=& \mathbb{E}_X \left[    \mathbb{E}[ \ell(\cdot, Y^*_1) |X]    \right]  = \mathbb{E}[  \ell(\cdot, Y^*_1)  ] = \mu_{Y_1^*},
\end{eqnarray*}
where $\mathbb{E}_X$ denotes the expectation with respect to $X$.
\end{proof}

\subsection{Proof of Theorem \ref{theo:conv-rate-propensity}}

\label{sec:proof-conv-rate-propensity}.

\begin{proof}
We derive the convergence rate of $\hat{\mu}_{Y_1^*}$; the rate of $\hat{\mu}_{Y_0^*} $ can be derived in a similar way, and thus is omitted.   
Let $\tilde{Y}_1^*$ be an independent copy of $Y_1^*$. 
\begin{eqnarray}
&& \mathbb{E}\left[ \left\| \hat{\mu}_{Y_1^*} - \mu_{Y_1^*} \right\|_{\hbspf}^2 \right] = \mathbb{E}\left[ \left\|  \frac{1}{m} \sum_{i = 1}^N \frac{ t_i \ell(\cdot, \y_i)  }{e(\x_i)}  - \mu_{Y_1^*} \right\|_{\hbspf}^2 \right] \nonumber  \\
&=& \underbrace{\mathbb{E}\left[  \frac{1}{m^2} \sum_{i,j} \frac{ t_i t_j \ell(\y_i, \y_j)  }{e(\x_i) e(\x_j)}   \right]}_{(A)} - 2 \underbrace{ \mathbb{E}\left[  \frac{1}{m} \sum_{i} \frac{ t_i \mathbb{E}_{Y_1^*} [\ell(\y_i, Y_1^*) ]  }{e(\x_i)}   \right] }_{(B)} + \mathbb{E}[ \ell(Y_1^*, \tilde{Y}_1^*) ].  \label{eq:proof-propensity-main}
\end{eqnarray}
We first deal with the term $(A)$. It can be expanded as
\begin{eqnarray*}
(A) =  \frac{1}{m^2} \mathbb{E}\left[   \sum_{i \not= j} \frac{ t_i t_j \ell(\y_i, \y_j)  }{e(\x_i) e(\x_j)}   \right] +   \frac{1}{m^2} \mathbb{E}\left[   \sum_{i} \frac{ t_i^2 \ell(\y_i, \y_i)  }{e^2(\x_i) }   \right]. 
\end{eqnarray*}
Let $(\tilde{X}, \tilde{T}, \tilde{Y}_0^*, \tilde{Y}_1^*)$ be an independent copy of $(X,T,Y_0^*, Y_1^*)$, and write $\tilde{Y} = \mathbbm{1}(\tilde{T}=0) \tilde{Y}_0^* + \mathbbm{1}(\tilde{T}=1) \tilde{Y}_1^*$. 
Since in the first term of $(A)$, $(\x_i,t_i, \y_i)$ and $(\x_j,t_j, \y_j)$  are independent but distributed as $(X,T,Y)$, the first term can be written as 
\begin{eqnarray*}
&& \frac{1}{m^2} \mathbb{E}\left[   \sum_{i \not= j} \frac{ T \tilde{T} \ell(Y, \tilde{Y})  }{e(X) e(\tilde{X})}   \right] =  \frac{m-1}{m}\mathbb{E}\left[   \frac{ T \tilde{T} \ell(Y, \tilde{Y})  }{e(X) e(\tilde{X})}   \right].
\end{eqnarray*}
For the right hand side, we have 
\begin{eqnarray*}
&& \mathbb{E}\left[   \frac{ T \tilde{T} \ell(Y, \tilde{Y})  }{e(X) e(\tilde{X})}   \right] 
\stackrel{(a)}{=} \mathbb{E}\left[   \frac{ T \tilde{T} \ell(Y_1^*, \tilde{Y}_1^*)  }{e(X) e(\tilde{X})}   \right] =  \mathbb{E}_{ \tilde{X}, \tilde{T}, \tilde{Y}_1^* } \left[ \frac{\tilde{T}}{e(\tilde{X})} \mathbb{E}_{X,T,Y_1^*}\left[   \frac{ T \ell(Y_1^*, \tilde{Y}_1^*)  }{e(X) }  \mid  \tilde{Y}_1^*   \right]  \right] \\
&=& \mathbb{E}_{ \tilde{X}, \tilde{T}, \tilde{Y}_1^* } \left[ \frac{\tilde{T}}{e(\tilde{X})} \mathbb{E}_X \left[   \frac{ \mathbb{E}_{T,Y_1^*}[ T\ell(Y_1^*, \tilde{Y}_1^*) \mid X  ]  }{e(X) }  \mid   \tilde{Y}_1^*   \right]  \right] \\  
&\stackrel{(b)}{=} & \mathbb{E}_{ \tilde{X}, \tilde{T}, \tilde{Y}_1^* } \left[ \frac{\tilde{T}}{e(\tilde{X})} \mathbb{E}_X \left[   \frac{ \mathbb{E}_T[ T \mid X  ] \mathbb{E}_{Y_1^*}[ \ell(Y_1^*, \tilde{Y}_1^*) \mid X  ]  }{e(X) }  \mid   \tilde{Y}_1^*   \right]  \right]  \\
&\stackrel{(c)}{=} & \mathbb{E}_{ \tilde{X}, \tilde{T}, \tilde{Y}_1^* } \left[ \frac{\tilde{T}}{e(\tilde{X})} \mathbb{E}_X \left[  \mathbb{E}_{Y_1^*}[ \ell(Y_1^*, \tilde{Y}_1^*) \mid X  ]   \mid  \tilde{Y}_1^*  ] \right]  \right] = \mathbb{E}_{ \tilde{X}, \tilde{T}, \tilde{Y}_1^* } \left[ \frac{\tilde{T}}{e(\tilde{X})}  \mathbb{E}_{Y_1^*}[ \ell(Y_1^*, \tilde{Y}_1^*)  ]   \mid  \tilde{Y}_1^* ]   \right] \\
&=& \mathbb{E}_{ \tilde{X} } \left[ \frac{1}{e(\tilde{X})}  \mathbb{E}_{\tilde{T}, \tilde{Y}_1^* } \left[  \tilde{T}  \mathbb{E}_{Y_1^*}[ \ell(Y_1^*, \tilde{Y}_1^*)  ]   \mid  \tilde{Y}_1^* ] \mid \tilde{X}  \right]  \right] \stackrel{(d)}{=}  \mathbb{E}_{ \tilde{X} } \left[ \frac{1}{e(\tilde{X})}   \mathbb{E}_{\tilde{T}} [ \tilde{T} | \tilde{X}]  \mathbb{E}_{Y_1^*, \tilde{Y}_1^*}[ \ell(Y_1^*, \tilde{Y}_1^*)   \mid \tilde{X} ]    \right] \\
&\stackrel{(e)}{=}& \mathbb{E}_{ \tilde{X} } \left[ \mathbb{E}_{Y_1^*, \tilde{Y}_1^*}[ \ell(Y_1^*, \tilde{Y}_1^*)   \mid \tilde{X} ]    \right] = \mathbb{E}_{Y_1^*, \tilde{Y}_1^*}[ \ell(Y_1^*, \tilde{Y}_1^*) ] .
\end{eqnarray*}
where $(a)$ follows from the definitions of $Y$ and $\tilde{Y}$, $(b)$ from the conditional exogeneity, $(c)$ from $\mathbb{E}[T\mid X] = e(X)$, $(d)$ from the conditional exogeneity, and $(e)$ from $\mathbb{E}[\tilde{T} \mid \tilde{X}] = e(\tilde{X})$.
On the other hand,  the second term of $(A)$ can be written as
\begin{eqnarray*}
&& \frac{1}{m^2} \mathbb{E}\left[   \sum_{i} \frac{ t_i^2 \ell(\y_i \y_i)  }{e^2(\x_i) }   \right] 
 = \frac{1}{m} \mathbb{E}\left[  \frac{ T^2 \ell(Y, Y)  }{e^2(X) }   \right].
\end{eqnarray*}
For the last expression, we have
\begin{eqnarray*}
&& \mathbb{E}\left[  \frac{ T^2 \ell(Y, Y)  }{e^2(X) }   \right] 
\stackrel{(a)}{=}  \mathbb{E}\left[  \frac{ T \ell(Y, Y)  }{e^2(X) }   \right] 
\stackrel{(b)}{=} \mathbb{E}\left[  \frac{ T \ell(Y_1^*, Y_1^*)  }{e^2(X) }   \right] \\
&=& \mathbb{E}_X\left[  \frac{ \mathbb{E}_{T,Y_1^*}[ T \ell(Y_1^*, Y_1^*) \mid X]  }{e^2(X) }   \right] 
\stackrel{(c)}{=}  \mathbb{E}_X\left[  \frac{ \mathbb{E}_{T}[ T |X ]  \mathbb{E}_{Y_1^*}[ \ell(Y_1^*, Y_1^*) \mid X]  }{e^2(X) }   \right] \\
&\stackrel{(d)}{=} & \mathbb{E}_X\left[  \frac{  \mathbb{E}_{Y_1^*}[ \ell(Y_1^*, Y_1^*) \mid X]  }{e(X) }   \right] \leq \frac{\sup_{\y \in \mathcal{Y}} \ell(\y,\y)}{ \inf_{\x \in \mathcal{X}} e(x)  } = : C_{\ell, e} \stackrel{(e)}{<}  \infty,
\end{eqnarray*}
where $(a)$ follows from $T$ taking values in $\{ 0, 1\}$, $(b)$ from $Y$ being $Y_1^*$ if $T = 1$, $(c)$ from the conditional exogeneity, $(d)$ from $\mathbb{E}[T|X] = e(X)$, and $(e)$ from our assumptions that $\sup_{\y \in \mathcal{Y}} \ell(\y,\y) < \infty$ and  $ \inf_{\x \in \mathcal{X}} e(x) > 0$.
Thus, the term $(A)$ is upper-bounded as
\begin{equation*}
(A) \leq  \frac{m-1}{m} \mathbb{E}_{Y_1^*, \tilde{Y}_1^*}[ \ell(Y_1^*, \tilde{Y}_1^*) ] + \frac{1}{m}  C_{\ell, e}  .
\end{equation*}

Next we deal with the term $(B)$, which can be written as
\begin{eqnarray*}
\mathbb{E}\left[  \frac{1}{m} \sum_{i} \frac{ t_i \mathbb{E}_{Y_1^*} [\ell(\y_i, Y_1^*) ]  }{e(\x_i)}   \right]
 = \mathbb{E}\left[  \frac{ \tilde{T} \mathbb{E}_{Y_1^*} [\ell(\tilde{Y}, Y_1^*) ]  }{e(\tilde{X})}   \right],
\end{eqnarray*}
where, as before, $(\tilde{X}, \tilde{T}, \tilde{Y}_0^*, \tilde{Y}_1^*)$ is an independent copy of $(X,T,Y_0^*, Y_1^*)$ and $\tilde{Y} := \mathbbm{1}(\tilde{T}=0) \tilde{Y}_0^* + \mathbbm{1}(\tilde{T}=1) \tilde{Y}_1^*$ . 
The right expression can be expanded as 
\begin{eqnarray*}
&& \mathbb{E}\left[  \frac{ \tilde{T} \mathbb{E}_{Y_1^*} [\ell(\tilde{Y}, Y_1^*) ]  }{e(\tilde{X})}   \right] 
 \stackrel{(a)}{=}    \mathbb{E}\left[  \frac{ \tilde{T} \mathbb{E}_{Y_1^*} [\ell(\tilde{Y}_1^*, Y_1^*) ]  }{e(\tilde{X})}   \right] 
=  \mathbb{E}_{\tilde{X}}\left[  \frac{ \mathbb{E}_{ \tilde{T}, \tilde{Y}_1^* } \left[ \tilde{T} \mathbb{E}_{Y_1^*} [\ell(\tilde{Y}_1^*, Y_1^*) ] \mid \tilde{X}  \right] }{e(\tilde{X})}   \right] \\
 & \stackrel{(b)}{=}   &  \mathbb{E}_{\tilde{X}}\left[  \frac{ \mathbb{E}_{ \tilde{T} } [ \tilde{T} \mid \tilde{X} ]   \mathbb{E}_{Y_1^*, \tilde{Y}_1^* } [\ell(\tilde{Y}_1^*, Y_1^*) ] \mid \tilde{X}  ] }{e(\tilde{X})}   \right] 
  \stackrel{(c)}{=}  \mathbb{E}_{\tilde{X}}\left[   \mathbb{E}_{Y_1^*, \tilde{Y}_1^* } [\ell(\tilde{Y}_1^*, Y_1^*) ] \mid \tilde{X}  ] \right]
 =    \mathbb{E}_{Y_1^*, \tilde{Y}_1^* } [\ell(\tilde{Y}_1^*, Y_1^*) ],
\end{eqnarray*}
where $(a)$ follows from $\tilde{Y}$ being $\tilde{Y}_1^*$ if $\tilde{T} = 1$, $(b)$ from the conditional exogeneity and $(c)$ from $\mathbb{E}_{ \tilde{T} } [ \tilde{T} \mid \tilde{X} ] = e(\tilde{X})$.

Using the obtained results for $(A)$ and $(B)$ in \eqref{eq:proof-propensity-main}, we now have
\begin{eqnarray*}
 \mathbb{E}\left[ \left\| \hat{\mu}_{Y_1^*} - \mu_{Y_1^*} \right\|_{\hbspf}^2 \right] 
&\leq&  \frac{m-1}{m} \mathbb{E}[ \ell(Y_1^*, \tilde{Y}_1^*) ] + \frac{1}{m} C_{\ell, e} - 2  \mathbb{E} [\ell(Y_1^*, \tilde{Y}_1^*) ] + \mathbb{E}[ \ell(Y_1^*, \tilde{Y}_1^*) ] \\
&=& \frac{1}{m} \left( C_{\ell, e}  - \mathbb{E} [\ell(Y_1^*, \tilde{Y}_1^*) ]   \right),
\end{eqnarray*}
which completes the proof. 
 \end{proof}

\section{Preliminaries to the Proofs for Section \ref{sec:theory-main}}  
\label{sec:theory-preliminary}
We collect here preliminary results required for proving the theoretical results in Section \ref{sec:theory-main}. 
Thus, the interested reader may first look at Appendix \ref{sec:proofs-convergence}, where the proofs for the main theoretical results are presented. 
We use here the notation and basic definitions provided in Section \ref{sec:theory-main} of the main body.
In Appendix \ref{sec:app-preliminary}, we introduce certain integral operators and collect basic facts regarding them.
Based on them, in Appendix \ref{sec:lemmas} we present various lemmas needed for the proofs of the convernce results in Appendix \ref{sec:proofs-convergence}.

\subsection{Integral Operators} \label{sec:app-preliminary}

To be rigorous, we employ the following notation used in \cite{SteSco12}: For a measurable function $f: \mathcal{X} \to \mathbb{R}$, $[f]_\sim$ denotes the class of measurable functions that are $\pp{P}_{X_0}$-equivalent to $f$:
$$
[f]_\sim := \left\{ g :\mathcal{X} \to \mathbb{R} \mid \ \pp{P}_{X_0}(\{ x \in \mathcal{X} \mid f(x) \neq g(x)\}) = 0 \right\}.
$$

\paragraph{Integral operators.}
Define three integral operators $T: L_2(\pp{P}_{X_0}) \to L_2(\pp{P}_{X_0})$, $S: L_2(\pp{P}_{X_0}) \to \hbspace$ and $\covx: \hbspace \to \hbspace$ by 
\begin{align}  
 Tf &:= \int k(\cdot,\x) f(\x) \dd \pp{P}_{X_0}(\x) \in L_2(\pp{P}_{X_0}), & f \in L_2(\pp{P}_{X_0}), & \label{eq:integral} \\
 Sf &:= \int k(\cdot,\x) f(\x) \dd \pp{P}_{X_0}(\x)\in \hbspace, & f \in L_2(\pp{P}_{X_0}), & \label{eq:op-S-def} \\
 \covx f  &:= \int k(\cdot,\x) f(\x) \dd \pp{P}_{X_0}(\x) \in \hbspace, & f \in \hbspace & \label{eq:cov-ope-def-app}.
\end{align}
Note that while these operators look similar, they are different in their domains and ranges.
In particular, $\covx$ is the covariance operator.
Under  Assumption \ref{as:g-and-theta} \ref{as:measruable}, \ie, $\sup_{\x \in \mathcal{X}} k(\x, \x) < \infty$, \citet[Lemma 2.3]{SteSco12} implies that the operator $S^*: \hbspace \to L_2(\pp{P}_{X_0})$ defined by
\begin{equation*}
S^* g = [g]_\sim, \quad g \in \hbspace
\end{equation*}
is compact, and thus continuous.
This operator $S^*$ is the adjoint of the operator $S$ defined in \eqref{eq:op-S-def}.
Since $S^*$ is continuous, by \citet[Lemma 2.3]{SteSco12}, the operators $T$ and $\covx$ can be written as 
\begin{equation*}
T = S^* S, \quad \covx = S S^* .
\end{equation*}

The following lemma summarizes conditions required for eigen-decompositions of \eqref{eq:integral}, \eqref{eq:op-S-def} and \eqref{eq:cov-ope-def-app}.
In the sequel, ``ONS'' and ``ONB'' mean ``orthonormal series'' and ``orthonormal basis,'' respectively.
The set $I \subset \mathbb{N}$ is a set of indices, which is finite if the RKHS $\hbspace$ is finite dimensional, and infinite if $\hbspace$ is infinite dimensional.
\begin{lemma}[Spectral decomposition of integral operators] \label{lemma:op-eigen-decomp}
Let $\mathcal{X}$, $k$ and $\pp{P}_{X_0}$ be such that Assumption \ref{as:g-and-theta}~\ref{as:measruable} is satisfied.
Then there exist at most countable families $(e_i)_{i \in I} \subset \hbspace$ and $(\mu_i)_{i \in I} \subset (0,\infty)$ such that $\mu_1 \geq \mu_2 \geq \cdots > 0$, $(\mu_i^{1/2} e_i)_{i \in I}$ is an ONS in $\hbspace$, $( [e_i]_\sim )_{i \in I}$ is an ONS in $L_2(\pp{P}_{X_0})$, and 
\begin{align} \label{eq:eigen_integral}
 T f &= \sum_{i \in I} \mu_i \left< [e_i]_\sim, f \right>_{L_2(\pp{P}_{X_0})} [e_i]_\sim, & f \in L_2(\pp{P}_{X_0}), \\ 
 S f &= \sum_{i \in I} \mu_i \left< [e_i]_\sim, f  \right>_{L_2(\pp{P}_{X_0})} e_i, \quad & f \in L_2(\pp{P}_{X_0}), \label{eq:eigen-integral-S} \\
 \covx g &= \sum_{i \in I} \mu_i \left< \mu_i^{1/2} e_i, g \right>_{\hbspace} \mu_i^{1/2} e_i,  & g \in \hbspace, \label{eq:cov_op_eig}
\end{align}
where the convergence is in $ L_2(\pp{P}_{X_0})$ for \eqref{eq:eigen_integral}, and in $\hbspace$ for \eqref{eq:eigen-integral-S} and \eqref{eq:cov_op_eig}.
\end{lemma}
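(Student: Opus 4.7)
The plan is to reduce the lemma to the classical spectral theorem for compact self-adjoint operators, applied to $T = S^*S$, where $S^* : \hbspace \to L_2(\pp{P}_{X_0})$ is the canonical inclusion $g \mapsto [g]_\sim$. The first step is to verify that Assumption \ref{as:g-and-theta}\ref{as:measruable} (boundedness of $k$) implies that $S^*$ is Hilbert--Schmidt, and hence compact; this is standard and is the content invoked from \citet[Lemma~2.3]{SteSco12}. Consequently, $T = S^*S$ and $\covx = SS^*$ are compact, self-adjoint, and positive semidefinite operators on $L_2(\pp{P}_{X_0})$ and $\hbspace$, respectively.

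Applying the spectral theorem to $T$ on $L_2(\pp{P}_{X_0})$ then produces an at most countable family $(\tilde e_i)_{i \in I}$ forming an ONS in $L_2(\pp{P}_{X_0})$, together with eigenvalues $\mu_1 \geq \mu_2 \geq \cdots > 0$, such that $T \tilde e_i = \mu_i \tilde e_i$. Only strictly positive eigenvalues are retained: eigenfunctions associated with eigenvalue $0$ lie in $\ker T = \ker S$ (the equality holds because $\| S f \|_\hbspace^2 = \langle T f, f \rangle_{L_2(\pp{P}_{X_0})}$) and hence produce no contribution to the final expansions.

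The key construction is to set $e_i := \mu_i^{-1} S \tilde e_i \in \hbspace$. A direct computation using $S^* S = T$ yields $[e_i]_\sim = S^* e_i = \mu_i^{-1} T \tilde e_i = \tilde e_i$, so $([e_i]_\sim)_{i \in I}$ coincides with $(\tilde e_i)$ and is an ONS in $L_2(\pp{P}_{X_0})$. A similar calculation, $\langle S \tilde e_i, S \tilde e_j \rangle_\hbspace = \langle \tilde e_i, T \tilde e_j \rangle_{L_2(\pp{P}_{X_0})} = \mu_j \delta_{ij}$, gives $\langle \mu_i^{1/2} e_i, \mu_j^{1/2} e_j \rangle_\hbspace = \delta_{ij}$, so $(\mu_i^{1/2} e_i)_{i \in I}$ is an ONS in $\hbspace$.

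The three displayed expansions then follow by transport. The formula for $T$ is immediate from the spectral theorem. For $S$, writing $f = \sum_{i \in I} \langle \tilde e_i, f \rangle_{L_2(\pp{P}_{X_0})} \tilde e_i + f_0$ with $f_0 \in \ker T = \ker S$ and applying $S$ termwise, with convergence in $\hbspace$ guaranteed by the ONS property of $(\mu_i^{1/2} e_i)$, yields $S f = \sum_{i \in I} \mu_i \langle [e_i]_\sim, f \rangle_{L_2(\pp{P}_{X_0})} e_i$. For $\covx = S S^*$, the adjoint relation combined with the definition of $e_i$ gives $\langle [e_i]_\sim, [g]_\sim \rangle_{L_2(\pp{P}_{X_0})} = \mu_i \langle e_i, g \rangle_\hbspace$, which converts the $S$-expansion of $S^* g$ into the stated form. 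I expect the main subtlety to be the bookkeeping between the two inner products (on $L_2(\pp{P}_{X_0})$ versus on $\hbspace$) and the careful justification of sum-operator interchange when $I$ is infinite; no single step beyond the spectral theorem and the SVD-type factorization $T = S^* S$, $\covx = S S^*$ is required.
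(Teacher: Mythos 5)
Your proof is correct and follows essentially the same route as the paper's: both rest on the factorization $T = S^*S$, $\covx = SS^*$, the identity $S[e_i]_\sim = \mu_i e_i$, the equality $\ker S = \ker T$ to dispose of the orthogonal-complement part of $f$, and the adjoint relation to convert the $S$-expansion into the $\covx$-expansion. The only difference is one of packaging: the paper imports the eigensystem $(\mu_i, e_i)$ and its ONS properties wholesale from \citet[Lemma 2.12]{SteSco12}, whereas you reconstruct that cited lemma from scratch via the spectral theorem for the compact self-adjoint operator $T$ and the SVD-type definition $e_i := \mu_i^{-1} S \tilde e_i$, which is a self-contained but equivalent argument.
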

\begin{proof}
Since $k$ and $\pp{P}_{X_0}$ satisfy  Assumption \ref{as:g-and-theta} \ref{as:measruable}, it follows from \citet[Lemma 2.3]{SteSco12} that $\hbspace$ is compactly embedded into $L_2(\pp{P}_{X_0})$. 
As a result, \citet[Lemma 2.12]{SteSco12} implies that there exist at most countable families $(e)_{i \in I} \subset \hbspace$ and $(\mu_i)_{i \in I} \subset (0,\infty)$ such that  $\mu_1 \geq \mu_2 \geq \cdots > 0$, $( [e_i]_\sim )_{i \in I}$ is an ONS in $L_2(\pp{P}_{X_0})$, $(\mu_i^{1/2} e_i)_{i \in I}$ is an ONS in $\hbspace$, and \eqref{eq:eigen_integral} holds with convergence in $L_2(\pp{P}_{X_0})$.

To show \eqref{eq:eigen-integral-S}, since $([e_i]_\sim)_{i \in I}$ is an ONS in $L_2(\Pz)$, any $f \in L_2(\Pz)$ can be written as
$$
f = \sum_{i \in I}  \left< [e_i]_\sim, f  \right>_{L_2(\pp{P}_{X_0})} [e_i]_\sim + f^\perp,
$$
with convergence in $L_2(\Pz)$, where $f^\perp \in L_2(\Pz)$ is such that $\left< [e_i]_\sim, f^\perp \right>_{L_2(\Pz)} = 0$ for all $i \in I$.
By \citet[Lemma 2.12, Eq.15]{SteSco12} we have $\mu_i e_i = S [e_i]_{\sim}$ for all $i \in I$. It then holds that
$$
Sf = \sum_{i \in I} \mu_i \left< [e_i]_\sim, f  \right>_{L_2(\pp{P}_{X_0})} e_i + S f^\perp,
$$
where the convergence is in $\hbspace$ since $S$ is continuous.
Note that we have $T f^\perp = 0$, since we have \eqref{eq:eigen_integral} and $\left< [e_i]_\sim, f^\perp \right>_{L_2(\Pz)} = 0$ for all $i \in I$.
That is, $f^\perp$ is in the null space of $T$.
Since the null spaces of $S$ and $T$ are equal \cite[Lemma 2.12, Eq.16]{SteSco12}, it follows that $S f^\perp = 0$, which implies \eqref{eq:eigen-integral-S}.

Finally we show \eqref{eq:cov_op_eig}.
First note that $\covx e_i = SS^* e_i = S[e_i]_\sim = \mu e_i$ for all $i \in I$.
Using this and \eqref{eq:eigen-integral-S}, for any $g \in \hbspace$ we have
\begin{eqnarray*}
\covx g &=& S S^* g =  \sum_{i \in I} \mu_i \left< [e_i]_\sim, S^* g  \right>_{L_2(\pp{P}_{X_0})} e_i =  \sum_{i \in I} \mu_i \left< S S^* e_i,  g  \right>_{L_2(\pp{P}_{X_0})} e_i\\
&=&  \sum_{i \in I} \mu_i \left< \covx e_i,  g  \right>_\hbspace e_i = \sum_{i \in I} \mu_i \left< \mu_i e_i,  g  \right>_\hbspace e_i,
\end{eqnarray*}
where the convergence is in $\hbspace$, which implies \eqref{eq:cov_op_eig}.
\end{proof}

\begin{definition} \label{def:power-operators}
Let $\mathcal{X}$, $k$ and $\pp{P}_{X_0}$ be such that Assumption \ref{as:g-and-theta}~\ref{as:measruable} is satisfied.
Let $(e_i)_{i \in I} \subset \hbspace$ and $(\mu_i)_{i \in I} \subset (0,\infty)$ be as in Lemma \ref{lemma:op-eigen-decomp}.
Then for a constant $\beta > 0$, the $\beta$-th power of $T$, $S$ and $\covx$ are respectively defined by
\begin{align*}
T^\beta f &:= \sum_{i \in I} \mu_i^\beta \left< [e_i]_\sim, f \right>_{L_2(\pp{P}_{X_0})} [e_i]_\sim, \quad  &f \in L_2(\pp{P}_{X_0}), \\
S^\beta f &:= \sum_{i \in I} \mu_i^\beta \left< [e_i]_\sim, f \right>_{L_2(\pp{P}_{X_0})} e_i, \quad & f \in L_2(\pp{P}_{X_0}), \\
\covx^\beta f &:= \sum_{i \in I} \mu_i^\beta \left< \mu_i^{1/2} e_i, f \right>_\hbspace \mu_i^{1/2} e_i, \quad & f \in \hbspace.
\end{align*}
\end{definition}



\begin{lemma} \label{lemma:ONB-mercer-iso}
Let $\mathcal{X}$, $k$ and $\pp{P}_{X_0}$ be such that Assumption \ref{as:g-and-theta}~\ref{as:measruable} and \ref{as:RKHS-dense} hold.
Let $(e_i)_{i \in I} \subset \hbspace$ and $(\mu_i)_{i \in I} \subset (0,\infty)$ be as in Lemma \ref{lemma:op-eigen-decomp}.
Then, $([e]_\sim)_{i \in I}$ is an ONB of $L_2(\Pz)$.
\end{lemma}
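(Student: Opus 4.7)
The plan is to show that the orthonormal system $([e_i]_\sim)_{i \in I}$ obtained from Lemma~\ref{lemma:op-eigen-decomp} is complete in $L_2(\pp{P}_{X_0})$, given that it is already established to be an ONS there. Since Lemma~\ref{lemma:op-eigen-decomp} provides the spectral decomposition $Tf = \sum_{i \in I} \mu_i \langle [e_i]_\sim, f \rangle_{L_2(\pp{P}_{X_0})} [e_i]_\sim$ with all $\mu_i > 0$, any $f \in L_2(\pp{P}_{X_0})$ orthogonal to every $[e_i]_\sim$ satisfies $Tf = 0$, and conversely $Tf = 0$ forces $\mu_i \langle f, [e_i]_\sim \rangle_{L_2(\pp{P}_{X_0})} = 0$, hence $\langle f, [e_i]_\sim \rangle_{L_2(\pp{P}_{X_0})} = 0$ for all $i$. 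Thus completeness of the ONS is equivalent to showing $\mathrm{Null}(T) = \{0\}$.

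To reduce this to the density hypothesis, I would use the factorization $T = S^* S$ recalled before Lemma~\ref{lemma:op-eigen-decomp}. If $Tf = 0$, then $\|Sf\|_\hbspace^2 = \langle S^* S f, f \rangle_{L_2(\pp{P}_{X_0})} = 0$, so $Sf = 0$; conversely $Sf = 0$ clearly implies $Tf = 0$. Hence $\mathrm{Null}(T) = \mathrm{Null}(S)$, and it suffices to prove $\mathrm{Null}(S) = \{0\}$.

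The key step is the use of Assumption~\ref{as:g-and-theta}~\ref{as:RKHS-dense}. By definition of $S^*$, its range equals $\{[g]_\sim : g \in \hbspace\}$, which is precisely the image of $\hbspace$ inside $L_2(\pp{P}_{X_0})$. The denseness of $\hbspace$ in $L_2(\pp{P}_{X_0})$ therefore asserts exactly that $\mathrm{Range}(S^*)$ is dense. Applying the standard Hilbert space orthogonality identity $\mathrm{Null}(S) = \mathrm{Range}(S^*)^\perp$ then yields $\mathrm{Null}(S) = \{0\}$, and hence $\mathrm{Null}(T) = \{0\}$, which by the first paragraph gives completeness of $([e_i]_\sim)_{i \in I}$ and thus the ONB property.

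There is no serious obstacle here, as the argument is a routine application of the spectral theorem and the $\mathrm{Range}/\mathrm{Null}$ duality for adjoints. The only place to be careful is the identification of ``$\hbspace$ dense in $L_2(\pp{P}_{X_0})$'' with ``$\mathrm{Range}(S^*)$ dense in $L_2(\pp{P}_{X_0})$''; this is immediate from $S^* g = [g]_\sim$, but worth stating explicitly since $S^*$ technically maps into $L_2$-equivalence classes rather than into $\hbspace$ itself.
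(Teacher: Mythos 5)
Your argument is correct, and it reaches the conclusion by a more self-contained route than the paper does. The paper's proof is essentially a citation: it invokes the compact embedding of $\hbspace$ into $L_2(\pp{P}_{X_0})$ from Steinwart and Scovel's Lemma 2.3 and then appeals to their Theorem 3.1, which states outright that the family $([e_i]_\sim)_{i\in I}$ is an ONB of $L_2(\pp{P}_{X_0})$ if and only if the embedding $S^*:\hbspace\to L_2(\pp{P}_{X_0})$ has dense image. You instead prove the needed implication from scratch: the spectral decomposition of $T$ with strictly positive eigenvalues identifies the orthogonal complement of $\mathrm{span}([e_i]_\sim)$ with $\mathrm{Null}(T)$; the factorization $T=S^*S$ gives $\mathrm{Null}(T)=\mathrm{Null}(S)$ via $\langle S^*Sf,f\rangle=\|Sf\|_{\hbspace}^2$; and the duality $\mathrm{Null}(S)=\mathrm{Range}(S^*)^{\perp}$ converts the density hypothesis \ref{as:RKHS-dense} into $\mathrm{Null}(S)=\{0\}$. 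Each step is valid, and the facts you rely on ($T=S^*S$, $S^*g=[g]_\sim$, boundedness of $S$) are all established in the paper before Lemma \ref{lemma:op-eigen-decomp}. What your approach buys is transparency and independence from the external reference (you effectively re-derive the relevant direction of the cited equivalence); what the paper's citation buys is brevity and access to the full two-sided equivalence without reproving it. Your closing caveat about distinguishing $\hbspace$ from its image of equivalence classes in $L_2(\pp{P}_{X_0})$ is exactly the right point to flag.
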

\begin{proof}
Since $k$ and $\pp{P}_{X_0}$ satisfy Assumption \ref{as:g-and-theta} \ref{as:measruable}, it follows from \citet[Lemma 2.3]{SteSco12} that $\hbspace$ is compactly embedded into $L_2(\pp{P}_{X_0})$. 
Then one can use \citet[Theorem 3.1]{SteSco12}, which states that the assertion is equivalent to the Assumption \ref{as:g-and-theta}~\ref{as:RKHS-dense} that the embedding $S^*: \hbspace \to L_2(\Pz)$ has a dense image in $L_2(\Pz)$. 
\end{proof}

We provide a condition for the integral operator $T_{\rm prod}$ defined in Section \ref{eq:int-op-joint} to admit an eigen-decomposition, which is needed for its power $T_{\rm prod}^\beta$ in \eqref{eq:power-joint-int-op} to be well-defined.

\begin{lemma} \label{lemma:eig-decomp-joint-int-op}
Let $\mathcal{X}$, $k$ and $\pp{P}_{X_0}$ be such that Assumption \ref{as:g-and-theta}~\ref{as:measruable} and \ref{as:RKHS-dense} are satisfied.
Let $T_{\rm prod} : L_2(\pp{P}_{X_0} \otimes \pp{P}_{X_0}) \to  L_2(\pp{P}_{X_0} \otimes \pp{P}_{X_0})$ be the integral operator defined as in Section \ref{eq:int-op-joint}, and let $(e_i)_{i \in I} \subset \hbspace$ and $(\mu_i)_{i \in I} \subset (0,\infty)$ be as in Lemma \ref{lemma:op-eigen-decomp}. 
Then we have
$$
T_{\rm prod} \eta = \sum_{i,j \in I} \mu_i \mu_j \left< \eta, [e_i]_\sim \otimes [e_j]_\sim \right>_{L_2(\pp{P}_{X_0} \otimes \pp{P}_{X_0})} [e_i]_\sim \otimes [e_j]_\sim, \quad \eta \in L_2(\pp{P}_{X_0} \otimes \pp{P}_{X_0}),
$$
where the convergence is in $L_2(\pp{P}_{X_0} \otimes \pp{P}_{X_0})$. 
\end{lemma}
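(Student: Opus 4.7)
The strategy is to exploit the tensor-product structure: the product kernel $k_{\rm prod}$ factors, so $T_{\rm prod}$ should behave like $T \otimes T$ on the Hilbert tensor product $L_2(\pp{P}_{X_0}) \otimes L_2(\pp{P}_{X_0}) \cong L_2(\pp{P}_{X_0} \otimes \pp{P}_{X_0})$. First I will verify that on elementary tensors $f \otimes g$ with $f,g \in L_2(\pp{P}_{X_0})$, Fubini's theorem (applicable because $k$ is bounded by Assumption \ref{as:g-and-theta}\ref{as:measruable}, hence $k_{\rm prod}$ is bounded, hence $T_{\rm prod}$ is a bounded operator on $L_2(\pp{P}_{X_0} \otimes \pp{P}_{X_0})$) gives
$$
T_{\rm prod}(f \otimes g)(x_1,x_2) = \int k(x_1,\tilde x_1) f(\tilde x_1)\dd\pp{P}_{X_0}(\tilde x_1) \int k(x_2,\tilde x_2) g(\tilde x_2)\dd\pp{P}_{X_0}(\tilde x_2) = (Tf)(x_1)(Tg)(x_2),
$$
so $T_{\rm prod}(f \otimes g) = Tf \otimes Tg$ as elements of $L_2(\pp{P}_{X_0} \otimes \pp{P}_{X_0})$.

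Next I invoke Lemma \ref{lemma:ONB-mercer-iso} (which applies because Assumption \ref{as:g-and-theta}\ref{as:measruable},\ref{as:RKHS-dense} holds) to conclude that $([e_i]_\sim)_{i \in I}$ is an ONB of $L_2(\pp{P}_{X_0})$. A standard fact about Hilbert tensor products then yields that $([e_i]_\sim \otimes [e_j]_\sim)_{i,j \in I}$ is an ONB of $L_2(\pp{P}_{X_0} \otimes \pp{P}_{X_0})$. Combining this with the identity above and the eigen-relation $T[e_i]_\sim = \mu_i [e_i]_\sim$ obtained from \eqref{eq:eigen_integral}, I get
$$
T_{\rm prod}\bigl([e_i]_\sim \otimes [e_j]_\sim\bigr) = \mu_i \mu_j \,[e_i]_\sim \otimes [e_j]_\sim, \quad i,j \in I.
$$

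To finish, I take an arbitrary $\eta \in L_2(\pp{P}_{X_0} \otimes \pp{P}_{X_0})$ and expand it in the ONB as $\eta = \sum_{i,j \in I} c_{ij}\, [e_i]_\sim \otimes [e_j]_\sim$ with $c_{ij} = \langle \eta, [e_i]_\sim \otimes [e_j]_\sim \rangle_{L_2(\pp{P}_{X_0} \otimes \pp{P}_{X_0})}$, convergence being in $L_2(\pp{P}_{X_0} \otimes \pp{P}_{X_0})$. Since $T_{\rm prod}$ is bounded (hence continuous), I may apply it termwise, obtaining the claimed decomposition with convergence in $L_2(\pp{P}_{X_0} \otimes \pp{P}_{X_0})$. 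The only technical point to watch is the identification of the algebraic tensor product of $L_2$ spaces with $L_2$ of the product measure, and the boundedness of $T_{\rm prod}$; the former is standard and the latter is immediate from $\sup k < \infty$ together with $\pp{P}_{X_0} \otimes \pp{P}_{X_0}$ being a probability measure, so I do not anticipate a serious obstacle.
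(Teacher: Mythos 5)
Your proposal is correct and follows essentially the same route as the paper's proof: establish that $([e_i]_\sim \otimes [e_j]_\sim)_{i,j\in I}$ is an ONB of $L_2(\pp{P}_{X_0}\otimes\pp{P}_{X_0})$ via Lemma \ref{lemma:ONB-mercer-iso}, show each $[e_i]_\sim\otimes[e_j]_\sim$ is an eigenfunction of $T_{\rm prod}$ with eigenvalue $\mu_i\mu_j$ from the factorization of the product kernel, and expand $\eta$ in the ONB. You are in fact slightly more careful than the paper in explicitly invoking boundedness of $T_{\rm prod}$ to justify the termwise application to the infinite expansion, a step the paper leaves implicit.
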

\begin{proof} 
By Assumption \ref{as:g-and-theta}~\ref{as:RKHS-dense} that $S^*$ has a dense image in $L_2(\Pz)$ and Lemma \ref{lemma:ONB-mercer-iso},  $([e_i]_\sim)_{i \in I}$ is an ONB in $L_2(\Pz)$. 
This implies that $([e_i]_\sim \otimes [e_j]_\sim)_{i,j \in I}$ is an ONB in $L_2(\Pz \otimes \Pz)$; see {\it e.g.},~\citet[Ex.~61, p.178]{Fol99}.
Therefore any $\eta \in L_2(\Pz \otimes \Pz)$ can be written as 
$$
\eta = \sum_{i,j \in I}  \left< \eta, [e_i]_\sim \otimes [e_j]_\sim \right>_{L_2(\pp{P}_{X_0} \otimes \pp{P}_{X_0})} [e_i]_\sim \otimes [e_j]_\sim 
$$
with convergence in $L_2(\pp{P}_{X_0} \otimes \pp{P}_{X_0})$.
Note that $[e_i]_\sim \otimes [e_j]_\sim$ for any $i,j \in I$ is an eigenfunction of $T_{\rm prod}$ with the corresponding eigenvalue being $\mu_i \mu_j$, since
\begin{eqnarray*}
T_{\rm prod} ([e_i]_\sim \otimes [e_j]_\sim) 
&=& \int k (\cdot,\x) [e_i]_\sim (\x) \dd\Pz(\x) \otimes \int k(\cdot, \tilde{\x}) [e_j]_\sim (\tilde{\x}) \dd\Pz(\tilde{\x}) \\
&=& (T[e_i]_\sim) \otimes (T[e_j]_\sim) = \mu_i \mu_i [e_i]_\sim \otimes [e_j]_\sim.
\end{eqnarray*}
The assertion follows from this and the above eigendecomposition of $\eta$ in Assumption \ref{as:range-assumption-theta}.
\end{proof}

As a direct corollary of  Lemma \ref{lemma:eig-decomp-joint-int-op}, we have the following result, which provides an eigenbasis expression of the range assumption $\theta \in {\rm Range}(T_{\rm prod}^\beta)$.
\begin{corollary} \label{cor:range-as-eigen-basis}
Let $\mathcal{X}$, $k$ and $\pp{P}_{X_0}$ be such that Assumption \ref{as:g-and-theta}~\ref{as:measruable} and \ref{as:RKHS-dense} are satisfied, and let $(e_i)_{i \in I} \subset \hbspace$ and $(\mu_i)_{i \in I} \subset (0,\infty)$ be as in Lemma \ref{lemma:op-eigen-decomp}.
Suppose that Assumption \ref{as:range-assumption-theta} is satisfied, \ie, $\theta \in {\rm Range}(T_{\rm prod}^\beta)$ holds for $\theta \in L_2(\Pz \otimes \Pz)$ and $0 \leq \beta \leq 1$, where $T_{\rm prod}^\beta$ is defined in \eqref{eq:power-joint-int-op}.
Then there exist $(a_{i,j})_{i,j \in I} \subset \mathbb{R}$ such that $\sum_{i,j \in I} a_{i,j}^2 < \infty$ and 
$$
\theta = \sum_{i,j \in I} a_{i,j} (\mu_i \mu_j)^\beta [e_i]_\sim \otimes [e_j]_\sim, 
$$
where the convergence is in $L_2(\Pz \otimes \Pz)$.
\end{corollary}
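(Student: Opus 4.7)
The plan is to unpack the range assumption and then expand the preimage in the orthonormal basis established in Lemma~\ref{lemma:eig-decomp-joint-int-op}. By definition, $\theta \in \mathrm{Range}(T_{\rm prod}^\beta)$ means there exists some $\eta \in L_2(\Pz \otimes \Pz)$ such that $\theta = T_{\rm prod}^\beta \eta$. The key input from the preceding material is that $([e_i]_\sim \otimes [e_j]_\sim)_{i,j \in I}$ is an orthonormal basis of $L_2(\Pz \otimes \Pz)$; this is exactly what was used (and noted) inside the proof of Lemma~\ref{lemma:eig-decomp-joint-int-op}, via Lemma~\ref{lemma:ONB-mercer-iso} together with the standard fact that tensor products of ONBs are ONBs on a product $L_2$-space.

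Given this ONB, the plan is to expand
\[
 \eta = \sum_{i,j \in I} a_{i,j}\, [e_i]_\sim \otimes [e_j]_\sim, \qquad a_{i,j} := \left<\eta, [e_i]_\sim \otimes [e_j]_\sim\right>_{L_2(\Pz \otimes \Pz)},
\]
with convergence in $L_2(\Pz \otimes \Pz)$, and with Parseval's identity giving $\sum_{i,j \in I} a_{i,j}^2 = \|\eta\|_{L_2(\Pz \otimes \Pz)}^2 < \infty$. Then I would apply $T_{\rm prod}^\beta$ to this expansion. Since by the definition \eqref{eq:power-joint-int-op} each $[e_i]_\sim \otimes [e_j]_\sim$ is an eigenfunction of $T_{\rm prod}^\beta$ with eigenvalue $(\mu_i \mu_j)^\beta$, and since $T_{\rm prod}^\beta$ is bounded on $L_2(\Pz \otimes \Pz)$ (as a consequence of $\mu_i \leq \mu_1 < \infty$ for all $i$, together with $0 \leq \beta \leq 1$), one can interchange $T_{\rm prod}^\beta$ with the convergent series, yielding
\[
 \theta = T_{\rm prod}^\beta \eta = \sum_{i,j \in I} a_{i,j} (\mu_i \mu_j)^\beta\, [e_i]_\sim \otimes [e_j]_\sim,
\]
with convergence in $L_2(\Pz \otimes \Pz)$, which is exactly the claimed form.

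There is no real obstacle here: the result is essentially the standard spectral/functional-calculus characterization of the range of a compact self-adjoint operator applied to the specific operator $T_{\rm prod}^\beta$, and all the nontrivial ingredients (boundedness of $k$, denseness of $\hbspace$ in $L_2(\Pz)$, existence of the eigensystem, and the product-ONB structure) have already been collected in Assumption~\ref{as:g-and-theta}\,\ref{as:measruable}--\ref{as:RKHS-dense} and Lemmas~\ref{lemma:op-eigen-decomp}, \ref{lemma:ONB-mercer-iso}, and \ref{lemma:eig-decomp-joint-int-op}. The only minor point to be careful about is justifying the term-by-term application of $T_{\rm prod}^\beta$, which follows from its boundedness on $L_2(\Pz \otimes \Pz)$.
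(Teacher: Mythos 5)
Your proposal is correct and follows essentially the same route as the paper: write $\theta = T_{\rm prod}^\beta \eta$, set $a_{i,j} := \left<\eta, [e_i]_\sim \otimes [e_j]_\sim\right>_{L_2(\Pz\otimes\Pz)}$, and read off the expansion with $\sum_{i,j} a_{i,j}^2 < \infty$ from $\eta \in L_2(\Pz\otimes\Pz)$. The only cosmetic difference is that the paper obtains the series directly from the defining formula \eqref{eq:power-joint-int-op} for $T_{\rm prod}^\beta$, so your extra step of justifying the term-by-term application via boundedness is not needed, though it is harmless.
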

\begin{proof}
The assumption $\theta \in {\rm Range}(T_{\rm prod}^\beta)$ implies that there exists some $\eta \in L_2(\Pz \otimes \Pz)$ such that $\theta = T^\beta \eta$.
Therefore 
$$
\theta =  \sum_{i,j \in I} (\mu_i \mu_j)^\beta \left< \eta, [e_i]_\sim \otimes [e_j]_\sim \right>_{L_2(\pp{P}_{X_0} \otimes \pp{P}_{X_0})} [e_i]_\sim \otimes [e_j]_\sim
$$
where the convergence is in $L_2(\Pz \otimes \Pz)$.
Defining $a_{i,j} := \left< \eta, [e_i]_\sim \otimes [e_j]_\sim \right>_{L_2(\pp{P}_{X_0} \otimes \pp{P}_{X_0})}$, we have $\sum_{i,j \in I} a_{i,j}^2 < \infty$ from $\eta \in L_2(\Pz \otimes \Pz)$, which completes the proof.
\end{proof}

Lastly, let $\mathcal{C}_{YX} : \hbspace \to \hbspf$ be the covariance operator of the random variables $X_0$ and $Y_0$ defined as (see \eg, \cite{Fukumizu13:KBR})
\begin{equation} \label{eq:cov-op-joint}
    \mathcal{C}_{YX} f = \int  \ell(\cdot, \y) f(\x) \dd \pp{P}_{X_0 Y_0} (\x, \y) = \mathbb{E}_{X_0, Y_0}[\ell(\cdot, Y_0) f(X_0)], \quad f \in \hbspace 
\end{equation}
where $\pp{P}_{X_0 Y_0}$ is the joint distribution of $X_0$ and $Y_0$. 
Under Assumption \ref{as:g-and-theta}~\ref{as:measruable}, this covariance operator satisfies
$$
\left< \mathcal{C}_{YX} f, g  \right>_{\hbspf} = \left< \mathbb{E}_{X_0, Y_0}[\ell(\cdot, X_0) f(X_0)], g  \right>_{\hbspf} = \mathbb{E}_{X_0, Y_0}[g(Y_0) f(X_0) ], \quad f \in \hbspace, \quad g \in \hbspf.
$$
The conjugate operator of $\mathcal{C}_{YX}$ is denoted by $\mathcal{C}_{XY}: \hbspf \to \hbspace$ and given by
$$
\mathcal{C}_{XY} g = \int k(\cdot, \x) g(\y) \dd \pp{P}_{X_0 Y_0}(\x, \y) = \mathbb{E}_{X_0, Y_0} [k(\cdot, X_0) g(Y_0)],
$$
since for any $f \in \hbspace$ and $g \in \hbspf$ it holds that
$$
\left< f,  \mathcal{C}_{YX}  g  \right>_{\hbspace} = \left< f,  \mathbb{E}_{X_0, Y_0} [k(\cdot, X_0) g(Y_0)] \right>_{\hbspace} = \mathbb{E}_{X_0, Y_0}[f(X_0) g(Y_0)] = \left< \mathcal{C}_{YX} f, g  \right>_{\hbspf}. 
$$

\subsection{Lemmas} \label{sec:lemmas}

We collect here lemmas used in the proofs for the convergence results in Appendix \ref{sec:proofs-convergence}.

\begin{lemma} \label{lemma:prior_der}
Let $\mathcal{X}$, $k$, $\pp{P}_{X_0}$, $\pp{P}_{X_1}$ and $g := \dd\pp{P}_{X_1}/\dd\pp{P}_{X_0}$ be such that  Assumption \ref{as:g-and-theta}~\ref{as:measruable} and \ref{as:radon-nikodym} are satisfied.
Then we have  $\mu_{X_1} = S g$.
\end{lemma}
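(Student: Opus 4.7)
The plan is to verify the identity directly from the definitions of the two objects, with the only nontrivial step being the change of measure via the Radon--Nikodym derivative. Both sides live in $\hbspace$, so it suffices to show that they agree as elements of $\hbspace$. I would do this by testing against an arbitrary $h \in \hbspace$ and applying the reproducing property on both sides, which reduces the problem to an ordinary real-valued change-of-variable identity.

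First, I would check that both integrals are well-defined as Bochner integrals in $\hbspace$. By Assumption \ref{as:g-and-theta}\ref{as:measruable}, the kernel $k$ is bounded, so there exists $M < \infty$ with $\|k(\cdot,x)\|_\hbspace = \sqrt{k(x,x)} \leq M$ for all $x \in \mathcal{X}$. Hence $\int \|k(\cdot,x)\|_\hbspace \,\mathrm{d}\pp{P}_{X_1}(x) \leq M$, so $\mu_{X_1}$ is a well-defined Bochner integral. Similarly, since $g \in L_2(\pp{P}_{X_0}) \subset L_1(\pp{P}_{X_0})$ (the inclusion following from $\pp{P}_{X_0}$ being a probability measure and Cauchy--Schwarz), we have $\int \|k(\cdot,x) g(x)\|_\hbspace \,\mathrm{d}\pp{P}_{X_0}(x) \leq M \int |g(x)| \,\mathrm{d}\pp{P}_{X_0}(x) < \infty$, so $Sg = \int k(\cdot,x) g(x)\,\mathrm{d}\pp{P}_{X_0}(x)$ is also a well-defined Bochner integral.

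Next, I would fix an arbitrary $h \in \hbspace$ and use the fact that inner products commute with Bochner integrals together with the reproducing property to write
$$
\langle h, \mu_{X_1} \rangle_\hbspace = \int \langle h, k(\cdot,x)\rangle_\hbspace \,\mathrm{d}\pp{P}_{X_1}(x) = \int h(x)\,\mathrm{d}\pp{P}_{X_1}(x),
$$
and likewise
$$
\langle h, Sg \rangle_\hbspace = \int h(x) g(x)\,\mathrm{d}\pp{P}_{X_0}(x).
$$
The standard change-of-measure identity for the Radon--Nikodym derivative $g = \mathrm{d}\pp{P}_{X_1}/\mathrm{d}\pp{P}_{X_0}$ gives $\int h(x)\,\mathrm{d}\pp{P}_{X_1}(x) = \int h(x) g(x)\,\mathrm{d}\pp{P}_{X_0}(x)$, provided $h \in L_1(\pp{P}_{X_1})$, which holds because $|h(x)| \leq \|h\|_\hbspace \sqrt{k(x,x)} \leq M\|h\|_\hbspace$ by the reproducing property and boundedness of $k$. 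Therefore $\langle h, \mu_{X_1}\rangle_\hbspace = \langle h, Sg \rangle_\hbspace$ for every $h \in \hbspace$, which forces $\mu_{X_1} = Sg$.

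There is no real obstacle here; the lemma is essentially bookkeeping. The only point that requires any care is the justification for exchanging the inner product with the Bochner integral, but this is a standard property once integrability is established, and the boundedness of $k$ makes all the integrability conditions immediate.
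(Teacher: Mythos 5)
Your proof is correct and follows essentially the same route as the paper's: the paper simply writes $\mu_{X_1} = \int k(\cdot,\x)\dd\pp{P}_{X_1}(\x) = \int k(\cdot,\x)g(\x)\dd\pp{P}_{X_0}(\x) = Sg$, i.e.\ a direct change of measure via the Radon--Nikodym derivative, justified by $g \in L_2(\pp{P}_{X_0})$. Your version merely makes the same step rigorous by checking Bochner integrability and passing through the weak (duality) formulation against arbitrary $h \in \hbspace$, which is a standard and entirely sound way to justify the identity.
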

\begin{proof}
By definitions of the kernel mean $\mu_{X_1}$ and the Radon-Nikodym derivative $g$, we have
\[
\mu_{X_1} = \int k(\cdot,\x) \dd\pp{P}_{X_1}(\x) = \int k(\cdot,\x)  g(\x) \dd\pp{P}_{X_0}(\x) = S g \in \hbspace,
\]
where the expression $Sg$ is justified from  Assumption \ref{as:g-and-theta}~\ref{as:radon-nikodym} that $g \in L_2(\pp{P}_{X_0})$.
\end{proof}

\begin{lemma} \label{lemma:cov_int_op}
Let $\mathcal{X}$, $k$ and $\pp{P}_{X_0}$ be such that Assumption \ref{as:g-and-theta}~\ref{as:measruable} is satisfied.
Then for any $f \in L_2(\pp{P}_{X_0})$ and $\varepsilon > 0$, we have
$
S^*(\covx+ \varepsilon I)^{-1} S f = (T + \varepsilon I)^{-1} T f.
$
\end{lemma}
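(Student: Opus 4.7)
The plan is to establish the stated identity via a standard intertwining argument between $S$ and its adjoint $S^*$. The key algebraic observation is that the operators $S^*S + \varepsilon I$ on $L_2(\pp{P}_{X_0})$ and $SS^* + \varepsilon I = \covx + \varepsilon I$ on $\hbspace$ are intertwined by $S$, in the sense that
\[
(SS^* + \varepsilon I)\, S = S S^* S + \varepsilon S = S(S^* S + \varepsilon I).
\]
This identity holds on all of $L_2(\pp{P}_{X_0})$ by definition of the operators and linearity.

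Next, since $\varepsilon > 0$ and both $T = S^*S$ and $\covx = SS^*$ are positive self-adjoint (as $T$ is a compact self-adjoint operator with nonnegative eigenvalues by Lemma \ref{lemma:op-eigen-decomp}, and similarly for $\covx$), the operators $T + \varepsilon I$ and $\covx + \varepsilon I$ are bounded, invertible with bounded inverses (bounded below by $\varepsilon$). Applying $(\covx + \varepsilon I)^{-1}$ on the left and $(T + \varepsilon I)^{-1}$ on the right of the intertwining identity yields
\[
S(T + \varepsilon I)^{-1} = (\covx + \varepsilon I)^{-1} S.
\]
Pre-composing with $S^*$ then gives
\[
S^*(\covx + \varepsilon I)^{-1} S = S^* S (T + \varepsilon I)^{-1} = T(T + \varepsilon I)^{-1}.
\]

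Finally, I use that $T$ and $(T + \varepsilon I)^{-1}$ commute, which is immediate either from the eigendecomposition of $T$ in Lemma \ref{lemma:op-eigen-decomp} (both operators are simultaneously diagonalized in the basis $([e_i]_\sim)_{i\in I}$) or from the polynomial identity $T(T + \varepsilon I) = (T + \varepsilon I) T$ multiplied by $(T + \varepsilon I)^{-1}$ on both sides. This gives $T(T + \varepsilon I)^{-1} = (T + \varepsilon I)^{-1} T$, and the result follows when applied to $f \in L_2(\pp{P}_{X_0})$. I do not anticipate a main obstacle here: the identity is a routine ``push-through'' fact about adjoint pairs, and the only subtlety is to note that the intertwining relation passes cleanly to the resolvents thanks to invertibility granted by $\varepsilon > 0$.
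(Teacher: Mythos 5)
Your proof is correct, but it takes a genuinely different route from the paper. The paper proves this lemma by direct computation in the eigenbasis $(\mu_i, e_i)_{i\in I}$ of Lemma \ref{lemma:op-eigen-decomp}: it expands $Sf$ via \eqref{eq:eigen-integral-S}, applies $(\covx+\varepsilon I)^{-1}$ term by term using the spectral representation \eqref{eq:cov_op_eig}, then applies $S^*$ and recognizes the result as $(T+\varepsilon I)^{-1}Tf$. You instead use the push-through (intertwining) identity $(SS^*+\varepsilon I)S = S(S^*S+\varepsilon I)$, pass to resolvents via the invertibility of both shifted operators, and compose with $S^*$. Your argument is more abstract and, in a sense, more general: it requires only that $S$ be a bounded operator with adjoint $S^*$ and the factorizations $T=S^*S$, $\covx = SS^*$ (which the paper establishes from \citet[Lemma 2.3]{SteSco12} under Assumption \ref{as:g-and-theta}~\ref{as:measruable}), and it bypasses the spectral decomposition entirely — positivity and self-adjointness of $S^*S$ and $SS^*$ already give bounded invertibility of the $\varepsilon$-shifted operators, without any appeal to eigenfunctions. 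The paper's computation is more concrete and matches the style of the surrounding lemmas in the appendix, which all manipulate the eigensystem explicitly; one minor point it must implicitly handle (and which your route avoids) is that the range of $S$ lies in the closed span of the $e_i$, so that the diagonal action of $(\covx+\varepsilon I)^{-1}$ is justified. Both arguments are sound.
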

\begin{proof}
Let $(e_i)_{i \in I} \subset \hbspace$ and $(\mu_i)_{i \in I} \subset (0, \infty)$ as in Lemma \ref{lemma:op-eigen-decomp}.
Then we have
\begin{eqnarray*}
\lefteqn{S^*(\covx + \varepsilon I)^{-1} S f 
= S^*(\covx + \varepsilon I)^{-1} \sum_{j \in I} \mu_j \left<f, [e_j]_\sim \right>_{L_2(\pp{P}_{X_0})}  e_j}   \\
&=& S^* \sum_{i \in I}   (\mu_i + \varepsilon)^{-1}  \left< \mu_i^{1/2} e_i, \sum_{j \in I} \mu_j \left<f, [e_j]_\sim \right>_{L_2(\pp{P}_{X_0})} e_j  \right>_{\hbspace} \mu_i^{1/2} e_i \\
&=& S^* \sum_{i \in I}  (\mu_i + \varepsilon)^{-1} \mu_i \left<f, [e_i]_\sim \right>_{L_2(\pp{P}_{X_0})}  e_i = \sum_{i \in I}  (\mu_i + \varepsilon)^{-1} \mu_i \left<f, [e_i]_\sim \right>_{L_2(\pp{P}_{X_0})}  [e_i]_\sim \\
&=& (T+ \varepsilon I)^{-1} T f,
\end{eqnarray*}
as required.
%
\end{proof}

\begin{lemma} \label{lemma:S-alpha-decomp}
Let $\mathcal{X}$, $k$ and $\pp{P}_{X_0}$ be such that Assumption \ref{as:g-and-theta}~\ref{as:measruable} is satisfied.
Then for any $f \in L_2(\Pz)$ and $\alpha \geq 0$, we have
$
ST^\alpha f = \covx^{1/2 + \alpha} S^{1/2} f.  
$
\end{lemma}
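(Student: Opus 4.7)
The plan is to compute both sides of the identity using the joint eigenexpansions from Lemma \ref{lemma:op-eigen-decomp} and Definition \ref{def:power-operators}, and verify that they produce the same series in $\hbspace$. Fix $f \in L_2(\Pz)$ and $\alpha \geq 0$, and let $(e_i)_{i \in I} \subset \hbspace$ and $(\mu_i)_{i \in I} \subset (0,\infty)$ be the families guaranteed by Lemma \ref{lemma:op-eigen-decomp}, so that $(\mu_i^{1/2} e_i)_{i \in I}$ is an ONS in $\hbspace$ and $([e_i]_\sim)_{i \in I}$ is an ONS in $L_2(\Pz)$.

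First I would expand the left-hand side. By Definition \ref{def:power-operators},
\[
T^\alpha f = \sum_{i \in I} \mu_i^{\alpha} \left<f, [e_i]_\sim\right>_{L_2(\Pz)} [e_i]_\sim
\]
with convergence in $L_2(\Pz)$. Applying the continuous operator $S$ and using its eigenexpansion \eqref{eq:eigen-integral-S}, which gives $S [e_i]_\sim = \mu_i e_i$, I get
\[
S T^\alpha f = \sum_{i \in I} \mu_i^{\alpha+1} \left<f, [e_i]_\sim\right>_{L_2(\Pz)} e_i
\]
with convergence in $\hbspace$.

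Next I would expand the right-hand side. By Definition \ref{def:power-operators},
\[
S^{1/2} f = \sum_{i \in I} \mu_i^{1/2} \left<f, [e_i]_\sim\right>_{L_2(\Pz)} e_i = \sum_{i \in I} \left<f, [e_i]_\sim\right>_{L_2(\Pz)} \, \mu_i^{1/2} e_i.
\]
Since $(\mu_i^{1/2} e_i)_{i \in I}$ is an ONS in $\hbspace$, the coefficient of $\mu_i^{1/2} e_i$ in the expansion of $S^{1/2} f$ is $\left<S^{1/2}f,\mu_i^{1/2} e_i\right>_\hbspace = \left<f, [e_i]_\sim\right>_{L_2(\Pz)}$. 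Applying $\covx^{1/2+\alpha}$ by Definition \ref{def:power-operators} then yields
\[
\covx^{1/2+\alpha} S^{1/2} f = \sum_{i \in I} \mu_i^{1/2 + \alpha} \left<f, [e_i]_\sim\right>_{L_2(\Pz)} \mu_i^{1/2} e_i = \sum_{i \in I} \mu_i^{\alpha+1} \left<f, [e_i]_\sim\right>_{L_2(\Pz)} e_i,
\]
which coincides termwise with the expression derived above for $S T^\alpha f$, completing the argument.

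No genuine obstacles are anticipated: the identity is a direct consequence of aligning the three parallel eigenexpansions, and the only subtlety is being careful about the mode of convergence when moving from $L_2(\Pz)$ to $\hbspace$, which is justified by the continuity of $S$ and the fact that the $\hbspace$-series above is absolutely summable whenever $\alpha \geq 0$ (since the coefficients are those of $T^{\alpha + 1/2} f$ mapped through $S$, whose squared $\hbspace$-norms are controlled by $\sum_i \mu_i^{2\alpha+2} \langle f,[e_i]_\sim \rangle^2 \leq \mu_1^{2\alpha+1} \|f\|_{L_2(\Pz)}^2$).
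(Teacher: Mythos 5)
Your proposal is correct and follows essentially the same route as the paper's proof: both arguments reduce the identity to the eigenexpansions of Lemma \ref{lemma:op-eigen-decomp} and Definition \ref{def:power-operators}, using $S[e_i]_\sim = \mu_i e_i$ and the orthonormality of $(\mu_i^{1/2}e_i)_{i\in I}$ in $\hbspace$ (the paper transforms $\covx^{1/2+\alpha}S^{1/2}f$ into $ST^\alpha f$ in one chain, while you expand both sides to the common series $\sum_{i}\mu_i^{1+\alpha}\langle f,[e_i]_\sim\rangle_{L_2(\Pz)}e_i$, which is only a cosmetic difference). The only blemish is the exponent in your final summability remark: since $\|e_i\|_\hbspace^2=\mu_i^{-1}$, the squared $\hbspace$-norm of the series is $\sum_i\mu_i^{2\alpha+1}\langle f,[e_i]_\sim\rangle^2\le\mu_1^{2\alpha+1}\|f\|_{L_2(\Pz)}^2$ rather than $\sum_i\mu_i^{2\alpha+2}\langle f,[e_i]_\sim\rangle^2$, a harmless slip that does not affect the argument.
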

\begin{proof}
Let $(e_i)_{i \in I} \subset \hbspace$ and $(\mu_i)_{i \in I} \subset (0, \infty)$ as in Lemma \ref{lemma:op-eigen-decomp}.
Then we have
\begin{eqnarray*}
\lefteqn{ \covx^{1/2 + \alpha} S^{1/2} f = \covx^{1/2 + \alpha} \sum_{i \in I} \mu_i^{1/2} \left< [e_i]_\sim, f\right>_{L_2(\Pz)}  e_i}  \\ 
&=& \sum_{\ell \in I} \mu_\ell^{1/2 + \alpha} \left< \mu_\ell^{1/2} e_\ell, \sum_{i \in I} \mu_i^{1/2} \left< [e_i]_\sim, f\right>_{L_2(\Pz)}  e_i   \right>_{\hbspace}  \mu_\ell^{1/2} e_\ell  \\ 
&=& \sum_{\ell \in I} \mu_\ell^{1/2 + \alpha} \left<[e_\ell]_\sim,  f\right>_{L_2(\Pz)} \mu_\ell^{1/2} e_\ell  = \sum_{\ell \in I} \mu_\ell^{\alpha} \left<[e_\ell]_\sim,  f\right>_{L_2(\Pz)} \mu_\ell e_\ell  \\ 
&=& \sum_{\ell \in I} \mu_\ell^{\alpha} \left<[e_\ell]_\sim,  f\right>_{L_2(\Pz)} S [e_\ell]_\sim  = S \sum_{\ell \in I} \mu_\ell^{\alpha} \left<[e_\ell]_\sim,  f\right>_{L_2(\Pz)}  [e_\ell]_\sim  \\ 
&=& S  T^\alpha f,
\end{eqnarray*}
as required.
\end{proof}

\begin{lemma} \label{lemma:cov-inv-kmean}
Let $\mathcal{X}$, $k$, $\pp{P}_{X_0}$, $\pp{P}_{X_1}$ and $g := \dd\Po/\dd\Pz$ be such that Assumption \ref{as:g-and-theta}~\ref{as:measruable} and \ref{as:radon-nikodym} are satisfied.
Assume also that Assumption \ref{as:range-assumption-g} holds, i.e., $g \in {\rm Range}(T^\alpha)$ for a constant $\alpha \geq 0$.
Then for any $\varepsilon > 0$, we have
$$
\left\| (\covx + \varepsilon I  )^{-1} {\mu}_{X_1} \right\|_\hbspace \leq 
\begin{cases}
c_\alpha \varepsilon^{ -1/2 + \alpha}, \quad ({\rm if}\ \alpha \leq 1/2) \\
c_\alpha \left\| \covx^{\alpha - 1/2} \right\|, \quad ({\rm if}\ \alpha > 1/2),
\end{cases}
$$
where $c_\alpha := \| h \|_{ L_2(\pp{P}_{X_0})}$ is a constant defined by $h \in L_2(\pp{P}_{X_0})$ such that $g = T^{\alpha}h.$ 
\end{lemma}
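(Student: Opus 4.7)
The plan is to decompose $\mu_{X_1}$ into a form that exposes a factor of $\covx^{1/2+\alpha}$, which will cancel the unfavorable $(\covx + \varepsilon I)^{-1}$ in terms of $\varepsilon$. First I would apply Lemma~\ref{lemma:prior_der} to write $\mu_{X_1} = S g$, then use Assumption~\ref{as:range-assumption-g} to pick $h \in L_2(\Pz)$ with $g = T^\alpha h$ and $\|h\|_{L_2(\Pz)} = c_\alpha$. Substituting gives $\mu_{X_1} = S T^\alpha h$, and Lemma~\ref{lemma:S-alpha-decomp} (applied to $f = h$) rewrites this as
\[
\mu_{X_1} \;=\; \covx^{1/2+\alpha}\, S^{1/2} h.
\]

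Next I would apply the submultiplicativity of the operator norm to
\[
\bigl\|(\covx + \varepsilon I)^{-1}\mu_{X_1}\bigr\|_\hbspace \;\leq\; \bigl\|(\covx + \varepsilon I)^{-1}\covx^{1/2+\alpha}\bigr\| \cdot \bigl\|S^{1/2}h\bigr\|_\hbspace.
\]
The second factor is easy: expanding $S^{1/2}h$ via Definition~\ref{def:power-operators} and using that $(\mu_i^{1/2} e_i)_{i\in I}$ is an ONS in $\hbspace$ (Lemma~\ref{lemma:op-eigen-decomp}), one computes $\|S^{1/2}h\|_\hbspace^2 = \sum_{i \in I}\langle [e_i]_\sim, h\rangle_{L_2(\Pz)}^2$, and Bessel's inequality yields $\|S^{1/2}h\|_\hbspace \leq \|h\|_{L_2(\Pz)} = c_\alpha$. (Note this does not require the denseness assumption \ref{as:RKHS-dense}, only the orthonormality provided by Lemma~\ref{lemma:op-eigen-decomp}.)

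The first factor is handled by the functional calculus of the self-adjoint compact operator $\covx$, whose eigenvalues are $(\mu_i)_{i \in I}$ from Lemma~\ref{lemma:op-eigen-decomp}; thus $\|(\covx + \varepsilon I)^{-1}\covx^{1/2+\alpha}\| = \sup_{i}\mu_i^{1/2+\alpha}/(\mu_i+\varepsilon)$. For $\alpha \leq 1/2$, set $r = 1/2+\alpha \in [0,1]$; the elementary estimate $\mu^r/(\mu+\varepsilon) \leq \varepsilon^{r-1}$ (verified by the two cases $\mu \geq \varepsilon$ and $\mu < \varepsilon$) gives the bound $\varepsilon^{-1/2+\alpha}$. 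For $\alpha > 1/2$, factor $\covx^{1/2+\alpha} = \covx^{\alpha-1/2}\covx$ and use that $(\covx + \varepsilon I)^{-1}\covx$ has operator norm $\leq 1$ since $\mu/(\mu+\varepsilon) \leq 1$; this yields $\|\covx^{\alpha-1/2}\|$. Combining these with the previous bound on $\|S^{1/2}h\|_\hbspace$ delivers the stated inequality in both cases.

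I expect no serious obstacle: the argument is an exercise in operator calculus once the right decomposition is in place. The one point requiring care is the step $\|S^{1/2}h\|_\hbspace \leq \|h\|_{L_2(\Pz)}$, where one must resist the temptation to invoke equality (which would need $([e_i]_\sim)_{i\in I}$ to be an ONB of $L_2(\Pz)$, i.e. Assumption~\ref{as:RKHS-dense}); Bessel's inequality suffices and keeps the lemma's hypotheses minimal.
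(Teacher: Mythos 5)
Your proposal is correct and follows essentially the same route as the paper's proof: decompose $\mu_{X_1} = Sg = ST^\alpha h = \covx^{1/2+\alpha}S^{1/2}h$ via Lemmas \ref{lemma:prior_der} and \ref{lemma:S-alpha-decomp}, bound $\|S^{1/2}h\|_\hbspace \leq \|h\|_{L_2(\Pz)}$ by Bessel, and control $\|(\covx+\varepsilon I)^{-1}\covx^{1/2+\alpha}\|$ by the same two-case spectral estimate (your pointwise bound $\mu^r/(\mu+\varepsilon)\leq\varepsilon^{r-1}$ is just the paper's operator factorization written eigenvalue-wise). The observation that Assumption \ref{as:g-and-theta}~\ref{as:RKHS-dense} is not needed matches the paper, which likewise invokes only the ONS property.
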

\begin{proof}
As in the assertion, write $g = T^\alpha h$ for $h \in L_2(\Pz)$, which exists from the assumption $g \in {\rm Range}(T^\alpha)$.
By Lemmas \ref{lemma:prior_der} and \ref{lemma:S-alpha-decomp}, we can then write $\mu_{X_1}$ as
$$
\mu_{X_1} = Sg = ST^\alpha h = \covx^{1/2 + \alpha} S^{1/2} h.
$$
Since $(\mu_i^{1/2} e_i)_{i \in I}$ is an ONS of $\hbspace$, $([e_i]_\sim)_{i \in I}$ is an ONS in $L_2(\Pz)$ and  
$$S^{1/2} h = \sum_{i \in I}  \left< [e_i]_\sim, h \right>_{L_2(\pp{P}_{X_0})} \mu_i^{1/2} e_i,
$$
it holds that
$
\| S^{1/2} h \|_\hbspace^2 = \sum_{i \in I} \left< [e_i]_\sim, h \right>_{L_2(\pp{P}_{X_0})}^2 \leq \| h \|_{L_2(\Pz)}^2 
$
and thus $S^{1/2} h \in \hbspace$.
Therefore we have
\begin{eqnarray*}
\lefteqn{\left\| (\covx + \varepsilon I  )^{-1} {\mu}_{X_1} \right\|_\hbspace 
=  \left\| (\covx + \varepsilon I  )^{-1}  \covx^{1/2 + \alpha}  S^{1/2} h \right\|_\hbspace}  \\
&\leq&  \left\| (\covx + \varepsilon I  )^{-1} \covx^{1/2 + \alpha} \right\| \left\| S^{1/2}  h \right\|_\hbspace \leq  \left\| (\covx + \varepsilon I  )^{-1} \covx^{1/2 + \alpha} \right\| \left\| h \right\|_{L_2(\Pz)}.
\end{eqnarray*}
Below we focus on bounding the first term in the above bound. 
If $\alpha \leq 1/2$,
$$
\left\| (\covx + \varepsilon I  )^{-1} \covx^{1/2 + \alpha} \right\| 
\leq \left\| (\covx + \varepsilon I  )^{-1/2+\alpha} \right\| \left\| (\covx + \varepsilon I  )^{-1/2 - \alpha} \covx^{1/2 + \alpha} \right\| \leq \varepsilon^{-1/2 + \alpha}.
$$
On the other hand, if $\alpha > 1/2$,
$$
\left\| (\covx + \varepsilon I  )^{-1} \covx^{1/2 + \alpha} \right\| 
\leq \left\| (\covx + \varepsilon I  )^{-1} \covx \right\| \left\| \covx^{\alpha - 1/2} \right\| \leq \left\| \covx^{\alpha - 1/2} \right\|.
$$
This completes the proof.
\end{proof}

\begin{lemma} \label{lemma:reg_conv}

Let $\mathcal{X}$, $k$ and $\pp{P}_{X_0}$ be such that Assumption \ref{as:g-and-theta}~\ref{as:measruable} and \ref{as:RKHS-dense} are satisfied.
Then, for any $g \in L_2(\pp{P}_{X_0})$, we have
\[ \lim_{\varepsilon \to 0} \;\| (T+ \varepsilon I)^{-1} Tg  - g \|_{L_2(\pp{P}_{X_0})} = 0. \]
\end{lemma}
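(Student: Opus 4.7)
The plan is to prove the claim by diagonalizing $T$ with respect to the eigensystem from Lemma~\ref{lemma:op-eigen-decomp} and then applying dominated convergence. The density assumption \ref{as:RKHS-dense} is what will let this go through cleanly; without it, $g$ could have a component orthogonal to $\overline{\operatorname{span}}([e_i]_\sim)$, and that component would contribute a residual of $\|g^\perp\|_{L_2(\Pz)}$ to the limit rather than $0$.

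First I would invoke Lemma~\ref{lemma:ONB-mercer-iso}, which gives that $([e_i]_\sim)_{i \in I}$ is an ONB (not just an ONS) of $L_2(\Pz)$. Hence any $g \in L_2(\Pz)$ admits an expansion $g = \sum_{i \in I} c_i [e_i]_\sim$ with $c_i := \langle g, [e_i]_\sim \rangle_{L_2(\Pz)}$ and $\sum_{i \in I} c_i^2 = \|g\|_{L_2(\Pz)}^2 < \infty$. Using the decomposition \eqref{eq:eigen_integral} of $T$ and the functional calculus of Definition~\ref{def:power-operators}, the operator $(T+\varepsilon I)^{-1}T$ acts as multiplication by $\mu_i/(\mu_i+\varepsilon)$ on the $i$-th eigencomponent, so
\[
(T+\varepsilon I)^{-1} T g - g \;=\; -\sum_{i \in I} \frac{\varepsilon}{\mu_i + \varepsilon}\, c_i\, [e_i]_\sim,
\]
with convergence in $L_2(\Pz)$.

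By Parseval's identity this yields
\[
\left\| (T+\varepsilon I)^{-1} T g - g \right\|_{L_2(\Pz)}^2 \;=\; \sum_{i \in I} \left( \frac{\varepsilon}{\mu_i + \varepsilon} \right)^2 c_i^2.
\]
Each summand satisfies $\bigl(\varepsilon/(\mu_i+\varepsilon)\bigr)^2 c_i^2 \leq c_i^2$, and since $\sum_i c_i^2 < \infty$, the sequence is dominated by a summable envelope independent of $\varepsilon$. As $\mu_i > 0$ for every $i \in I$, each term tends pointwise to $0$ as $\varepsilon \to 0$. The dominated convergence theorem for series then gives
\[
\lim_{\varepsilon \to 0} \sum_{i \in I} \left( \frac{\varepsilon}{\mu_i + \varepsilon} \right)^2 c_i^2 \;=\; 0,
\]
which is the desired conclusion. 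There is no serious obstacle; the only subtle point is ensuring that the expansion of $g$ exhausts $g$ (no orthogonal remainder), which is exactly what Assumption~\ref{as:g-and-theta}~\ref{as:RKHS-dense} guarantees through Lemma~\ref{lemma:ONB-mercer-iso}.
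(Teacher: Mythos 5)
Your proof is correct and follows essentially the same route as the paper's: expand $g$ in the orthonormal basis $([e_i]_\sim)_{i \in I}$ guaranteed by Lemma~\ref{lemma:ONB-mercer-iso}, observe that $(T+\varepsilon I)^{-1}T - I$ acts as multiplication by $-\varepsilon/(\mu_i+\varepsilon)$, apply Parseval, and conclude by dominated convergence using $\mu_i > 0$. Your side remark about why Assumption~\ref{as:g-and-theta}~\ref{as:RKHS-dense} is needed (to rule out an orthogonal remainder) is also consistent with the paper's own discussion.
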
 

\begin{proof}
Let $(e_i)_{i \in I} \subset \hbspace$ and $(\mu_i)_{i \in I} \subset (0,\infty)$ be as in Lemma \ref{lemma:op-eigen-decomp}.
By Lemma \ref{lemma:ONB-mercer-iso}, $([e_i]_\sim)_{i \in I}$ is an ONB of $L_2(\Pz)$, which implies that $g$ can be expanded using $([e_i]_\sim)_{i \in I}$. 
From this and Lemma \ref{lemma:op-eigen-decomp}, we then have
\begin{eqnarray*}
 (T+\varepsilon I)^{-1} T g - g 
&=& \sum_{i \in I} (\mu_i + \varepsilon)^{-1} \mu_i \left<g, [e_i]_\sim \right>_{L_2(\pp{P}_{X_0})} [e_i]_\sim -  \sum_{i \in I} \left<g, [e_i]_\sim \right>_{L_2(\pp{P}_{X_0})} [e_i]_\sim \\
&=& \sum_{i \in I}   - \varepsilon  (\mu_i + \varepsilon)^{-1}  \left<g, [e_i]_\sim \right>_{L_2(\pp{P}_{X_0})} [e_i]_\sim.
\end{eqnarray*}
Thus, by Parseval's identity,
\begin{eqnarray*}
\| (T+ \varepsilon I)^{-1} Tg  - g \|_{L_2(\pp{P}_{X_0})}^2 = \sum_{i \in I} \left| \varepsilon (\mu_i+\varepsilon )^{-1} \right|^2 |\left<g, [e_i]_\sim \right>_{L_2(\pp{P}_{X_0})} |^2.
\end{eqnarray*}
Note that $\left| \varepsilon (\mu_i+\varepsilon )^{-1} \right|^2 \leq 1$ for all $i \in I$, that $\sum_{i \in I} | \left<g, [e_i]_\sim \right>_{L_2(\pp{P}_{X_0})} |^2 = \| g \|_{L_2(\pp{P}_{X_0})}^2 < \infty$, and that $\lim_{\varepsilon \to 0} \left| \varepsilon (\mu_i+\varepsilon )^{-1} \right|^2 = 0$ (which follows from $\mu_i > 0$ for all $i \in I$).
These facts enable the use of the dominated convergence theorem, from which we have
\begin{eqnarray*}
\lim_{\varepsilon \to 0}\; \| (T+ \varepsilon I)^{-1} Tg  - g \|_{L_2(\pp{P}_{X_0})}^2 
&=& \lim_{\varepsilon \to 0}\; \sum_{i \in I}  \left| \varepsilon (\mu_i+\varepsilon )^{-1} \right|^2 |\left<g, [e_i]_\sim \right>_{L_2(\pp{P}_{X_0})} |^2 \\
&=& \sum_{i \in I} \lim_{\varepsilon \to 0}\; \left| \varepsilon (\mu_i+\varepsilon )^{-1} \right|^2 |\left<g, [e_i]_\sim \right>_{L_2(\pp{P}_{X_0})} |^2 = 0.
\end{eqnarray*}
This completes the proof.
\end{proof}

\begin{lemma} \label{lemma:range_bound}
Let $\mathcal{X}$, $k$ and $\pp{P}_{X_0}$ be such that Assumption \ref{as:g-and-theta}~\ref{as:measruable} is satisfied.
Let $g \in L_2(\pp{P}_{X_0})$ be such that $g \in {\rm Range} (T^{\alpha})$ for a constant $0 \leq \alpha \leq 1$.
Then, for all $\ve > 0$, we have
\[
\| (T+\ve I)^{-1} T g - g \|_{  L_2(\pp{P}_{X_0}) } \leq  c_\alpha \ve^{\alpha},
\]
where $c_\alpha := \| h \|_{ L_2(\pp{P}_{X_0})}$ with $h \in L_2(\Pz)$ being such that $g = T^\alpha h$.
\end{lemma}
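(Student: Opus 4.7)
The plan is to exploit the spectral representation of $T$ from Lemma~\ref{lemma:op-eigen-decomp} combined with the range hypothesis to obtain a pointwise (in $i$) bound for the coefficients of $(T+\varepsilon I)^{-1} T g - g$, and then sum via Bessel/Parseval.

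First, by the hypothesis $g \in \mathrm{Range}(T^\alpha)$, fix $h \in L_2(\pp{P}_{X_0})$ with $g = T^\alpha h$, so that $c_\alpha = \|h\|_{L_2(\pp{P}_{X_0})}$. Let $(\mu_i, e_i)_{i \in I}$ be as in Lemma~\ref{lemma:op-eigen-decomp}, write $a_i := \langle h, [e_i]_\sim \rangle_{L_2(\pp{P}_{X_0})}$, and note that by the definition of $T^\alpha$ and the fact that $([e_i]_\sim)_{i \in I}$ is an ONS in $L_2(\pp{P}_{X_0})$, we have
\[
g = T^\alpha h = \sum_{i \in I} \mu_i^\alpha a_i \, [e_i]_\sim,
\]
with $\sum_{i \in I} a_i^2 \leq \|h\|_{L_2(\pp{P}_{X_0})}^2 = c_\alpha^2$ by Bessel's inequality. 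In particular, $g$ lies in the closed span of $([e_i]_\sim)_{i \in I}$, so the ``perpendicular'' part that appeared in the proof of Lemma~\ref{lemma:reg_conv} vanishes automatically and we need not invoke Lemma~\ref{lemma:ONB-mercer-iso}.

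Next, since $(T + \varepsilon I)^{-1} T$ acts on each eigenfunction $[e_i]_\sim$ as multiplication by $\mu_i/(\mu_i + \varepsilon)$, a direct calculation gives
\[
(T + \varepsilon I)^{-1} T g - g = \sum_{i \in I} \mu_i^\alpha a_i \left( \frac{\mu_i}{\mu_i + \varepsilon} - 1 \right) [e_i]_\sim = -\sum_{i \in I} \frac{\varepsilon \, \mu_i^\alpha}{\mu_i + \varepsilon} a_i \, [e_i]_\sim.
\]
By Parseval in $L_2(\pp{P}_{X_0})$,
\[
\left\| (T+\varepsilon I)^{-1} T g - g \right\|_{L_2(\pp{P}_{X_0})}^2 = \sum_{i \in I} \left( \frac{\varepsilon \, \mu_i^\alpha}{\mu_i + \varepsilon} \right)^2 a_i^2.
\]

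The key pointwise estimate is $\varepsilon \, \mu_i^\alpha / (\mu_i + \varepsilon) \leq \varepsilon^\alpha$, equivalently $\mu_i^\alpha \, \varepsilon^{1-\alpha} \leq \mu_i + \varepsilon$. For $0 \leq \alpha \leq 1$, this follows from Young's inequality $\mu_i^\alpha \varepsilon^{1-\alpha} \leq \alpha \mu_i + (1-\alpha)\varepsilon \leq \mu_i + \varepsilon$ (the extreme cases $\alpha \in \{0,1\}$ are trivial). Plugging this bound into the Parseval identity and using $\sum_{i \in I} a_i^2 \leq c_\alpha^2$ yields
\[
\left\| (T+\varepsilon I)^{-1} T g - g \right\|_{L_2(\pp{P}_{X_0})}^2 \leq \varepsilon^{2\alpha} \sum_{i \in I} a_i^2 \leq c_\alpha^2 \, \varepsilon^{2\alpha},
\]
and taking square roots gives the claim. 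There is no serious obstacle here; the only delicate point is the Young-inequality bound on $\mu_i^\alpha \varepsilon^{1-\alpha}$, which is exactly what makes the smoothness parameter $\alpha$ translate into the rate $\varepsilon^\alpha$.
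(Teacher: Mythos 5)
Your proof is correct and follows essentially the same route as the paper's: expand $g = T^\alpha h$ in the eigenbasis $(\mu_i, [e_i]_\sim)$, apply Parseval, bound each coefficient by $\varepsilon^\alpha |a_i|$, and sum via Bessel. The only cosmetic difference is that you establish the pointwise bound $\varepsilon\,\mu_i^\alpha/(\mu_i+\varepsilon) \le \varepsilon^\alpha$ via Young's inequality, whereas the paper splits the power of $(\mu_i+\varepsilon)$ as $(\mu_i+\varepsilon)^{-\alpha}\mu_i^\alpha \le 1$ and $\varepsilon^{1-\alpha}(\mu_i+\varepsilon)^{-(1-\alpha)} \le 1$; both yield the identical estimate, and your observation that the range assumption makes the ONB (density) hypothesis unnecessary matches the paper's own remark.
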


\begin{proof}
Let $(e_i)_{i \in I} \subset \hbspace$ and $(\mu_i)_{i \in I} \subset (0,\infty)$ be as in Lemma \ref{lemma:op-eigen-decomp}.
From $g \in {\rm Range} (T^{\alpha})$ there exists $h \in L_2(\pp{P}_{X_0})$ such that $g = T^{\alpha} h$.
%
Therefore $g$ can be written as
\begin{equation} \label{eq:g-range-expansion}
g = T^\alpha h = \sum_{i \in I} \mu_i^{\alpha} b_i [e_i]_\sim,
\end{equation}
where the convergence is in $L_2(\pp{P}_{X_0})$, and $b_i := \left< h, [e_i]_\sim \right>_{L_2(\pp{P}_{X_0})} $.
It then follows that
\begin{eqnarray*}
 (T+\ve I)^{-1} T g - g 
&=& \sum_{i \in I} (\mu_i + \ve)^{-1} \mu_i \mu_i^{\alpha} b_i [e_i]_\sim - \sum_{i \in I} \mu_i^{\alpha} b_i [e_i]_\sim \\
&=& \sum_{i \in I} - \ve (\mu_i + \ve)^{-1} \mu_i^{\alpha} b_i [e_i]_\sim.
\end{eqnarray*}
Therefore, by Parseval's identity, we have
\[
\| (T+\ve I)^{-1} T g - g \|_{  L_2(\pp{P}_{X_0}) }^2 =  \sum_{i \in I}  \ve^2 (\mu_i + \ve)^{-2} \mu_i^{2\alpha} b_i^2.
\]
The rhs of the above equation can be bounded from above as
\begin{eqnarray*}
 \ve^2 (\mu_i + \ve)^{-2} \mu_i^{2\alpha} b_i^2 
&=& \ve^2 (\mu_i + \ve)^{-2+2\alpha}  (\mu_i + \ve)^{-2\alpha} \mu_i^{2\alpha} b_i^2 \\
&\leq&  \ve^2 (\mu_i + \ve)^{-2+ 2\alpha}   b_i^2   
= \ve^{2\alpha} \ve^{2- 2\alpha} (\mu_i + \ve)^{-2+2\alpha}   b_i^2 \leq \ve^{2 \alpha} b_i^2,
\end{eqnarray*}
where the above two inequalities follow from $\ve > 0$ and $\mu_i > 0$, and the last inequality uses $\alpha \leq 1$. 
Thus, we have
$$
\| (T+\ve I)^{-1} T g - g \|_{  L_2(\pp{P}_{X_0}) }^2  
\leq   \ve^{2\alpha}  \sum_{i \in I} b_i^2 = \ve^{2\alpha} \sum_{i \in I} \left( \left< h, [e_i]_\sim \right>_{L_2(\pp{P}_{X_0})} \right)^2 \leq \ve^{2\alpha}  \| h \|_{ L_2(\pp{P}_{X_0})}^2,
$$
where the last inequality follows from $([e_i]_\sim)_{i \in I}$ being an ONS in $L_2(\Pz)$.
\end{proof}

\begin{remark}
Different from Lemma \ref{lemma:reg_conv}, Lemma \ref{lemma:range_bound} does not require Assumption \ref{as:g-and-theta}~\ref{as:RKHS-dense} that $S^*$ has a dense image in $L_2(\Pz)$. 
In Lemma \ref{lemma:reg_conv}, this condition is required to guarantee that $([e_i]_\sim)_{i \in I}$ is an ONB in $L_2(\Pz)$, so that $g$ can be expanded by this ONB.
On the other hand, in Lemma \ref{lemma:range_bound}, $g$ can be written as \eqref{eq:g-range-expansion}, thanks to the assumption $g \in {\rm Range}(T^\alpha)$; this is the reason why Lemma \ref{lemma:range_bound} does not need Assumption \ref{as:g-and-theta}~\ref{as:RKHS-dense}.
\end{remark}

The following is a key lemma, based on which we show the consistency and convergence rates of our estimator.
\begin{lemma} \label{lemma:approximation-key}
Let $\mathcal{X}$, $\mathcal{Y}$, $k$, $\ell$, $\pp{P}_{X_0}$, $\pp{P}_{X_1}$, $g := \dd\Po/\dd\Pz$ and $\theta: \mathcal{X} \times \mathcal{X} \to \mathbb{R}$ defined in \eqref{eq:theta-def} be such that Assumption \ref{as:g-and-theta}~\ref{as:measruable} and \ref{as:radon-nikodym} are satisfied.
Then for any $\varepsilon > 0$, we have 
\begin{eqnarray*}
\lefteqn{\| \covyx (\covx + \varepsilon I)^{-1} \mu_{X_1} - \mu_{Y\langle 0|1 \rangle} \|_{\hbspf}^2}  \nonumber \\
&=&   \left< g_{\varepsilon} \otimes g_{\varepsilon}, \theta \right>_{L_2(\pp{P}_{X_0} \otimes \pp{P}_{X_0})} - 2 \left<g, (T + \varepsilon I)^{-1} T \int \theta(\cdot,\tilde{\x}) \dd \pp{P}_{X_1}(\tilde{\x})  \right>_{L_2(\pp{P}_{X_0})}\\
&& + \iint \theta (\x,\tilde{\x}) \dd\pp{P}_{X_1}(\x) \dd\pp{P}_{X_1}(\tilde{\x})
\end{eqnarray*} 
where $g_{\varepsilon} := (T + \varepsilon I)^{-1} T g$. 
In the second term of the right hand side, the inner product is well defined, since we have $(T + \varepsilon_n I)^{-1} T \int \theta(\cdot,\tilde{\x}) \dd \pp{P}_{X_1}(\tilde{\x}) \in L_2(\Pz)$.
\end{lemma}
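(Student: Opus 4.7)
The plan is to expand the squared RKHS norm as
\[
\| \covyx (\covx + \varepsilon I)^{-1} \mu_{X_1} - \mu_{Y\langle 0|1 \rangle} \|_{\hbspf}^2 = \| A \|_{\hbspf}^2 - 2 \langle A, B \rangle_{\hbspf} + \| B \|_{\hbspf}^2,
\]
where $A := \covyx (\covx + \varepsilon I)^{-1} \mu_{X_1}$ and $B := \mu_{Y\langle 0|1 \rangle}$, and then identify each piece with a term in the claimed expression. The key building blocks are the integral representations
\[
B = \int \mu_{Y_0 | X_0 = \x} \dd\pp{P}_{X_1}(\x), \qquad A = \int h(\x)\, \mu_{Y_0 | X_0 = \x} \dd\pp{P}_{X_0}(\x),
\]
where $h := (\covx + \varepsilon I)^{-1} \mu_{X_1} \in \hbspace$. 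The first identity is immediate from the definition of the CME; the second follows from $\covyx f = \mathbb{E}[\ell(\cdot, Y_0) f(X_0)] = \int f(\x) \mu_{Y_0 | X_0 = \x}\, \dd\pp{P}_{X_0}(\x)$ by conditioning on $X_0$.

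First, using the identity $\theta(\x, \tilde{\x}) = \langle \mu_{Y_0 | X_0 = \x}, \mu_{Y_0 | X_0 = \tilde{\x}} \rangle_{\hbspf}$ (which comes from writing out $\theta$ via the reproducing property of $\ell$), I would expand $\| B \|_{\hbspf}^2$ by Fubini to obtain the third term $\iint \theta(\x, \tilde{\x}) \dd \pp{P}_{X_1}(\x) \dd \pp{P}_{X_1}(\tilde{\x})$, and similarly expand $\| A \|_{\hbspf}^2$ to get $\iint \theta(\x, \tilde{\x}) h(\x) h(\tilde{\x}) \dd\pp{P}_{X_0}(\x) \dd\pp{P}_{X_0}(\tilde{\x})$. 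To convert the latter into $\langle g_\varepsilon \otimes g_\varepsilon, \theta \rangle_{L_2(\pp{P}_{X_0} \otimes \pp{P}_{X_0})}$, I would invoke Lemma \ref{lemma:prior_der} to write $\mu_{X_1} = Sg$ and then Lemma \ref{lemma:cov_int_op} to conclude
\[
[h]_\sim \;=\; S^{*}(\covx + \varepsilon I)^{-1} S g \;=\; (T + \varepsilon I)^{-1} T g \;=\; g_\varepsilon \quad \text{in } L_2(\pp{P}_{X_0}),
\]
so that integrals of $h$ against $L_2(\pp{P}_{X_0})$ functions collapse to $L_2$-inner products with $g_\varepsilon$.

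For the cross term, expanding the two integral representations gives
\[
\langle A, B \rangle_{\hbspf} = \iint \theta(\x, \tilde{\x}) h(\x) \dd \pp{P}_{X_0}(\x) \dd \pp{P}_{X_1}(\tilde{\x}) = \langle [h]_\sim, F \rangle_{L_2(\pp{P}_{X_0})},
\]
where $F(\x) := \int \theta(\x, \tilde{\x}) \dd \pp{P}_{X_1}(\tilde{\x})$. Boundedness of $\ell$ from Assumption \ref{as:g-and-theta}\ref{as:measruable} ensures that $\theta$ and hence $F$ are bounded, so $F \in L_2(\pp{P}_{X_0})$ and the inner product is well defined. Substituting $[h]_\sim = g_\varepsilon = (T + \varepsilon I)^{-1} T g$ and then using self-adjointness of $(T + \varepsilon I)^{-1} T$ on $L_2(\pp{P}_{X_0})$ to move it onto $F$ yields $\langle g, (T + \varepsilon I)^{-1} T F \rangle_{L_2(\pp{P}_{X_0})}$, which is exactly the cross term in the claim (and simultaneously certifies that $(T + \varepsilon I)^{-1} T F \in L_2(\pp{P}_{X_0})$).

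The main obstacles are essentially bookkeeping rather than analytic: keeping track of whether objects live in $\hbspace$, in $L_2(\pp{P}_{X_0})$, or as equivalence classes $[\cdot]_\sim$, and justifying Fubini when passing between the RKHS inner products and the integrals defining $\theta$. Boundedness of both kernels (Assumption \ref{as:g-and-theta}\ref{as:measruable}) guarantees that all integrands are integrable with respect to $\pp{P}_{X_0} \otimes \pp{P}_{X_0}$, $\pp{P}_{X_0} \otimes \pp{P}_{X_1}$, and $\pp{P}_{X_1} \otimes \pp{P}_{X_1}$, so Fubini applies throughout; the absolute continuity condition in Assumption \ref{as:g-and-theta}\ref{as:radon-nikodym} is only used to introduce $g$ via Lemma \ref{lemma:prior_der}.
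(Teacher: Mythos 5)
Your proposal is correct and follows essentially the same route as the paper's proof: the same three-term expansion, the same identification of $\|B\|_{\hbspf}^2$ and $\|A\|_{\hbspf}^2$ via Fubini and $\theta(\x,\tilde{\x}) = \langle \mu_{Y_0|X_0=\x}, \mu_{Y_0|X_0=\tilde{\x}}\rangle_{\hbspf}$, and the same use of Lemmas \ref{lemma:prior_der} and \ref{lemma:cov_int_op} to convert $S^*(\covx+\varepsilon I)^{-1}Sg$ into $g_\varepsilon$. The only cosmetic difference is the cross term, where the paper passes through the adjoint $\covxy$ and computes $\covxy\mu_{Y\langle 0|1\rangle} = S\!\int\theta(\cdot,\tilde{\x})\dd\pp{P}_{X_1}(\tilde{\x})$ before applying Lemma \ref{lemma:cov_int_op}, whereas you identify $\langle A,B\rangle_{\hbspf}$ directly as $\langle g_\varepsilon, F\rangle_{L_2(\pp{P}_{X_0})}$ and move $(T+\varepsilon I)^{-1}T$ onto $F$ by self-adjointness; both are valid and land on the same expression.
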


\begin{proof}
First note that, because $\ell$ is bounded (Assumption \ref{as:g-and-theta}~\ref{as:measruable}), the function $\theta$ in \eqref{eq:theta-def} satisfies $\theta \in L_2(\pp{P}_{X_0} \otimes \pp{P}_{X_0})$.
Therefore, the right hand side of the assertion is well defined.
The left hand side of the assertion can be written as
\begin{eqnarray}
\lefteqn{\| \covyx (\covx + \varepsilon I)^{-1} \mu_{X_1} - \mu_{Y\langle 0|1 \rangle} \|_{\hbspf}^2}  \label{eq:approx_decom} \\
& = & \| \covyx (\covx + \varepsilon I)^{-1} \mu_{X_1} \|_{\hbspf}^2   - 2 \left< \covyx (\covx + \varepsilon I)^{-1} \mu_{X_1}, \mu_{Y\langle 0|1 \rangle} \right>_{\hbspf} 
+ \| \mu_{Y\langle 0|1 \rangle} \|_{\hbspf}^2. \nonumber
\end{eqnarray} 
As in the proof of \citet[Thm.~8]{Fukumizu13:KBR}, the third term in \eqref{eq:approx_decom} can be written as
\begin{equation} \label{eq:third_term_proof}
  \| \mu_{Y\langle 0|1 \rangle} \|_{\hbspf}^2 = \iint \theta (\x,\tilde{\x}) \dd\pp{P}_{X_1}(\x) \dd\pp{P}_{X_1}(\tilde{\x}). 
\end{equation}
We thus derive the expressions for the first two terms in \eqref{eq:approx_decom} in the sequel.

\paragraph{The first term in \eqref{eq:approx_decom}.}
Let $f \in \hbspace$ be arbitrary, and let $(\tilde{X}_0, \tilde{Y}_0)$ denote an independent copy of $(X_0,Y_0)$.
By the property of $\covyx$ that $\left<\covyx f, h \right>_{\hbspf} = \mathbb{E}_{X_0,Y_0}[f(X_0) h(Y_0))]$ for any $h \in \hbspf$ and the expression $\theta(\x,\tilde{\x}) = \mathbb{E}_{Y_0, \tilde{Y_0}}[\ell(Y_0,\tilde{Y_0}) | X_0 = \x, \tilde{X}_0 = \tilde{\x}]$, we have
\begin{eqnarray}
 \| \covyx f \|_{\hbspf}^2 
 &=& \left<\covyx f, \covyx f \right>_{\hbspf} = \mathbb{E}_{X_0,Y_0}[f(X_0) (\covyx f)(Y_0))] \nonumber \\
 &=& \mathbb{E}_{X_0,Y_0}[f(X_0)\mathbb{E}_{\tilde{X}_0,\tilde{Y}_0}[\ell(Y_0,\tilde{Y_0})f(\tilde{X_0})]] \nonumber \\
 &=& \mathbb{E}_{X_0, \tilde{X}_0}[f(X_0) f(\tilde{X_0}) \mathbb{E}_{Y_0, \tilde{Y_0}}[\ell(Y_0,\tilde{Y_0}) | X_0, \tilde{X}_0] ] \quad (\because {\rm Fubini\ theorem}) \nonumber \\
 &=& \mathbb{E}_{X_0, \tilde{X}_0} [f(X_0) f(\tilde{X_0}) \theta(X_0,\tilde{X_0})], \label{eq:covyx_f_expand} 
\end{eqnarray}
where the use of Fubini's theorem is enabled by $\ell$ and $f$ being bounded, the latter implied by $k$ being bounded.
Now define $f := (\covx + \varepsilon I)^{-1} \mu_{X_1} \in \hbspace$.
With this choice of $f$, the quantity $ \| \covyx f \|_{\hbspf}^2$ is equal to the first term in \eqref{eq:approx_decom}.
From (\ref{eq:covyx_f_expand}), it follows that
\begin{eqnarray}
\| \covyx f \|_{\hbspf}^2 
&=& \mathbb{E}_{X_0,\tilde{X}_0}[f(X_0) f(\tilde{X}_0) \theta(X_0,\tilde{X}_0)] \nonumber \\
&=& \iint f(\x) f(\tilde{\x}) \theta(\x,\tilde{\x}) \dd\pp{P}_{X_0}(\x) \dd\pp{P}_{X_0}(\tilde{\x}) = \left< S^*f \otimes S^*f, \theta \right>_{L_2(\pp{P}_{X_0} \otimes \pp{P}_{X_0})} \nonumber \\
&=& \left<  S^* (\covx + \varepsilon I)^{-1} \mu_{X_1} \otimes  S^* (\covx + \varepsilon I)^{-1} \mu_{X_1}, \theta \right>_{L_2(\pp{P}_{X_0} \otimes \pp{P}_{X_0})} \nonumber \\
&=& \left< S^* (\covx + \varepsilon I)^{-1} S g \otimes S^* (\covx + \varepsilon I)^{-1} S g, \theta \right>_{L_2(\pp{P}_{X_0} \otimes \pp{P}_{X_0})}  \quad (\because {\rm Lemma}\ \ref{lemma:prior_der}) \nonumber \\
&=& \left< (T + \varepsilon I)^{-1} T g \otimes (T + \varepsilon I)^{-1} T g, \theta \right>_{L_2(\pp{P}_{X_0} \otimes \pp{P}_{X_0})}  \quad (\because {\rm Lemma}\ \ref{lemma:cov_int_op}) \nonumber \\
&=&  \left< g_{\varepsilon} \otimes g_{\varepsilon}, \theta \right>_{L_2(\pp{P}_{X_0} \otimes \pp{P}_{X_0})}, \label{eq:g_eps_tensor}
\end{eqnarray}
where $g_{\varepsilon} := (T + \varepsilon I)^{-1} T g$.
\paragraph{The second term in \eqref{eq:approx_decom}.}
First we have
\begin{eqnarray}
 \mathbb{E}_{Y_0}[\mu_{Y\langle 0|1 \rangle} (Y_0) | X_0 = \x] 
&=& \mathbb{E}_{Y_0} \left[  \int  \mathbb{E}_{\tilde{Y}_0} [\ell(Y_0, \tilde{Y}_0) | \tilde{X}_0 = \tilde{\x}] \dd\pp{P}_{X_1}(\tilde{\x}) | X_0 = \x \right] \nonumber \\
&=& \int \mathbb{E}_{Y_0, \tilde{Y}_0} \left[   \ell(Y_0, \tilde{Y}_0) | X_0 = \x, \tilde{X}_0 = \tilde{\x}  \right] \dd\pp{P}_{X_1}(\tilde{\x}) \quad (\because {\rm Fubini } ) \nonumber \\
&=& \int \theta(\x,\tilde{\x}) \dd\pp{P}_{X_1}(\tilde{\x}) \label{eq:formula_theta}
\end{eqnarray}
where $(\tilde{X}_0,\tilde{Y}_0)$ is an independent copy of $(X_0,Y_0)$.
Note that for the first expression in \eqref{eq:formula_theta}, we have $
\mathbb{E}_{Y_0}[\mu_{Y\langle 0|1 \rangle} (Y_0) | X_0 = \cdot] \in L_2(\pp{P}_{X_0})$ since $\ell$ is bounded.
Using this and \eqref{eq:formula_theta}, we have
\begin{eqnarray}
\covxy \mu_{Y\langle 0|1 \rangle}
&=& \mathbb{E}_{X_0,Y_0} [ k(\cdot,X_0)  \mu_{Y\langle 0|1 \rangle}(Y_0)] = \mathbb{E}_{X_0} \left[ k(\cdot,X_0) \mathbb{E}_{Y_0} [ \mu_{Y\langle 0|1 \rangle}(Y_0) |X_0 ] \right]  \nonumber  \\ 
&=& S \mathbb{E}_{Y_0} [ \mu_{Y\langle 0|1 \rangle}(Y_0) |X_0 = \cdot ] = S \int \theta(\cdot,\tilde{\x}) \dd\pp{P}_{X_1}(\tilde{\x}). \label{eq:second_comp}
\end{eqnarray}
Now for the second term in \eqref{eq:approx_decom}, we have
\begin{align*}
&\left< \covyx (\covx + \varepsilon_n I)^{-1} \mu_{X_1}, \mu_{Y\langle 0|1 \rangle}\right>_{\hbspf} = \left< \mu_{X_1}, (\covx + \varepsilon_n I)^{-1} \covxy \mu_{Y\langle 0|1 \rangle}\right>_{\hbspace} \\
&= \left< Sg, (\covx + \varepsilon_n I)^{-1}  S \int \theta(\cdot,\tilde{\x}) \dd\pp{P}_{X_1}(\tilde{\x})  \right>_\hbspace \quad (\because\ {\rm Lemma}\ \ref{lemma:prior_der}\ \text{and } (\ref{eq:second_comp}))\\
&= \left< g, S^*(\covx + \varepsilon_n I)^{-1}  S \int \theta(\cdot,\tilde{\x}) \dd\pp{P}_{X_1}(\tilde{\x})  \right>_{L_2(\Pz)}  \\
&= \left<g,  (T + \varepsilon_n I)^{-1} T \int \theta(\cdot,\tilde{\x}) \dd\pp{P}_{X_1}(\tilde{\x})  \right>_{L_2(P_{X_0})} \quad (\because {\rm Lemma}\ \ref{lemma:cov_int_op} ).
\end{align*}
This completes the proof.
\end{proof}

\section{Proofs for Section \ref{sec:theory-main}}
\label{sec:proofs-convergence}
We provide proofs for the convergence results presented in Section \ref{sec:theory-main} of the main paper.
The proofs rely on several lemmas collected and proved in Appendix \ref{sec:theory-preliminary}.
The notation and definitions follow those in these sections.
In the following, for any bounded linear operator $A: V \to W$ between normed vector spaces $V$ and $W$,  we denote by $\| A \|$ its operator norm: $\| A \| := \sup_{ \| v \|_V \leq 1} \| A(v) \|_W$, where $\| \cdot \|_V$ and $\| \cdot \|_W$ denote the norms of $V$ and $W$, respectively.

We first show that the CME estimator $\hat{\mu}_{\left< 0 | 1 \right>}$ in \eqref{eq:empirical-cme}  can be expressed in terms of certain empirical covariance operators.    
Given an i.i.d.~sample $(\x_i, \y_i)_{i=1}^n$ from $\pp{P}_{X_0 Y_0}$, the covariance operators $\mathcal{C}_{XX}: \hbspace \to \hbspace$ in \eqref{eq:cov-ope-def-app} and $\mathcal{C}_{YX}: \hbspace \to \hbspf$ in \eqref{eq:cov-op-joint}  can be respectively approximated by $\ecovx: \hbspace \to \hbspace$ and $\ecovyx: \hbspace \to \hbspf$, defined as
$$
\ecovx f := \frac{1}{n} \sum_{i=1}^n k(\cdot, \x_i) f(\x_i), \quad \ecovyx f  = \frac{1}{n} \sum_{i=1}^n \ell(\cdot, \y_i) f(\x_i), \quad f \in \hbspace.
$$
Under Assumption \ref{as:g-and-theta}~\ref{as:measruable} that the kernels $k$ and $\ell$ are bounded, these satisfy $\| \ecovx - \covx \| = O_p(n^{-1/2})$ and $\| \ecovyx - \covyx \| = O_p(n^{-1/2})$ as $n \to \infty$.
Similarly, given an i.i.d.~sample $(\x'_j)_{j=1}^m$ from $\pp{P}_{X_1}$, the kernel mean $\mu_{X_1} := \int k(\cdot,\x) \dd \pp{P}_{X_1}(\x)$ of $\pp{P}_{X_1}$ can be estimated as $\hat{\mu}_{X_1} := \frac{1}{m} \sum_{i=1}^m k(\cdot, \x_j)$, with the error rate $\| \mu_{X_1} - \hat{\mu}_{X_1} \|_\hbspace = O_p(m^{-1/2})$ as $m \to \infty$ under Assumption \ref{as:g-and-theta}~\ref{as:measruable}.

\begin{proposition}
Let $ \hat{\mu}_{\left< 0 | 1 \right>}$ be the CME estimator in \eqref{eq:empirical-cme}. Then we have
\begin{equation} \label{eq:CME-emp-cov-op}
    \hat{\mu}_{\left< 0 | 1 \right>} = \ecovyx (\ecovx + \varepsilon I)^{-1} \hat{\mu}_{X_1}.
\end{equation}
\end{proposition}
\begin{proof}
Define $g := (\ecovx + \varepsilon I)^{-1} \hat{\mu}_{X_1}$. 
Since $\hat{\mu}_{X_1} = (\ecovx + \varepsilon I)  g = \frac{1}{n} \sum_{j=1}^n k(\cdot, \x_j) g(\x_j)  + \varepsilon g$, we have $\hat{\mu}_{X_1} (\x_\ell) = \frac{1}{n} \sum_{j=1}^n k(\x_\ell, \x_j) g(\x_j)  + \varepsilon g(\x_\ell) = \frac{1}{n}( {\bf K} {\bm g} )_\ell + \varepsilon {\bm g}_\ell$ for all $\ell = 1, \dots, n$, where ${\bf K} \in \mathbb{R}^{n \times n}$ with ${\bf K}_{i,j} = k(\x_i, \x_j)$ and ${\bm g} = (g(\x_1), \dots, g(\x_n) )^\top \in \mathbb{R}^n$.
Therefore  ${\bm \mu} = \frac{1}{n} ( {\bf K} + n \varepsilon {\bf I}) {\bm g}$, where ${\bm \mu} := ( \hat{\mu}_{X_1}(\x_1),\dots,  \hat{\mu}_{X_1}(\x_n)  )^\top = \widetilde{\kmat}\mathbf{1}_m$, where $\mathbf{1}_m=(1/m,\ldots,1/m)^\top$ and $\widetilde{\kmat} \in \mathbb{R}^{n \times m}$ with $\widetilde{\kmat}_{ij} = k(\x_i,\x'_j)$. 
Thus ${\bm g} = n ( {\bf K} + n \varepsilon {\bf I} )^{-1} {\bm \mu}$. 
Lastly, the right hand side of \eqref{eq:CME-emp-cov-op} can be expressed as $\frac{1}{n}\sum_{i=1}^n \ell(\cdot, \y_i) g(\x_i) = \sum_{i=1}^n \beta_i \ell(\cdot, \y_i)$, where $\beta = (\beta_1,\dots,\beta_n)^\top = n^{-1} {\bm g} = ( {\bf K} + n \varepsilon {\bf I} )^{-1} {\bm \mu}$, which is the expression of the CME estimator $\hat{\mu}_{\left< 0 | 1 \right>}$ in \eqref{eq:empirical-cme}.  
\end{proof} 

\subsection{Convergence Rates of the Stochastic Error}

The proofs of Theorems \ref{theo:uinf_conv} and
\ref{theo:convergence-rate} rely on the following result, which characterizes the ``stochastic error'' of the CME estimator. 
As stated in Assumption \ref{as:cov-op-embed}, we assume $m=n$ in the following.

\begin{theorem} \label{theo:estimation-error-rate}
Let $\mathcal{X}$ be a measurable space, $k$ be a measurable kernel on $\mathcal{X}$ and $\pp{P}_{X_0}$ be a probability measure on $\mathcal{X}$ such that Assumption \ref{as:g-and-theta}~\ref{as:measruable}, \ref{as:radon-nikodym} and \ref{as:cov-op-embed}  are satisfied.
Assume that the Radon-Nikodym derivative $g := \dd\Po/\dd\Pz$ satisfies $g \in {\rm Range}(T^\alpha)$ for a constant $\alpha \geq 0$.
Then for any $\varepsilon_n > 0$ such that $\varepsilon_n \to 0$ as $n \to \infty$, we have 
$$
 \| \ecovyx (\ecovx + \varepsilon_n I  )^{-1} \hat{\mu}_{X_1} -  \covyx (\covx + \varepsilon_n I)^{-1} \mu_{X_1} \|_{\hbspf}  
 =  O_p \left( n^{-1/2} \varepsilon_n^{ \min( -1 + \alpha, -1/2 )} \right)  \qquad (n \to \infty)
$$
\end{theorem}

\begin{proof}
As in the proof of \citet[Theorem 11]{Fukumizu13:KBR}, the lhs can be bounded  as
\begin{eqnarray}
 \lefteqn{\| \ecovyx (\ecovx + \varepsilon_n I  )^{-1} \hat{\mu}_{X_1} -  \covyx (\covx + \varepsilon_n I)^{-1} \mu_{X_1} \|_{\hbspf}} \nonumber \\
&\leq& \| \ecovyx (\ecovx + \varepsilon_n I  )^{-1} ( \hat{\mu}_{X_1} - {\mu}_{X_1} ) \|_{\hbspf}
+ \| (\ecovyx - \covyx ) (\covx + \varepsilon_n I  )^{-1} {\mu}_{X_1} \|_{\hbspf} \nonumber \\
&& + \| \ecovyx (\ecovx + \varepsilon_n I  )^{-1} ( \covx - \ecovx ) (\covx + \varepsilon_n I)^{-1} {\mu}_{X_1} \|_{\hbspf} \label{eq:upper-estimation}
\end{eqnarray}
By \citet[Theorem 1]{Baker1973}, $\ecovyx$ can be decomposed as $\ecovyx = \ecovy^{1/2} \ecoryx \ecovx^{1/2}$ for a bounded linear operator $\ecoryx: \hbspace \to \hbspf$ with $\| \ecoryx \| \leq 1$, where $\ecovy^{1/2}: \hbspf \to \hbspf$ and  $\ecovx^{1/2}: \hbspace \to \hbspace$ are such that $\ecovy = \ecovy^{1/2} \ecovy^{1/2}$ and $\ecovx = \ecovx^{1/2} \ecovx^{1/2}$.
Therefore,
\begin{eqnarray}
\|  \ecovyx (\ecovx + \varepsilon_n I  )^{-1} \| 
&=& \|  \ecovy^{1/2} \ecoryx \ecovx^{1/2} (\ecovx + \varepsilon_n I  )^{-1} \| \nonumber \\
&\leq& \| \ecovy^{1/2} \|~\|  (\ecovx + \varepsilon_n I  )^{-1/2}  \|  \leq  \| \ecovy^{1/2} \| \varepsilon_n^{-1/2}. \label{eq:Cyy-Cxx-inv}
\end{eqnarray}
Thus, the rate of the first term in \eqref{eq:upper-estimation} is
\begin{equation*}
 \| \ecovyx (\ecovx + \varepsilon_n I  )^{-1} ( \hat{\mu}_{X_1} - {\mu}_{X_1} ) \|_{\hbspf}  \leq  \| \ecovy^{1/2} \|  \varepsilon^{-1/2} \| \hat{\mu}_{X_1} - {\mu}_{X_1}  \|_{ \hbspace } = O_p(\varepsilon^{-1/2} n^{-1/2} ).
\end{equation*}
Next, the rate of the second term in \eqref{eq:upper-estimation} is given by
\begin{eqnarray*}
\| (\ecovyx - \covyx ) (\covx + \varepsilon_n I  )^{-1} {\mu}_{X_1} \|_{\hbspf} 
&\leq& \| \ecovyx - \covyx \| \| (\covx + \varepsilon_n I  )^{-1} {\mu}_{X_1} \|_{\hbspace} \\
&\leq& \| \ecovyx - \covyx \| c_\alpha \varepsilon_n^{  \min( -1/2 + \alpha, 0 ) } \qquad (\because\ {\rm Lemma}\ \ref{lemma:cov-inv-kmean}) \\
&=& O_p\left(n^{-1/2} \varepsilon_n^{  \min( -1/2 + \alpha, 0 ) } \right),
\end{eqnarray*}
where $c_\alpha$ is a constant depending only on $\alpha$ and $g$.
Finally, for the third term in \eqref{eq:upper-estimation}, the rate is given as
\begin{eqnarray*}
\lefteqn{\| \ecovyx (\ecovx + \varepsilon_n I  )^{-1} ( \covx - \ecovx ) (\covx + \varepsilon_n I)^{-1} {\mu}_{X_1} \|_{\hbspf}} \\
&\leq&  \| \ecovyx (\ecovx + \varepsilon_n I  )^{-1} \| \|  \covx - \ecovx  \| \| (\covx + \varepsilon_n I)^{-1} {\mu}_{X_1} \|_{\hbspace} \\
&\leq&  \| \ecovy^{1/2} \| \varepsilon_n^{-1/2} \|  \covx - \ecovx  \| c_\alpha \varepsilon_n^{  \min( -1/2 + \alpha, 0 ) } \qquad (\because \ \eqref{eq:Cyy-Cxx-inv}\ {\rm and}\ {\rm Lemma}\ \ref{lemma:cov-inv-kmean}) \\
&=& O_p\left( n^{-1/2} \varepsilon_n^{ \min( -1 + \alpha, -1/2 )} \right).
\end{eqnarray*}
Since we will set $\varepsilon_n$ so that $\varepsilon_n \to 0$ as $n \to \infty$, the rate of the third term is the slowest in the three terms in  \eqref{eq:upper-estimation}.
This completes the proof.
\end{proof}

\subsection{Proof of Theorem \ref{theo:uinf_conv}}
\label{sec:proof-consistency}

\begin{proof}
By the triangle inequality, we can bound the error of our estimator as
\begin{eqnarray}
\lefteqn{\left\| \ecovyx (\ecovx + \varepsilon_n I  )^{-1} \hat{\mu}_{X_1} - \mu_{Y\langle 0|1 \rangle} \right\|_{\hbspf}} \nonumber \\
&\leq& \left\| \ecovyx (\ecovx + \varepsilon_n I  )^{-1} \hat{\mu}_{X_1} -  \covyx (\covx + \varepsilon_n I)^{-1} \mu_{X_1} \right\|_{\hbspf} \label{eq:estimation_error} \\
&& + \left\| \covyx (\covx + \varepsilon_n I)^{-1} \mu_{X_1} - \mu_{Y\langle 0|1 \rangle} \right\|_{\hbspf} \label{eq:approx_error},
\end{eqnarray}
where \eqref{eq:estimation_error} can be interpreted as the stochastic error and \eqref{eq:approx_error} as the approximation error.
Note that the assumption $g \in L_2(\pp{P}_{X_0})$ enables the use of Theorem \ref{theo:estimation-error-rate} with $\alpha = 0$, which implies that the estimation error (\ref{eq:estimation_error}) converges to $0$ at rate $O_p(n^{-1/2} \varepsilon_n^{-1})$ as $n \to \infty$, provided that $\varepsilon_n \to 0$ and $n^{1/2}\varepsilon_n \to \infty$ as $n \to \infty$.


Here we aim to prove that the approximation error (\ref{eq:approx_error})
goes to zero as $\varepsilon_n \to 0$. 
Note that to this end, we cannot apply the proof of Theorem 8 in \citet{Fukumizu13:KBR}, since it relies on stronger assumptions than ours.
We do this by using Lemma \ref{lemma:approximation-key}, which shows that the approximation error can be written as 
\begin{eqnarray}
\left\| \covyx (\covx + \varepsilon_n I)^{-1} \mu_{X_1} - \mu_{Y\langle 0|1 \rangle} \right\|_{\hbspf}^2  
&=&   \left< g_{\varepsilon_n} \otimes g_{\varepsilon_n}, \theta \right>_{L_2(\pp{P}_{X_0} \otimes \pp{P}_{X_0})} \label{eq:first-term-consistency} \\
&& - 2 \left<g,  (T + \varepsilon_n I)^{-1} T \int \theta(\cdot,\tilde{x}) \dd\pp{P}_{X_1}(\tilde{\x})  \right>_{L_2(\pp{P}_{X_0})} \label{eq:second-term-consistency} \\
&& + \iint \theta (\x,\tilde{\x}) \dd\pp{P}_{X_1}(\x) \dd\pp{P}_{X_1}(\tilde{\x}), \nonumber
\end{eqnarray} 
where $g_{\varepsilon_n} := (T + \varepsilon_n I)^{-1} T g$ with $g = \dd\pp{P}_{X_1}/\dd\pp{P}_{X_0}$ being the Radon-Nikodym derivative.
Below we show the convergence limits of \eqref{eq:first-term-consistency} and \eqref{eq:second-term-consistency} as $\varepsilon_n \to 0$, which conclude the proof.
\paragraph{Convergence of \eqref{eq:first-term-consistency}.}
We will show that
\begin{equation} \label{eq:first_goal}
\left< g_{\varepsilon_n} \otimes g_{\varepsilon_n}, \theta \right>_{L_2(\pp{P}_{X_0} \otimes \pp{P}_{X_0})}  \to  \iint \theta(\x,\tilde{\x}) \dd\pp{P}_{X_1}(\x) \dd\pp{P}_{X_1}(\tilde{\x}) \quad (\varepsilon_n \to 0).
\end{equation}
Note that we have
\begin{eqnarray*}
 \left< g \otimes g, \theta \right>_{L_2(\pp{P}_{X_0} \otimes \pp{P}_{X_0})} 
= \iint  g(\x) g (\tilde{\x}) \theta(\x,\tilde{\x}) \dd\pp{P}_{X_0}(\x)\dd\pp{P}_{X_0}(\tilde{\x}) = \iint \theta(\x,\tilde{\x}) \dd\pp{P}_{X_1}(\x) \dd\pp{P}_{X_1}(\tilde{\x}). 
\end{eqnarray*} 
Therefore it suffices to show that 
$$
\left< g_{\varepsilon_n} \otimes g_{\varepsilon_n}, \theta \right>_{L_2(\pp{P}_{X_0} \otimes \pp{P}_{X_0})}  \to \left< g \otimes g, \theta \right>_{L_2(\pp{P}_{X_0} \otimes \pp{P}_{X_0})} \quad (\varepsilon_n \to 0).
$$ 
Note that by the Cauchy-Schwartz inequality, we have
\begin{eqnarray*}
&& \left| \left< g_{\varepsilon_n} \otimes g_{\varepsilon_n}, \theta \right>_{L_2(\pp{P}_{X_0} \otimes \pp{P}_{X_0})} - \left< g \otimes g, \theta \right>_{L_2(\pp{P}_{X_0} \otimes \pp{P}_{X_0})}  \right| \\
&=& \left| \left< g_{\varepsilon_n} \otimes g_{\varepsilon_n} - g \otimes g, \theta \right>_{L_2(\pp{P}_{X_0} \otimes \pp{P}_{X_0})}  \right| 
\leq \left\| g_{\varepsilon_n} \otimes g_{\varepsilon_n} - g \otimes g \right\|_{L_2(\pp{P}_{X_0} \otimes \pp{P}_{X_0})}  \left\| \theta \right\|_{L_2(\pp{P}_{X_0} \otimes \pp{P}_{X_0})}  .
\end{eqnarray*}
Thus we focus on showing that
\begin{equation} \label{eq:first_subgoal}
\| g_{\varepsilon_n} \otimes g_{\varepsilon_n} - g \otimes g \|_{L_2(\pp{P}_{X_0} \otimes \pp{P}_{X_0})} \to 0 \quad (\varepsilon_n \to 0).
\end{equation}
By the triangle inequality we have 
\begin{eqnarray}
\lefteqn{\| g_{\varepsilon_n} \otimes g_{\varepsilon_n} - g \otimes g \|_{L_2(\pp{P}_{X_0} \otimes \pp{P}_{X_0})}} \nonumber \\
&\leq& \| g_{\varepsilon_n} \otimes g_{\varepsilon_n} - g \otimes g_{\varepsilon_n} \|_{L_2(\pp{P}_{X_0} \otimes \pp{P}_{X_0})} +  \| g \otimes g_{\varepsilon_n}  - g \otimes g \|_{L_2(\pp{P}_{X_0} \otimes \pp{P}_{X_0})}. \label{eq:decom_subgoal}
\end{eqnarray}
The first term of \eqref{eq:decom_subgoal} can be written as
\begin{eqnarray*}
\lefteqn{\| g_{\varepsilon_n} \otimes g_{\varepsilon_n} - g \otimes g_{\varepsilon_n} \|_{L_2(\pp{P}_{X_0} \otimes \pp{P}_{X_0})}  =  \| ( g_{\varepsilon_n} - g)  \otimes g_{\varepsilon_n}  \|_{L_2(\pp{P}_{X_0} \otimes \pp{P}_{X_0})}} \\
&=&  \|  g_{\varepsilon_n} - g \|_{L_2(\pp{P}_{X_0})}  \| g_{\varepsilon_n}  \|_{L_2(\pp{P}_{X_0})} \to 0 \quad (\varepsilon_n \to 0) \quad (\because {\rm Lemma}\ \ref{lemma:reg_conv}), 
\end{eqnarray*}
Similarly, the second term of \eqref{eq:decom_subgoal} can be written as 
\begin{eqnarray*}
\lefteqn{\| g \otimes g_{\varepsilon_n} - g \otimes g \|_{L_2(\pp{P}_{X_0} \otimes \pp{P}_{X_0})}=  \|  g  \otimes (g_{\varepsilon_n} - g)  \|_{L_2(\pp{P}_{X_0} \otimes \pp{P}_{X_0})}} \\
&=&  \|  g \|_{L_2(\pp{P}_{X_0})}  \| g_{\varepsilon_n} - g \|_{L_2(\pp{P}_{X_0})} \to 0 \quad (\varepsilon_n \to 0)  \quad (\because {\rm Lemma}\ \ref{lemma:reg_conv}).
\end{eqnarray*}
We have shown (\ref{eq:first_subgoal}), which concludes (\ref{eq:first_goal}).

\paragraph{Convergence of \eqref{eq:second-term-consistency}.}
We show that as $\varepsilon_n \to 0$,
\begin{equation} \label{eq:second_goal}
\left<g, (T + \varepsilon_n I)^{-1} T \int \theta(\cdot,\tilde{\x}) \dd\pp{P}_{X_1}(\tilde{\x})  \right>_{L_2(\pp{P}_{X_0})}  \to  \iint \theta(\x,\tilde{\x}) \dd\pp{P}_{X_1}(\x) \dd\pp{P}_{X_1}(\tilde{\x}) .
\end{equation}
From Lemma \ref{lemma:reg_conv}, as $\varepsilon_n \to 0$, the lhs converges to 
\begin{eqnarray*}
 \left<g, \int \theta(\cdot,\tilde{x}) \dd\pp{P}_{X_1}(\tilde{\x})  \right>_{L_2(\pp{P}_{X_0})} 
&=& \iint \theta(\x,\tilde{\x})\dd\pp{P}_{X_1}(\tilde{\x}) g(\x) \dd\pp{P}_{X_0}(\x) \\
&=& \iint \theta(\x,\tilde{\x}) \dd\pp{P}_{X_1}(\x) \dd\pp{P}_{X_1}(\tilde{\x}). 
\end{eqnarray*}
Thus we have shown (\ref{eq:second_goal}).
The proof completes by substituting (\ref{eq:first_goal}) and (\ref{eq:second_goal}) in  \eqref{eq:first-term-consistency} and \eqref{eq:second-term-consistency} respectively.
\end{proof}

\subsection{Proof of Theorem \ref{theo:convergence-rate}}
\label{sec:proof-convergence-rate}

\begin{proof}
By the triangle inequality we can bound the error of our estimator as
\begin{eqnarray}
\lefteqn{\| \ecovyx (\ecovx + \varepsilon_n I  )^{-1} \hat{\mu}_{X_1} - \mu_{Y\langle 0|1 \rangle} \|_{\hbspf}} \nonumber \\
&\leq& \| \ecovyx (\ecovx + \varepsilon_n I  )^{-1} \hat{\mu}_{X_1} -  \covyx (\covx + \varepsilon_n I)^{-1} \mu_{X_1} \|_{\hbspf} \label{eq:rate-estimation_error} \\
&& +  \| \covyx (\covx + \varepsilon_n I)^{-1} \mu_{X_1} - \mu_{Y\langle 0|1 \rangle} \|_{\hbspf} \label{eq:rate-approx_error},
\end{eqnarray}
where \eqref{eq:rate-estimation_error} is the estimation error, and \eqref{eq:rate-approx_error} is the approximation error.
By Theorem \ref{theo:estimation-error-rate}, the estimation error decays at the rate 
\begin{eqnarray}
\| \ecovyx (\ecovx + \varepsilon_n I  )^{-1} \hat{\mu}_{X_1} -  \covyx (\covx + \varepsilon_n I)^{-1} \mu_{X_1} \|_{\hbspf} 
 =  O_p \left( n^{-1/2} \varepsilon_n^{ \min( -1 + \alpha, -1/2 )} \right) \label{eq:rate-estimation-error-in-proof-for-rate}
\end{eqnarray}
as $n \to \infty$.
Hence we focus below on deriving a convergence rate for the approximation error. 
We then determine the optimal schedule for the decay of the regularization constant $\varepsilon_n$ as $n \to \infty$ in order to derive a convergence rate for the overall error.

\paragraph{Rate for the approximation error \eqref{eq:rate-approx_error}.}
We will show that the approximation error decays at the rate
\begin{equation} \label{eq:resulting-approx-err-rate}
\| \covyx (\covx + \varepsilon_n I)^{-1} \mu_{X_1} - \mu_{Y\langle 0|1 \rangle} \|_{\hbspf} = O\left(\varepsilon_n^{(\alpha + \beta) / 2}\right) \quad (\varepsilon_n \to 0).
\end{equation}
First note that, by the definition of $g = \dd\pp{P}_{X_1} / \dd\pp{P}_{X_0}$, we have
\begin{eqnarray} \label{eq:equiv-theta-g}
\iint \theta (\x,\tilde{\x}) \dd\pp{P}_{X_1}(\x) \dd\pp{P}_{X_1}(\tilde{\x}) &=& \iint \theta (\x,\tilde{\x}) g(\x)g(\tilde{\x}) \dd\pp{P}_{X_0}(\x) \dd\pp{P}_{X_0}(\tilde{\x}) \nonumber \\ 
&=&  \left< g \otimes g, \theta \right>_{L_2 (\pp{P}_{X_0}  \otimes \pp{P}_{X_0}) }.
\end{eqnarray}
Therefore, using Lemma \ref{lemma:approximation-key} and the notation $g_{\varepsilon_n} := (T + \varepsilon_n I)^{-1} T g$, we can bound the square of the approximation error \eqref{eq:rate-approx_error} as
\begin{eqnarray}
\lefteqn{\| \covyx (\covx + \varepsilon_n I)^{-1} \mu_{X_1} - \mu_{Y\langle 0|1 \rangle} \|_{\hbspf}^2}  \nonumber  \\
&=&   \left< g_{\varepsilon_n} \otimes g_{\varepsilon_n}, \theta \right>_{L_2(\pp{P}_{X_0} \otimes \pp{P}_{X_0})}  - 2 \left<g, (T + \varepsilon_n I)^{-1} T \int \theta (\cdot, \tilde{\x}) \dd\pp{P}_{X_1}(\tilde{\x})  \right>_{L_2(\pp{P}_{X_0})} \nonumber \\
&& + \iint \theta (\x,\tilde{\x}) \dd\pp{P}_{X_1}(\x) \dd\pp{P}_{X_1}(\tilde{\x}) \nonumber \\
&\leq& \left|  \left< g_{\varepsilon_n} \otimes g_{\varepsilon_n} , \theta \right>_{L_2(\pp{P}_{X_0} \otimes \pp{P}_{X_0})} - \left< g \otimes g, \theta \right>_{L_2(\pp{P}_{X_0} \otimes \pp{P}_{X_0})}  \right| \label{eq:approx-rate-bound}\\
&& + 2 \left|  \left< g \otimes g, \theta \right>_{L_2(\pp{P}_{X_0} \otimes \pp{P}_{X_0})} - \left<g, (T + \varepsilon_n I)^{-1} T \int \theta (\cdot, \tilde{\x}) \dd\pp{P}_{X_1}(\tilde{\x})   \right>_{L_2(\pp{P}_{X_0})} \right|, \nonumber
\end{eqnarray} 


\paragraph{Bound on the first term in \eqref{eq:approx-rate-bound}.}
By Corollary \ref{cor:range-as-eigen-basis} (which follows from Assumption \ref{as:range-assumption-theta}), $\theta = \sum_{i,j \in I} a_{i,j}  (\mu_i^\beta [e_i]_\sim) \otimes (\mu_j^\beta  [e_j]_\sim)$ with $\sum_{i,j \in I} a_{ij}^2 < \infty$.
Using this, we have
\begin{eqnarray*}
\lefteqn{\left< g_{\varepsilon_n} \otimes g_{\varepsilon_n}, \theta \right>_{L_2(\pp{P}_{X_0} \otimes \pp{P}_{X_0})} - \left< g \otimes g, \theta \right>_{L_2(\pp{P}_{X_0} \otimes \pp{P}_{X_0})}} \\
&=& \left< g_{\varepsilon_n} \otimes g_{\varepsilon_n},  \sum_{i,j \in I} a_{i,j}  (\mu_i^\beta [e_i]_\sim) \otimes (\mu_j^\beta  [e_j]_\sim) \right>_{L_2(\pp{P}_{X_0} \otimes \pp{P}_{X_0})}  \\
&&  - \left< g \otimes g,  \sum_{i,j \in I} a_{i,j}  (\mu_i^\beta [e_i]_\sim) \otimes (\mu_j^\beta  [e_j]_\sim) \right>_{L_2(\pp{P}_{X_0} \otimes \pp{P}_{X_0})} \\
&=& \sum_{i,j \in I} a_{i,j} \left< g_{\varepsilon_n}, \mu_i^\beta [e_i]_\sim  \right>_{ L_2(\pp{P}_{X_0})} \left< g_{\varepsilon_n}, \mu_j^\beta  [e_j]_\sim \right>_{L_2(\pp{P}_{X_0})} \\
&& -  \sum_{i,j \in I} a_{i,j} \left< g, \mu_i^\beta [e_i]_\sim  \right>_{ L_2(\pp{P}_{X_0})} \left< g, \mu_j^\beta  [e_j]_\sim \right>_{L_2(\pp{P}_{X_0})} \\
&=& \sum_{i,j \in I} a_{i,j} \left< g_{\varepsilon_n} - g, \mu_i^\beta [e_i]_\sim  \right>_{ L_2(\pp{P}_{X_0})} \left< g_{\varepsilon_n}, \mu_j^\beta  [e_j]_\sim \right>_{L_2(\pp{P}_{X_0})} \\
&& +  \sum_{i,j \in I} a_{i,j} \left< g, \mu_i^\beta [e_i]_\sim  \right>_{ L_2(\pp{P}_{X_0})} \left< g_{\varepsilon_n} - g, \mu_j^\beta  [e_j]_\sim \right>_{L_2(\pp{P}_{X_0})} .
\end{eqnarray*}
Therefore,
\begin{eqnarray}
\lefteqn{\left| \left< g_{\varepsilon_n} \otimes g_{\varepsilon_n}, \theta \right>_{L_2(\pp{P}_{X_0} \otimes \pp{P}_{X_0})} - \left< g \otimes g, \theta \right>_{L_2(\pp{P}_{X_0} \otimes \pp{P}_{X_0})}  \right|} \nonumber \\
&\leq& \left|\sum_{i,j \in I} a_{i,j} \left< g_{\varepsilon_n} - g, \mu_i^\beta [e_i]_\sim  \right>_{ L_2(\pp{P}_{X_0})} \left< g_{\varepsilon_n}, \mu_j^\beta  [e_j]_\sim \right>_{L_2(\pp{P}_{X_0})} \right| \nonumber \\
&& + \left| \sum_{i,j \in I} a_{i,j} \left< g, \mu_i^\beta [e_i]_\sim  \right>_{ L_2(\pp{P}_{X_0})} \left< g_{\varepsilon_n} - g, \mu_j^\beta  [e_j]_\sim \right>_{L_2(\pp{P}_{X_0})} \right| \nonumber \\
&\leq& \sqrt{\sum_{i,j \in I} a_{i,j}^2} \sqrt{ \sum_{i \in I} \left< g_{\varepsilon_n} - g, \mu_i^\beta [e_i]_\sim  \right>_{ L_2(\pp{P}_{X_0})}^2 \sum_{j \in I} \left< g_{\varepsilon_n}, \mu_j^\beta  [e_j]_\sim \right>_{L_2(\pp{P}_{X_0})}^2 } \nonumber \\
&& +  \sqrt{ \sum_{i,j \in I} a_{i,j}^2 } \sqrt{ \sum_{i \in I} \left< g, \mu_i^\beta [e_i]_\sim  \right>_{ L_2(\pp{P}_{X_0})}^2 \sum_{j \in I} \left< g_{\varepsilon_n} - g, \mu_j^\beta  [e_j]_\sim \right>_{L_2(\pp{P}_{X_0})}^2 } \nonumber \\
&=& \sqrt{\sum_{i,j \in I} a_{i,j}^2}   \left\| T^\beta (g_{\varepsilon_n} - g)  \right\|_{L_2(\pp{P}_{X_0})}  \left\| T^\beta g_{\varepsilon_n}  \right\|_{L_2(\pp{P}_{X_0})}  \nonumber  \\
&& +  \sqrt{ \sum_{i,j \in I} a_{i,j}^2 }  \left\| T^\beta g \right\|_{L_2(\pp{P}_{X_0})}   \left\| T^\beta (g_{\varepsilon_n} - g)  \right\|_{L_2(\pp{P}_{X_0})} \nonumber \\
&=&  \sqrt{\sum_{i,j \in I} a_{i,j}^2} \left\| T^\beta (g_{\varepsilon_n} - g)  \right\|_{L_2(\pp{P}_{X_0})}  \left(  \left\| T^\beta g_{\varepsilon_n} \right\|_{L_2(\pp{P}_{X_0})} +  \left\| T^\beta g \right\|_{L_2(\pp{P}_{X_0})} \right) \label{eq:bound-gg}
\end{eqnarray}

Note that $T^\beta g \in {\rm Range}(T^{\alpha + \beta})$ holds because of the assumption $g \in {\rm Range}(T^\alpha)$.
Therefore by Lemma \ref{lemma:range_bound} (which can be used because $\alpha + \beta \leq 1$), we have 
\begin{eqnarray}
 \left\| T^\beta g_{\varepsilon_n} - T^\beta g \right\|_{L_2(\pp{P}_{X_0})} 
 &=&  \left\| (T + \varepsilon_n I)^{-1}T T^\beta g - T^\beta g \right\|_{L_2(\pp{P}_{X_0})} \leq c_{\alpha + \beta} \varepsilon_n^{\alpha + \beta}, \label{eq:T-beta-bound}
\end{eqnarray}
where $c_{\alpha + \beta}$ is a constant depending only on $\alpha$, $\beta$ and $g$.
We also have
\begin{eqnarray*}
\left\| T^\beta g_{\varepsilon_n} \right\|_{L_2(\pp{P}_{X_0})}
&\leq& \left\| T^\beta g_{\varepsilon_n} -  T^\beta g  \right\|_{L_2(\pp{P}_{X_0})} + \left\|  T^\beta g  \right\|_{L_2(\pp{P}_{X_0})} \\
&\leq& c_{\alpha + \beta} \varepsilon_n^{\alpha + \beta} + \left\|  T^\beta g  \right\|_{L_2(\pp{P}_{X_0})} \quad (\because \eqref{eq:T-beta-bound})
\end{eqnarray*}

Therefore \eqref{eq:bound-gg}, and thus the first term in \eqref{eq:approx-rate-bound}, is bounded by
\begin{equation}
 \sqrt{\sum_{i,j \in I} a_{i,j}^2} c_{\alpha+\beta} \ve_n^{\alpha+\beta}  \left( c_{\alpha+\beta} \ve_n^{\alpha+\beta} + 2 \| T^\beta g \|_{L_2(\pp{P}_{X_0})}  \right)  . \label{eq:approx-rate-bound-first}
\end{equation}

\paragraph{Bound on the second term in \eqref{eq:approx-rate-bound}.}
From the equivalence \eqref{eq:equiv-theta-g}, (the half of) the second term in \eqref{eq:approx-rate-bound} can be written as
\begin{eqnarray}
&&  \left|  \left< g, \int \theta(\cdot, \tilde{\x}) \dd\Po(\tilde{\x})  \right>_{L_2(\Pz)} - \left<g, (T + \varepsilon_n I)^{-1} T  \int \theta(\cdot, \tilde{\x}) \dd\Po(\tilde{\x})   \right>_{L_2(\pp{P}_{X_0})} \right|. \label{eq:approx-rate-bound-second} 
\end{eqnarray}
Note that, using 
 $\theta = \sum_{i,j \in I} a_{i,j}  (\mu_i^\beta [e_i]_\sim) \otimes (\mu_j^\beta  [e_j]_\sim)$, we can write
\begin{eqnarray*}
\int \theta(\cdot, \tilde{\x}) \dd\Po(\tilde{\x}) &=& \int  \sum_{i,j \in I} a_{i,j}  (\mu_i^\beta [e_i]_\sim) \otimes (\mu_j^\beta  [e_j]_\sim (\tilde{\x})) \dd\pp{P}_{X_1}(\tilde{\x}) \\
&=&  \sum_{i,j\in I} a_{i,j} (\mu_i^\beta [e_i]_\sim) \int  (\mu_j^\beta  [e_j]_\sim (\tilde{\x})) \dd\pp{P}_{X_1}(\tilde{\x}).
\end{eqnarray*}
Therefore,
\begin{eqnarray*}
\left< g, \int \theta(\cdot, \tilde{\x}) \dd\Po(\tilde{\x})  \right>_{L_2(\Pz)} 
&=&  \left< g,  \sum_{i,j \in I} a_{i,j} (\mu_i^\beta [e_i]_\sim) \int (\mu_j^\beta  [e_j]_\sim (\tilde{\x})) \dd\pp{P}_{X_1}(\tilde{\x})   \right>_{L_2(\Pz)} \\
&=&    \sum_{i,j\in I} a_{i,j} \mu_i^\beta \left< g,  [e_i]_\sim \right>_{L_2(\Pz)} \int (\mu_j^\beta  [e_j]_\sim (\tilde{\x})) \dd\pp{P}_{X_1}(\tilde{\x}).
\end{eqnarray*}
Similarly,
\begin{eqnarray*}
\lefteqn{\left< g, (T + \varepsilon_n I)^{-1}T \int \theta(\cdot, \tilde{\x}) \dd\Po(\tilde{\x})  \right>_{L_2(\Pz)}}\\ 
&=& \left< (T + \varepsilon_n I)^{-1}T  g,   \sum_{i,j\in I} a_{i,j} (\mu_i^\beta [e_i]_\sim) \int (\mu_j^\beta  [e_j]_\sim (\tilde{\x})) \dd\pp{P}_{X_1}(\tilde{\x})  \right>_{L_2(\Pz)} \\
&=& \sum_{i,j\in I} a_{i,j}  \mu_i^\beta \left< (T + \varepsilon_n I)^{-1}T  g,   [e_i]_\sim \right>_{L_2(\Pz)} \int (\mu_j^\beta  [e_j]_\sim (\tilde{\x})) \dd\pp{P}_{X_1}(\tilde{\x}) 
\end{eqnarray*}
Because of the properties that $\mu_1 \geq \mu_2 \geq \cdots > 0$, that $([e_j]_\sim)_{j \in I}$ is an ONS in $L_2(\pp{P}_{X_0})$ and that $g \in L_2(\pp{P}_{X_0})$, we have
\begin{eqnarray*}
&& \sum_{j \in I} \left( \int (\mu_j^\beta  [e_j]_\sim (\tilde{\x})) \dd\pp{P}_{X_1}(\tilde{\x}) \right)^2 
\leq \mu_1^{2\beta} \sum_{j \in I}  \left( \int [e_j]_\sim (\tilde{\x}) \dd\pp{P}_{X_1}(\tilde{\x}) \right)^2  \\
&=& \mu_1^{2\beta} \sum_{j \in I}  \left( \int [e_j]_\sim (\tilde{\x}) g(\tilde{\x}) \dd\pp{P}_{X_0}(\tilde{\x}) \right)^2 
= \mu_1^{2\beta} \sum_{j \in I}  \left< [e_j]_\sim,  g \right>_{L_2(\pp{P}_{X_0})}^2 \leq \mu_1^{2\beta} \| g \|_{L_2(\pp{P}_{X_0})}^2 < \infty.
\end{eqnarray*}
Therefore, using the Cauchy-Schwartz, the above identities and inequality, we have 
\begin{eqnarray*}
&& \eqref{eq:approx-rate-bound-second} 
 = \left| \sum_{i,j\in I} a_{i,j}  \mu_i^\beta \left< g - (T + \varepsilon_n I)^{-1}T  g,   [e_i]_\sim \right>_{L_2(\Pz)} \int (\mu_j^\beta  [e_j]_\sim (\tilde{\x})) \dd\pp{P}_{X_1}(\tilde{\x})   \right| \\
 &\leq& \sqrt{ \sum_{i,j\in I} a_{i,j}^2 } \sqrt{ \sum_{i \in I}  \mu_i^{2\beta} \left< g - (T + \varepsilon_n I)^{-1}T  g,   [e_i]_\sim \right>_{L_2(\Pz)}^2 \sum_{j \in I} \left( \int (\mu_j^\beta  [e_j]_\sim (\tilde{\x})) \dd\pp{P}_{X_1}(\tilde{\x}) \right)^2} \\
 &\leq& \sqrt{ \sum_{i,j\in I} a_{i,j}^2 } \mu_1^{\beta} \| g \|_{L_2(\pp{P}_{X_0})} \sqrt{ \sum_{i \in I}  \mu_i^{2\beta} \left< g - (T + \varepsilon_n I)^{-1}T  g,   [e_i]_\sim \right>_{L_2(\Pz)}^2} \\
 &\leq& \sqrt{ \sum_{i,j\in I} a_{i,j}^2 } \mu_1^{\beta} \| g \|_{L_2(\pp{P}_{X_0})} \left\| T^{\beta} \left(g - (T + \varepsilon_n I)^{-1}T  g\right) \right\|_{L_2(\pp{P}_{X_0})},
\end{eqnarray*}
where the last inequality follows from $([e_j]_\sim)_{j \in I}$ being an ONS in $L_2(\pp{P}_{X_0})$ and the definition of $T^\beta$.
Note that we have $T^\beta g \in {\rm Range}(T^{\alpha + \beta})$ from the assumption $g \in {\rm Range}(T^\alpha)$.
Therefore by Lemma \ref{lemma:range_bound},
\begin{eqnarray*}
\left\| T^{\beta} \left(g - (T + \varepsilon_n I)^{-1}T  g\right) \right\|_{L_2(\pp{P}_{X_0})}  = \| T^\beta g - (T  + \varepsilon_n)^{-1} T T^\beta g \|_{L_2(\Pz)} \leq c_{\alpha+\beta}\ \varepsilon_n^{\alpha + \beta}   
\end{eqnarray*}
where $c_{\alpha + \beta} > 0$ is a constant depending only on $\alpha$, $\beta$ and $g$.
Thus, we finally obtain
\begin{equation} \label{eq:approx-rate-bound-second-result}
    \eqref{eq:approx-rate-bound-second}  \leq  \sqrt{ \sum_{i,j\in I} a_{i,j}^2 } \mu_1^{\beta} \| g \|_{L_2(\pp{P}_{X_0})} c_{\alpha+\beta}\ \varepsilon_n^{\alpha + \beta} . 
\end{equation}

\paragraph{Resulting approximation error rate.}
Using \eqref{eq:approx-rate-bound-first} and \eqref{eq:approx-rate-bound-second-result} in \eqref{eq:approx-rate-bound}, the rate \eqref{eq:resulting-approx-err-rate} is finally obtained as
\begin{eqnarray*}
\lefteqn{\| \covyx (\covx + \varepsilon_n I)^{-1} \mu_{X_1} - \mu_{Y\langle 0|1 \rangle} \|_{\hbspf}^2} \\
&\leq&  \sqrt{\sum_{i,j \in I} a_{i,j}^2} c_{\alpha+\beta} \varepsilon_n^{\alpha + \beta} \left(   c_{\alpha+\beta} \ve_n^{\alpha+\beta} + 2 \| T^\beta g \|_{L_2(\pp{P}_{X_0})}  +  \mu_1^{\beta} \| g \|_{L_2(\pp{P}_{X_0})}  \right) \\ 
&=& O(\varepsilon_n^{\alpha+\beta}) \quad (\varepsilon_n \to 0) .
\end{eqnarray*}

\paragraph{Balancing the estimation and approximation error rates.}
For an arbitrary constant $c > 0$ independent of $n$, let $\varepsilon_n = c n^{-b}$ for some constant $b > 0$. We determine $b$ by balancing the two rates \eqref{eq:rate-estimation-error-in-proof-for-rate} and \eqref{eq:resulting-approx-err-rate}.
This yields $b = 1 / (2 - \alpha + \beta)$ for $\alpha \leq 1/2$, and  $b = 1 / (1 + \alpha + \beta)$ for $\alpha \geq 1/2$; equivalently, $b = 1 / (1 + \beta + \max( 1 - \alpha,  \alpha))$ for $0 \leq \alpha \leq 1$.
The proof completes by substituting the resulting $\varepsilon_n = n^{-b}$ in \eqref{eq:rate-estimation-error-in-proof-for-rate} and \eqref{eq:resulting-approx-err-rate}.
\end{proof}

\bibliography{cme}

\end{document}